\def\1{\bm{1}}
\def\rd{{\textnormal{d}}}
\DeclareMathAlphabet{\mathsfit}{\encodingdefault}{\sfdefault}{m}{sl}
\SetMathAlphabet{\mathsfit}{bold}{\encodingdefault}{\sfdefault}{bx}{n}
\DeclareMathOperator*{\argmin}{arg\,min}
\newcommand{\type}{\text{\rm\texttt{type}}}
\newcommand{\lin}{{\rm lin}}
\newcommand{\Attn}{{\rm Attn}}
\newcommand{\FFN}{{\rm FFN}}
\newcommand{\Lip}{{\rm Lip}}
\newcommand{\Soft}{{\rm Soft}}
\newcommand{\trinorm}[1]{{\left\vert\kern-0.25ex\left\vert\kern-0.25ex\left\vert #1 
   \right\vert\kern-0.25ex\right\vert\kern-0.25ex\right\vert}}
\newcommand{\ba}{\boldsymbol{a}}
\newcommand{\bb}{\boldsymbol{b}}
\newcommand{\be}{\boldsymbol{e}}
\newcommand{\bw}{\boldsymbol{w}}
\newcommand{\bx}{\boldsymbol{x}}
\newcommand{\by}{\boldsymbol{y}}
\newcommand{\bH}{\boldsymbol{H}}
\newcommand{\bI}{\boldsymbol{I}}
\newcommand{\bP}{\boldsymbol{P}}
\newcommand{\bQ}{\boldsymbol{Q}}
\newcommand{\bR}{\boldsymbol{R}}
\newcommand{\bW}{\boldsymbol{W}}
\newcommand{\bX}{\boldsymbol{X}}
\newcommand{\bY}{\boldsymbol{Y}}
\newcommand{\bdelta}{\bm{\delta}}
\newcommand{\cA}{\mathcal{A}}
\newcommand{\cB}{\mathcal{B}}
\newcommand{\cC}{\mathcal{C}}
\newcommand{\cD}{\mathcal{D}}
\newcommand{\cE}{\mathcal{E}}
\newcommand{\cF}{\mathcal{F}}
\newcommand{\cH}{\mathcal{H}}
\newcommand{\cL}{\mathcal{L}}
\newcommand{\cN}{\mathcal{N}}
\newcommand{\cO}{\mathcal{O}}
\newcommand{\cT}{\mathcal{T}}
\newcommand{\cX}{\mathcal{X}}
\newcommand{\bbE}{\mathbb{E}}
\newcommand{\bbI}{\mathbb{I}}
\newcommand{\bbN}{\mathbb{N}}
\newcommand{\bbR}{\mathbb{R}}
\newcommand{\bbZ}{\mathbb{Z}}
\newcommand{\bzero}{\mathbf{0}}
\newcommand{\pll}{\kern 0.56em/\kern -0.8em /\kern 0.56em}
\newcommand{\norm}[1]{\ensuremath{\left\| #1 \right\|}}
\newcommand{\bracket}[1]{\ensuremath{\left( #1 \right)}}
\newcommand{\poly}{{\rm poly}}
\newcommand{\<}{\left\langle}
\renewcommand{\>}{\right\rangle}
\theoremstyle{plain}
\newtheorem{theorem}{Theorem}[section]
\newtheorem{proposition}[theorem]{Proposition}
\newtheorem{lemma}[theorem]{Lemma}
\theoremstyle{definition}
\newtheorem{definition}[theorem]{Definition}
\newtheorem{remark}[theorem]{Remark}
\newtheorem*{main result}{Main Theorem}
\title{Understanding the Expressive Power and Mechanisms of Transformer for Sequence Modeling}
\author{
  Mingze Wang \\
  School of Mathematical Sciences,
  Peking University, Beijing, China \\
  \texttt{mingzewang@stu.pku.edu.cn} \\
  \And
  \hspace{-1.2cm} Weinan E~\dag \\
  \hspace{-1.2cm} Center for Machine Learning Research
  and School of Mathematical Sciences,
  Peking University, Beijing, China \\
  \hspace{-1.2cm} AI for Science Institute, Beijing, China \\
  \hspace{-1.2cm} \texttt{weinan@math.pku.edu.cn} \\
}
\begin{document}

\maketitle

\begin{abstract}
    We conduct a systematic study of the approximation properties of Transformer for sequence modeling with long, sparse and complicated memory. 
    We investigate the mechanisms through which different components of Transformer, such as the dot-product self-attention, positional encoding and feed-forward layer, affect its expressive power, and we study their combined effects through establishing explicit approximation rates.
    Our study reveals the roles of critical parameters in the Transformer, such as the number of layers and the number of attention heads. 
    These theoretical insights are validated experimentally and offer natural suggestions for alternative architectures. 
    

\end{abstract}

\section{Introduction}
\label{section: introduction}

In recent years, Transformer networks~\citep{vaswani2017attention} have emerged as foundational models, setting new benchmarks across various domains, including natural language processing (NLP), computer vision (CV), and protein folding.
Despite their impressive practical achievements, the underlying mechanisms and theoretical foundations of Transformer networks remain largely elusive.

Transformer networks encompass various components, posing challenges to their comprehensive understanding. A typical Transformer comprises multiple layers, each consisting of a multi-head self-attention (Attn) sub-layer and a feed-forward network (FFN) sub-layer, integrated with residual blocks. FFN is a two-layer nonlinear network, while Attn includes dot-product (DP) and positional encoding (PE).
To get a better understanding of how Transformer works in practice, we need to study several key issues. These include:

\vspace{-.2cm}
\begin{align*}
    &\text{\bf (i) } \text{\em How do the key hyper-parameters, for example, the number of layers, the number of Attn}
    \\
    &\quad\ \text{\em heads and the with of FFN layers, affect the performance of the Transformer network?}
    \\
    &\text{\bf (ii) } \text{\em How do the Attn and FFN layers contribute differently to the overall performance?}
    \\
    &\text{\bf (iii) } \text{\em How does DP attention work, and is the DP structure necessary?}
    \\
    &\text{\bf (iv) } \text{\em How efficient is PE in modeling long-range correlations?}
\end{align*}

Extensive empirical research on Transformer components has led to the proposal of numerous alternatives to the current structure of Transformer. For example, several relative positional encodings (RPE)~\citep{shaw2018self,raffel2020exploring,su2024roformer,press2021train} have been proposed to substitute the original absolute positional encoding (APE), yielding superior performance in challenging tasks like length generalization~\citep{ontanon2021making,csordas2021devil,anil2022exploring}.
Additionally, the necessity of the computationally expensive DP in Attn layers has been widely questioned, and researchers proposed numerous alternatives of DP that show considerable efficacy in specific tasks~\citep{kitaev2020reformer,wang2020linformer,choromanski2020rethinking,tay2021synthesizer,allen2023physics}. Nonetheless, these explorations have not yielded a satisfactory theoretical understanding of the mechanisms of these components.

{\bf In this work}, we investigate the expressive power of Transformer and the underlying mechanisms of its components for sequence modeling. 
{\bf Our contributions} are summarized as follows:

    We categorize three types of sequence modeling tasks with varying complexity, which are relevant to a broad spectrum of application areas. 
    \textit{Task I: Modeling fixed, long but sparse memories}. This is relevant to sparse Boolean functions and the traditional $n$-gram model in NLP.
    \textit{Task II: Modeling adaptive, long but sparse memories}. This is relevant to multi-step reasoning tasks as well as various NLP tasks such as dependency parsing, sentiment analysis, and continuation writing.
    \textit{Task III: Modeling essentially sparse memories}. Examples include feature representation in CV and wavelet analysis in classical signal processing.

    For these sequence modeling tasks, we theoretically investigate the expressive power of Transformer and its variants, establishing explicit approximation rates. Our meticulous analysis provides {\em theoretical insights} into the underlying mechanisms of Transformer components. Specifically,

    \begin{itemize}[leftmargin=2em]
        \item {\bf The distinct roles of the number of layers, the number of Attn heads, and the width of FFN layers.}
        Deeper Transformer are capable of handling memories with more intricate interrelationships, such as nested relationships (Thm~\ref{thm: adap memory, general}). 
        In contrast, for memories lacking such interrelationships, single-layer Transformer with sufficient number of Attn heads and FFN width should suffice (Thm~\ref{thm: adap memory, warmup}).
        This is quite intuitive: If the content of the next token relies on a few previous tokens in an independent way, we can treat each such dependence by a separate attention head.
        There is no need for many layers.
        Additionally, increasing the depth can also alleviate the reliance on the number of heads and width (Prop~\ref{prop: depth instead of width}).

        \item {\bf The different roles of Attn layers and FFN layers.} 
        Our results consistently suggest that: FFN layers are tasked with approximating nonlinear memory functions and the readout function, while Attn layers are responsible for extracting the tokens from these memory locations. 

        \item {\bf The functionality and necessity of DP.} 
        For the relatively simple Task I,  DP is not necessary and can be omitted (Thm~\ref{thm: DPF lin: fixed memory}). 
        However, for the more complex Task II, the cooperation between DP and RPE provides the needed interaction between the temporal space and the token space, crucial for the extraction of adaptive memories (Thm~\ref{thm: adap memory, warmup} and~\ref{thm: adap memory, general}). Additionally, for Task II, while the nonlinearity provided by DP is necessary (Prop~\ref{prop: Dot-product v.s. Dot-product-free}), a computationally efficient alternative to DP exists, as we show in Prop~\ref{prop: substitute for dot-product}.
        
        \item {\bf The efficiency of RPE in modeling long-range correlations.}
        Our results consistently suggest that the primary role of RPE is to approximate the memory kernels. Specifically, for Task III, we demonstrate that Transformer with suitable RPE can handle heavy-tailed memories, thus overcoming the Curse of Memory faced by recurrent neural networks (Thm~\ref{thm: ess sparse memory}).
        Moreover, our findings give theoretical support to the choice of RPE in practice.
    \end{itemize}

Finally, we conduct experiments to validate our theoretical insights.

\section{Preliminaries}
\label{section: preliminaries}

{\bf Basic notations.}
We use bold-faced letters for vectors or matrices and lowercase letters for scalars, e.g. $\bx=(x_1,\cdots,x_d)^\top\in\mathbb{R}^d$ and $\bW=(W_{i j})_{m\times n}\in\mathbb{R}^{m\times n}$.
The standard Euclidean inner product between two vectors is denoted by $\left<\cdot,\cdot\right>$, and the $l_p$ norm of a vector is represented by $\left\|\cdot\right\|_p$.
We employ standard big-O notations $\cO,\Omega,\Theta$ to hide absolute positive constants and use  $\tilde{\cO},\tilde{\Omega},\tilde{\Theta}$ to further hide logarithmic constants.
For any positive integer $n$, let $[n]=\{1,\cdots,n\}$.
Denote by $\mathbb{I}\{E\}$ the indicator function for an event $E$.
Denote by $a\lor b=\max\{a,b\}$ for real number $a,b$.

\subsection{Sequence modeling with long but sparse memories}

{\bf Sequence modeling.} 
For convenience, we consider input sequences of infinite length $(t\in\bbZ)$. It is important to note, however, that our theoretical framework can be adapted to finite-length input sequences by masking distant tokens.
Formally, the output sequence $\bY=(\by_{t})_{t\in\bbZ}\in\bbR^{c\times\bbZ}$ is generated from the input sequence $\bX=(\bx_{t})_{t\in\bbZ}\in\cX\subset\bbR^{d\times\bbZ}$ via an unknown mapping $\mathbf{H}_{\cdot}(\cdot)$ dependent on the input sequence up to the prediction time, and this can be expressed as:
\begin{equation}\label{problem: sequence modeling}
    \by_t=\mathbf{H}_t(\bX)=\boldsymbol{f}(\bx_t,\bx_{t-1},\bx_{t-2},\cdots),\quad t\in\bbZ.
\end{equation} 
Our objective is to learn the mapping $\mathbf{H}_{\cdot}(\cdot)$. 
Additionally, we define the norm $\trinorm{{\bf H}}:=\sup_{t\in\bbZ}\sup_{\bX\in\cX}\norm{{\bf H}_t(\bX)}$.
Without loss of generality, we assume $\norm{\bx_t}_2\leq1$ for 
any $\bX\in\cX$ and set the output dimension $c=1$ for simplicity.

{\bf Long but sparse memories.} 
To model such sequences, we define three types of memories: fixed, long but sparse memories; adaptive, long but sparse memories; and essentially sparse memories.
These memory types are prevalent in sequence modeling tasks across diverse domains such as NLP, CV, signal processing, and sparse function representation.
In Section~\ref{section: fixed sparse},~\ref{section: adaptive sparse}, and~\ref{section: essentially sparse}, we will formally define these different types and investigate Transformer’s capacity to model them.

\subsection{Transformer architecture}

{\bf Transformer network.}
Transformer~\citep{vaswani2017attention} is a network architecture designed for processing sequences and generating predictions. Given an input sequence $\bX$, Transformer executes the following steps. 
Initially, each $d$-dimensional (dim) input token is transformed into a $D$-dim vector through an embedding mapping such as $\bx_t^{(0)}=\bW_E\bx_t+\bb_E$, where $\bW_E\in\bbR^{D\times d},\bb_E\in\bbR^D$. 
Subsequently, a typical $L$-layer Transformer with residual block operates according to the formulation:
\begin{equation}\label{model: Transformer}
\begin{aligned}
    \bX^{(l-\frac{1}{2})}&=\bX^{(l-1)}+{\bf Attn}^{(l)}(\bX^{(l-1)}),\quad l\in[L];
    \\[-2pt]
    \bX^{(l)}&=\bX^{(l-\frac{1}{2})}+{\bf FFN}^{(l)}(\bX^{(l-\frac{1}{2})}),\quad l\in[L].
\end{aligned}
\end{equation}

At the $l$-th layer, ${\bf FFN}^{(l)}(\cdot)$ denotes a standard (point-wise) two-layer ReLU networks with $m$ neurons: for a given input $\bx\in\bbR^{D}$, ${\bf FFN}^{(l)}(\bx)=\sum_{k=1}^{m} \ba_k^{(l)}\sigma\big(\bb_k^{(l)\top}\bx+c_k^{(l)}\big)$, where $\sigma(\cdot)$ is the activation function such as ReLU.
Additionally, in the final ($L$-th) FFN layer, the residual block is omitted, commonly referred to as the readout function.
Moreover, ${\bf Attn}^{(l)}(\cdot)$ refers to a multi-head self-attention, as elaborated below.

{\bf Multi-head self-attention.}
Our focus lies on standard dot-product Attn, denoted as ${\bf Attn}^{(l)}(\cdot)$ and consisting of $H$ heads. When applied to an input sequence $\bX$, Attn operates as follows:
\begin{equation}\label{model: multi-head self-attention}
    \begin{aligned}
        {\bf Attn}^{(l)}(\bX)=\bW_O^{(l)}\sum_{h=1}^H \bW_V^{(l,h)}\bX {\rm softmax}_c\left(\<\bW_Q^{(l,h)}\bX,\bW_K^{(l,h)}\bX\>+\bR^{(l,h)}\right).
    \end{aligned}
\end{equation}

\vspace{-.1cm}
Here, the parameters $\bW_{Q}^{(l,h)},\bW_{K}^{(l,h)},\bW_{V}^{(l,h)},\bW_{O}^{(l,h)}$ correspond to the query, key, value, output matrices of the $(l,h)$-th head, respectively. ${\rm softmax}_c$ represents taking softmax normalization across column.
Furthermore, $\bR^{(l,h)}\in\bbR^{\bbZ\times\bbZ}$ denotes the relative positional encoding matrix, which satisfies $R_{t,s}^{(l,h)}=-\infty$ for $t<s$ in the next-token prediction paradigm. Consequently, the $t$-th output of Attn is expressed as:
\begin{align*}
    {\bf Attn}_t^{(l)}(\bX)
    =\bW_O^{(l)}\sum_{h=1}^H \sum_{s=0}^{+\infty}\frac{\bW_V^{(l,h)}\bx_{t-s}\exp\left(\<\bW_Q^{(l,h)}\bx_{t},\bW_K^{(l,h)}\bx_{t-s}\>+R_{t,t-s}^{(l,h)}\right)}{\sum_{j=0}^{+\infty}\exp\left(\<\bW_Q^{(l,h)}\bx_{t},\bW_K^{(l,h)}\bx_{t-j}\>+R_{t,t-j}^{(l,h)}\right)}.
\end{align*}

{\bf Logarithmic and Power relative positional encoding.}
As highlighted in Section~\ref{section: related works}, among various types of RPEs, the RPEs used in T5 and KERPLE(log) demonstrate superior performance over Alibi, significantly outperforming other RPEs and APEs in the length generalization task~\citep{kazemnejad2023impact,chi2022kerple}. 
This finding motivates our focus on the T5-type, KERPLE(log), and Alibi-type RPEs throughout this paper. 
All of these RPE matrices are Toeplitz, with the form of $R_{t,s}=r(t-s)$. Notably, for T5 and KERPLE(log), $r(t-s)$ undergoes an initial linear decrease followed by a logarithmic decrease as the relative distance $t-s$ increases (Please refer to Section~\ref{appendix: subsec: T5} for more details).
In contrast, for Alibi, $r(t-s)$ decreases linearly. 
Inspired by these discussions, we examine the following RPEs with different decay rates:
$$\phi_{\log}(z)=\begin{cases} -\log z,\ &z\geq 1 \\ -\infty,\ &\text{otherwise} \end{cases};\quad
\phi_{\rm lin}(z)=\begin{cases} - z,&\  z\geq 0 \\ -\infty,\ &\text{otherwise}
\end{cases}.$$

We will study Transformer with $\phi_\texttt{type}$ RPE ($\texttt{type}\in\{\text{log},\text{lin}\}$). Specifically, the RPE in the $(l,h)$-th head~\eqref{model: multi-head self-attention} is as follows:
\begin{equation}\label{model: T5 RPE}
\begin{aligned}
    R_{t,s}^{(l,h)}:=p^{(l,h)} \phi_\texttt{type}(t-s),
\end{aligned}
\end{equation}
where $p^{(l,h)}\in\bbR_+$ is a trainable parameter.

\begin{remark}
    For standard Transformer~\eqref{model: Transformer} incorporating Attn~\eqref{model: multi-head self-attention} with RPE~\eqref{model: T5 RPE}, the parameters are: the embedding matrix $\bW_E$; $\ba_{k}^{(l)},\bb_{k}^{(l)},c_{k}^{(l)}$ in the FFN layers; $\bW_{Q}^{(l,h)},\bW_K^{(l,h)},\bW_V^{(l,h)},$ $p^{(l,h)},\bW_{O}^{(l)}$ in the Attn layers. Notably, the number of parameters is independent of the sequence length, thus enabling the model to handle input sequences of arbitrary length.
\end{remark}

\begin{remark}\label{remark: network notations}
    In the subsequent sections, we will analyze Transformer and its variants.
    For the sake of brevity, some shorthand notations are introduced here. 
    For examples, Transformer~\eqref{model: Transformer} using $\phi_{\log}$/$\phi_{\lin}$ RPE~\eqref{model: T5 RPE} is referred to as ``Transformer with $\log$/$\lin$-RPE'';
    Transformer with $\bW_Q^{(l,h)},\bW_K^{(l,h)}=\bzero$ is called ``dot-product-free Transformer''.
\end{remark}

\subsection{Expressive power via approximation theory}

This paper delves into the expressive power of Transformer through the lens of approximation theory, with a specific focus on establishing explicit approximation rates for Transformers in modeling long but sparse memories.

{\bf Approximation rates v.s. universal approximation.}
In approximation theory, results are generally categorized into two types: universal approximation (density-type) and approximation rates (Jackson-type)~\citep{jackson1930theory}.
Universal approximation investigates whether the hypothesis class is dense in the target class. Although this property is fundamental, it does not offer detailed insights into approximation efficiency.
In contrast, approximation rates go deeper, emphasizing the efficiency of the approximation.
A typical example within this framework is the approximation theory of two-layer neural networks (2NNs).

{\bf Barron space of 2NNs}.
The well-known universal approximation result for 2NNs asserts that 2NNs can approximate any continuous function~\citep{barron1992neural,barron1993universal,barron1994approximation}. 
Nonetheless, this result lacks a characterization of the approximation efficiency, i.e., how many neurons are needed to achieve a certain approximation accuracy? 
This gap was addressed by the Barron space theory~\citep{ma2018priori,weinan2021barron,ma2020towards}.
It is established that for any function within Barron space $f\in\cB$ (Appendix~\ref{appendix: subsec: Barron space}), 2NNs with $m$ neurons (denoted by $\cH_m$) can approximate them efficiently, at a rate of $\inf_{f_m\in\cH_m}\norm{f-f_m}\leq\cO(\norm{f}_\cB/\sqrt{m})$, remarkably independent of the input dimension $d$, thus avoiding the {\em Curse of Dimensionality}~\citep{bellman1966dynamic,bach2017breaking}.

\section{Fixed, long but \texorpdfstring{$M$}{}-sparse memories}
\label{section: fixed sparse}

\subsection{Problem formulation}

{\bf Fixed, long but $M$-sparse memories.}
In this section, we investigate a fundamental category of long but sparse memories. Our focus is on scenarios where the positions of the sparse memories remain fixed and are independent of the tokens. The target function is represented by:
\begin{equation}\label{task: fixed LBSM}
    y_t=f(\bx_t,\bx_{t-T_1},\cdots,\bx_{t-T_M}),
\end{equation}
where $1\leq T_1<\cdots<T_M<+\infty$ signify the fixed positions of the memories. 
Despite the memories being fixed (token-independent) and sparse (finite $M$), the task can still be complex due to the potentially long-range memories ($T_1,\cdots,T_M$ can be large enough).

{\bf Examples.} 
(I) For Boolean inputs,~\eqref{task: fixed LBSM} aligns with {\em sparse Boolean functions}, also studied in~\citep{edelman2022inductive,bhattamishra2022simplicity}. Notably,~\citet{bhattamishra2022simplicity} observed that Transformers outperform LSTMs in learning sparse parities. 
(II) Selecting the simplest case of $T_i=i$ in~\eqref{task: fixed LBSM} corresponds to the traditional {\em $n$-gram model}, which consists of short and sparse memories.

{\bf Target class.} 
We focus on target functions described in~\eqref{task: fixed LBSM}. The readout function $f$ is considered within the standard Barron space $\cB$, i.e.,  which can be effectively approximated by 2NNs. Moreover, we assume that $f$ is Lipschitz, denoted by $f\in\cL$.
Thus, we can focus more on investigating the memory extraction power of Transformer. 
Formally, we define the target class for modeling fixed, long but $M$-sparse memories as:

\begin{equation}\label{target fun class: fixed memory}
    \begin{aligned}
    \cH^{\rm Fix}:=\big\{{\bf H}:\ {\bf H}_t(\bX)=\eqref{task: fixed LBSM},\text{ where } 1\leq T_1<\cdots<T_M<+\infty, f\in\cB\cap\cL
    \big\}.
\end{aligned}
\end{equation}

{\bf Transformer hypothesis class.} 
As mentioned in Section~\ref{section: introduction}, one of our main aims is to study the necessity and roles of different components in Transformer, such as DP and RPE.
This section focuses on the ``simplest'' one-layer Transformer and investigates whether it can effectively model this task. Formally, our hypothesis class includes all one-layer {\em DP-free} Transformers, configured with $H$ Attn heads and FFN width $m$:
\begin{equation}\label{hypo class: 1-layer DPF TF}
    \begin{aligned}
        \cT\cF_{(1,H,m)}^{{\rm DPF},\texttt{type}}
        :=\big\{{\bf TF}:\ &\text{${\bf TF}$ is a $1$-layer, $H$-head, $m$-width}
        \\[-2pt]&\text{dot-product-free Transformer with \type-RPE}\big\}.
    \end{aligned}
\end{equation}

\subsection{Theoretical results and insights}

\begin{theorem}[Approximation rate]\label{thm: DPF lin: fixed memory}
For any target ${\bf H}\in\cH^{\rm Fix}$~\eqref{target fun class: fixed memory}, rate $n\in\bbN_+$, and $H,m\in\bbN_+$, there exists a $1$-layer Transformer ${\bf TF}\in\cT\cF_{(1,H,m)}^{{\rm DPF,\text{\rm\texttt{type}}}}$~\eqref{hypo class: 1-layer DPF TF} and a constant $C(n)$ such that

\vspace{-.2cm}
\begin{align*}
    \trinorm{\mathbf{H}-{\bf TF}}
    \leq
    \cE_{\FFN}+\norm{f}_{\Lip}\cE_{\Attn}(\type),
\end{align*}

\vspace{-.2cm}
where $\cE_{\FFN}=\tilde{\cO}\left(\frac{\norm{f}_{\cB}}{\sqrt{m}}\right)$ and 
$\cE_{\Attn}(\type)=\begin{cases}
\cO\Big(\frac{C(n)}{H^n}\left(\sum_{i=1}^M e^{0.01T_i}\right)^{n+1}\Big)\!\!\!\!\!&,\type={\rm lin}
\\[-2pt]\cO\Big(\frac{C(n)}{H^n}\left(\sum_{i=1}^M T_i^{1.01}\right)^{n+1}\Big)\!\!\!\!\!&,\type={\rm log}
\end{cases}.$
\end{theorem}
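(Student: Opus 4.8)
The plan is to decompose the error into two independent pieces: an FFN piece that handles the nonlinear readout $f$, and an Attn piece that handles the extraction of the $M+1$ memory tokens $\bx_t,\bx_{t-T_1},\dots,\bx_{t-T_M}$. Since $f\in\cB\cap\cL$, the Barron approximation theorem gives a two-layer ReLU network $f_m$ with $\norm{f-f_m}\leq\tilde{\cO}(\norm{f}_\cB/\sqrt m)$; this will become $\cE_\FFN$ once we use the final FFN layer as readout. The key point is that a DP-free attention head with $\texttt{type}$-RPE, applied to the identity-embedded sequence, computes at position $t$ the weighted average $\sum_{s\ge 0} w_s(t)\,\bx_{t-s}$ where $w_s(t)\propto \exp(p\,\phi_\texttt{type}(s))$ — a Toeplitz softmax kernel. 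I would first design, for each target offset $T_i$ (and for $T_0=0$), a single head whose RPE parameter $p$ is taken large so that this kernel concentrates its mass near lag $T_i$; but a single head cannot concentrate at an \emph{interior} lag (the kernel $\exp(p\phi)$ is monotone in $s$), so the real construction must use \emph{combinations} of heads.

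**The head construction.** The mechanism I expect is: use the $W_O$ and $W_V$ matrices to take differences of heads with different decay parameters $p$, so that a block of heads implements a finite-difference / "bump" kernel localized around a chosen lag. Concretely, the cumulative-mass function $s\mapsto \sum_{j\le s} \exp(p\phi_\texttt{type}(j))/Z$ is, after normalization, close to a step at a location controlled by $p$; differencing two such (with nearby $p$'s) yields an approximate indicator of a single lag $T_i$. To hit all of $T_1,\dots,T_M$ and the sparsity pattern exactly in the limit, one needs roughly $n$ "correction rounds," which is where the $1/H^n$ rate and the arbitrary integer $n$ enter: with $H$ heads one can build a kernel whose error in extracting the correct token at lag $T_i$ decays like $(\text{stuff})/H^n$, the classical rate for approximating a point mass by $H$ exponentials (a Chebyshev/Vandermonde-type estimate, giving the constant $C(n)$). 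The $\texttt{type}$-dependence is then purely a change of variables: under $\phi_{\rm lin}$ the natural "clock" is $e^{-ps}$, so resolving lag $T_i$ costs a factor $e^{\Theta(T_i)}$ (hence $\sum_i e^{0.01T_i}$, the $0.01$ coming from how much of the exponent budget is spent); under $\phi_{\log}$ the kernel is polynomial, $s^{-p}$, so resolving lag $T_i$ costs only $\mathrm{poly}(T_i)$ (hence $\sum_i T_i^{1.01}$). Raising to the $(n+1)$-th power reflects that the $n$ correction rounds each reuse the worst-case per-lag cost.

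**Assembling the bound.** Once a single attention layer produces, at each $t$, a vector that is $\cE_\Attn$-close (in $\ell_2$, uniformly in $t$ and $\bX$, using $\norm{\bx_t}_2\le 1$) to the concatenation $(\bx_t,\bx_{t-T_1},\dots,\bx_{t-T_M})$, I would feed this through the readout FFN $f_m$. Lipschitzness of $f$ converts the input perturbation $\cE_\Attn$ into an output perturbation $\norm{f}_\Lip\,\cE_\Attn$, while the Barron approximation of $f$ by $f_m$ contributes $\cE_\FFN=\tilde{\cO}(\norm f_\cB/\sqrt m)$; the triangle inequality over these two, taken as a supremum over $t\in\bbZ$ and $\bX\in\cX$ (the definition of $\trinorm{\cdot}$), gives exactly $\trinorm{\mathbf H-{\bf TF}}\le \cE_\FFN+\norm f_\Lip\,\cE_\Attn(\texttt{type})$. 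One has to check that the residual structure in \eqref{model: Transformer} is compatible — i.e., that the skip connection can be cancelled or absorbed into $W_O$/the embedding — and that embedding dimension $D$ can be chosen to hold $M+1$ copies; these are bookkeeping.

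**Main obstacle.** The crux is the attention-kernel approximation: showing that $H$ DP-free heads with Toeplitz $\texttt{type}$-RPE can approximate the $M+1$ required delta-kernels $\delta_{s=T_i}$ with error $\cO(C(n)/H^n)$ times the stated $\texttt{type}$-dependent factor. This is essentially a quantitative statement about approximating step functions (or point masses) on $\bbZ_{\ge0}$ by nonnegative combinations of the normalized exponential/power families $\{e^{p\phi_\texttt{type}(s)}\}_p$ — a moment-matching / Müntz-type problem whose rate is governed by how fast these families separate scales. Getting the clean $1/H^n$ with an explicit $C(n)$, and correctly tracking how the offsets $T_i$ inflate the constant differently in the $\lin$ versus $\log$ regime, is where all the real work lies; everything else (Barron readout, Lipschitz propagation, residual bookkeeping) is standard.
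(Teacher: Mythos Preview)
Your overall architecture is exactly the paper's: embed into $D=(M+1)d$ so that the residual carries $\bx_t$ in the first $d$ coordinates, use the attention layer to write approximations of $\bx_{t-T_1},\dots,\bx_{t-T_M}$ into the remaining blocks, then let the FFN layer approximate $f$ in Barron space and close with the Lipschitz/triangle inequality. The split $\cE_\FFN+\norm{f}_\Lip\cE_\Attn$ and the allocation of $H_k$ heads to each memory location $T_k$ (optimized to produce the $(\sum_i\cdot)^{n+1}/H^n$ shape) are also the same.

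Two technical points where your sketch drifts from what actually makes the construction go through. First, you never say how softmax normalization is dealt with. The paper's device is simple but essential: since $\bW_V^{(1,h)}$ is free, set $\bW_V^{(1,h)}=\alpha_h\big(\sum_{j\ge 0}\exp(p^{(1,h)}\phi_\type(j))\big)\cdot\bdelta$, which cancels the denominator and turns each head into the \emph{unnormalized} convolution $\alpha_h\sum_s e^{-\beta_h s}\bx_{t-s}$ (resp.\ $\alpha_h\sum_s s^{-\beta_h}\bx_{t-s}$). This reduces the attention problem cleanly to: approximate $\bbI\{s=T_k\}$ in $\ell_1(\bbN)$ by $\sum_h \alpha_h e^{-\beta_h s}$ (resp.\ $\sum_h \alpha_h s^{-\beta_h}$). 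Second, and related, your final paragraph says ``nonnegative combinations''---that cannot work, since any nonnegative combination of $e^{-\beta s}$ is monotone decreasing and can never peak at an interior lag (you noticed this yourself earlier). Signed $\alpha_h$ are mandatory, and they are available precisely because the sign lives in $\bW_V$, not in the softmax weights.

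On the crux---the $\ell_1$ approximation of $\bbI\{\cdot=T_k\}$ by $H_k$ exponentials/powers with error $C(n)e^{0.01(n+1)T_k}/H_k^n$ (resp.\ $C(n)T_k^{1.01(n+1)}/H_k^n$)---the paper does not use a finite-difference or moment-matching argument. Instead it builds a $\cC^\infty$ bump $\Psi_{T_k}$ on $[0,1]$ supported near $\mu=e^{-\alpha T_k}$ (resp.\ $T_k^{-\alpha}$), applies Jackson's theorem to get a degree-$H_k$ polynomial $Q$ with $\|\Psi_{T_k}-Q\|_\infty\le M_{T_k}(n)/H_k^n$, and then substitutes $x=e^{-\alpha s}$ (resp.\ $x=s^{-\alpha}$) together with an extra decaying prefactor $e^{-\gamma s}$ (resp.\ $s^{-(1+\gamma)}$) to turn $Q$ into a sum of $H_k$ exponentials/powers while making the $\ell_1$ sum converge. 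The $T_k$-dependence enters through the derivative bound $M_{T_k}(n)\sim C(n)e^{0.01(n+1)T_k}$ (resp.\ $C(n)T_k^{1.01(n+1)}$), because the bump has width $\sim e^{-\alpha T_k}$ (resp.\ $T_k^{-(1+\alpha)}$). Choosing $\alpha=\gamma=0.01$ gives the stated constants. Your qualitative change-of-variables picture is right, but the concrete mechanism is bump $+$ Jackson, not Chebyshev/M\"untz.

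Finally, the residual is used constructively (it supplies the $\bx_t$ block while Attn fills the others via orthogonal $\bW_V$ projections); it is not ``cancelled.''
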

Theorem~\ref{thm: DPF lin: fixed memory} establishes the {\em approximation rate} of one-layer DP-free Transformer for modeling fixed, long but sparse memories.
Here, the model complexity is governed by the number of Attn heads $H$ and the width of FFN layers $m$, while the target complexity arises from the lengths of the memories $T_1,\cdots,T_M$ and the complexity of the readout function $f$.
The approximation error comprises two components: the error in the FFN component $\cE_{\FFN}$ and the error in the Attn component $\cE_{\Attn}(\type)$.
The error $\cE_{\FFN}$ aligns with classical results, showcasing its effectiveness in approximating Barron functions.
On the other hand, $\cE_{\Attn}(\type)$ hinges on the capacity of the Attn block for modeling long-range memories.
Specifically, with increasing memory length, the necessary number of Attn heads grows at a small exponential rate for $\lin$-RPE and at a polynomial rate for $\log$-RPE.


The proof of Theorem~\ref{thm: DPF lin: fixed memory} is deferred to Appendix~\ref{appendix: sec: fixed}. 
We can draw some insights from Theorem~\ref{thm: DPF lin: fixed memory} and its proof.

{\bf Different roles of the Attn layer and the FFN layer.} The Attn and FFN layers fulfill distinct roles in this task. 
Specifically, the FFN layer efficiently approximates the nonlinear readout function $f$, while the Attn layer is responsible for extracting the token $\bx_{t-T_i}$ by approximating the memory kernel $\bbI\{\cdot=T_i\}$. 
These components together enable effective modeling of fixed, long, but sparse memories.

{\bf Non-necessity of DP.} 
Theorem~\ref{thm: DPF lin: fixed memory} suggests that the DP component in Attn is not necessary and can be omitted for modeling fixed, long but sparse memories.
This is due to the relative simplicity of modeling fixed memory kernels.
In a more complex scenario in Section~\ref{section: adaptive sparse}, the role of the dot-product becomes important.
In contrast to~\citet{edelman2022inductive}, which utilizes the property of DP to prove that Transformer can model sparse Boolean functions, our result reveals that one-layer Transformer can successfully tackle the same task {\em even without the dot product in the attention layer}.

{\bf Effect of RPE types on expressivity.} 
Our result indicates that the type of the RPE used in the Attn layer subtly influences the Transformer's ability to model long-range memories. 
As the range of the memory increases, the required head number grows at a slightly exponential rate for $\lin$-RPE and at a polynomial rate for $\log$-RPE. 
The subtle difference is attributed to the relative simplicity of approximating the memory kernel $\bbI\{\cdot=T_i\}$.  
We will explore a more complex task in Section~\ref{section: essentially sparse}, where the impact of different types of RPE becomes even more pronounced.


\section{\texorpdfstring{$K$}{}-Adaptive, long but \texorpdfstring{$M$}{}-sparse memories}\label{section: adaptive sparse}

\subsection{Problem formulation}

In this section, we delve into a more complex modeling scenario closely aligned with typical language processing tasks.

{\bf $K$-Adaptive, long but $M$-sparse memories.}
This section investigates the scenario where the positions of the sparse memories are ``adaptive'', meaning they depend on the input tokens. The target function is formulated as:
\begin{equation}\label{task: adaptive LBSM}
      y_t=f(\bx_{t},\bx_{t-t_1},\cdots,\bx_{t-t_M}),
\end{equation}

where the positions of the memory tokens $t_1,\cdots,t_M$ follow a nested relationship:
\begin{gather*}
    t_1=g_1(\bx_t);
    t_2=g_2(\bx_t,\bx_{t-t_1});
    \cdots;
    t_{K+1}=g_{K+1}(\bx_t,\bx_{t-t_1},\cdots,\bx_{t-t_K});
    \\\cdots;
    t_M=g_{M}(\bx_t,\bx_{t-t_1},\cdots,\bx_{t-t_K}).
\end{gather*}

Here, $M$ denotes the number of memory tokens, and $K$ measures the nesting complexity 
in the memory structure. 
We assume that memory functions $g_i$ generate positive integers for the input tokens, and there exist maximum values $T_i$ such that $g_i\leq T_i$.
In this adaptive framework, each position of the memory token depends on multiple input tokens and is nested within other memory structures, leading to potential influence of later memory tokens by the earlier ones.

To facilitate understanding, we first consider a warm-up case, i.e., $K=0$ in~\eqref{task: adaptive LBSM}. In this case, the positions of memories only depend on the current token, without interaction with each other. It can be represented as:
\begin{equation}\label{task: adaptive LBSM, warmup}
      y_t=f(\bx_{t},\bx_{t-t_1},\cdots,\bx_{t-t_M}),
\end{equation}
where $t_i=g(\bx_i),i\in[M]$.

{\bf Target class.} The target classes for modeling adaptive, long but sparse memories in both warm-up and general cases are as follows:
\begin{equation}\label{target fun class: adaptive memory, warmup}
    \cH_{(1,M)}^{\rm Adap}:=\big\{{\bf H}:\ {\bf H}_t(\bX)=\eqref{task: adaptive LBSM, warmup},\text{ where } g_i\in\cB,1\leq g_i\leq T_i,i\in[M]; f\in\cB\cap\cL
    \big\}.
\end{equation}

\vspace{-.3cm}
\begin{equation}\label{target fun class: adaptive memory}
    \cH_{(K,M)}^{\rm Adap}:=\big\{{\bf H}:\ {\bf H}_t(\bX)=\eqref{task: adaptive LBSM},\text{ where } g_i\in\cB,1\leq g_i\leq T_i,i\in[M]; f\in\cB\cap\cL
    \big\}.
\end{equation}

{\bf Examples.} Adaptive memories are commonly encountered in practical scenarios. (I) {\em Adaptive sparse Boolean functions}, e.g., $y_t=x_t \cdot x_{t-g(x_t)} \cdot x_{t-g(x_{t-g(x_t)})}$, where $\bX\in\{\pm 1\}^\bbZ$, $g(x)=1$ for $x=1$ and $g(x)=2$ for $x=-1$. This fits within our framework~\eqref{task: adaptive LBSM} with $K=M=2$.
(II) {\em Multi-step reasoning}, e.g., modeling the $K$-adaptive, long, but $K$-sparse memories contains a complicated $K$-step reasoning task, which require the sequential search following the rule $((\cdots((x_t\mapsto x_{t-t_1})\mapsto x_{t-t_2}\cdots)\mapsto x_{t-t_{K-1}})\mapsto x_{t-t_K}$.
(III) In {\em NLP tasks} like dependency parsing, part-of-speech tagging, sentiment analysis, or continuation writing, the positions of relevant prefix tokens usually depend on the context itself, and can vary depending the content. Additionally, the nested structure is a fundamental characteristic of natural language~\citep{hawkins2021thousand}.

{\bf Transformer hypothesis class.}
Some previous works~\cite {yun2019transformers,kim2022provable} treated the softmax with normalization as an approximation of hardmax, suggesting the potential importance of the normalization.
In contrast, in this section, we remove the normalization in the denominator of softmax and investigate its ability for sequence modeling. 
Additionally, to address the discreteness of time and memory values, we consider Transformer with specific precision, as detailed in Appendix~\ref{appendix: sec: adaptive: warmup}.
The precision technique is widely used in LLM training~\citep{kalamkar2019study}, such as \texttt{BFloat16}.
Formally, the hypothesis class is defined as follows, encompassing all normalization-free $L$-layer Transformer, configured with $H$ Attn heads and FFN width $m$ and using $\type$-RPE and specific precision.
\begin{equation}\label{hypo class: L-layer NF TF}
    \begin{aligned}
        \cT\cF_{(L,H,m)}^{\texttt{type}}
        :=\big\{{\bf TF}:&\text{${\bf TF}$ is an $L$-layer, $H$-head, $m$-width}
        \\[-2pt]&\text{Transformer with \texttt{type}-RPE and specific precision} \big\}.
    \end{aligned}
\end{equation}

\subsection{Theoretical results and insights: The warm-up case}
\label{section: adaptive sparse: warmup}

\begin{theorem}[Approximation rate, warm-up case]\label{thm: adap memory, warmup}
For any target $\mathbf{H}\in\cH_{(1,M)}^{\rm Adap}$~\eqref{task: adaptive LBSM}, rate $n\in\bbN_+$, and $H,m\in\bbN_+$, there exists a two-layer Transformer ${\bf TF}\in\cT\cF_{(2,H,m)}^{\type}$~\eqref{hypo class: L-layer NF TF} and a constant $C(n)$ such that:
if the width satisfies $m\geq
    \begin{cases}
    \tilde{\Omega}\big(\sum_{i=1}^{M}\norm{g_i}_{\cB}^2\big)\!\!\!\!&,\type={\rm lin}
    \\[-1.pt]
    \tilde{\Omega}\big(\sum_{i=1}^{M}\norm{\log g_i}_{\cB}^2T_i^2\big)\!\!\!\!&,\type={\rm log}
    \end{cases}$,
then the following approximation rate holds:
$$\trinorm{\mathbf{H}-{\bf TF}}
    \leq
    \cE_\FFN+\norm{f}_{\Lip}\cE_\Attn(\type),
    $$
where $\cE_\FFN=\tilde{\cO}\left(\frac{\norm{f}_{\cB}}{\sqrt{m}}\right)$ and
$\cE_\Attn(\type)=\begin{cases}
    \cO\Big(\frac{C(n)}{H^n}\left(\sum_{i=1}^M e^{0.01T_i}\right)^{n+1}\Big) &,\type=\lin
    \\[-2pt]\cO\Big(\frac{C(n)}{H^n}\left(\sum_{i=1}^M T_i^{1.01}\right)^{n+1}\Big)&,\type=\log
    \end{cases}$.
\end{theorem}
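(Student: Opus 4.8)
The plan is to construct the two-layer Transformer explicitly, using the first layer to compute the adaptive memory positions $t_i = g_i(\bx_t)$ and to fetch the corresponding tokens $\bx_{t-t_i}$, and the second layer (its FFN sublayer, which is the readout) to approximate the nonlinear function $f$. The key conceptual point, already flagged in the paper's discussion, is that the cooperation of dot-product and RPE furnishes an interaction between the token space and the temporal space: we want to produce attention logits that, after softmax, concentrate on the single position $s = t - g_i(\bx_t)$. The construction proceeds in three stages.

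\textbf{Step 1: Approximating the memory functions with the FFN of the first layer.} Each $g_i \in \cB$ (and in the $\log$ case, $\log g_i \in \cB$) can be approximated by a two-layer ReLU network to accuracy $\cO(\norm{g_i}_\cB/\sqrt{m_i})$ by the Barron approximation rate quoted in the preliminaries. Allocating $m_i$ neurons to the $i$-th function and setting $m = \sum_i m_i$, one gets an approximation $\hat g_i(\bx_t)$ of $g_i(\bx_t)$; since the target memory values are positive integers and the Transformer operates at a fixed precision (the rounding device described in Appendix~\ref{appendix: sec: adaptive: warmup}), one rounds $\hat g_i$ to the nearest integer and this recovers $g_i$ exactly provided the approximation error is below $1/2$ — this is precisely what forces the width lower bounds $m \geq \tilde\Omega(\sum_i \norm{g_i}_\cB^2)$ for $\lin$-RPE and $m \geq \tilde\Omega(\sum_i \norm{\log g_i}_\cB^2 T_i^2)$ for $\log$-RPE (the extra $T_i^2$ comes from needing $|\widehat{\log g_i} - \log g_i| < \cO(1/T_i)$ to resolve $g_i$ after exponentiating). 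These computed values are written into fresh coordinates of $\bX^{(1/2)}$ via the residual stream.

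\textbf{Step 2: Token retrieval via attention heads with RPE.} For each $i \in [M]$ we devote (a group of) attention heads whose positional term is $p\,\phi_\type(t-s)$. The aim is to make the logit $\langle \bW_Q \bx_t, \bW_K \bx_{t-s}\rangle + p\,\phi_\type(t-s)$, as a function of the lag $s$, peak sharply at $s = g_i(\bx_t)$. Using the FFN output from Step 1 (which sits in the residual stream and hence is visible to layer-1 attention — or, if one prefers, moving the retrieval to a second attention layer, which is why the theorem uses two layers), one builds a query vector encoding $g_i(\bx_t)$ and a key vector encoding the lag, so that the dot product contributes a term like $-\beta(s - g_i(\bx_t))^2$ or a piecewise-linear surrogate, while $\phi_\type$ contributes the monotone decay that controls the tail. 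The $n$-th order accuracy and the head count $H$ enter here exactly as in Theorem~\ref{thm: DPF lin: fixed memory}: with $H$ heads one can build a degree-$n$ "bump" that suppresses the off-target positions, the residual mass being $\cO(C(n) H^{-n} (\sum_i e^{0.01 T_i})^{n+1})$ for $\lin$ and $\cO(C(n) H^{-n} (\sum_i T_i^{1.01})^{n+1})$ for $\log$ — the normalization-free softmax makes these tail sums transparent, since one only needs to bound $\sum_{s \neq g_i} \exp(\text{logit}_s)$ against $\exp(\text{logit}_{g_i})$. Collecting the retrieved (approximate) tokens $\tilde\bx_{t-t_i}$ into the stream completes this step.

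\textbf{Step 3: Readout and error aggregation.} The final FFN approximates $f \in \cB \cap \cL$ on the assembled input $(\bx_t, \tilde\bx_{t-t_1}, \dots, \tilde\bx_{t-t_M})$, giving the $\cE_\FFN = \tilde\cO(\norm{f}_\cB/\sqrt m)$ term; here the same pool of $m$ neurons is reused/split, which is why only a single $m$ appears. Because $f$ is Lipschitz, the retrieval error from Step 2 propagates linearly: $|f(\dots, \bx_{t-t_i}, \dots) - f(\dots, \tilde\bx_{t-t_i}, \dots)| \leq \norm{f}_\Lip \sum_i \norm{\bx_{t-t_i} - \tilde\bx_{t-t_i}} \leq \norm{f}_\Lip \cE_\Attn(\type)$, using $\norm{\bx_t}_2 \le 1$ to bound each discrepancy by (twice) the off-target softmax mass. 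Summing the two contributions and taking $\sup_{t,\bX}$ yields the claimed bound on $\trinorm{\mathbf H - \mathbf{TF}}$.

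\textbf{Main obstacle.} The delicate part is Step 2: getting a genuinely exact (not merely approximate) match $s = g_i(\bx_t)$ out of a softmax, which is what makes the warm-up case cleaner than the general nested case but still nontrivial. One must carefully choose the temperature-like scaling in $\bW_Q, \bW_K$ and the RPE strength $p^{(l,h)}$ so that (i) the peak is at the integer $g_i(\bx_t)$ exactly, exploiting the precision/rounding mechanism, and (ii) the decay of $\phi_\type$ is strong enough to kill the long tail $s \gg T_i$ while weak enough not to distort the near-target region — this tension is exactly what produces the $e^{0.01 T_i}$ versus $T_i^{1.01}$ dichotomy and the $(n+1)$-st power. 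Handling the interaction between the finite-precision encoding of $g_i(\bx_t)$ and the continuous dot-product/RPE logit, and verifying that the head-count/accuracy tradeoff transfers verbatim from the fixed-memory lemma, is where the real work lies; the Barron-space and Lipschitz bookkeeping in Steps 1 and 3 is routine by comparison.
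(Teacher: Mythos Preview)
Your overall architecture is right — first-layer FFN computes the $g_i$'s exactly via Barron approximation plus rounding, an attention sublayer then retrieves $\bx_{t-g_i(\bx_t)}$, and the final FFN approximates $f$ — and your Steps 1 and 3 match the paper essentially verbatim. (One small ordering slip: within a layer the Attn sublayer precedes the FFN, so the layer-1 FFN output is \emph{not} visible to layer-1 attention; the paper sets the first Attn to identity and does the retrieval in the \emph{second} layer's Attn.) The real gap is in the mechanism of Step 2.

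You write that the key vector encodes the lag and that the dot product contributes something like $-\beta(s-g_i(\bx_t))^2$. Neither is achievable: $\bW_K\bx_{t-s}^{(1)}$ is a function of the token at position $t-s$, which after Step 1 carries $g_j(\bx_{t-s})$ and a constant in its extra coordinates but no positional information; the lag $s$ enters \emph{only} through the RPE term $p^{(l,h)}\phi_\type(s)$. In the paper's construction the key reads off the constant coordinate ($1$ for $\lin$, $\log 2$ for $\log$) and the query reads off the stored $g_i(\bx_t)$ (resp.\ $\log g_i(\bx_t)$), so the dot product is $\beta_h g_i(\bx_t)$, constant in $s$. Added to the RPE this gives the logit $-\beta_h(s-g_i(\bx_t))$ for $\lin$ and $-\beta_h\log(s/g_i(\bx_t))$ for $\log$: a linear \emph{shift} (resp.\ rescaling), not a bump. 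The bump is built \emph{across heads}: in the normalization-free model, head $h$ outputs $\alpha_h\sum_s e^{-\beta_h(s-g_i)}\bx_{t-s}$ (with $\alpha_h$ absorbed into $\bW_V$), so summing over heads yields $\sum_s\big[\sum_h\alpha_h e^{-\beta_h(s-g_i)}\big]\bx_{t-s}$. The bracket is a sum of exponentials (resp.\ power laws) in the shifted variable, and an adaptive version of the kernel-approximation lemma behind Theorem~\ref{thm: DPF lin: fixed memory} shows it approximates $\bbI\{s=g_i\}$ in $\ell_1$ \emph{uniformly over} $1\le g_i\le T_i$, with error $C(n)H_i^{-n}e^{0.01(n+1)T_i}$ (resp.\ $C(n)H_i^{-n}T_i^{1.01(n+1)}$); this Jackson-type argument, not a tail-vs-peak softmax estimate, is what produces the $H^{-n}$ rate and the $(n+1)$-st power. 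Finally, your ``main obstacle'' misplaces the exactness: the attention retrieval is \emph{not} exact (that is precisely $\cE_\Attn$); exactness is only in Step 1, where rounding recovers the integer $g_i(\bx_t)$.
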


In Theorem~\ref{thm: adap memory, warmup}, we present the {\em approximation rate} of two-layer Transformer for the warm-up case: modeling $1$-adaptive, long but $M$-sparse memories. 
This theorem reveals that the approximation error comprises two distinct components: the error in the FFN component $\cE_{\FFN}$ and the error in the Attn component $\cE_{\Attn}(\type)$. 
A critical difference from~\ref{thm: DPF lin: fixed memory} is the presence of the condition related to the width $m$ of FFN layers.
This term arises from using the FFN layer to approximate the memory function $g_i$. 
Owing to the discreteness of memory $g_i$ and the implementation of rounding operations, the approximation within rounding accuracy all achieves zero error after rounding, while it can not get correct rounding beyond this accuracy. 
In contrast, the error $\cE_{\FFN}$ is caused by using FFN to approximate the readout function $f$, the same as $\cE_{\FFN}$ in Theorem~\ref{thm: DPF lin: fixed memory}.


The proof of Theorem~\ref{thm: adap memory, warmup} can be found in Appendix~\ref{appendix: subsec: warmup: thm}. 
Theorem~\ref{thm: adap memory, warmup} and its proof offer several critical insights into the underlying mechanism of Transformer.

{\bf Distinct roles of Attn layers and FFN layers.} 
Our proof elucidates that the FFN layers are tasked with approximating the readout function $f$ and memory functions $g_i$, while the Attn layers are responsible for the extraction of the adaptive memories. 
It is essential to clarify the difference between ``approximating memory functions'' and ``memory extraction''. 
The former refers to utilizing some function to estimate the memory function $g_i$, whereas the latter pertains to extracting the token $\bx_{t-g_i(\bx_t)}$ from the memory location.

{\bf Cooperation between DP and RPE.}
In the $2$-nd Attn layer, the extraction of the memory functions is achieved through an interplay between DP and RPE. 
Specifically, this is done through \textit{a nice interaction between the temporal space (provided by RPE) and the token space (provided by DP)}. Please refer to Appendix~\ref{appendix: subsec: warmup: thm} for more details.

{\bf Rethinking DP in Attn.}
Our proof highlights that the core mechanism of Attn is to provide a nice interaction between 
the temporal space and the token space through the cooperation of DP and RPE. This leads us to the following question: \textit{Is DP in Attn necessary and replaceable?}
The following two propositions provide some hints.
\begin{proposition}[DP vs. DP-free (informal)] \label{prop: Dot-product v.s. Dot-product-free}
There exists a target ${\bf H}\in\cH_{(1,1)}^{\rm Adap}$~\eqref{target fun class: adaptive memory, warmup} such that:

(A) For any $\epsilon>0$, there exists a $1$-layer Attn ${\bf Attn}^{\rm DP}$ such that $\trinorm{{\bf H}-{\bf Attn}^{\rm DP}}\leq\epsilon$.

(B) For any $1$-layer DP-free Attn ${\bf Attn}^{\rm DPF}$, a uniform lower bound holds: $\trinorm{{\bf H}-{\bf Attn}^{\rm DPF}}\geq\frac{2}{3}$.
\end{proposition}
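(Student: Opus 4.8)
The plan is to exhibit an explicit target in $\cH_{(1,1)}^{\rm Adap}$ for which the adaptive memory location genuinely depends on the current token, and then argue that a DP-free attention head — whose attention weights do not see the query token at all — cannot realize this token-dependent selection. Concretely, I would work over a binary alphabet $\bx_t\in\{\ve_1,\ve_2\}\subset\bbR^2$ (so $\norm{\bx_t}_2=1$) and take $M=1$ with the memory function $g(\ve_1)=1$, $g(\ve_2)=2$, and a simple readout, e.g. ${\bf H}_t(\bX)=f(\bx_t,\bx_{t-g(\bx_t)})$ chosen so that the output isolates the selected token, say ${\bf H}_t(\bX)=\langle \bv,\bx_{t-g(\bx_t)}\rangle$ for a fixed $\bv$ that separates $\ve_1$ from $\ve_2$. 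One checks $g\in\cB$ and $f\in\cB\cap\cL$ on this finite domain, so ${\bf H}\in\cH_{(1,1)}^{\rm Adap}$.

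For part (A), I would construct a single DP head that reads the query token $\bx_t$ and, via the query/key matrices, produces a logit pattern $\langle \bW_Q\bx_t,\bW_K\bx_{t-s}\rangle$ that, when added to the RPE term $p\,\phi_\type(s)$, places almost all the softmax mass on $s=1$ when $\bx_t=\ve_1$ and on $s=2$ when $\bx_t=\ve_2$. The idea: the RPE by itself monotonically favours small $s$; the DP term supplies a $\bx_t$-dependent bump at $s=2$ (of size $\Theta(\log 2\cdot p)$ or larger) that is switched on exactly when $\bx_t=\ve_2$, by taking $\bW_Q,\bW_K$ so the inner product is nonzero only on the relevant token-position combination. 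Scaling the temperature (the norms of $\bW_Q,\bW_K$, and $p$) drives the off-target softmax mass below any $\epsilon$, and $\bW_V,\bW_O$ then read out $\langle\bv,\cdot\rangle$; a short triangle-inequality estimate gives $\trinorm{{\bf H}-{\bf Attn}^{\rm DP}}\le\epsilon$.

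For part (B), the key structural observation is that for a DP-free head ($\bW_Q=\bW_K=\bzero$) the attention weights reduce to $\alpha_s = \exp(R_{t,t-s})/\sum_j\exp(R_{t,t-j})$, which is a fixed probability vector over the offset $s$ independent of the input sequence. Hence ${\bf Attn}^{\rm DPF}_t(\bX)=\sum_{s\ge 0}\alpha_s\,\bW\bx_{t-s}$ is a fixed linear smoothing of the input, the same weights regardless of whether $\bx_t=\ve_1$ or $\ve_2$. I would then pick two input sequences that agree on all coordinates relevant to the DP-free head's "effective window" except that their target tokens $\bx_{t-1}$ and $\bx_{t-2}$ are swapped in a way that forces the true outputs to differ by a fixed gap: e.g. one sequence with $\bx_t=\ve_1,\ \bx_{t-1}=\ve_1,\ \bx_{t-2}=\ve_2$ giving ${\bf H}_t=\langle\bv,\ve_1\rangle$, and one with $\bx_t=\ve_2,\ \bx_{t-1}=\ve_1,\ \bx_{t-2}=\ve_2$ giving ${\bf H}_t=\langle\bv,\ve_2\rangle$, while keeping all other tokens identical. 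Averaging: a standard argument (the DP-free output is an affine function of the token values that the two sequences share up to a controlled perturbation) shows the best uniform approximation on this pair is at least half the output gap, and by choosing $\bv$ so the gap is $\ge 4/3$ we get $\ge 2/3$.

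The main obstacle I anticipate is \emph{not} the lower-bound counting argument per se, but making part (B) fully rigorous given that the DP-free head still has infinitely many offsets and the two test sequences must be allowed to differ only in a few positions — one must ensure that the fixed weight vector $\{\alpha_s\}$ cannot "cheat" by putting negligible mass near $s\in\{1,2\}$ and instead reconstructing the answer from far-away tokens. The clean fix is to make the test sequences \emph{identical outside positions $t-1,t-2$ and $t$} (e.g. pad with a neutral token $\ve_1$ everywhere else), so that ${\bf Attn}^{\rm DPF}$ returns the \emph{same} vector on both sequences up to the contribution $\alpha_1\bW(\bx_{t-1}-\bx'_{t-1})+\alpha_2\bW(\bx_{t-2}-\bx'_{t-2})$; with the swap chosen as above $\bx_{t-1}=\bx'_{t-1}$ and $\bx_{t-2}=\bx'_{t-2}$, so in fact ${\bf Attn}^{\rm DPF}$ gives \emph{exactly the same} output while the targets differ by the full gap, immediately yielding the $\tfrac23$ bound with no averaging subtlety. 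I would present this cleaner version.
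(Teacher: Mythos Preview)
Your plan has two substantive gaps, one in each part.

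\textbf{Part (A).} The ``single softmax head + temperature scaling'' sketch cannot work. The dot-product term $\langle\bW_Q\bx_t,\bW_K\bx_{t-s}\rangle$ carries no information about the \emph{offset} $s$; it depends only on the tokens $\bx_t$ and $\bx_{t-s}$. Take any input with $\bx_{t-1}=\bx_{t-2}$: the DP contributions at $s=1$ and $s=2$ are then identical, so only the monotone RPE distinguishes them and the softmax mass always prefers the smaller offset regardless of $\bx_t$. Hence one head cannot place mass at $s=g(\bx_t)$ uniformly over all inputs. The paper's construction is different in two ways: it uses a \emph{normalization-free} multi-head attention and embeds a constant coordinate so the key projection reads only that constant; the logit then becomes $-\beta_h(s-g(\bx_t))$, and $H$ heads with varying $(\alpha_h,\beta_h)$ are combined (via the exponential-sum approximation lemma) to approximate $\bbI\{\cdot=g(\bx_t)\}$ in $\ell_1$ to accuracy $\cO(1/H)$.

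\textbf{Part (B).} Your two-sequence argument overlooks the offset $s=0$. For $\lin$-RPE one has $\phi_{\lin}(0)=0$, so $s=0$ is \emph{not} masked and the DP-free output $\sum_{s\ge 0}\rho_s\bW\bx_{t-s}^{(0)}$ still depends on $\bx_t$ through its value at $s=0$. Your two test sequences differ precisely at $\bx_t$, so the DP-free outputs are not identical; with $\rho_0>0$ and the value/output maps free, the model can in fact match both targets exactly on your pair, collapsing the lower bound to $0$. (The argument would survive for $\log$-RPE, where $s=0$ is masked, but only with bound $\tfrac12|T_1-T_2|$, and the proposition must cover both RPE types.) The paper instead works over a scalar alphabet $\{-1,0,1\}$ with \emph{three} values of $x_0$ (including one where $g(x_0)=0$ so the target is $x_0$ itself), applies $\max\{|a|,|b|,|c|\}\ge\tfrac13(|a|+|b|+|c|)$, and uses triangle inequalities to cancel all the unknown affine quantities, leaving an expression that depends only on $(x_{-1},x_{-2})$; maximizing over those gives the $2/3$ bound.
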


Proposition~\ref{prop: Dot-product v.s. Dot-product-free} reveal a significant distinction in the expressiveness of two network types for modeling adaptive, long, but sparse memories.
Specifically, $1$-layer Attn with DP can effectively model this task, while $1$-layer DP-free Attn provably fails. 
This finding underscores the essential role of DP in providing the necessary nonlinearity for Attn to model adaptive memories.
The formal version of Proposition~\ref{prop: Dot-product v.s. Dot-product-free} and its proof can be found in Appendix~\ref{appendix: secsub: warmup: DP vs DPF}.

\begin{proposition}[Substitute for DP (informal)]\label{prop: substitute for dot-product}
There exists a substitute structure for DP, requiring only $\cO(D)$ parameters (compared to $\cO(D^2)$ in standard DP) that can effectively model ${\bf H}\in \cH_{(1,M)}^{\rm adap}$~\eqref{target fun class: adaptive memory, warmup}. 
Specifically, if we substitute DP with this structure, $1$-layer Transformer can achieve the same approximation rate as stated in Section~\ref{thm: adap memory, warmup}.
\end{proposition}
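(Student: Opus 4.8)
\textbf{Proof strategy for Proposition~\ref{prop: substitute for dot-product}.} The plan is to identify exactly what role DP plays in the proof of Theorem~\ref{thm: adap memory, warmup} and then replace it by a cheaper primitive that performs the same function. Recall from the discussion following Theorem~\ref{thm: adap memory, warmup} that DP is used in the second attention layer purely to create an interaction between the temporal space (supplied by the RPE term $p^{(l,h)}\phi_\texttt{type}(t-s)$) and the token space: after the first layer, each token position $t$ carries (in a dedicated block of its embedding) a computed approximation $\hat{t}_i\approx g_i(\bx_t)$ to the desired memory offset, and the second attention layer must select the key position $s=t-\hat{t}_i$. With a softmax score of the form $\langle \bW_Q\bx_t,\bW_K\bx_{t-s}\rangle + p\,\phi_\texttt{type}(t-s)$, one chooses the query/key projections so that the bilinear term contributes a penalty like $-c(\hat{t}_i - \psi(s-t))^2$ for a suitable monotone $\psi$ matched to $\phi_\texttt{type}$, making the score sharply peaked at $s = t-\hat{t}_i$. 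The only thing the bilinear form is doing here is computing, from the two tokens, a scalar that measures the discrepancy between the stored offset $\hat t_i$ and the geometric distance encoded by the RPE.

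\textbf{The substitute structure.} I would replace $\langle\bW_Q\bx_t,\bW_K\bx_{t-s}\rangle$ by $\langle \bw_1, \bx_t\rangle\cdot\langle \bw_2,\bx_{t-s}\rangle$ for vectors $\bw_1,\bw_2\in\bbR^D$ — equivalently, a rank-one choice $\bW_Q = \bw_1\bu^\top$, $\bW_K=\bw_2\bu^\top$ (so it is genuinely a special case of DP, but parametrized by $\cO(D)$ rather than $\cO(D^2)$ numbers). Since after the first layer the relevant scalar features $\hat t_i$ and the constant ``$1$'' feature needed to land the distance penalty already live on separate coordinates of the embedding, the projections $\bw_1,\bw_2$ only need to read off these coordinates and multiply them; the combination with the (adjustable, $p^{(l,h)}$-scaled) RPE term then reproduces, after completing the square, the same sharply-peaked selection weight used in the original proof. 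Concretely, I would (i) re-run the construction of the first layer from the proof of Theorem~\ref{thm: adap memory, warmup} verbatim — it uses only FFN and the softmax readout, not DP, so nothing changes; (ii) in the second layer, show that the rank-one bilinear score plus $\log$/$\lin$-RPE achieves an attention weight on position $t-\hat t_i$ that is $1-\cO(\cE_\Attn(\type))$, exactly matching the estimate in Theorem~\ref{thm: adap memory, warmup}; (iii) feed the extracted tokens into the final FFN readout layer unchanged, inheriting the $\cE_\FFN$ bound. The parameter count of this attention head is $\cO(D)$ for $\bw_1,\bw_2$ plus $\cO(D)$ for $\bW_V,\bW_O$ (which can likewise be taken low-rank since they only shuffle a bounded number of coordinate blocks) plus the scalar $p^{(l,h)}$, hence $\cO(D)$ total; summing over the $H$ heads and $2$ layers keeps the count $\cO(HD)$, against $\cO(HD^2)$ for genuine DP.

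\textbf{Main obstacle.} The delicate point is not the algebra of completing the square but verifying that the rank-one bilinear form still delivers the \emph{same} approximation rate $\cE_\Attn(\type)$, and in particular the same width requirement on $m$. In the original proof the full matrices $\bW_Q,\bW_K$ give flexibility in how many ``channels'' of comparison can be run through a single head; with a rank-one form one must check that, because the offsets $\hat t_i$ are produced one-per-head anyway (the warm-up case $K=0$ has no nesting, so the $M$ memory positions are handled by $M$ independent head groups), a single scalar discrepancy per head genuinely suffices and the concentration of the softmax weights degrades by at most a constant factor. I would also need to confirm that the boundedness assumption $\norm{\bx_t}_2\le 1$, together with the bounded range $\hat t_i\in[1,T_i]$ of the stored offsets, keeps all the scalar products $\langle\bw_1,\bx_t\rangle$, $\langle\bw_2,\bx_{t-s}\rangle$ and hence the pre-softmax scores within a controlled range, so that the precision/rounding machinery of Appendix~\ref{appendix: sec: adaptive: warmup} applies without modification. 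Once these checks go through, the statement follows by assembling (i)--(iii), and the $\cO(D)$ parameter bound is immediate from the rank-one parametrization.
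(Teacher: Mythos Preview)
Your proposed substitute --- the rank-one bilinear $\langle\bw_1,\bx_t\rangle\cdot\langle\bw_2,\bx_{t-s}\rangle$ --- is different from the paper's. The paper introduces a ``TMX'' attention score
\[
p^{(l,h)}\Bigl(\phi_\texttt{type}(s)\;-\;\phi_\texttt{type}\bigl(\bw^{(l,h)\top}\bx_t\bigr)\Bigr),
\]
in which the key token $\bx_{t-s}$ does not appear at all: the score depends only on the temporal offset $s$ and a single linear readout of the query. Your structure instead keeps the bilinear form but restricts it to one channel. Both are legitimate $\cO(D)$-parameter substitutes, and in fact yours works almost for free: if you inspect Step~III of the paper's proof of Theorem~\ref{thm: adap memory, warmup}, the $\bW_Q^{(2,h)},\bW_K^{(2,h)}$ constructed there are \emph{already} rank one --- the query reads the stored scalar $g_k(\bx_t)$ (or $\log g_k(\bx_t)$) from one coordinate, the key reads the constant $1$ (or $\log 2$) from another, and their inner product is $\beta_h g_k(\bx_t)$. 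So your substitution amounts to a reparametrization of the existing construction, and all error bounds carry over verbatim.

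Where your proposal goes wrong is the description of the \emph{mechanism}. You say the bilinear term contributes a penalty like $-c(\hat t_i-\psi(s-t))^2$ and the softmax concentrates at $s=t-\hat t_i$ after ``completing the square''. This is not what happens, and cannot happen: the hypothesis class $\cT\cF^{\texttt{type}}_{(2,H,m)}$ is normalization-free (no softmax denominator), and a rank-one bilinear plus $\phi_\texttt{type}$ can only produce a score \emph{linear} in $s$ (or $\log s$), namely $-\beta_h(s-g_k(\bx_t))$ or $-\beta_h\log(s/g_k(\bx_t))$; there is no quadratic term available to complete any square. The sharp selection is not achieved by a single head concentrating its weight, but by summing $H_k$ heads with distinct $\beta_h$ so that $\sum_h\alpha_h e^{-\beta_h(s-g_k(\bx_t))}$ approximates $\bbI\{s=g_k(\bx_t)\}$ in $\ell^1(\bbN)$ via the paper's exponential/polynomial approximation lemmas. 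If you actually implemented the quadratic-penalty route you sketch, the argument would break. The fix is simple: drop that narrative and argue instead that your rank-one parametrization reproduces exactly the scores used in the existing Step~III, hence inherits $\cE_\Attn(\texttt{type})$ and the width condition on $m$ unchanged.
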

\vspace{-.1cm}

Proposition~\ref{prop: substitute for dot-product} demonstrates the existence of a structurally simpler yet effective alternative to traditional DP for modeling~\eqref{target fun class: adaptive memory, warmup}.
This alternative is proposed based on our insights into the role of Attn in facilitating the interaction between the temporal space and the token space. Specifically, we propose a more direct structure to achieve this interaction. 
More details are deferred to Appendix~\ref{appendix: secsub: warmup: substitute}.

\vspace{-.1cm}
\subsection{Theoretical results and insights: The general case}
\label{section: adaptive sparse: general}


\begin{theorem}[Approximation rate, general case]\label{thm: adap memory, general}
For any target $\mathbf{H}\in\cH_{(K,M)}^{\rm Adap}$, rate $n\in\bbN_+$, and $H,m\in\bbN_+$, there exists an $L$-layer ($L=K+1+\bbI\{M\geq K+1\}$) Transformer ${\bf TF}\in\cT\cF_{(L,H,m)}^{\type}$~\eqref{hypo class: L-layer NF TF} and a constant $C(n)$ such that: if the width satisfies
if the width satisfies
$m\geq\begin{cases}
\tilde{\Omega}\big(\max_{i\in[K]}\lor\sum_{i=K+1}^{M}\norm{g_i}_{\cB}^2\big)\!\!\!\!\!&,\type=\lin,
        \\[-1.2pt]\tilde{\Omega}\big(\max_{i\in[K]}\lor\sum_{i=K+1}^{M}\norm{\log g_i}_{\cB}^2T_i^2\big)\!\!\!\!\!&,\type=\log
    \end{cases}$,
then the following approximation rate holds:
$$
\trinorm{\mathbf{H}-{\bf TF}}
    \leq
    \cE_\FFN+\norm{f}_{\Lip}\cE_\Attn(\type),\text{ where}$$
$\cE_\FFN\!\!=\!\tilde{\cO}\left(\!\frac{\norm{f}_{\cB}}{\sqrt{m}}\!\right)$,
\!$\cE_\Attn(\type)\!\!=\!\!\begin{cases}
    \!\cO\Big(\frac{C(n)}{H^n}\!\sqrt{\sum_{l=1}^{K}\!e^{0.02(n+1)T_l}\!\!+\!\big(\sum_{l=K+1}^M \!e^{0.01T_l}\big)^{\!2n+2}}\Big)\!\!\!\!\!\!\!&,\!\type\!=\!\lin
    \\[-3pt]
    \!\cO\Big(\frac{C(n)}{H^n}\!\sqrt{\sum_{l=1}^{K}\!T_l^{2.02(n+1)}\!\!\!+\!\!\big(\sum_{l=K+1}^M \!T_l^{1.01}\big)^{\!2n+2}}\Big)\!\!\!\!\!\!\!&,\!\type\!=\!\log
    \end{cases}$.
\end{theorem}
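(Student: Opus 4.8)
The plan is to prove Theorem~\ref{thm: adap memory, general} by a layer-by-layer construction that builds on the warm-up construction of Theorem~\ref{thm: adap memory, warmup}, handling the nested dependency structure via depth. The key observation is that the nested chain $t_1 = g_1(\bx_t)$, $t_2 = g_2(\bx_t, \bx_{t-t_1})$, \dots, $t_{K+1} = g_{K+1}(\bx_t,\dots,\bx_{t-t_K})$ is exactly a sequential-reasoning structure: once the tokens $\bx_{t-t_1},\dots,\bx_{t-t_{j-1}}$ have been gathered into position $t$ by layers $1,\dots,j-1$, a single additional layer can (i) use its FFN sub-layer to compute $g_j$ of those gathered tokens and round it to an integer $t_j$, then (ii) use its Attn sub-layer, with the cooperation of DP and $\type$-RPE exactly as in the warm-up proof, to extract $\bx_{t-t_j}$ and write it into position $t$. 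So I would set up the induction: after layer $j$ (for $j\le K$), the hidden state at position $t$ contains $(\bx_t, \bx_{t-t_1},\dots,\bx_{t-t_j})$ together with bookkeeping coordinates recording the partial sums of offsets. The non-nested tokens $t_{K+1},\dots,t_M$ all depend only on the first $K+1$ gathered tokens, so after layer $K$ they can all be computed and extracted \emph{in parallel} by a single layer with $M-K$ heads — which is why $L = K+1+\mathbb{I}\{M\ge K+1\}$ and why the width requirement splits into a $\max_{i\in[K]}$ part (one nested function per layer) and a $\sum_{i=K+1}^M$ part (the parallel batch).

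First I would isolate a ``single reasoning step'' lemma: given that position $t$ holds a clean copy of some tokens from which a memory function $g$ with $\|g\|_\cB$ (or $\|\log g\|_\cB$) bounded is to be computed, a single Transformer layer (FFN + Attn with $\type$-RPE, normalization-free, at the stated precision) appends $\bx_{t-g(\cdot)}$ with error in the Attn kernel controlled exactly by the warm-up bound $\cO\!\big(\tfrac{C(n)}{H^n}(\cdot)^{n+1}\big)$ per extracted token. This is essentially a restatement of the machinery in Appendix~\ref{appendix: subsec: warmup: thm}, so I would quote it rather than redo it. The FFN width needed in this step is $\tilde\Omega(\|g\|_\cB^2)$ (lin) or $\tilde\Omega(\|\log g\|_\cB^2 T^2)$ (log), matching the warm-up width condition with $M=1$; the rounding argument (zero post-rounding error once within rounding accuracy) is identical to before.

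Next I would compose $K$ such steps, then one parallel step for the batch $\{g_{K+1},\dots,g_M\}$, then the final readout layer (FFN approximating $f\in\cB\cap\cL$, giving $\cE_\FFN = \tilde\cO(\|f\|_\cB/\sqrt m)$ as in Theorem~\ref{thm: DPF lin: fixed memory}). The error accounting is the only place requiring care: errors in the extracted tokens at early layers propagate \emph{through} the Barron functions $g_{j'}$ at later layers, so a Lipschitz-stability estimate is needed — bounding how an $\varepsilon$-perturbation of the input tokens to $g_{j'}$ perturbs $t_{j'}$, and then using the rounding to kill that perturbation provided it stays below rounding accuracy (this is why the precision must be chosen fine enough relative to the Barron norms, absorbed into the $\tilde\Omega$ width condition and the hidden constants). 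Summing the per-token Attn errors: the $K$ nested tokens contribute $K$ terms each of size $\cO\!\big(\tfrac{C(n)}{H^n} e^{0.01(n+1)T_l}\big)$ in the lin case — squaring and summing under the root gives the $\sum_{l=1}^K e^{0.02(n+1)T_l}$ term — while the parallel batch contributes a single warm-up-style term $\cO\!\big(\tfrac{C(n)}{H^n}(\sum_{l=K+1}^M e^{0.01T_l})^{n+1}\big)$, whose square is the $(\sum_{l=K+1}^M e^{0.01T_l})^{2n+2}$ term; combining via $\sqrt{a^2+b^2}\le a+b$ reproduces the stated $\cE_\Attn(\type)$, and the log case is identical with $e^{0.01 T}\rightsquigarrow T^{1.01}$.

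The main obstacle I anticipate is the \textbf{error-propagation / precision bookkeeping across nested layers}: each layer's FFN must not merely approximate $g_j$ in $\cB$-norm but must produce the \emph{exactly correct integer} $t_j$ after rounding, and this has to survive the accumulated perturbations of all previously extracted tokens. Making this rigorous requires a careful interplay between (a) the finite precision of the hidden representation, (b) the rounding radius, and (c) the Lipschitz/Barron moduli of the $g_j$'s, arranged so that the "correct rounding" region is never exited — and then verifying that the normalization-free softmax Attn, at that precision, places essentially all its mass on the intended position $t - t_j$ with the claimed kernel error. Once that invariant is established and maintained inductively, the rest (composing layers, the parallel batch, the final readout, and summing errors) is routine and follows the warm-up template.
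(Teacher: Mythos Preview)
Your proposal is correct and follows essentially the same approach as the paper: sequential layer-by-layer extraction of the $K$ nested memories, one parallel layer for the non-nested batch $t_{K+1},\dots,t_M$, and a final FFN readout, with error propagation controlled via rounding plus Lipschitz stability exactly as you anticipate (the paper imposes a head-number side condition $\tfrac{C(n)}{H^n}\sqrt{\sum_{l\le K}(\cdot)}\le \tfrac{1}{4\max_l\|g_l\|_{\Lip}}$ to keep accumulated Attn error inside the rounding region, then observes it is subsumed by the stated rate). The only cosmetic differences are that, since Attn precedes FFN within a layer, the paper makes the first Attn an identity and shifts your indexing by half a layer (layer $l$'s FFN computes $t_l$, layer $l{+}1$'s Attn extracts $\bx_{t-t_l}$), and the bookkeeping coordinates store the individual values $t_l$ (or $\log t_l$) plus a constant $1$ (or $\log 2$), not partial sums of offsets.
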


In Theorem~\ref{thm: adap memory, general}, we establish the {\em approximation rate} of deep Transformer for modeling $K$-adaptive, long but $M$-sparse memories.
Similar to that in~Theorem~\ref{thm: adap memory, warmup}, the approximation error divides into two distinct terms.
A key difference from Theorem~\ref{thm: adap memory, warmup} is the impact of the nested relationships among the memory functions on the required number of layers, Attn heads, and the width of FFN layers.
The nested structure within the initial $K$ memories mandates sequential processing in the first $K$ layers one by one. 
If $M\geq K+1$, then in the $K+1$-th layer, the remaining $M-K$ non-nested memory functions $t_{K+1},\cdots,t_M$ are concurrently processed.
The proof of Theorem~\ref{thm: adap memory, general} is deferred to Appendix~\ref{appendix: subsec: general: thm}.

{\bf Distinct roles of the number of layers $L$, the number of Attn heads $H$, and the width of FFN layers $m$.} 
Theorem~\ref{thm: adap memory, general} and its proof highlight the distinct roles of three key hyper-parameters of Transformer: $L$, $H$, and $m$. 
Deeper Transformer are capable of handling the memories with more intricate nested relationships, requiring a $K+1$ layer network for a nesting complexity of $K$. 
In contrast, the number of heads and width needed is dictated by the individual complexity of memory functions themselves ($\norm{g_i}_\cB$,$\norm{\log g_i}_\cB$, $T_i$ for memory $g_i$), necessitating that each layer's Attn heads and FFN width are sufficient to capture the memory functions extracted in that layer.
This understanding is quite intuitive: If the content of the next token relies on a few previous tokens in an independent way, we can treat each such dependence with a separate attention head. There is no need for many layers.

{\bf Mitigating required head and width with depth.}
Recalling Theorem~\ref{thm: adap memory, warmup}, the memories lacking nested relationships can be efficiently approximated by $2$-layer Transformer with a sufficient number of heads and width.
The subsequent proposition further explores how increasing the depth of Transformer can influence its efficiency for modeling memories without nested relationships.

\begin{proposition}[Deep network, warm-up case]\label{prop: depth instead of width}
For any target $\mathbf{H}\in\cH_{(1,M)}^{\rm Adap}$~\eqref{task: adaptive LBSM}, rate $n\in\bbN_+$, and $H,m\in\bbN_+$, there exists an $M+1$-layer Transformer ${\bf TF}\in\cT\cF_{(M+1,H,m)}^{\type}$~\eqref{hypo class: L-layer NF TF} and a constant $C(n)$ such that:
if the width satisfies
$m\geq\begin{cases}
    \tilde{\Omega}\big(\max_{i\in[K]}\norm{g_i}_{\cB}^2\big)\!\!\!\!\!&,\type=\lin,
        \\[-2pt]
        \tilde{\Omega}\big(\max_{i\in[K]}\norm{\log g_i}_{\cB}^2T_i^2\big)\!\!\!\!\!&,\type=\log
    \end{cases}$,
then the following approximation rate holds:

\vspace{-.2cm}
$$\trinorm{\mathbf{H}-{\bf TF}}
    \leq
    \cE_\FFN+\norm{f}_{\Lip}\cE_\Attn(\type),$$

where $\cE_\FFN=\tilde{\cO}\left(\frac{\norm{f}_{\cB}}{\sqrt{m}}\right)$ and
$\cE_\Attn(\type)=\begin{cases}
    \cO\Big(\frac{C(n)}{H^n}\sqrt{\sum_{l=1}^{K}e^{0.02(n+1)T_l}}\Big)\!\!\!\!\!&,\type=\lin
    \\[-2pt]\cO\Big(\frac{C(n)}{H^n}\sqrt{\sum_{l=1}^{K}T_l^{2.02(n+1)}}\Big)\!\!\!\!\!&,\type=\log
    \end{cases}$.
\end{proposition}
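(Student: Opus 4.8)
The plan is to reuse the construction behind Theorem~\ref{thm: adap memory, warmup} (the warm-up case, $K=0$), but redistribute the work so that each memory function $g_i$ is handled by its own dedicated layer rather than packing all $M$ of them into a single Attn layer. Recall that in the warm-up construction the first-layer FFN approximates the memory functions $g_i(\bx_t)$ (to rounding accuracy, using width $\tilde\Omega(\norm{g_i}_\cB^2)$ for $\lin$ or $\tilde\Omega(\norm{\log g_i}_\cB^2 T_i^2)$ for $\log$), the second-layer Attn extracts the tokens $\bx_{t-t_i}$ via the DP--RPE interaction, and the final FFN applies the readout $f$. The $M$ memory functions are independent here (no nesting, $K=0$ in the sense that each $t_i = g(\bx_t)$ depends only on the current token), so there is genuine freedom in how to schedule them across layers.

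First I would set up the embedding so that a fixed block of coordinates is reserved as a ``scratchpad'' for each of the $M$ extracted memory tokens, plus coordinates holding the original token $\bx_t$ (which must be carried forward through all layers via the residual connections). Then, for $l = 1, \dots, M$, layer $l$ does the following: its FFN sub-layer computes a rounded approximation $\hat t_l$ of $g_l(\bx_t)$ in an auxiliary coordinate (this only needs width $\tilde\Omega(\norm{g_l}_\cB^2)$ resp.\ $\tilde\Omega(\norm{\log g_l}_\cB^2 T_l^2)$, since only one memory function is processed per layer, which is exactly the stated width bound with $\max$ over the index set replacing the sum); its Attn sub-layer then uses one (or $H$) heads with $\type$-RPE and the DP mechanism to extract $\bx_{t-\hat t_l}$ into the $l$-th scratchpad block, exactly mimicking the single-memory extraction already established in the warm-up proof. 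Layers are arranged so that the DP between the current-token coordinates and the positional-encoding coordinates picks out position $t-\hat t_l$; the error of this extraction for a single memory with maximal length $T_l$ is, per the warm-up analysis, $\cO\big(C(n) H^{-n} e^{0.02(n+1)T_l / \text{const}}\big)$ type bounds — the square-root-of-sum form in the claim comes from combining the $M$ independent extraction errors (each layer contributing a term $e^{0.02(n+1)T_l}$ resp.\ $T_l^{2.02(n+1)}$ under the norm $\trinorm{\cdot}$, which aggregates like an $\ell_2$ norm across the residual stream, hence the $\sqrt{\sum_l}$). Finally, layer $M+1$ is the readout: its FFN approximates $f$ on the concatenation $(\bx_t, \bx_{t-\hat t_1}, \dots, \bx_{t-\hat t_M})$, contributing $\cE_\FFN = \tilde\cO(\norm{f}_\cB/\sqrt m)$, and using the Lipschitz property of $f$ to convert the token-extraction errors into the $\norm{f}_\Lip \cE_\Attn(\type)$ term, precisely as in Theorem~\ref{thm: adap memory, warmup}.

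The key steps, in order: (1) fix the embedding and residual-stream layout with scratchpad blocks for each memory and persistent carrying of $\bx_t$; (2) in layer $l$ ($l \in [M]$), use the FFN to produce a rounded estimate $\hat t_l = g_l(\bx_t)$ with the stated per-layer width, invoking the Barron-space approximation rate and the rounding argument from the warm-up proof; (3) in the Attn sub-layer of layer $l$, implement the DP--RPE extraction of $\bx_{t-\hat t_l}$, reusing verbatim the single-memory extraction lemma behind Theorem~\ref{thm: adap memory, warmup}, and bound its error by $C(n)H^{-n}$ times $e^{0.02(n+1)T_l}$ (for $\lin$) or $T_l^{2.02(n+1)}$ (for $\log$); (4) aggregate the $M$ extraction errors across the residual stream via the triangle inequality for $\trinorm{\cdot}$, producing the $\sqrt{\sum_{l=1}^K (\cdots)}$ expression; (5) in layer $M+1$, apply the readout FFN to approximate $f$ and use $f \in \cL$ to propagate the accumulated token errors, yielding the final bound.

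The main obstacle I anticipate is step (3)–(4): making sure the per-layer extractions do not interfere. When layer $l$ runs its Attn, the residual stream already contains the previously extracted tokens $\bx_{t-\hat t_1}, \dots, \bx_{t-\hat t_{l-1}}$ in their scratchpad blocks, and one must verify that the query/key projections for layer $l$'s heads read only from the (clean, error-free, since exactly rounded) current-token coordinates and the positional coordinates — so that the DP score for layer $l$ is unaffected by earlier scratchpad contents — and that the value/output projections write only into block $l$, leaving earlier blocks untouched. This is a bookkeeping argument but it is where a naive redistribution could break, because the normalization-free softmax (per the hypothesis class~\eqref{hypo class: L-layer NF TF}) means stray contributions from irrelevant coordinates are not automatically damped. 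A secondary subtlety is confirming that the error aggregates as $\sqrt{\sum_l}$ rather than $\sum_l$ under $\trinorm{\cdot}$: this requires that the extraction errors in distinct scratchpad blocks be treated as orthogonal components of the output vector before $f$ is applied, which is legitimate precisely because each block is written by a disjoint set of Attn heads and $\trinorm{\cdot}$ is a sup-of-Euclidean-norms quantity; I would make this explicit when invoking the Lipschitz bound on $f$ in step (5).
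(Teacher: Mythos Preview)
Your construction is correct in spirit, but the paper's proof is far shorter: it observes that Proposition~\ref{prop: depth instead of width} is an immediate special case of Theorem~\ref{thm: adap memory, general} with $K=M$ (note that the $K$ appearing in the proposition's bounds is really $M$). Since each $g_i$ in $\cH_{(1,M)}^{\rm Adap}$ depends only on $\bx_t$, one may artificially regard $g_l$ as a function of $(\bx_t,\bx_{t-t_1},\dots,\bx_{t-t_{l-1}})$ that ignores all but its first argument; then the target belongs to $\cH_{(M,M)}^{\rm Adap}$, and Theorem~\ref{thm: adap memory, general} (regime $M=K$) gives exactly the stated width condition and error bound with $L=K+1=M+1$ layers. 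All of the scratchpad bookkeeping, non-interference of per-layer extractions, and the $\sqrt{\sum_l}$ aggregation you worry about are already handled once in the proof of Theorem~\ref{thm: adap memory, general}, so there is no need to redo them here.

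One small point to correct in your direct construction: the architecture~\eqref{model: Transformer} applies Attn \emph{before} FFN within each layer, so your schedule ``layer $l$'s FFN computes $\hat t_l$, then layer $l$'s Attn extracts $\bx_{t-\hat t_l}$'' is off by half a layer. The correct layout (matching the paper's proof of Theorem~\ref{thm: adap memory, general}) is: layer $1$'s Attn is the identity, layer $l$'s FFN computes $\hat t_l$, and layer $l{+}1$'s Attn extracts $\bx_{t-\hat t_l}$, for $l=1,\dots,M$; the final layer's FFN is the readout. This shift is what makes the total $M+1$ layers come out right.
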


Upon comparing Proposition~\ref{prop: depth instead of width} with Theorem~\ref{thm: adap memory, warmup}, a notable distinction becomes evident between $2$-layer and $M+1$-layer Transformer in terms of the requirement of the number of Attn heads and the width of FFN layers. 
Specifically, for $2$-layer Transformer, the required width is proportionally linked to the {\em sum} of all the memory functions' complexity ($\norm{g_i}_\cB,\norm{\log g_i}_\cB,T_i$ for memory function $g_i$). 
In contrast, for $M+1$-layer Transformer, the required width correlates with the {\em maximum} complexity of the memory functions, much lower than that for $2$-layer Transformer.
Similarly, the required number of heads for $M+1$-layer Transformer is much fewer than that for $2$-layer Transformer. 
Please refer to Appendix~\ref{appendix: subsec: general: props} for a detailed comparison.
The observation suggests that increased depth can significantly reduce the demands on the number of heads and the width.
The underlying reason is that deep networks can distribute the memories across different layers for processing, with each layer focusing on approximating only a single memory function.


\section{Essentially \texorpdfstring{$M$}{}-sparse memories}\label{section: essentially sparse}

\subsection{Problem formulation}

In language tasks, each token possesses clear semantic meaning.
As a result,  the structure of the memory is sparse in the original space.
This aligns well  with our modeling assumptions discussed  in Section~\ref{section: fixed sparse} and~\ref{section: adaptive sparse}.
However, in other machine learning tasks, we may encounter situations where
the input tokens lack distinct semantic meaning. This might happen in image processing or classical signal processing.
In these situations, the memory structure could potentially be dense in the original space. 
Nonetheless, the memory structure might exhibit sparsity in  some transformed domain.
We call such memory structure ``essentially sparse''.
In this section, we study the situation in which the memory structure in long-ranged
but essentially sparse.
For simplicity, we consider the situation in which the positions of the memory kernels
are fixed. The analysis can be easily  extended to the situation with an
adaptive memory structure.

{\bf Fixed, essentially $M$-sparse memory.} Consider the following situation:

\vspace{-.2cm}
\begin{equation}\label{task: essential LBSM}
y_t=f\left(\bracket{\bX*\rho_1}(t),\cdots,\bracket{\bX*\rho_M}(t)\right),
\end{equation}

where $\rho_1(\cdot),\cdots,\rho_{M}(\cdot)\in\ell^1(\bbN)$ serve as memory kernels, and $(\bX*\rho_k)(t)=\sum_{s=0}^{+\infty}\bx_{t-s}\rho_k(s)$ denotes the convolution of the inputs with kernel $\rho_k$.

{\bf Target class} and {\bf Transformer hypothesis class.}
The target class for modeling essentially sparse memories is defined as:
\begin{equation}\label{target fun class: essential memory}
    \begin{aligned}
    \cH^{\rm Ess}:=\big\{{\bf H}:\ {\bf H}_t(\bX)=\eqref{task: essential LBSM},\text{ where } \rho_1,\cdots,\rho_M\in\ell^1(\bbN), f\in\cB\cap\cL
    \big\}.
\end{aligned}
\end{equation}
For the hypothesis class, we consider one-layer dot-product-free Transformer with Attn head number $H$ and FFN width $m$, as defined in~\eqref{hypo class: 1-layer DPF TF}.

{\bf Examples.}
Essentially sparse memories are prevalent in real-world scenarios:

(I) {\em Image Tasks.} In CV, a fundamental objective is identifying and representing meaningful ``features'', such as ears, nose, etc. These features can often be modeled using convolution kernels, leading to a task in the form $y=f\left(\bX*\rho_{\text{eye}}, \bX*\rho_{\text{nose}},\bX*\rho_{\text{ear}}\right)$.
This is an extension of the task we discussed above, in which the kernel functions $\{\rho_j\}$
are data-dependent (``adaptive'' in the terminology used in the previous section).

(II) {\em Signal processing}.
In signal processing, it is commonly the case that the signals are highly sparse under Wavelet or Fourier transforms.
For instance, let $\psi(\cdot)$ be a wavelet function and define $\psi_{a,b}(t) := \psi(\frac{t-b}{a})/\sqrt{|a|}$. 
Then we have $y=f\left(\bX*\psi_{a_1,b_1}, \cdots,\bX*\psi_{a_M,b_M}\right)$
where $(a_1,b_1), \cdots, (a_M,b_M)$ might be data-dependent.

(III) {\em Mathematical calculation.}
Consider algebraic operations where memory exhibits sparsity under specific linear transformations. For example, $y_t = 10x_{t} + x_{t-4}/(\sum_{s=0}^{100}w_s x_{t-10-s}) - \sum_{s=0}^{+\infty}v_s x_{t-100-s}$ can be represented in our framework as $y= f\left(\bX*\rho_1, \cdots, \bX*\rho_4\right)$, where each $\rho_i$ represents a specific linear transformation.

\subsection{Theoretical results and insights}


\begin{theorem}[Approximation rates]\label{thm: ess sparse memory}\ \\
(A) Consider $\cH^{\rm Ess}$~\eqref{target fun class: essential memory} with exponentially decayed memory kernels, i.e., there exists $\beta>0$ such that $\rho_1(t),\cdots,\rho_M(t)=\cO(e^{-\beta t})$. Then for any target $\mathbf{H}\in\cH^{\rm Ess}$, rate $n\in[\lfloor99\beta\rfloor]$, and $H,m\in\bbN_+$, there exists a $1$-layer DP-free Transformer ${\bf TF}\in\cT\cF_{(1,H,m)}^{{\rm DPF},\lin}$~\eqref{hypo class: 1-layer DPF TF} and a constant $C(n)$ such that 

\vspace{-.2cm}
$$ \trinorm{\mathbf{H}-{\bf TF}}\leq\cE_{\FFN}+\norm{f}_{\Lip}\cdot\cE_{\Attn};$$

(B) Consider $\cH^{\rm Ess}$~\eqref{target fun class: essential memory} with polynomially decayed memory kernels, i.e., there exists $\beta>1$ such that $\rho_1(t),\cdots,\rho_M(t)=\cO(t^{-\beta })$. Then for any target $\mathbf{H}\in\cH^{\rm Ess}$, rate $n\in[\lfloor0.99\beta\rfloor-1]$, and $H,m\in\bbN_+$, there exists a $1$-layer DP-free Transformer ${\bf TF}\in\cT\cF_{(1,H,m)}^{{\rm DPF},\log}$~\eqref{hypo class: 1-layer DPF TF} and a constant $C(n)$ such that 

\vspace{-.2cm}
$$\trinorm{\mathbf{H}-{\bf TF}}\leq\cE_{\FFN}+\norm{f}_{\Lip}\cdot\cE_{\Attn};$$

where
$\cE_{\FFN}=\tilde{\cO}\left(\frac{\norm{f}_{\cB}}{\sqrt{m}}\right),
\cE_{\Attn}=\cO\left(\frac{C(n)M^{n+1}}{H^n}\right).$
\end{theorem}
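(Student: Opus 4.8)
The plan is to reduce the essentially sparse case to the fixed sparse case of Theorem~\ref{thm: DPF lin: fixed memory} by observing that a convolution $(\bX*\rho_k)(t)=\sum_{s\geq0}\bx_{t-s}\rho_k(s)$ is itself a (linear, infinitely-supported) memory readout, and that a single DP-free attention head with $\type$-RPE produces exactly a softmax-weighted average of past tokens with weights $\propto e^{p\,\phi_\type(s)}$. First I would show that the FFN block handles the nonlinear readout $f$ exactly as before, contributing $\cE_{\FFN}=\tilde\cO(\norm{f}_\cB/\sqrt m)$; this part is verbatim the Barron-space argument and is orthogonal to the rest. The real work is to build, using $H$ attention heads and the positional kernels $e^{-p\,s}$ (for $\lin$) or $s^{-p}$ (for $\log$), a linear combination that approximates each target kernel $\rho_k$ in $\ell^1(\bbN)$ — because the error propagated into $f$ through $(\bX*\rho_k)(t)$ is controlled in $\ell^1$ thanks to $\norm{\bx_t}_2\leq1$ and the Lipschitz constant $\norm f_\Lip$.

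Concretely, I would proceed as follows. (1) \emph{Kernel dictionary.} For the $\lin$ case, the available per-head kernels are geometric sequences $s\mapsto c\,e^{-p s}$, $p>0$; for the $\log$ case they are power sequences $s\mapsto c\,s^{-p}$, $p>0$ (up to the softmax-induced $s=0$ normalization, which I would absorb into a small additive correction handled by one extra head). (2) \emph{Moment/quadrature matching.} I would expand the target kernel against this dictionary by matching the first $n+1$ "generalized moments": for the exponentially-decayed kernels one writes $\rho_k(s)\approx\sum_{j} a_j e^{-p_j s}$ and bounds the tail via the decay hypothesis $\rho_k(s)=\cO(e^{-\beta s})$, which is exactly why the admissible rate is capped at $n\leq\lfloor99\beta\rfloor$ (the constant $0.01$ slack in the RPE exponents appears here, mirroring the $e^{0.01T_i}$ terms of Theorem~\ref{thm: DPF lin: fixed memory}). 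For the polynomially-decayed case, one matches against power kernels $s^{-p_j}$ and the tail is governed by $\rho_k(s)=\cO(s^{-\beta})$, giving the restriction $n\leq\lfloor0.99\beta\rfloor-1$. (3) \emph{Head budget.} Distributing the $n+1$ matching conditions across $M$ kernels and invoking a Chebyshev/Jackson-type bound on the interpolation error yields an $\ell^1$ kernel-approximation error of order $\cO(C(n)M^{n+1}/H^n)$, which is precisely $\cE_{\Attn}$; this is where the $M^{n+1}$ and the $H^{-n}$ scaling come from, exactly paralleling the structure of $\cE_{\Attn}$ in Theorems~\ref{thm: DPF lin: fixed memory} and~\ref{thm: adap memory, warmup}. (4) \emph{Assembly.} Feed the $H$-head attention output (an $M$-tuple of approximate convolutions) into the FFN readout; combine the two error sources by the triangle inequality and the Lipschitz bound on $f$.

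The main obstacle I anticipate is step (2)–(3): quantitatively approximating an \emph{arbitrary} $\ell^1$ kernel with a prescribed decay rate by a short positive(-ish) combination of geometric or power sequences, with an explicit rate in the number of terms. The clean way to get the stated $C(n)M^{n+1}/H^n$ rate is to recognize $\sum_s \rho_k(s) e^{-\lambda s}$ as (essentially) a Laplace transform and the power case as a Mellin transform, then use a Gaussian-quadrature / Chebyshev-interpolation estimate for the inverse transform restricted to the region where the decay hypothesis gives control — the decay exponent $\beta$ enters as the width of the analyticity strip and hence as the ceiling on $n$. A secondary nuisance is that softmax forces the head weights to be normalized probability vectors summing over $s\geq0$, so the realized kernel is $e^{-ps}/\sum_j e^{-pj}$ rather than $e^{-ps}$; I would handle this by noting the normalizing constant is an explicit function of $p$ that can be folded into the linear coefficients $a_j$ (and the value matrix $\bW_V$), leaving the approximation-theoretic core unchanged. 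Everything else — the FFN/Barron estimate, the $\ell^1\!\to$ Lipschitz transfer, and the final triangle inequality — is routine and identical in spirit to the proof of Theorem~\ref{thm: DPF lin: fixed memory}.
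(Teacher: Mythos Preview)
Your overall architecture matches the paper's proof exactly: the FFN handles $f$ via the Barron estimate, the attention heads approximate each $\rho_k$ in $\ell^1$ by linear combinations of the positional kernels $e^{-ps}$ (resp.\ $s^{-p}$), the softmax normalizer is absorbed into $\bW_V$, the heads are split evenly as $H_k=H/M$, and the two errors combine via the Lipschitz bound on $f$ plus the triangle inequality. The only substantive divergence is in \emph{how} you propose to prove the key kernel-approximation lemma --- that any $\rho$ with $\rho(t)=\cO(e^{-\beta t})$ (resp.\ $\cO(t^{-\beta})$) admits an $m$-term exponential (resp.\ power) sum approximation with $\ell^1$ error $C(n)/m^n$ for $n\lesssim\beta$.

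You sketch a Laplace/Mellin-inversion plus Gaussian-quadrature argument, with $\beta$ cast as an analyticity-strip width. The paper instead uses an entirely real-variable construction: make the change of variable $x=e^{-\alpha t}$ (lin case) or $x=t^{-\alpha}$ (log case) to map $[0,\infty)$ into $[0,1]$; build a target function $\varphi$ on $[0,1]$ by pasting disjoint scaled bump functions centered at the images of the integers, so that $e^{-\gamma t}\varphi(e^{-\alpha t})$ (resp.\ $t^{-(1+\gamma)}\varphi(t^{-\alpha})$) interpolates $\rho$ on $\bbN$; check that $\varphi\in C^n[0,1]$ with $\varphi^{(k)}(0)=0$ for all $k\le n$ \emph{precisely when} $n$ does not exceed a constant multiple of $\beta$ (this is the origin of the $\lfloor99\beta\rfloor$ and $\lfloor0.99\beta\rfloor-1$ thresholds, via the choices $\alpha,\gamma\sim0.01$); then apply Jackson's theorem to get a polynomial $Q_m$ of degree $m-1$ with uniform error $C(n)/m^n$; and pull back --- $e^{-\gamma t}Q_m(e^{-\alpha t})$ is automatically an $m$-term exponential sum. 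Your transform route may well succeed, but the paper's argument is more elementary and makes the role of $\beta$ (controlling the smoothness of $\varphi$ at the accumulation point $x=0$) completely explicit.

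Two small corrections to your write-up: (i) the head coefficients $\alpha_h$ are not constrained in sign --- they are just the polynomial coefficients of $Q_m$, carried by $\bW_V$, so ``positive(-ish)'' is a red herring; (ii) the phrase ``distributing the $n+1$ matching conditions across $M$ kernels'' is not the mechanism --- each $\rho_k$ receives its own independent block of $H/M$ heads and its own Jackson approximation, and the $M^{n+1}$ factor arises simply from $\sum_{k=1}^M C(n)/(H/M)^n$.
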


Theorem~\ref{thm: ess sparse memory} illustrates that one-layer DP-free Transformer with $\lin$-RPE is effective in modeling essentially sparse memories with exponentially decayed kernels, and one-layer DP-free Transformer with $\log$-RPE can efficiently model the memories with polynomially decayed kernels.
A key difference between Theorem~\ref{thm: ess sparse memory} and Theorem~\ref{thm: DPF lin: fixed memory} lies in the memory kernels they address. In Theorem~\ref{thm: ess sparse memory}, the Attn layer should approximate general memory kernels $\rho_i(\cdot)$, instead of approximating indicator kernels $\bbI\{\cdot=T_i\}$ in~Theorem~\ref{thm: DPF lin: fixed memory}. 
The proof of Theorem~\ref{thm: ess sparse memory} can be found in Appendix~\ref{appendix: sec: ess}.

{\bf Overcoming the Curse of Memory (CoM).}
For recurrent neural networks (RNN), it was discovered~\citep{li2020curse,li2022approximation} that both approximation and optimization become exceedingly difficult when the target has long-term memory. This phenomenon is referred as the ``{\em curse of memory}'', or ``CoM''.
It was shown in ~\citep{li2020curse,li2022approximation} that RNN requires an exponentially large number of neurons to approximate targets with heavy-tailed memory kernels, such as the
ones that exhibit polynomial decay.
In contrast, Theorem~\ref{thm: ess sparse memory} reveals that Transformer with $\log$-RPE efficiently handles polynomial decaying memory kernels, requiring only a polynomial number of neurons for effective approximation. 
This finding theoretically elucidates the superior performance of T5's RPE and KERPLE(log) in length generalization task in practice (Section~\ref{appendix: subsec: T5}). 

\section{Experimental Validation}
\label{section: experiments}

As summarized in Section~\ref{section: introduction}, our theoretical analysis reveals novel insights into the expressive power and mechanisms of Transformer.
To validate these insights, we conduct experiments ranging from simple toy models to more complex language model pre-training.
Due to space constraints, detailed experimental validation and practical implications of our insights are presented in Appendix~\ref{appendix: sec: experiments}. 

\section{Conclusion and Future Work}
\label{section: conclusion}

In this work, we investigate theoretically the expressive power and the mechanisms of Transformer for modeling long but sparse memories. 
Our analysis establishes explicit approximation rates and offers much-needed insights into the functionalities of the various components of Transformer.
However, we still have a long way to go for a full theoretical understanding of Transformer.
For instance, although we have investigated the mechanisms of Transformer in terms of expressive power, the evolution of the mechanisms during the training process remains elusive.
Recent studies revealed that Transformer exhibits multi-phase learning dynamics~\citep{boix2023transformers} and undergoes phase transitions~\citep{olsson2022context} during training, akin to the phenomenon of learning with increasing complexity in classical neural networks~\citep{kalimeris2019sgd,xu2019frequency,rahaman2019spectral,abbe2023sgd,wang2023understanding}.
These and other issues will be studied in future work.

\section*{Acknowledgments}

This work is supported in part by the National Key Basic Research Program of China (No. \text{2015CB856000).
We thank Prof. Qianxiao Li, Prof. Lei Wu, Dr. Zhong Li, and Dr. Hongkang Yang} for helpful discussions and anonymous reviewers for their valuable suggestions.


\newpage
\appendix


\startcontents[sections]
\printcontents[sections]{l}{1}{\setcounter{tocdepth}{2}}

\newpage
\appendix

\begin{center}
    \noindent\rule{\textwidth}{4pt} \vspace{-0.2cm}
    \LARGE \textbf{Appendix} 
    \noindent\rule{\textwidth}{1.2pt}
\end{center}

\startcontents[sections]
\printcontents[sections]{l}{1}{\setcounter{tocdepth}{2}}

\vspace{1.cm}
\section{Detailed Related Works}
\label{section: related works}

{\bf Theoretical results of Transformer.}
We first review the expressive power results of Transformer. 
\citet{yun2019transformers} first proved the universal approximation property (UAP) of Transformer, highlighting the crucial role of PE in breaking permutation invariance.
\citet{edelman2022inductive} demonstrated that Transformer can approximate fixed sparse functions.
\citet{dehghani2018universal,perez2021attention,wei2022statistically} explored the Turing-completeness of infinite-precision and finite-precision Transformer.
\citet{giannou2023looped} showed that looped Transformer can implement practical computer programs.
\citet{jiang2023approximation} provided explicit approximation rates for Transformer in sequences modeling with inherent graph structures.
\citet{liu2022transformers} found that Transformer can execute finite-state automata.
\citet{ma2022self} asserted the natural suitability of Attn for achieving permutation equivariance.
Besides these affirmative results, several studies characterized the expressivity limitation of Transformers, particularly in modeling formal languages or simulating circuits~\citep{hahn2020theoretical,weiss2021thinking,bhattamishra2020ability,merrill2023parallelism,merrill2022saturated}.
Additionally \citet{feng2023towards,merrill2023expresssive} examined the expressivity of Transformer using Chain of Thought  prompting~\citep{wei2022chain}.
Moreover, some studies showed that the in-context learning ability of Transformer is attainable by simulating gradient-based iterations across various layers~\citep{garg2022can,akyurek2022learning,von2023uncovering,von2023transformers,mahankali2023one,bai2023transformers,shen2023pretrained}.
Besides, experimental studies also provide insights into the mechanisms of Transformer through induction head~\citep{elhage2021mathematical,olsson2022context}, information flow~\citep{wang2023label}, anchor functions~\citep{zhang2024anchor}, etc.


{\bf Positional encoding.}
One core component of Transformer is the PE, which facilitates the representation of input sequence order. Theoretically, Transformer without PE lacks UAP and is restricted to representing permutation-invariant functions.
PE was first introduced in ~\citet{vaswani2017attention}.  It has limitations in encoding unseen positions. 
To overcome this difficulty, ~\cite{shaw2018self} introduced RPE.
Subsequent studies proposed various different RPE types. Notable examples include T5's RPE~\citep{raffel2020exploring}, Rotary RPE~\citep{su2024roformer} (utilized in PaLM~\citep{chowdhery2023palm} and LlaMA~\citep{touvron2023llama}), Alibi RPE~\citep{press2021train} (employed in BLOOM~\citep{workshop2022bloom}), and KERPLE~\citep{chi2022kerple}.
A prevailing belief is that RPEs can outperform APEs in the ``length generalization task''~\citep{ontanon2021making,csordas2021devil}-- the ability to generalize from smaller training contexts to larger ones, a critical challenge for Large Language Models~\citep{anil2022exploring,abbe2023generalization}.
However,~\citet{press2021train} revealed that the commonly used Rotary RPE may exhibit suboptimal performance in this task. 
The recent work~\citep{kazemnejad2023impact,chi2022kerple} conducted systematic experiments comparing the length generalization capabilities of Transformers with various RPEs and APEs, suggesting that {\em the RPEs 
used in T5 and KERPLE(log) demonstrate superior performance over other types}.

{\bf Rethinking dot-product.}
Another critical component of Transformer is the DP structure. Due to its quadratic cost as a function of the sequence length, the necessity of DP has always been questioned. 
Numerous variants of DP have been proposed, demonstrating competitive performance across diverse tasks. 
Representative examples include Longformer~\citep{beltagy2020longformer}, Big Bird~\citep{zaheer2020big}, Reformer~\citep{kitaev2020reformer}, Linformer~\citep{wang2020linformer}, Performer~\citep{choromanski2020rethinking}, Synthesizer~\citep{tay2021synthesizer}, etc.
In particular, a recent study~\citep{allen2023physics} compared standard and DP-free Transformers in modeling "context-free grammar". Their findings suggested that the presence of DP has a marginal impact on performance.
{\em These evidences motivate us to rethink the necessity of DP in Transformer}.

{\bf Sparsity}~\citep{donoho2006compressed,candes2008introduction} has gained considerable attention in sequence modeling. In classical signal processing, there is a prevailing notion that valuable signals are extremely sparse. For example, when representing an image, one often finds that only a few wavelet coefficients hold significant values in wavelet space~\citep{meyer1992wavelets}. In NLP, the starting point off the traditional $n$-gram model~\citep{shannon1948mathematical} is that the next token only relies on a few previous tokens. Such models, however, overlook long-range information, often resulting in suboptimal performance. For NLP tasks such as dependency parsing~\citep{nivre2004deterministic}, sentiment analysis~\citep{nasukawa2003sentiment}, part-of-speech tagging~\citep{francis1979brown}, and continuation writing~\citep{brown2020language,openi2023gpt4}, it is indeed often the case that only a limited subset of preceding information is crucial for accurate prediction. However, these relevant information can be quite distant. For instance, the resolution of a mystery novel may hinge on elements introduced at the outset. Moreover, for Transformer networks, extensive research into the visualization and interpretability has revealed that (i) the learned activation maps of FFN layers are extremely sparse~\citep{li2023lazy}; (ii) the learned self-attention matrices exhibit notable sparsity, yet it does not closely resemble a diagonal configuration~\citep{elhage2021mathematical}. These observations suggest that the prediction of the next token is influenced by a small number of previous tokens which might be far away.
Therefore, {\em being able to represent sparse but long-range dependence is important for 
sequence modeling}.

\newpage
\section{Proof of Section~\ref{section: fixed sparse}}
\label{appendix: sec: fixed}

\subsection{Proof of Theorem~\ref{thm: DPF lin: fixed memory}}

In this subsection, we give the detailed proofs of fixed, long but sparse memory:
\begin{align*}
    \by_t=\boldsymbol{f}(\bx_{t},\bx_{t-T_1},\cdots,\bx_{t-T_M}),
\end{align*}
where $1\leq T_1<\cdots<T_M<+\infty$ signify the fixed positions of the memories.

\begin{theorem}[Restatement of Theorem~\ref{thm: DPF lin: fixed memory}]
\label{thm: DPF lin: fixed memory (complete)}
For any target ${\bf H}\in\cH^{\rm Fix}$~\eqref{target fun class: fixed memory}, rate $n\in\bbN_+$, and $H,m\in\bbN_+$, there exists a $1$-layer Transformer ${\bf TF}\in\cT\cF_{(1,H,m)}^{{\rm DPF,\type}}$~\eqref{hypo class: 1-layer DPF TF} and a constant $C(n)$ such that
\begin{align*}
    \trinorm{\mathbf{H}-{\bf TF}}
    \leq
    \cE_{\FFN}+\norm{f}_{\Lip}\cE_{\Attn}(\text{\rm\texttt{type}}),
\end{align*}
where $\cE_{\FFN}=\tilde{\cO}\left(\frac{\norm{f}_{\cB}}{\sqrt{m}}\right)$ and 
$$\cE_{\Attn}(\type)=\begin{cases}
\cO\left(\frac{C(n)}{H^n}\left(\sum_{i=1}^M e^{0.01T_i}\right)^{n+1}\right),\ &\type={\rm lin}
\\\cO\left(\frac{C(n)}{H^n}\left(\sum_{i=1}^M T_i^{1.01}\right)^{n+1}\right),\ &\type={\rm log}
\end{cases}.$$
\end{theorem}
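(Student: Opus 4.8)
The plan is to have the single Attn layer perform \emph{memory extraction} and the final FFN layer perform the \emph{nonlinear readout}, exactly as the informal discussion after the theorem suggests. I take embedding dimension $D=(M+1)d$, let $\bW_E$ be the identity on the first $d$ coordinates and $0$ elsewhere, and set $\bb_E=\bzero$, so that after the residual connection the first $d$ coordinates of $\bX^{(1/2)}$ carry $\bx_t$. A DP--free head with \type-RPE~\eqref{model: T5 RPE} and parameter $p^{(h)}$ computes, at every position, the \emph{fixed} causal moving average $\bar\bx_t^{(h)}=\bW_E\sum_{s\ge0}w^{(h)}(s)\,\bx_{t-s}$, where (by~\eqref{model: multi-head self-attention} with $\bW_Q^{(h)},\bW_K^{(h)}=\bzero$) $w^{(h)}$ is the geometric law $w^{(h)}(s)=(1-q_h)q_h^{\,s}$ with $q_h=e^{-p^{(h)}}\in(0,1)$ when $\type=\lin$, and the power law $w^{(h)}(s)\propto s^{-p^{(h)}}$ on $s\ge1$ (with $p^{(h)}>1$) when $\type=\log$. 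Choosing $\bW_V^{(h)},\bW_O$ to route heads into disjoint coordinate blocks, the $i$-th block of $\bX^{(1/2)}$ equals $\bW_E\sum_s k_i(s)\,\bx_{t-s}$ for some $k_i$ in the linear span of the $w^{(h)}$'s assigned to memory $i$. Since $\norm{\bx_s}_2\le1$, a one-line extremal argument (align all $\bx_{t-s}$ along a single axis) gives the exact identity $\sup_{\bX\in\cX}\norm{\sum_s k_i(s)\bx_{t-s}-\bx_{t-T_i}}=\norm{k_i-\bbI\{\cdot=T_i\}}_{\ell^1(\bbN)}$. Letting the FFN $\tilde\cO(\norm f_\cB/\sqrt m)$-approximate $f$ on the relevant ball (the classical Barron estimate recalled in Section~\ref{section: preliminaries}) and using $f\in\cL$, the triangle inequality yields $\trinorm{\mathbf H-\mathbf{TF}}\le\cE_\FFN+\norm f_\Lip\sum_{i=1}^M\norm{k_i-\bbI\{\cdot=T_i\}}_{\ell^1}$. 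Thus everything reduces to approximating indicator kernels in $\ell^1$ by combinations of geometric (resp.\ power-law) kernels.

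\textbf{The core lemma and head allocation.} The statement to establish is: for every $T\ge1$, every $r\in\bbN_+$ and every $n\in\bbN_+$, there is an $r$-term combination $k$ of geometric (resp.\ power-law) kernels with $\norm{k-\bbI\{\cdot=T\}}_{\ell^1}\le C(n)\,\lambda_T^{\,n+1}/r^{\,n}$, where $\lambda_T=e^{0.01T}$ in the $\lin$ case and $\lambda_T=T^{1.01}$ in the $\log$ case. Granting this, split the $H$ heads as $\sum_i r_i\le H$ with $r_i\propto\lambda_{T_i}$; the elementary convexity bound $\min\{\sum_i a_i/r_i^{\,n}:\sum_i r_i=H\}=\big(\sum_i a_i^{1/(n+1)}\big)^{n+1}/H^{\,n}$, applied with $a_i=C(n)\lambda_{T_i}^{\,n+1}$, gives $\sum_i\norm{k_i-\bbI\{\cdot=T_i\}}_{\ell^1}\le C'(n)\big(\sum_i\lambda_{T_i}\big)^{n+1}/H^{\,n}=\cE_\Attn(\type)$ exactly as stated, the integer rounding of the $r_i$ costing only an absolute factor. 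The free index $n$ — the ``order'' of the construction — is what the user of the theorem optimizes against $H$ and the $T_i$'s, and $C(n)$ is allowed to absorb $n!$-type factors.

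\textbf{Building the kernels.} I would realize $k$ by an order-$n$ divided-difference (equivalently, negative-binomial-type) construction adapted to each RPE family. For the geometric family, $\binom{s}{n}(1-q)^{n+1}q^{\,s-n}$ is, up to normalization, the $n$-th $q$-derivative of $q^s$, hence an order-$n$ finite difference of $n+1$ nearby geometric kernels; such a kernel is a probability law concentrated near scale $nq/(1-q)$, so pulling its mass out to $s\asymp T$ forces $q\approx 1-\Theta(n/T)$, and controlling the resulting cancellations and the $\ell^1$-tail beyond $T$ is what eventually produces the $e^{0.01T}$ factor — geometric tails decay too fast for a pole with modest $q$ to reach distance $T$ cheaply. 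For the power-law family the building block $s^{-p}$ already spreads mass out to scale $T$ at merely polynomial cost in $T$, which is precisely why $\log$-RPE is exponentially more efficient at long range and yields $T^{1.01}$ in place of $e^{0.01T}$; the small numerical exponents $0.01$ and $1.01$ come from reserving a fixed fraction of the decay budget when placing the poles. One then superposes such blocks over a geometric ladder of scales to obtain an order-$n$ rule with the claimed $r^{-n}$ decay, finally re-expanding each divided difference into $\le r$ genuine single-pole kernels $w^{(h)}$ and absorbing the finite-difference weights into $\bW_V^{(h)},\bW_O$.

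\textbf{Main obstacle.} The substance of the argument lies entirely in the core lemma: quantifying how well a unit point mass at distance $T$ on $\bbN$ can be $\ell^1$-approximated by a few members of a one-parameter kernel family whose members are all monotone and peaked at the origin. Because no smooth probability kernel is $\ell^1$-close to a delta, the approximant must exploit delicate, ill-conditioned cancellations among the chosen poles; carrying out this cancellation while keeping the residual $\ell^1$-tail small, and in the process extracting the $n$-dependent constant $C(n)$ and the exponential-versus-polynomial dependence on $T$ (with the precise $0.01/1.01$ exponents), is where essentially all the work is. By contrast, the FFN/Barron half, the coordinate routing, and the convexity argument for the optimal head allocation are routine.
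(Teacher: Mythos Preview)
Your overall architecture—embedding into $(M+1)d$ coordinates, routing head outputs to disjoint blocks, bounding the Attn error by $\sum_i\|k_i-\bbI\{\cdot=T_i\}\|_{\ell^1}$, the Barron/FFN readout, and the convex head allocation $H_k\propto\lambda_{T_k}$—matches the paper's proof line for line. The divergence is entirely in how the core kernel-approximation lemma is obtained.

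The paper does not attack that lemma by divided differences. It performs a change of variables: writing $x=e^{-\alpha t}$ (for $\type=\lin$) or $x=t^{-\alpha}$ (for $\type=\log$) with a small $\alpha=0.01$, it builds a smooth bump $\Psi_T\in C^\infty[0,1]$ supported near the image of $T$ and scaled so that $e^{-\gamma t}\Psi_T(e^{-\alpha t})$ (resp.\ $t^{-(1+\gamma)}\Psi_T(t^{-\alpha})$) equals $\bbI\{t=T\}$ on the integers. Jackson's theorem then furnishes a degree-$(r-1)$ polynomial $Q_r$ with $\|\Psi_T-Q_r\|_{L^\infty[0,1]}\le M_T(n)/r^{n}$, and pulling back, $e^{-\gamma t}Q_r(e^{-\alpha t})$ (resp.\ $t^{-(1+\gamma)}Q_r(t^{-\alpha})$) is automatically an $r$-term exponential (resp.\ power-law) sum whose $\ell^1$ error is at most $\big(\sum_t e^{-\gamma t}\big)M_T(n)/r^{n}$. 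The factors $e^{0.01(n+1)T}$ and $T^{1.01(n+1)}$ fall out mechanically from the derivative bound on the rescaled bump—height $e^{\gamma T}$ or $T^{1+\gamma}$, width $\asymp e^{-\alpha T}$ or $T^{-(1+\alpha)}$, hence $M_T(n)\asymp(\text{height})\cdot(\text{width})^{-n}$—together with the choice $\alpha=\gamma=0.01$.

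Your negative-binomial/divided-difference sketch does not close as written. A single order-$n$ difference of nearby geometrics, tuned so its mass sits near $T$, is a probability law with spread $\Theta(T/\sqrt n)$, so its $\ell^1$ distance to $\delta_T$ is $\Theta(1)$, not small; the sentence ``superpose such blocks over a geometric ladder of scales to obtain the $r^{-n}$ decay'' is precisely the missing argument, and it is not obvious how to carry it out while simultaneously controlling the cancellation constants and extracting the exact $0.01/1.01$ exponents. You have correctly located the obstacle, but the change-of-variables plus Jackson route bypasses it cleanly: it converts the ill-conditioned exponential-sum problem into a standard polynomial-approximation problem on a compact interval, where the rate and the $T$-dependence are read off directly from derivative bounds.
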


\begin{proof}[Proof of Theorem~\ref{thm: DPF lin: fixed memory (complete)}]\ \\
First, we choose the embedding dimension $D=(M+1)d$, and select the simple embedding $\bW_E=(\bI_{d\times d},\bzero)^\top\in\bbR^{D\times d},\bb_E=\bzero\in\bbR^D$.

Then for any input sequence $\bX=(\bx_t)_{t\in\bbZ}$, the token after embedding satisfies:
\begin{align*}
    \bx_t^E=\bW_E\bx_t+\bb_E=(\bx_t^\top,\bzero^{\top})^\top\in\bbR^D.
\end{align*}

For one-layer Dot-product-free Transformer ${\bf TF}\in\cT\cF_{(1,H,m)}^{{\rm DPF,\text{\rm\texttt{type}}}}$ with $\phi_{\text{\rm\texttt{type}}}$, the output token ${\bf TF}_t(\bX)$ of $t$-th input token $\bx_t$ satisfies:
\begin{align*}
     \bx_t^{(1/2)}&=\bx_t^{(0)}+\bW_O^{(1)}\sum_{h=1}^{H}{\bf Attn}_t^{(1,h)}(\bX^{(0)}),
    \\
    \bx_t^{(1)}&={\bf FFN}^{(1)}(\bx_t^{(1/2)})
\end{align*}
where
\begin{align*}
    {\bf Attn}_t^{(1,h)}(\bX)=\bW_V^{(1,h)}\sum_{s=0}^{+\infty}\frac{\bx_{t-s}\exp\bracket{p^{(1,h)}\phi_{\type}(s)}}{\sum_{j=0}^{+\infty}\exp\bracket{p^{(1,h)}\phi_{\type}(j)}}.
\end{align*}

This proof can be summarized as the following process:
\begin{align*}
    \cdots\quad &\bx_t^E \quad \cdots
    \\
    \text{Step I. Attn layer}\ &\downarrow
    \\
    \cdots\quad \bx_t^{(1/2)}&\approx(\bx_t^\top,\bx_{t-T_1}^\top,\cdots,\bx_{t-T_M}^\top)^\top \quad \cdots
    \\
    \text{Step II. FFN layer}\ &\downarrow
    \\
    \cdots\quad \bx_t^{(1)}&\approx\boldsymbol{f}(\bx_t,\bx_{t-T_1},\cdots,\bx_{t-T_M}) \quad \cdots
\end{align*}
Now we give the formal proof.

\underline{{\bf Step I.} Extract the memory locations by (Dot-product-free) Attn layer.}

We consider to use $H_k$ attention heads (from $\sum_{i=1}^{k-1} H_{i}+1$-th head to $\sum_{i=1}^{k} H_{i}$-th head) to extract it, and it satisfies to $\sum_{k=1}^M H_k = H$.

For simplicity, we denote the following projection matrices:

\begin{gather*}
    \bP^{(k)}:=\begin{pmatrix}
        \bzero_{d\times kd} & \bI_{d\times d} & \bzero
    \end{pmatrix}\in\bbR^{d\times D}, \quad 1\leq k \leq M.
    \\
    \bP_{\perp}^{(k)}:=\begin{pmatrix}
        \bI_{kd\times kd} & \bzero_{d\times d} & \bzero
        \\
        \bzero & \bzero_{d\times d} & \bI_{(M-k)d\times (M-k)d}
    \end{pmatrix}\in\bbR^{Md\times D},\quad 1\leq k \leq M.
\end{gather*}

Now we consider the extraction of $k$-th memory $\bx_{t-T_k}$ ($1\leq k\leq M$).

\begin{itemize}
    \item {\bf Case $\type=\lin$.} 
    
    By Lemma~\ref{lemma: exp approx, fix}, for any rate $n\in\bbN_+$, there exists an constant $C(n)$ and a function
    \begin{align*}
        \phi_{k}^{\exp}(t)=\sum\limits_{\sum_{i=1}^{k-1}H_i+1\leq h\leq \sum_{i=1}^{k} H_i} \alpha_h e^{-\beta_h t}
    \end{align*} 
    
    such that $\beta_h>0$ and
    
    \begin{align*}
        \norm{\bbI\{\cdot=T_k\}-\phi_{k}^{\rm exp}(\cdot)}_{\ell_1(\bbN)}
        =\sum_{s=0}^{+\infty}\left|\bbI\{s=T_k\}-\phi_{k}^{\rm exp}(s)\right|
        \leq C(n)\frac{e^{0.01(n+1)T_k}}{H_k^{n}}.
    \end{align*}

    Therefore, for these attention heads ($\sum_{i=1}^{k-1} H_{i}+1\leq h\leq\sum_{i=1}^k H_{i}$), we can choose
    
    \begin{align*}
        p^{(1,h)}=\beta_h,\quad \bW_V^{(1,h)}=\alpha_h\bracket{\sum_{j=0}^{+\infty} \exp(-\beta_h j)}\bdelta_{(k+1,1)}^{d\times d},
    \end{align*}
    
    where $\bdelta^{(k+1,1)}\in\bbR^{D\times D}$ means that: it equals to $\bI_{d\times d}$ for the $(k+1,1)$-th $d\times d$ blocks, and $\bzero_{d\times d}$ for the other $d\times d$ blocks.

    Then it holds that:

    \begin{align*}
        \sum_{h=\sum_{i=1}^{k-1} H_{i}+1}^{\sum_{i=1}^{k} H_{i}}
        {\bf Attn}_t^{(1,h)}(\bX^{(0)})
        =\sum_{h=\sum_{i=1}^{k-1} H_{i}+1}^{\sum_{i=1}^{k} H_{i}}\alpha_h \sum_{s=0}^{+\infty} e^{-\beta_h s} \begin{pmatrix}
            \bzero_{kd}\\
            \bx_{t-s}\\
            \bzero
        \end{pmatrix}\in\bbR^D.
    \end{align*}

    This implies:
    
    \begin{align*}
        \bP^{(k)}\sum_{h=\sum_{i=1}^{k-1} H_{i}+1}^{\sum_{i=1}^{k} H_{i}}
        {\bf Attn}_t^{(1,h)}(\bX^{(0)})
        &=\sum_{h=\sum_{i=1}^{k-1} H_{i}+1}^{\sum_{i=1}^{k} H_{i}}\alpha_h \sum_{s=0}^{+\infty} e^{-\beta_h s} \bx_{t-s},
        \\
        \bP_{\perp}^{(k)}\sum_{h=\sum_{i=1}^{k-1} H_{i}+1}^{\sum_{i=1}^{k} H_{i}}
        {\bf Attn}_t^{(1,h)}(\bX^{(0)})
        &=\bzero,
    \end{align*}

    moreover, the following estimate holds:
    
    \begin{align*}
        &\norm{\bP^{(k)}\sum_{h=\sum_{i=1}^{k-1} H_{i}+1}^{\sum_{i=1}^{k} H_{i}}
        {\bf Attn}_t^{(1,h)}(\bX^{(0)})-
            \bx_{t-T_k}}_2
        \\=&\norm{\sum_{h=\sum_{i=1}^{k-1} H_{i}+1}^{\sum_{i=1}^{k} H_{i}}\alpha_h \sum_{s=0}^{+\infty}e^{-\beta_h s}\bx_{t-s}-\bx_{t-T_k}}_2
        \\
        =&
        \norm{\sum_{s=0}^{+\infty} \left(\sum_{h=\sum_{i=1}^{k-1} H_{i}+1}^{\sum_{i=1}^{k} H_{i}}\alpha_h  e^{-\beta_h s}-\bbI\{s=T_k\}\right)\bx_{t-s}}_2
        \\\leq&
        \sum_{s=0}^{+\infty} \left|\sum_{h=\sum_{i=1}^{k-1} H_{i}+1}^{\sum_{i=1}^{k} H_{i}}\alpha_h  e^{-\beta_h s} -\bbI\{s=T_k\}\right|
        \\
        =&
        \norm{\phi_{k}^{\rm exp}(\cdot)-\bbI\{\cdot=T_k\}}_{\ell_1(\bbN)}
        \leq
        C(n)\frac{e^{0.01(n+1)T_k}}{H_k^{n}}.
    \end{align*}

    \item {\bf Case $\type=\log$.}

    By Lemma~\ref{lemma: poly approx, fix}, for any rate $n\in\bbN_+$, there exists an constant $C(n)$ and a function
    
    \begin{align*}
        \phi_{k}^{\poly}(t)=\sum\limits_{\sum_{i=1}^{k-1}H_i+1\leq h\leq \sum_{i=1}^{k} H_i} \alpha_h t^{-\beta_h},
    \end{align*} 
    
    such that $\beta_h>1$ and
    
    \begin{align*}
        \norm{\bbI\{\cdot=T_k\}-\phi_{k}^{\poly}(\cdot)}_{\ell_1(\bbN_+)}
        =\sum_{s=1}^{+\infty}\left|\bbI\{s=T_k\}-\phi_{k}^{\poly}(s)\right|
        \leq C(n)\frac{T_k^{1.01(n+1)}{H_k^{n}}}.
    \end{align*}
    
    Therefore, for these attention heads ($\sum_{i=1}^{k-1} H_{i}+1\leq h\leq\sum_{i=1}^k H_{i}$), we can choose
    
    \begin{align*}
        p^{(1,h)}=\beta_h,\quad \bW_V^{(1,h)}=\alpha_h\bracket{\sum_{j=1}^{+\infty}j^{-\beta_h}}\bdelta^{(k+1,1)},
    \end{align*}
    
    where $\bdelta^{(k+1,1)}\in\bbR^{D\times D}$ means that: it equals to $\bI_{d\times d}$ for the $(k+1,1)$-th $d\times d$ blocks, and $\bzero_{d\times d}$
    for the other $d\times d$ blocks.
    
    Then it holds that:
    
    \begin{align*}
        \bP^{(k)}\sum_{h=\sum_{i=1}^{k-1} H_{i}+1}^{\sum_{i=1}^{k} H_{i}}
        {\bf Attn}_t^{(1,h)}(\bX^{(0)})
        &=\sum_{h=\sum_{i=1}^{k-1} H_{i}+1}^{\sum_{i=1}^{k} H_{i}}\alpha_h \sum_{s=1}^{+\infty} s^{-\beta_h}\bx_{t-s},
        \\
        \bP_\perp^{(k)}\sum_{h=\sum_{i=1}^{k-1} H_{i}+1}^{\sum_{i=1}^{k} H_{i}}
        {\bf Attn}_t^{(1,h)}(\bX^{(0)})&=\bzero,
    \end{align*}

    moreover, the following estimate holds:
    
    \begin{align*}
        &\norm{\sum_{h=\sum_{i=1}^{k-1} H_{i}+1}^{\sum_{i=1}^{k} H_{i}}
        \bP^{(k)}{\bf Attn}_t^{(1,h)}(\bX^{(0)})-\bx_{t-T_k}}_2
        \\=&\norm{\sum_{h=\sum_{i=1}^{k-1} H_{i}+1}^{\sum_{i=1}^{k} H_{i}}\alpha_h \sum_{s=1}^{+\infty} s^{-\beta_h} \bx_{t-s} -\bx_{t-T_k}}_2
        \\=&
        \norm{\sum_{s=1}^{+\infty} \left(\sum_{h=\sum_{i=1}^{k-1} H_{i}+1}^{\sum_{i=1}^{k} H_{i}}\alpha_h s^{-\beta_h} -\bbI\{s=T_k\}\right)\bx_{t-s}}_2
        \\\leq&
        \sum_{s=1}^{+\infty} \left|\sum_{h=\sum_{i=1}^{k-1} H_{i}+1}^{\sum_{i=1}^{k} H_{i}}\alpha_h  s^{-\beta_h} -\bbI\{s=T_k\}\right|
        \\=&
        \norm{\phi_{k}^{\poly}(\cdot)-\bbI\{\cdot=T_k\}}_{\ell_1(\bbN_+)}
        \leq
        \cO\bracket{C(n)\frac{T_k^{1.01(n+1)}}{H_k^{n}}}.
    \end{align*}
    
\end{itemize}

Then we combine the results for all $k\in[M]$ for these two cases. By choose $\bW_{O}=\bI_{D}$, we have:

\begin{align*}
    &\norm{\bx_t^{(1/2)}-\begin{pmatrix}
        \bx_{t}\\
        \bx_{t-t_1}\\
        \vdots\\
        \bx_{t-t_M}
    \end{pmatrix}}_2
    \\=&\norm{
    \begin{pmatrix}
        \bx_{t}\\
        \bzero_d\\
        \vdots\\
        \bzero_d
    \end{pmatrix}+\sum_{h=1}^{M}
    {\bf Attn}_t^{(1,h)}(\bX)-\begin{pmatrix}
        \bx_{t}\\
        \bx_{t-t_1}\\
        \vdots\\
        \bx_{t-t_M}
    \end{pmatrix}}_2
    =
    \norm{
    \sum_{h=1}^{M}
    {\bf Attn}_t^{(1,h)}(\bX)-\begin{pmatrix}
    \bzero_d\\
    \bx_{t-t_1}\\
    \vdots\\
    \bx_{t-t_M}
    \end{pmatrix}}_2
    \\=&\norm{\sum_{k=1}^M\left(\sum_{h=\sum_{i=1}^{k-1} H_{i}+1}^{\sum_{i=1}^{k} H_{i}}
    {\bf Attn}_t^{(1,h)}(\bX)-\begin{pmatrix}
        \bzero_{kd}\\
        \bx_{t-T_k}\\
        \bzero_{d}
    \end{pmatrix}\right)}_2
    \\\leq&
    \sum_{k=1}^M\norm{\sum_{h=\sum_{i=1}^{k-1} H_{i}+1}^{\sum_{i=1}^{k} H_{i}}
    {\bf Attn}_t^{(1,h)}(\bX)-\begin{pmatrix}
        \bzero_{kd}\\
        \bx_{t-T_k}\\
        \bzero_{d}
    \end{pmatrix}}_2
    \\=&\sum_{k=1}^M\norm{\sum_{h=\sum_{i=1}^{k-1} H_{i}+1}^{\sum_{i=1}^{k} H_{i}}
    \bP^{(k)}{\bf Attn}_t^{(1,h)}(\bX)-
        \bx_{t-T_k}}_2
    \\\leq&
    \cE_{\Attn}(\type):=\begin{cases}
    C(n)\sum_{k=1}^M\frac{e^{0.01(n+1)T_k}}{H_k^{n}},\ &\type=\lin
    \\
    C(n)\sum_{k=1}^M\frac{T_k^{1.01(n+1)}}{H_k^{n}},\ &\type=\log
    \end{cases}.
\end{align*}

Consequently, one detail is to assign the head number $\{H_k\}_{k=1}^M$ such that the error's sum $\cE_{\Attn}(\type)$ is as small as possible. 
Our way is solving the minimization problem:

\begin{align*}
    \min\limits_{H_1,\cdots,H_M}&:\cE_{\Attn}(\type)
    \\\text{ s.t.}&\ \sum_{k=1}^M H_k=H,
\end{align*}
which suggests that we should choose the head number:
\begin{align*}
    H_k&=\frac{e^{0.01 T_k}}{\sum_{j=1}^M e^{0.01 T_j}} H,\quad k\in[M],\quad \type=\lin;
    \\
    H_k&=\frac{T_k^{1.01}}{\sum_{j=1}^M T_j^{1.01}}H,\quad k\in[M],\quad \type=\log.
\end{align*}

Thus, we obtain the bound in Step I:
\begin{align*}
    \cE_{\rm Attn}(\type)\leq\begin{cases}
    \frac{C(n)}{H^n}\left(\sum\limits_{k=1}^M e^{0.01T_k}\right)^{n+1},\ \type=\lin
    \\
    \frac{C(n)}{H^n}\Bigg(\sum\limits_{k=1}^M T_k^{1.01}\Bigg)^{n+1},\ \type=\log   
    \end{cases}.
\end{align*}

Furthermore, by choosing $\cE_{\rm Attn}(\type)\leq 1$, it holds that
\begin{align*}
   \norm{\bx_t^{(1/2)}}_{\infty}
   \leq
   \norm{\bx_t^{(1/2)}-\begin{pmatrix}
        \bx_{t}\\
        \bx_{t-t_1}\\
        \vdots\\
        \bx_{t-t_M}
    \end{pmatrix}}_{\infty}+\norm{\begin{pmatrix}
        \bx_{t}\\
        \bx_{t-t_1}\\
        \vdots\\
        \bx_{t-t_M}
    \end{pmatrix}}_{\infty}
    \leq
    \cE_{\Attn}(\type)+1\leq2.
\end{align*}

\underline{{\bf Step II.} Approximate the readout function by FFN layer.}

In this step, we aim to approximate the function $f$ using two-layer network.
By Lemma~\ref{lemma: barron two-layer NN}, there exists a two layer neural network with $m$ neurons defined on $\bbR^D$

$${\rm FFN}^{(1)}(\by)=\sum\limits_{k=1}^m a_k\sigma(\bb_k^\top\by+c_k)$$

such that

\begin{align*}
    \cE_{\rm FFN}:=\norm{{\rm FFN}^{(1)}-f}_{L^\infty([-2,2]^D)}\leq\tilde{\cO}\bracket{\frac{\norm{f}_{\cB}}{\sqrt{m}}}.
\end{align*}

\underline{\bf The final bound.}

For any $t$ and $\bX\in\cX$, it holds that
\begin{align*}
    &\norm{{\bf H}_t(\bX)-\bx_t^{(1)}}
    =\left|f\bracket{\bx_{t},\bx_{t-t_1},\cdots\bx_{t-t_M}}-{\rm FFN}^{(1)}\bracket{\bx_t^{(1/2)}}\right|
    \\=&
    \left|f\bracket{\bx_{t},\bx_{t-t_1},\cdots\bx_{t-t_M}}-f\bracket{\bx_t^{(1/2)}}+f\bracket{\bx_t^{(1/2)}}-{\rm FFN}^{(1)}\bracket{\bx_t^{(1/2)}}\right|
    \\\leq&
    \left|f\bracket{\bx_{t},\bx_{t-t_1},\cdots\bx_{t-t_M}}-f\bracket{\bx_t^{(1/2)}}\right|+ \left|f\bracket{\bx_t^{(1/2)}}-{\rm FFN}^{(1)}\bracket{\bx_t^{(1/2)}}\right|
    \\\leq&
    \norm{f}_{\rm Lip}\norm{\bracket{\bx_{t}^\top,\bx_{t-t_1}^\top,\cdots\bx_{t-t_M}^\top}-\bx_t^{(1/2)}}_2
    +\norm{f-{\rm FFN}^{(1)}}_{L^{\infty}([-2,2]^D)}
    \\\leq&
    \norm{f}_{\rm Lip}\cdot\cE_{\rm Attn}(\type)+\cE_{\rm FFN},
\end{align*}

where 
\begin{align*}
    \cE_{\rm FFN}=\tilde{\cO}\left(\frac{\norm{f}_{\cB}}{\sqrt{m}}\right);
\end{align*}
\begin{align*}
    \cE_{\rm Attn}(\type)=\begin{cases}
    \cO\Bigg(\frac{C(n)}{H^n}\Big(\sum_{k=1}^M e^{0.01T_k}\Big)^{n+1}\Bigg),\ &\type=\lin
    \\
    \cO\Bigg(\frac{C(n)}{H^n}\Big(\sum_{k=1}^M T_k^{1.01}\Big)^{n+1}\Bigg),\ &\type=\log
    \end{cases}.
\end{align*}

Due to the arbitrariness of $t$ and $\bX$, the proof is completed.
    
\end{proof}

\newpage

\section{Proof of Section~\ref{section: adaptive sparse: warmup}}
\label{appendix: sec: adaptive: warmup}

In this section, we give the detailed proofs of the approximation theory of Transformer for modeling the warm-up case of adaptive, long but sparse memory:
\begin{align*}
    \by_t=\boldsymbol{f}(\bx_{t},\bx_{t-t_1},\cdots,\bx_{t-t_M}),
\end{align*}
where the adaptive memory satisfies to:
\begin{align*}
    t_k=g_k(\bx_t), \quad k\in[M].
\end{align*}
Moreover, $g_k(\cdot)$ generate
positive integers for the input tokens, and there exist maximum values $T_k$ such that $1\leq g_k(\bx_t)\leq T_k$  holds for any $\bx_t$ and $k\in[M]$.

To tackle the discrete values of the time and the memory values $g_k(\bx_t)$, a modified version of standard FFN, termed ``FFN with precision'', us cibsudered.
This approach ensures that the output of FFN undergoes a simple rounding operation. 
Notably, the precision technique is widely used in LLM training~\citep{kalamkar2019study}, such as \texttt{BFloat16}.
Specifically, for Transformer using RPE with $\type$, we use the following FFN with precision:
\begin{equation}\label{equ: FFN with precision}
\begin{aligned}
    \widetilde{\rm FFN}(\bx)&:=[{\rm FFN}(\bx)],\ &&\type=\lin;
    \\
    \widetilde{\rm FFN}(\bx)&:=\log\left[\exp\left({\rm FFN}(\bx)\right)\right],\ &&\type=\log,
\end{aligned}
\end{equation}
where $[\cdot]$ signifies rounding to the nearest integer, i.e.,  $[x]=\argmin\limits_{n\in\bbZ}|n-x|$ ($x\in\bbR$).

It is important to note that the rounding obtained by using the operator $\log[\exp(z)]$, used in~\eqref{equ: FFN with precision}, is {\em quite fine}, which is much finer than the vanilla rounding obtained by $[z]$. To elaborate, the following proposition is presented:

\begin{proposition}
    For any $z\geq 1$, the following holds:
    $$ \text{(i) }|\log[\exp(z)]-z|\leq\frac{1}{2\min\{e^z,[e^z]\}};\quad \text{(ii) }  |[z]-z|\leq \frac{1}{2}.$$
\end{proposition}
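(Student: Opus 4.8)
The statement is elementary; the plan is to handle the two parts separately, with part (ii) being immediate and part (i) reducing to a short computation via the mean value theorem. For part (ii), observe that by definition $[z]$ is the nearest integer to $z$, so $|[z]-z|$ is the distance from $z$ to $\bbZ$, which is at most $\tfrac12$. This is just unwinding the definition $[x]=\argmin_{n\in\bbZ}|n-x|$.

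For part (i), write $w:=[e^z]$, so $w$ is the nearest integer to $e^z$ and hence $|w-e^z|\leq\tfrac12$. We want to bound $|\log w - z| = |\log w - \log(e^z)|$. Apply the mean value theorem to $\log$ on the interval with endpoints $w$ and $e^z$: there is a point $\xi$ between them with
\[
|\log w - \log e^z| = \frac{|w - e^z|}{\xi} \leq \frac{1/2}{\xi}.
\]
Since $\xi$ lies between $w$ and $e^z$, we have $\xi \geq \min\{w, e^z\} = \min\{[e^z], e^z\}$, which gives exactly the claimed bound $|\log[\exp(z)]-z|\leq \frac{1}{2\min\{e^z,[e^z]\}}$. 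One should note that $z\geq 1$ guarantees $e^z \geq e > 1$, so $[e^z]\geq 1$ and the denominator is positive and the logarithms are well-defined; this is the only role the hypothesis $z\geq 1$ plays.

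The main (mild) subtlety is simply to make sure $\xi$ can be bounded below by $\min\{w,e^z\}$ rather than something weaker — but this is automatic since $\xi$ is strictly between the two endpoints of the interval. There is no real obstacle here; the proposition is a warm-up observation used to justify that the rounding operator $\log[\exp(\cdot)]$ in~\eqref{equ: FFN with precision} is much finer than vanilla integer rounding, since $\min\{e^z,[e^z]\}$ grows exponentially in $z$ while the naive bound in (ii) is a constant $\tfrac12$.
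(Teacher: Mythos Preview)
Your proof is correct. The paper states this proposition without proof (it is presented as an elementary observation justifying the precision operator in~\eqref{equ: FFN with precision}), so there is no approach to compare against; your mean value theorem argument for (i) and direct appeal to the definition of $[\cdot]$ for (ii) are exactly the natural one-line justifications.
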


\subsection{Proof of Theorem~\ref{thm: adap memory, warmup}}
\label{appendix: subsec: warmup: thm}

\begin{theorem}[Restatement of Theorem~\ref{thm: adap memory, warmup}]\label{thm: adap memory, warmup (complete)}
For any target $\mathbf{H}\in\cH_{(1,M)}^{\rm Adap}$~\eqref{task: adaptive LBSM}, rate $n\in\bbN_+$, and $H,m\in\bbN_+$, there exists a two-layer Transformer ${\bf TF}\in\cT\cF_{(2,H,m)}^{{\rm NF,\text{\rm\texttt{type}}}}$~\eqref{hypo class: L-layer NF TF} and a constant $C(n)$ such that: 
if the width satisfies
\begin{align*}
    m\geq
    \begin{cases}
    \tilde{\Omega}\Big(\sum_{i=1}^{M}\norm{g_i}_{\cB}^2\Big),\ &\type={\rm lin}
    \\
    \tilde{\Omega}\Big(\sum_{i=1}^{M}\norm{\log g_i}_{\cB}^2T_i^2\Big),\ &\type={\rm log}
    \end{cases},
\end{align*}
then the following approximation rate holds:
\begin{align*}
    \trinorm{\mathbf{H}-{\bf TF}}
    \leq
    \cE_\FFN+\norm{f}_{\Lip}\cE_\Attn(\type),
\end{align*}
where $\cE_\FFN=\tilde{\cO}\left(\frac{\norm{f}_{\cB}}{\sqrt{m}}\right)$ and
\begin{align*}
    \cE_\Attn(\type)&=\begin{cases}
    \cO\left(\frac{C(n)}{H^n}\left(\sum_{i=1}^M e^{0.01T_i}\right)^{n+1}\right),\ &\type=\lin
    \\\cO\left(\frac{C(n)}{H^n}\left(\sum_{i=1}^M T_i^{1.01}\right)^{n+1}\right),\ &\type=\log
    \end{cases}.
\end{align*}
\end{theorem}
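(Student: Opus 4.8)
The plan is to split the work cleanly between the two layers: layer~1's FFN computes and (via the rounding built into $\widetilde{\FFN}$) \emph{exactly} recovers the integer offsets $t_i=g_i(\bx_t)$; layer~2's attention extracts the tokens $\bx_{t-t_i}$ by letting the dot product supply a $t_i$-dependent modulation that shifts the fixed-shape RPE kernel; and layer~2's FFN serves as the readout $f$. Fix $D=(M+1)d+M+1$ with an embedding that puts $\bx_t$ in a first block of $d$ coordinates, reserves $M$ scalar slots and $M$ memory blocks of size $d$ (initialized to $\bzero$), and places a constant $1$ in the last coordinate (take $\bb_E$ to have a $1$ there and the matching row of $\bW_E$ to be zero). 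Set $\Attn^{(1)}\equiv\bzero$ (e.g. $\bW_V^{(1,h)}=\bzero$), so $\bx_\tau^{(1/2)}=\bx_\tau^{(0)}$ for all $\tau$.

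\textbf{Layer~1 FFN.} For $\type=\lin$, use the Barron $2$NN lemma to build $\FFN^{(1)}$ approximating $\bx\mapsto(\bzero_d;\,g_1(\bx),\dots,g_M(\bx);\,\bzero_{Md};\,0)$, assigning $m_i\propto\norm{g_i}_\cB^2$ neurons to output coordinate $i$ so its $L^\infty$ error is $\tilde\cO(\norm{g_i}_\cB/\sqrt{m_i})$. Because $\widetilde{\FFN}^{(1)}=[\FFN^{(1)}]$ rounds coordinatewise, the hypothesis $m\ge\tilde\Omega(\sum_i\norm{g_i}_\cB^2)$ drives every such error below $1/2$, so after rounding the $i$-th scalar slot holds exactly $t_i$, while the residual preserves $\bx_t$ and the constant $1$ and keeps the memory blocks zero. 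For $\type=\log$, approximate $\bx\mapsto(\dots;\log g_i(\bx);\dots)$ and use $\widetilde{\FFN}^{(1)}=\log[\exp(\FFN^{(1)})]$: by the precision proposition this recovers $g_i$ exactly once the error on $\log g_i$ is below $\sim\frac{1}{2T_i}$, i.e. once $m\ge\tilde\Omega(\sum_i\norm{\log g_i}_\cB^2T_i^2)$ --- which is exactly where the extra $T_i^2$ appears --- and the slot then holds $\log t_i$ exactly. Either way $\bx_t^{(1)}=(\bx_t;\,\text{slots encoding }t_i;\,\bzero_{Md};\,1)$, whereas $\bx_{t-s}^{(1)}=(\bx_{t-s};\,\ast;\,\bzero;\,1)$ with $\ast$ bounded and irrelevant.

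\textbf{Layer~2 attention (the crux).} Partition the heads into groups $G_1,\dots,G_M$ with $|G_i|=H_i$. For $h\in G_i$, let $\bW_K^{(2,h)}$ read the constant coordinate (so $\bW_K^{(2,h)}\bx_{t-s}^{(1)}$ is a fixed unit vector), let $\bW_Q^{(2,h)}$ read the $i$-th slot scaled by $\beta_h$ (so $\langle\bW_Q^{(2,h)}\bx_t^{(1)},\bW_K^{(2,h)}\bx_{t-s}^{(1)}\rangle$ equals $\beta_h t_i$ for $\lin$, resp. $\beta_h\log t_i$ for $\log$, and is independent of $s$), set $p^{(2,h)}=\beta_h$, and let $\bW_V^{(2,h)}$ copy $\alpha_h\bx_{t-s}$ into memory block $i$. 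In the $\lin$ case the normalization-free head output is $\sum_{s\ge0}\alpha_h\bx_{t-s}e^{\beta_h(t_i-s)}$, so group $G_i$ contributes $\sum_{s\ge0}\bx_{t-s}\big(\sum_{h\in G_i}\alpha_h e^{\beta_h(t_i-s)}\big)$ to block $i$; the $\log$ case is identical with $e^{\beta_h(t_i-s)}$ replaced by $e^{\beta_h\log t_i}s^{-\beta_h}=(t_i/s)^{\beta_h}$ via $\phi_{\log}$. It remains to pick $\{\alpha_h,\beta_h\}_{h\in G_i}$ with $\sum_{h\in G_i}\alpha_h e^{\beta_h(t_i-\cdot)}\approx\bbI\{\cdot=t_i\}$ (resp. $\sum_{h\in G_i}\alpha_h t_i^{\beta_h}(\cdot)^{-\beta_h}\approx\bbI\{\cdot=t_i\}$) in $\ell^1(\bbN)$ \emph{uniformly} over $t_i\in\{1,\dots,T_i\}$; this follows by invoking the exponential- (resp. power-) quadrature lemma of Theorem~\ref{thm: DPF lin: fixed memory} at target location $t_i$, using that its exponents can be chosen independent of the target while only the coefficients rescale ($e^{\beta_h(T-s)}=e^{\beta_h T}e^{-\beta_h s}$ and $(s/T)^{-\beta_h}=T^{\beta_h}s^{-\beta_h}$), so one head allocation works for every $t_i\le T_i$ with group-$i$ error $\cO\big(C(n)e^{0.01(n+1)T_i}/H_i^n\big)$, resp. $\cO\big(C(n)T_i^{1.01(n+1)}/H_i^n\big)$. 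This mechanism genuinely requires the normalization to be absent: a softmax denominator cancels the $t_i$-dependent factor $e^{\beta_h t_i}$ (resp. $t_i^{\beta_h}$) and collapses the head to a fixed-position kernel.

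\textbf{Assembly and the hard part.} After the layer-2 residual, $\bx_t^{(3/2)}$ carries $\bx_t$, the exact offsets, and $\hat\bx_{t-t_i}\approx\bx_{t-t_i}$ in the memory blocks with $\norm{\hat\bx_{t-t_i}-\bx_{t-t_i}}_2$ bounded by the group-$i$ error; summing over $i$ and optimizing $\sum_i H_i=H$ exactly as in Step~I of Theorem~\ref{thm: DPF lin: fixed memory} gives $\cE_\Attn(\type)$ as stated, and for $\cE_\Attn\le1$ the $(M+1)d$ coordinates $(\bx_t;\hat\bx_{t-t_1};\dots;\hat\bx_{t-t_M})$ lie in $[-2,2]$. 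Take $\FFN^{(2)}$ (the readout) applied to exactly those coordinates to approximate $f$ on $[-2,2]^{(M+1)d}$ with $\cE_\FFN=\tilde\cO(\norm{f}_\cB/\sqrt m)$ by the Barron $2$NN lemma, and conclude $\trinorm{\mathbf{H}-{\bf TF}}\le\norm{f}_{\Lip}\cE_\Attn(\type)+\cE_\FFN$ by the same Lipschitz-plus-approximation split as in the fixed-memory proof. The main obstacle is the layer-2 attention: proving that the dot product --- fed the \emph{exactly} recovered $t_i$ --- together with the fixed-shape RPE realizes, inside a normalization-free head, a $t_i$-shifted exponential (or power) kernel, and that the fixed-case quadrature lemmas apply with a single set of exponents valid for all $t_i\le T_i$; keeping the rounding exact is precisely what dictates the two width conditions.
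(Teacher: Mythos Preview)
Your proposal is correct and follows essentially the same route as the paper: identity first attention, first FFN with precision to recover the offsets exactly (yielding the width conditions), second attention using the DP\,$+$\,RPE interaction to realize the shifted exponential/power kernel $e^{-\beta_h(s-t_i)}$ resp.\ $(s/t_i)^{-\beta_h}$ in a normalization-free head, then the readout FFN and the Lipschitz split. The only cosmetic differences are that the paper installs the constant ($1$ resp.\ $\log 2$) via the first FFN rather than via $\bb_E$, and that it packages your uniform-in-$t_i$ quadrature argument as separate ``adaptive Delta'' lemmas (Lemmas~\ref{lemma: exp approx, adap} and~\ref{lemma: poly approx, adap}) whose proofs are exactly the translation/rescaling observation you sketch.
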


\begin{proof}[Proof of Theorem~\ref{thm: adap memory, warmup (complete)}]\ \\
First, we choose the embedding dimension $D=(M+1)(d+1)$, and select a simple embedding $\bW_E=(\bI_{d\times d},\bzero)^\top\in\bbR^{D\times d},\bb_E=\bzero\in\bbR^{D}$.

Then for any input sequence $\bX=(\bx_t)_{t\in\bbZ}$, the token after embedding satisfies:
\begin{align*}
    \bx_t^{(0)}=\bW_E\bx_t+\bb_E=(\bx_t^\top,\bzero^{\top})^\top\in\bbR^D.
\end{align*}

To tackle the discrete values of $g_m(\bx_t)$, we utilize $\widetilde{\rm FFN}$, FFN with precision~\eqref{equ: FFN with precision}. It ensures that the output of FFN undergoes a simple rounding operation.

Thus, for two-layer normalization-free Transformer  ${\bf TF}\in\cT\cF_{(2,H,m)}^{{\rm NF},\type}$ with $\phi_{\type}$, the output token $\bx_t^{(2)}$ of $t$-th input token satisfies:
\begin{align*}
     \bx_t^{(1/2)}&=\bx_t^{(0)}+\bW_O^{(1)}\sum_{h=1}^{H}{\bf Attn}_t^{(1,h)}(\bX^{(0)}),
    \\
    \bx_t^{(1)}&=\bx_t^{(1/2)}+\widetilde{\bf FFN}^{(1)}(\bx_t^{(1/2)}),
    \\
    \bx_t^{(3/2)}&=\bx_t^{(1)}+\bW_O^{(2)}\sum_{h=1}^{H}{\bf Attn}_t^{(2,h)}(\bX^{(1)}),
    \\
    \bx_t^{(2)}&={\bf FFN}^{(2)}(\bx_t^{(3/2)}),
\end{align*}
where
\begin{align*}
    {\bf Attn}_t^{(l,h)}(\bX)=\bW_V^{(l,h)}\sum_{s=0}^{+\infty}\bx_{t-s}\exp\bracket{\<\bW_Q^{(l,h)}\bx_t,\bW_K^{(l,h)}\bx_{t-s}\>+p^{(l,h)}\phi_{\type}(s)}.
\end{align*}

This proof can be summarized as the following process:
\begin{itemize}
    \item {\bf Case $\type=\lin$.}
    \begin{align*}
         &\bx_t^{(0)}
        \\
        \text{Step I. 1-st Attn}\ &\downarrow
        \\
        \bx_t^{(1/2)}&=\bx_t^{(0)}
        \\
        \text{Step II. 1-st FFN}\ &\downarrow
        \\ 
        \bx_t^{(1)}&=(\bx_t^\top,\bzero^\top,g_1(\bx_t),\cdots, g_M(\bx_t),1)^\top
        \\
        \text{Step III. 2-st Attn}\ &\downarrow
        \\
        \bx_t^{(3/2)}&\approx(\bx_t^\top,\bx_{t-g_1(\bx_t)}^\top,\cdots,\bx_{t-g_M(\bx_t)}^\top,g_1(\bx_t),\cdots, g_M(\bx_t),1)^\top
        \\
        \text{Step IV. 2-st FFN}\ &\downarrow
        \\
        \bx_t^{(2)}&\approx\boldsymbol{f}(\bx_t,\bx_{t-g_1(\bx_t)},\cdots,\bx_{t-g_M(\bx_t)})
    \end{align*}

    \item {\bf Case $\type=\log$.}
    \begin{align*}
         &\bx_t^{(0)}
        \\
        \text{Step I. 1-st Attn}\ &\downarrow
        \\
        \bx_t^{(1/2)}&=\bx_t^{(0)}
        \\
        \text{Step II. 1-st FFN}\ &\downarrow
        \\ 
        \bx_t^{(1)}&=(\bx_t^\top,\bzero^\top,\log g_1(\bx_t),\cdots, \log g_M(\bx_t),\log 2)^\top
        \\
        \text{Step III. 2-st Attn}\ &\downarrow
        \\
        \bx_t^{(3/2)}&\approx(\bx_t^\top,\bx_{t-g_1(\bx_t)}^\top,\cdots,\bx_{t-g_M(\bx_t)}^\top,\log g_1(\bx_t),\cdots,\log g_M(\bx_t),\log 2)^\top
        \\
        \text{Step IV. 2-st FFN}\ &\downarrow
        \\
        \bx_t^{(2)}&\approx\boldsymbol{f}(\bx_t,\bx_{t-g_1(\bx_t)},\cdots,\bx_{t-g_M(\bx_t)})
    \end{align*}

\end{itemize}

Now we give the formal proof.

\underline{{\bf Step I.} Identity map.}

For the first Attn layer, we only need to do the identity map by taking $\bW_0^{(1)}=\bzero$. Then $\bx_t^{(1/2)}=\bx_t^{(0)}$.

\underline{{\bf Step II.} Approximate the adaptive memory function by the first FFN layer.}

\begin{itemize}
    \item {\bf Case $\type=\lin$.} Our main idea is that using the first FFN layer to express $(\bx_t^\top,\bzero^\top,g_1(\bx_t),\cdots,g_M(\bx_t),1)^\top$ exactly.
    
    First, we consider to approximate the $r$-th memory function $g_r(\bx)$ by standard FFN. 
    
    For any $r\in[M]$, by Lemma~\ref{lemma: barron two-layer NN}, there exists a two-layer neural network with $m_r$ neurons
    
    $$f_{(1,r)}^{\rm 2NN}(\bx)=\sum\limits_{k=1}^{m_r} a_k^{(1,r)}\sigma\left({\bb_k^{(1,r)}}\ ^\top\bx+c_k^{(1,r)}\right)$$
    
    defined on $\bbR^d$ such that
    
    \begin{align*}
        \norm{g_{r}-f_{(1,r)}^{\rm 2NN}}_{L^\infty([-1,1]^D)}\leq\tilde{\cO}\bracket{\frac{\norm{g_r}_{\cB}}{\sqrt{m_r}}}.
    \end{align*}

    Therefore, if we choose $$\tilde{\cO}\bracket{\frac{\norm{g_r}_{\cB}}{\sqrt{m_r}}}<\frac{1}{2},$$ 
    
    the following holds:
    \begin{align*}
        \left|g_r(\bx_t)-f_{(1,r)}^{\rm 2NN}(\bx_t)\right|\leq\norm{g_r-f_{(1,r)}^{\rm 2NN}}_{L^\infty([-1,1]^d)}<\frac{1}{2},
    \end{align*}
    
    Noticing $g_r(\bx_t)\in\bbN_+$, we have $\left[f_{(1,r)}^{\rm 2NN}(\bx_t)\right]=g_r(\bx_t)$,
    which implies: 
    \begin{align*}
        \widetilde{f_{(1,r)}^{\rm 2NN}}(\bx_t)
        =\left[f_{(1,r)}^{\rm 2NN}(\bx_t)\right]
        =g_r(\bx_t).
    \end{align*}

    Consequently, in order to construct the form $(\bzero^\top,g_1(\bx_t),\cdots,g_M(\bx_t),1)^\top\in\bbR^{D}$, we need to arrange the parameters $a_{k}^{(1,r)}$, $\bb_{k}^{(1,r)}$, and $c_{k}^{(1,r)}$ $(k\in [m_r],r\in[M])$ appropriately.

    Denote $\bar{\bb}_k^{(1,r)}=(\bb_k^{(1,r)}\ ^\top,\bzero^\top)^\top\in\bbR^D$ for $k\in [m_r],r\in[M]$. 
    Consider the following two-layer neural network with $1+\sum_{r=1}^M m_r$ neurons defined on $\bbR^D$:
    \begin{align*}
        {\bf FFN}^{(1)}(\bx)
        =&\sum_{r=1}^M\sum_{1+\sum_{j=0}^{r-1} m_j \leq k\leq \sum_{j=0}^r m_j } \be_{D-M+r-1} a_k^{(1,r)} \sigma\left({\bar{\bb}_k^{(1,r)}}\ ^\top\bx+c_k^{(1,r)}\right)
        \\&+\be_{D}\cdot 1 \cdot \sigma(0+1).
    \end{align*}
    
    It is easy to verify that for any $\bx_t^{(1/2)}$, it holds that
    \begin{align*}
        &{\bf FFN}^{(1)}(\bx_t^{(1/2)})={\bf FFN}^{(1)}(\bx_t^{(0)})
        \\=&
        \sum_{r=1}^M\sum_{1+\sum_{j=0}^{r-1} m_j \leq k\leq \sum_{j=0}^r m_j } \be_{D-M+r-1} a_k^{(1,r)} \sigma\left(\bar{\bb}_k^{(1,r)}\ ^\top\bx_t^{(0)}+c_k^{(1,r)}\right)
        +\be_{D}\cdot 1 \cdot \sigma(0+1)
        \\=&
        \sum_{r=1}^M\sum_{1+\sum_{j=0}^{r-1} m_j \leq k\leq \sum_{j=0}^r m_j } \be_{D-M+r-1} a_k^{(1,r)} \sigma\left({\bb}_k^{(1,r)}\ ^\top\bx_t+c_k^{(1,r)}\right)
        +\be_{D}\cdot 1 \cdot \sigma(0+1)
        \\=&
        \sum_{r=1}^M \be_{D-M+r-1} f_{r}^{\rm 2NN}(\bx_t)
        +\be_{D}
        \\=&
        (\bzero_d^\top,f_{(1,1)}^{\rm 2NN}(\bx_t),\cdots,f_{(1,M)}^{\rm 2NN}(\bx_t),1)^\top\in\bbR^D.
    \end{align*}
    
    Moreover, it satisfies that
    \begin{align*}
        &\widetilde{\bf FFN}^{(1)}(\bx_t^{(1/2)})
        =\left[{\bf FFN}^{(1)}(\bx_t^{(1/2)})\right]
        \\=&
        (\bzero_d^\top,\left[f_{(1,1)}^{\rm 2NN}(\bx_t)\right],\cdots,\left[f_{(1,M)}^{\rm 2NN}(\bx_t)\right],1)^\top
        \\=&
        (\bzero_d^\top, g_1(\bx_t),\cdots, g_M(\bx_t), 1)^\top\in\bbR^D.
    \end{align*}
    
    Thus, we have achieved our goal in this step:
    \begin{align*}
        \bx_t^{(1)}=\bx_t^{(1/2)}+\widetilde{\bf FFN}^{(1)}(\bx_t^{(1/2)})
        =(\bx_t^\top,\bzero^\top, g_1(\bx_t),\cdots, g_M(\bx_t),1)^\top.
    \end{align*}

    \item {\bf Case $\type=\log$.} Our main idea is that using the first FFN layer to express $(\bx_t^\top,\bzero^\top,\log g_1(\bx_t),\cdots,\log g_M(\bx_t),\log 2)^\top$ exactly.

    First, we consider to approximate the $r$-th memory function $\log g_r(\bx)$ by standard FFN.

    For any $r\in[M]$, by Lemma~\ref{lemma: barron two-layer NN}, there exists a two-layer neural network with $m_r$ neurons
    $$f_{(1,r)}^{\rm 2NN}(\bx)=\sum\limits_{k=1}^{m_r} a_k^{(1,r)}\sigma\left({\bb_k^{(1,r)}}\ ^\top\bx+c_k^{(1,r)}\right)$$ 
    
    defined on $\bbR^d$ such that
    
    $$\norm{\log g_{r}-f_{(1,r)}^{\rm 2NN}}_{L^\infty([-1,1]^D)}\leq\tilde{\cO}\bracket{\frac{\norm{\log g_r}_{\cB}}{\sqrt{m_r}}}.
    $$

    Therefore, if we choose 
    $$\tilde{\cO}\bracket{\frac{\norm{\log g_r}_{\cB}}{\sqrt{m_r}}}<\frac{1}{4T_r},$$
    
    the following holds:
    \begin{align*}
        \left|\log g_r(\bx_t)-f_{(1,r)}^{\rm 2NN}(\bx_t)\right|\leq\norm{g_r-f_{(1,r)}^{\rm 2NN}}_{L^\infty([-1,1]^d)}<\frac{1}{4T_r},
    \end{align*}
    
    which ensures
    \begin{align*}
        &\left|\exp\left(f_{(1,r)}^{\rm 2NN}(\bx_t)\right)-g_r(\bx_t)\right|
        =\left|\exp\left(f_{(1,r)}^{\rm 2NN}(\bx_t)\right)-\exp\left(\log\bracket{g_r(\bx_t)}\right)\right|
        \\\leq&
        \exp\left(\max\left\{f_{(1,r)}^{\rm 2NN}(\bx_t),\log\bracket{g_r(\bx_t)}\right\}\right)
        \left|f_{(1,r)}^{\rm 2NN}(\bx_t)-\log\bracket{g_r(\bx_t)}\right|
        \\\leq&
        \exp\left(\log g_r(\bx_t)+\frac{1}{4}\right)\frac{1}{4T_r}
        \\\leq&
        e^{1/4}\cdot T_r \cdot \frac{1}{4T_r}<\frac{1}{2}.
    \end{align*}
    
    Noticing $g_r(\bx_t)\in\bbN_+$, we have $\left[\exp\left(f_{(1,r)}^{\rm 2NN}(\bx_t)\right)\right]=g_r(\bx_t)$,
    which implies: 
    \begin{align*}
        \widetilde{f_{(1,r)}^{\rm 2NN}}(\bx_t)
        =\log\left[\exp\left(f_{(1,r)}^{\rm 2NN}\right)\right]
        =\log g_r(\bx_t).
    \end{align*}

    Consequently, in order to construct the form $(\bzero^\top,\log g_1(\bx_t),\cdots,\log g_M(\bx_t),\log 2)^\top$, we need to arrange the parameters $a_{k}^{(1,r)}$, $\bb_{k}^{(1,r)}$, and $c_{k}^{(1,r)}$ $(k\in [m_r],r\in[M])$ appropriately.

    Denote $\bar{\bb}_k^{(1,r)}=({\bb_k^{(1,r)}}^\top,\bzero^\top)^\top\in\bbR^D$ for $k\in [m_r],r\in[M]$. 
    Consider the following two-layer neural network with $1+\sum_{r=1}^M m_r$ neurons defined on $\bbR^D$:
    \begin{align*}
        {\bf FFN}^{(1)}(\bx)
        =&\sum_{r=1}^M\sum_{1+\sum_{j=0}^{r-1} m_j \leq k\leq \sum_{j=0}^r m_j } \be_{D-M+r-1} a_k^{(1,r)} \sigma\left({\bar{\bb}_k^{(1,r)}}\ ^\top\bx+c_k^{(1,r)}\right)
        \\&+\be_{D}\cdot 1 \cdot \sigma(0+\log 2).
    \end{align*}
    
    It is easy to verify that for any $\bx_t^{(1/2)}$, it holds that
    \begin{align*}
        &{\bf FFN}^{(1)}(\bx_t^{(1/2)})={\bf FFN}^{(1)}(\bx_t^{(0)})
        \\=&
        \sum_{r=1}^M\sum_{1+\sum_{j=0}^{r-1} m_j \leq k\leq \sum_{j=0}^r m_j } \be_{D-M+r-1} a_k^{(1,r)} \sigma\left(\bar{\bb}_k^{(1,r)}\ ^\top\bx_t^{(0)}+c_k^{(1,r)}\right)
        +\be_{D}\cdot 1 \cdot \sigma(0+\log 2)
        \\=&
        \sum_{r=1}^M\sum_{1+\sum_{j=0}^{r-1} m_j \leq k\leq \sum_{j=0}^r m_j } \be_{D-M+r-1} a_k^{(1,r)} \sigma\left({\bb}_k^{(1,r)}\ ^\top\bx_t+c_k^{(1,r)}\right)
        +\be_{D}\cdot 1 \cdot \sigma(0+\log 2)
        \\=&
        \sum_{r=1}^M \be_{D-M+r-1} f_{r}^{\rm 2NN}(\bx_t)
        +\be_{D}\log 2
        \\=&
        (\bzero_d^\top,f_{(1,1)}^{\rm 2NN}(\bx_t),\cdots,f_{(1,M)}^{\rm 2NN}(\bx_t),\log2)^\top\in\bbR^D.
    \end{align*}
    
    Moreover, it satisfies that
    \begin{align*}
        &\widetilde{\bf FFN}^{(1)}(\bx_t^{(1/2)})
        =\log\left[\exp\left({\bf FFN}^{(1)}(\bx_t^{(1/2)})\right)\right]
        \\=&
        (\bzero_d^\top,\log\left[\exp\left(f_{(1,1)}^{\rm 2NN}(\bx_t)\right)\right],\cdots,\log\left[\exp\left(f_{(1,M)}^{\rm 2NN}(\bx_t)\right)\right],\log2,\bzero^\top)^\top
        \\=&
        (\bzero_d^\top,\log g_1(\bx_t),\cdots,\log g_M(\bx_t),\log 2)^\top.
    \end{align*}
    
    Thus, we have achieved our goal in this step:
    \begin{align*}
        \bx_t^{(1)}=\bx_t^{(1/2)}+\widetilde{\bf FFN}^{(1)}(\bx_t^{(1/2)})
        =(\bx_t^\top,\bzero^\top,\log g_1(\bx_t),\cdots,\log g_M(\bx_t),\log 2)^\top.
    \end{align*}
\end{itemize}

As established in the proof above, the width $m$ must satisfy:
\begin{align*}
    m \geq 1+\sum_{r=1}^{M} m_r=\begin{cases}
        \tilde{\Omega}\left(\sum_{r=1}^{M}\norm{g_r}_{\cB}^2\right),\ &\type=\lin
        \\
        \tilde{\Omega}\left(\sum_{r=1}^{M}\norm{\log g_r}_{\cB}^2T_r^2\right),\ &\type=\log
    \end{cases}.
\end{align*}

\underline{{\bf Step III.} Extract the adaptive memories by the second Attn layer.}

We consider to use $H_k$ attention heads (from $\sum_{i=1}^{k-1} H_{i}+1$-th head to $\sum_{i=1}^{k} H_{i}$-th head) to extract it, and it satisfies to $\sum_{k=1}^M H_k = H$.

For simplicity, we denote the following projection matrices in $\bbR^{D\times D}$:

\begin{gather*}
        \bP^{(k)}:=\begin{pmatrix}
        \bzero_{d\times kd} & \bI_{d\times d} & \bzero
    \end{pmatrix}\in\bbR^{d\times D}, \quad 1\leq k \leq M;
    \\
    \bP_{\perp}^{(k)}:=\begin{pmatrix}
        \bI_{kd\times kd} & \bzero_{d\times d} & \bzero
        \\
        \bzero & \bzero_{d\times d} & \bI_{(D-(k+1)d)\times(D-(k+1)d)}
    \end{pmatrix}\in\bbR^{(D-d)\times D},\quad 1\leq k \leq M;
    \\
    \bQ^{(M)}:=\begin{pmatrix}
        \bI_{(M+1)d\times(M+1)d} & \bzero
        \end{pmatrix} \in\bbR^{(M+1)d \times D}.
\end{gather*}

Now we consider the extraction of $k$-th adaptive memory $\bx_{t-g_k(\bx_t)}$ ($1\leq k\leq M$).
    
\begin{itemize}
    \item {\bf Case $\type=\lin$.}

    By Lemma~\ref{lemma: exp approx, adap}, for any rate $n\in\bbN_+$, there exists a constant $C(n)$ and a function
    
    \begin{align*}
        \phi_{k}^{\exp}(t;B)
        =&\sum\limits_{\sum_{i=1}^{k-1} H_i+1\leq h\leq \sum_{i=1}^{k} H_i} \alpha_h \exp(-\beta_h (t-B))
        \\=&\sum\limits_{\sum_{i=1}^{k-1}H_i+1\leq h\leq \sum_{i=1}^{k} H_i} \alpha_h \exp\Big(\beta_h B-\beta_h t\Big)
    \end{align*} 
    
    such that $\beta_h>0$ and
    
    \begin{align*}
        \sup_{1\leq B\leq T_k}\norm{\bbI\{\cdot=B\}-\phi_{k}^{\exp}(\cdot;B)}_{\ell_1(\bbN)}
        \leq\frac{C(n)e^{0.01(n+1)T_k}}{H_k^n}.
    \end{align*}

    Moreover, Noticing that $1\leq g_k(\bx_t)\leq T_k$ holds for any $\bX=(\bx_t)_{t\in\bbZ}\in\cX$, the following holds:
    
    \begin{align*}
        &\sup_{\bX}\norm{\bbI\{\cdot=g_k(\bx_t)\}-\phi_{k}^{\exp}(\cdot;g_k(\bx_t))}_{\ell_1(\bbN)}
        \\\leq
        &\sup_{1\leq B\leq T_k}\norm{\bbI\{\cdot=B\}-\phi_{k}^{\exp}(\cdot;B)}_{\ell_1(\bbN)}
        \leq\frac{C(n)e^{0.01(n+1)T_k}}{H_k^n}.
    \end{align*}
    
    Therefore, for these attention heads $(\sum_{i=1}^{k-1} H_{i}+1\leq h\leq\sum_{i=1}^k H_{i})$, we can choose:
    
    \begin{gather*}
        p^{(2,h)}=\beta_h,\quad 
        \bW_O^{(1)}=\bI_{D\times D},\quad
        \bW_V^{(2,h)}=\alpha_h\bdelta_{(k+1,1)}^{(d\times d)}\in\bbR^{D\times D},
        \\
        \bW_Q^{(2,h)}=\sqrt{\beta_h}\bdelta_{(D-M+k-1,1)}^{(1\times1)}\in\bbR^{D\times D},
        \quad
        \bW_K^{(2,h)}=\sqrt{\beta_h}\bdelta_{(D,1)}^{(1\times1)}\in\bbR^{D\times D},
    \end{gather*}
    
    where $\bdelta_{(p_1,p_2)}^{(r\times r)}$ means that: it equals to $\bI_{r\times r}$ for the $(p_1,p_2)$-th $r\times r$ blocks, and $\bzero_{r\times r}$
    for the other $r\times r$ blocks.
    
    Then it is easy to verify:
    \begin{align*}
        \<\bW_Q^{(2,h)}\bx_t^{(1)},\bW_K^{(2,h)}\bx_{t-s}^{(1)}\>+p^{(2,h)} \phi_{\lin}(s)
        =-\beta_h\Big(s-g_k(\bx_t)\Big),
    \end{align*}
    which implies:
    \begin{align*}
        \sum_{h=\sum_{i=1}^{k-1} H_{i}+1}^{\sum_{i=1}^{k} H_{i}}
        {\bf Attn}_t^{(2,h)}(\bX^{(1)})
        =\sum_{h=\sum_{i=1}^{k-1} H_{i}+1}^{\sum_{i=1}^{k} H_{i}}\alpha_h \sum_{s=0}^{+\infty} e^{-\beta_h(s-g_k(\bx_t))}\begin{pmatrix}
            \bzero_{kd}\\
            \bx_{t-s}\\
            \bzero
        \end{pmatrix}\in\bbR^D,
    \end{align*}

    Then it holds that:
    
    \begin{align*}
        \bP^{(k)}\sum_{h=\sum_{i=1}^{k-1} H_{i}+1}^{\sum_{i=1}^{k} H_{i}}
        {\bf Attn}_t^{(2,h)}(\bX^{(0)})
        &=\sum_{h=\sum_{i=1}^{k-1} H_{i}+1}^{\sum_{i=1}^{k} H_{i}}\alpha_h \sum_{s=0}^{+\infty} e^{-\beta_h(s-g_k(\bx_t))} \bx_{t-s},
        \\
        \bP_{\perp}^{(k)}\sum_{h=\sum_{i=1}^{k-1} H_{i}+1}^{\sum_{i=1}^{k} H_{i}}
        {\bf Attn}_t^{(2,h)}(\bX^{(0)})
        &=\bzero,
    \end{align*}
    
    moreover, the following estimate holds:
    
    \begin{align*}
        &\norm{\sum_{h=\sum_{i=1}^{k-1} H_{i}+1}^{\sum_{i=1}^{k} H_{i}}
        \bP^{(k)}{\bf Attn}_t^{(2,h)}(\bX)-
            \bx_{t-g_k(\bx_t)}}_2
        \\=&\norm{\sum_{h=\sum_{i=1}^{k-1} H_{i}+1}^{\sum_{i=1}^{k} H_{i}}\alpha_h \sum_{s=0}^{+\infty} e^{-\beta_h(s-g_k(\bx_t))} \bx_{t-s} -\bx_{t-g_k(\bx_t)}}_2
        \\
        =&
        \norm{\sum_{s=0}^{+\infty} \left(\sum_{h=\sum_{i=1}^{k-1} H_{i}+1}^{\sum_{i=1}^{k} H_{i}}\alpha_h e^{-\beta_h(s-g_k(\bx_t))} -\bbI\{s=g_k(\bx_t)\}\right)\bx_{t-s}}_2
        \\\leq&
        \sum_{s=0}^{+\infty} \left|\sum_{h=\sum_{i=1}^{k-1} H_{i}+1}^{\sum_{i=1}^{k} H_{i}}\alpha_h e^{-\beta_h(s-g_k(\bx_t))} -\bbI\{s=g_k(\bx_t)\}\right|
        \\
        =&
        \norm{\phi_{k}^{\rm exp}(\cdot;g_k(\bx_t))-\bbI\{\cdot=g_k(\bx_t)\}}_{\ell_1(\bbN)}
        \leq
        \frac{C(n)e^{0.01(n+1)T_k}}{H_k^n}.
    \end{align*}

    \item {\bf Case $\type=\log$.}

    By Lemma~\ref{lemma: poly approx, adap}, for any rate $n\in\bbN_+$, there exists a constant $C(n)$ and a function
    
    \begin{align*}
        \phi_{k}^{\poly}(t;B)
        =&\sum\limits_{\sum_{i=1}^{k-1}H_i+1\leq h\leq \sum_{i=1}^{k} H_i} \alpha_h (t/B)^{-\beta_h}
        \\=&\sum\limits_{\sum_{i=1}^{k-1}H_i+1\leq h\leq \sum_{i=1}^{k} H_i} \alpha_h \exp\Big(\beta_h \log B-\beta_h\log t\Big)
    \end{align*} 
    
    such that $\beta_h>1$ and
    
    \begin{align*}
        \sup_{1\leq B\leq T_k}\norm{\bbI\{\cdot=B\}-\phi_{k}^{\poly}(\cdot;B)}_{\ell_1(\bbN_+)}
        \leq\frac{C(n)T_k^{1.01(n+1)}}{H_k^n}.
    \end{align*}

    Moreover, Noticing that $1\leq g_k(\bx_t)\leq T_k$ holds for any $\bX=(\bx_t)_{t\in\bbZ}\in\cX$, the following holds:

    \begin{align*}
        &\sup_{\bX}\norm{\bbI\{\cdot=g_k(\bx_t)\}-\phi_{k}^{\poly}(\cdot;g_k(\bx_t))}_{\ell_1(\bbN_+)}
        \\\leq
        &\sup_{1\leq B\leq T_k}\norm{\bbI\{\cdot=B\}-\phi_{k}^{\poly}(\cdot;B)}_{\ell_1(\bbN_+)}
        \leq\frac{C(n)T_k^{1.01(n+1)}}{H_k^n}.
    \end{align*}
    
    Therefore, for these attention heads $(\sum_{i=1}^{k-1} H_{i}+1\leq h\leq\sum_{i=1}^k H_{i})$, we can choose:
    
    \begin{gather*}
        p^{(2,h)}=\beta_h,\quad
        \bW_O^{(1)}=\bI_{D\times D},\quad \bW_V^{(2,h)}=\alpha_h\bdelta_{(k+1,1)}^{(d\times d)}\in\bbR^{D\times D},
        \\
        \bW_Q^{(2,h)}=\sqrt{\beta_h}\bdelta_{(D-M+k-1,1)}^{(1\times1)}\in\bbR^{D\times D},
        \quad
        \bW_K^{(2,h)}=\frac{\sqrt{\beta_h}}{\log 2}\bdelta_{(D,1)}^{(1\times1)}\in\bbR^{D\times D},
    \end{gather*}
    
    where $\bdelta_{(p_1,p_2)}^{(r\times r)}$ means that: it equals to $\bI_{r\times r}$ for the $(p_1,p_2)$-th $r\times r$ blocks, and $\bzero_{r\times r}$
    for the other $r\times r$ blocks.
    
    Then it is easy to verify:
    
    \begin{align*}
        \<\bW_Q^{(2,h)}\bx_t^{(1)},\bW_K^{(2,h)}\bx_{t-s}^{(1)}\>+p^{(2,h)}\phi_{\log} (s)
        =-\beta_h\log \Big(s/g_k(\bx_t)\Big),
    \end{align*}
    
    which implies:
    
    \begin{align*}
        \sum_{h=\sum_{i=1}^{k-1} H_{i}+1}^{\sum_{i=1}^{k} H_{i}}
        {\bf Attn}_t^{(2,h)}(\bX^{(1)})
        =\sum_{h=\sum_{i=1}^{k-1} H_{i}+1}^{\sum_{i=1}^{k} H_{i}}\alpha_h \sum_{s=0}^{+\infty} (s/g_k(\bx_t))^{-\beta_h} \begin{pmatrix}
            \bzero_{kd}\\
            \bx_{t-s}\\
            \bzero
        \end{pmatrix}\in\bbR^D,
    \end{align*}

    Then it holds that:
    
    \begin{align*}
        \bP^{(k)}\sum_{h=\sum_{i=1}^{k-1} H_{i}+1}^{\sum_{i=1}^{k} H_{i}}
        {\bf Attn}_t^{(2,h)}(\bX^{(0)})
        &=\sum_{h=\sum_{i=1}^{k-1} H_{i}+1}^{\sum_{i=1}^{k} H_{i}}\alpha_h \sum_{s=0}^{+\infty} (s/g_k(\bx_t))^{-\beta_h} \bx_{t-s},
        \\
        \bP_{\perp}^{(k)}\sum_{h=\sum_{i=1}^{k-1} H_{i}+1}^{\sum_{i=1}^{k} H_{i}}
        {\bf Attn}_t^{(2,h)}(\bX^{(0)})
        &=\bzero,
    \end{align*}
    
    moreover, the following estimate holds:

    \begin{align*}
        &\norm{\sum_{h=\sum_{i=1}^{k-1} H_{i}+1}^{\sum_{i=1}^{k} H_{i}}
        \bP^{(k)}{\bf Attn}_t^{(2,h)}(\bX)-
            \bx_{t-g_k(\bx_t)}}_2
        \\=&\norm{\sum_{h=\sum_{i=1}^{k-1} H_{i}+1}^{\sum_{i=1}^{k} H_{i}}\alpha_h \sum_{s=0}^{+\infty} (s/g_k(\bx_t))^{-\beta_h} \bx_{t-s} -\bx_{t-g_k(\bx_t)}}_2
        \\
        =&
        \norm{\sum_{s=0}^{+\infty} \left(\sum_{h=\sum_{i=1}^{k-1} H_{i}+1}^{\sum_{i=1}^{k} H_{i}}\alpha_h (s/g_k(\bx_t))^{-\beta_h} -\bbI\{s=g_k(\bx_t)\}\right)\bx_{t-s}}_2
        \\\leq&
        \sum_{s=0}^{+\infty} \left|\sum_{h=\sum_{i=1}^{k-1} H_{i}+1}^{\sum_{i=1}^{k} H_{i}}\alpha_h (s/g_k(\bx_t))^{-\beta_h}-\bbI\{s=g_k(\bx_t)\}\right|
        \\
        =&
        \norm{\phi_{k}^{\poly}(\cdot;g_k(\bx_t))-\bbI\{\cdot=g_k(\bx_t)\}}_{\ell_1(\bbN_+)}
        \leq
        \frac{C(n)T_k^{1.01(n+1)}}{H_k^n}.
    \end{align*}

\end{itemize}

Then we combine the estimate for all $k\in[M]$ for these two cases. It holds that

\begin{align*}
    &\norm{\bQ^{(M)}\bx_t^{(3/2)}-\begin{pmatrix}
        \bx_{t}\\
        \bx_{t-g_1(\bx_t)}\\
        \vdots\\
        \bx_{t-g_M(\bx_t)}
    \end{pmatrix}}_2
    \\=&\norm{
    \begin{pmatrix}
        \bx_{t}\\
        \bzero_{Md}\\
    \end{pmatrix}+\sum_{h=1}^{M}
    \bQ^{(M)} {\bf Attn}_t^{(2,h)}(\bX^{(1)})-\begin{pmatrix}
        \bx_{t}\\
        \bx_{t-g_1(\bx_t)}\\
        \vdots\\
        \bx_{t-g_M(\bx_t)}
    \end{pmatrix}}_2
    \\=&
    \norm{
    \sum_{h=1}^{M}
    \bQ^{(M)} {\bf Attn}_t^{(2,h)}(\bX)-\begin{pmatrix}
        \bzero_d\\
        \bx_{t-g_1(\bx_t)}\\
        \vdots\\
        \bx_{t-g_M(\bx_t)}
        \end{pmatrix}}_2
    \\=&\norm{\sum_{k=1}^M\left(\sum_{h=\sum_{i=1}^{k-1} H_{i}+1}^{\sum_{i=1}^{k} H_{i}}
    \bQ^{(M)} {\bf Attn}_t^{(2,h)}(\bX)-\begin{pmatrix}
        \bzero_{kd}\\
        \bx_{t-g_k(\bx_t)}\\
        \bzero
    \end{pmatrix}\right)}_2
    \\\leq&
    \sum_{k=1}^M\norm{\sum_{h=\sum_{i=1}^{k-1} H_{i}+1}^{\sum_{i=1}^{k} H_{i}}
    \bQ^{(M)} {\bf Attn}_t^{(2,h)}(\bX)-\begin{pmatrix}
        \bzero_{kd}\\
        \bx_{t-g_k(\bx_t)}\\
        \bzero
    \end{pmatrix}}_2
    \\=&
    \sum_{k=1}^M\norm{\sum_{h=\sum_{i=1}^{k-1} H_{i}+1}^{\sum_{i=1}^{k} H_{i}}
    \bP^{(k)} {\bf Attn}_t^{(2,h)}(\bX)-
        \bx_{t-g_k(\bx_t)}}_2
    \\\leq&
    \cE_{\Attn}(\type):=\begin{cases}
        \frac{C(n)e^{0.01(n+1)T_k}}{H_k^n},\ &\type=\lin
        \\
         \frac{C(n)T_k^{1.01(n+1)}}{H_k^n},\ &\type=\log
    \end{cases}.
\end{align*}

Similar to the proof of Theorem~\ref{thm: DPF lin: fixed memory (complete)}, we choose the head number:

\begin{align*}
    H_k&=\frac{e^{0.01T_k}}{\sum_{j=1}^M e^{0.01T_j}} H,\quad k\in[M],\quad \type=\lin;
    \\
    H_k&=\frac{T_k^{1.01}}{\sum_{j=1}^M T_j^{1.01}} H,\quad k\in[M],\quad \type=\log.
\end{align*}

Thus, we obtain the final bound in Step III:

\begin{align*}
    \cE_{\Attn}^{\Soft}(\type)
    \leq
    \begin{cases}
    \frac{C(n)}{H^n}\left(\sum\limits_{k=1}^M e^{0.01T_k}\right)^{n+1},\ \type=\lin
    \\
    \frac{C(n)}{H^n}\Bigg(\sum\limits_{k=1}^M T_k^{1.01}\Bigg)^{n+1},\ \type=\log   
    \end{cases}.
\end{align*}

Furthermore, by choosing $\cE_{\Attn}(\type)\leq 1$, it holds that

\begin{align*}
   \norm{\bQ^{(M)}\bx_t^{(3/2)}}_{\infty}
   \leq&
   \norm{\bQ^{(M)}\bx_t^{(3/2)}-\begin{pmatrix}
        \bx_{t}\\
        \bx_{t-g_1(\bx_t)}\\
        \vdots\\
        \bx_{t-g_M(\bx_t)}
    \end{pmatrix}}_{\infty}+\norm{\begin{pmatrix}
        \bx_{t}\\
        \bx_{t-g_1(\bx_t)}\\
        \vdots\\
        \bx_{t-g_M(\bx_t)}
    \end{pmatrix}}_{\infty}
   \\\leq&\cE_{\Attn}(\type)+1\leq2.
\end{align*}

\underline{{\bf Step IV.} Approximate the nonlinear function by $2$-nd FFN layer.}

In this step, we aim to approximate the function $f$ using two-layer network.
By Lemma~\ref{lemma: barron two-layer NN}, there exists a two-layer neural network with $m$ neurons

$$f_{(2)}^{\rm 2NN}(\bx)=\sum\limits_{k=1}^m a_k^{(2)}\sigma\left(\bb_k^{(2)}\ ^\top\bx+c_k^{(2)}\right)$$

defined on $\bbR^{(M+1)d}$ such that

\begin{align*}
    \norm{f-f_{(2)}^{\rm 2NN}}_{L^\infty([-2,2]^{(M+1)d})}\leq\tilde{\cO}\bracket{\frac{\norm{f}_{\cB}}{\sqrt{m}}}.
\end{align*}

Denote $\bar{\bb}_k^{(2)}=({\bb_k^{(2)}}^\top,\bzero^\top)^\top\in\bbR^D$ for $k\in [m]$. 
And we consider the following two-layer neural network with $m$ neurons defined on $\bbR^D$:
\begin{align*}
    {\rm FFN}^{(2)}(\bx):=\sum\limits_{k=1}^m a_k^{(2)}\sigma\left(\bar{\bb}_k^{(2)}\ ^\top\bx+c_k^{(2)}\right).
\end{align*}

It is easy to verify:

$$
{\rm FFN}^{(2)}\left(\bx_t^{(3/2)}\right)=f_{(2)}^{\rm 2NN}\left(\bQ^{(M)}\bx_t^{(3/2)}\right).
$$

Moreover, 

\begin{align*}
    \cE_{\rm FFN}^{(2)}:=\norm{f-{\rm FFN}^{(2)}}_{L^\infty([-2,2]^{(M+1)d})}\leq\tilde{\cO}\bracket{\frac{\norm{f}_{\cB}}{\sqrt{m}}}.
\end{align*}

\underline{\bf The final bound.}

For any $t$ and $\norm{\bX}\in\cX$, it holds that

\begin{align*}
    &\norm{\mathbf{H}_t(\bX)-\bx_t^{(2)}}
    =\left|f\bracket{\bx_t,\bx_{t-g_1(\bx_t)},\cdots\bx_{t-g_M(\bx_t)}}-{\rm FFN}^{(2)}\left(\bx_t^{(3/2)}\right)\right|
    \\=&
    \left|f\bracket{\bx_t,\bx_{t-t_1},\cdots\bx_{t-t_M}}-f_{(2)}^{\rm 2NN}\left(\bQ^{(M)}\bx_t^{(3/2)}\right)\right|
    \\=&
    \Big|f\bracket{\bx_t,\bx_{t-g_1(\bx_t)},\cdots\bx_{t-g_M(\bx_t)}}-f\bracket{\bQ^{(M)}\bx_t^{(3/2)}}
    \\&\quad+f\bracket{\bQ^{(M)}\bx_t^{(3/2)}}-f_{(2)}^{\rm 2NN}\left(\bQ^{(M)}\bx_t^{(3/2)}\right)\Big|
    \\\leq&
    \left|f\bracket{\bx_t,\bx_{t-t_1},\cdots\bx_{t-t_M}}-f\bracket{\bQ^{(M)}\bx_t^{(3/2)}}\right|+ \left|f\bracket{\bQ^{(M)}\bx_t^{(3/2)}}-f_{(2)}^{\rm 2NN}\left(\bQ^{(M)}\bx_t^{(3/2)}\right)\right|
    \\\leq&
    \norm{f}_{\rm Lip}\norm{\bracket{\bx_t^\top,\bx_{t-t_1}^\top,\cdots\bx_{t-t_M}^\top}^\top-\bQ^{(M)}\bx_t^{(3/2)}}_2
    +\norm{f-f_{(2)}^{\rm 2NN}}_{L^{\infty}([-2,2]^{(M+1)D})}
    \\\leq&
    \norm{f}_{\rm Lip}\cdot\cE_{\Attn}+\cE_{\FFN},
\end{align*}

where
\begin{align*}
    \cE_{\rm FFN}=\tilde{\cO}\left(\frac{\norm{f}_{\cB}}{\sqrt{m}}\right);
\end{align*}
\begin{align*}
    \cE_{\rm Attn}(\type)=\begin{cases}
    \cO\Bigg(\frac{C(n)}{H^n}\Big(\sum_{k=1}^M e^{0.01T_k}\Big)^{n+1}\Bigg),\ &\type=\lin
    \\
    \cO\Bigg(\frac{C(n)}{H^n}\Big(\sum_{k=1}^M T_k^{1.01}\Big)^{n+1}\Bigg),\ &\type=\log
    \end{cases}.
\end{align*}

Moreover, recalling our proof in Step II, we further need the hard condition:

\begin{align*}
    m \geq \begin{cases}
        \tilde{\Omega}\left(\sum_{r=1}^{M}\norm{g_r}_{\cB}^2\right),\ &\type=\lin
        \\
        \tilde{\Omega}\left(\sum_{r=1}^{M}\norm{\log g_r}_{\cB}^2T_r^2\right),\ &\type=\log
    \end{cases}.
\end{align*}

Due to the arbitrariness of $t$ and $\bX$, the proof is completed.

\end{proof}

\begin{remark}
The {\bf core step} in this proof is {\bf Step III}, where the extraction of the memory functions is achieved through a \textit{a nice interaction between the temporal space (provided by RPE) and the token space (provided by DP)}.
Specifically, the memory functions $g_i(\bx_t)$ (in token space) are mapped into the temporal space $s$, resulting in the form of $\bx_{s-g_i(\bx_t)}$ for DP Attn with $\lin$-RPE or $\log (s/g_i(\bx_t))$ for DP Attn with $\log$-RPE.    
\end{remark}

\subsection{Proof of Proposition~\ref{prop: Dot-product v.s. Dot-product-free}}
\label{appendix: secsub: warmup: DP vs DPF}

For Proposition~\ref{hypo class: 1-layer Attn}, we denote the following one-layer Attn hypothesis class:
\begin{equation}\label{hypo class: 1-layer Attn}
    \begin{aligned}
        \mathcal{ATTN}_{(1,H)}^{\type}
        &:=\big\{{\bf Attn}:\text{${\bf TF}$ is a $1$-layer, $H$-head (normalization-free) Attn with \texttt{type}-RPE} \big\};
        \\
        \mathcal{ATTN}_{(1,H)}^{{\rm DPF},\type}&:=\big\{{\bf TF}:\text{${\bf Attn}$ is a $1$-layer, $H$-head DP-free Attn with \texttt{type}-RPE} \big\}.
    \end{aligned}
\end{equation}

\begin{proposition}[The formal version of Proposition~\ref{prop: Dot-product v.s. Dot-product-free}]\label{prop: Dot-product v.s. Dot-product-free (complete)}
Consider $1$-layer Attn. Then, there exists a target $\mathbf{H}\in\cH_{(1,1)}^\text{adap}$~\eqref{target fun class: adaptive memory, warmup} such that:

(A) (Attn with DP) For any $\epsilon>0$, there exists a $1$-layer Attn ${\bf Attn}^{\rm DP}\in\cA\cT\cT\cN_{(1,H)}^{\type}$ such that 
$$\trinorm{\mathbf{H}-{\bf Attn}^{\rm DP}}\leq\epsilon.$$
(B) (Attn without DP) For any $1$-layer DP-free Attn ${\bf Attn}^{\rm DPF}\in\cA\cT\cT\cN_{(1,H)}^{{\rm DPF}, \type}$, a uniform lower bound holds:
$$\trinorm{\mathbf{H}-{\bf Attn}^{\rm DPF}}\geq\frac{2}{3}.$$
\end{proposition}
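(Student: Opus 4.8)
The plan is to produce one explicit target $\mathbf{H}$ and treat the two halves separately. \textbf{Choice of target.} I would take $d=1$, $\cX=\{\pm1\}^{\bbZ}$, the adaptive memory function $g(x)=\tfrac{3-x}{2}$ (so $g\equiv1$ on $x=1$, $g\equiv2$ on $x=-1$, and $T_1=2$), and the linear readout $f(u,v)=v$, giving $\mathbf{H}_t(\bX)=x_{t-g(x_t)}$, i.e.\ $y_t=x_{t-1}$ if $x_t=1$ and $y_t=x_{t-2}$ if $x_t=-1$. Membership in $\cH_{(1,1)}^{\rm Adap}$~\eqref{target fun class: adaptive memory, warmup} is immediate: $g$ is the restriction to the Boolean cube of an affine (hence Barron) map, is integer-valued with $1\le g\le T_1=2$, and $f$ is linear, hence Barron and Lipschitz. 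As a sequence functional, $y_t=\tfrac12(x_{t-1}+x_{t-2})+\tfrac{x_t}{2}(x_{t-1}-x_{t-2})$ carries a genuine degree-two term, which is exactly what will defeat every DP-free Attn.

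\textbf{Part (A): a DP Attn that succeeds.} I would reuse the mechanism of Step~III in the proof of Theorem~\ref{thm: adap memory, warmup}, but since a single Attn layer has no preceding FFN, I encode the required constant directly into the embedding: set $D=2$, $x_t^{(0)}=(x_t,1)^\top$, so that $g(x_t)=\langle(-\tfrac12,\tfrac32),x_t^{(0)}\rangle$ is a linear coordinate of the embedded token. For each head $h\in[H]$ pick $\beta_h>0$, set $p^{(h)}=\beta_h$, and choose $\bW_Q^{(h)},\bW_K^{(h)}$ so that the query reads off $\sqrt{\beta_h}\,g(x_t)$ and the key reads off the constant $\sqrt{\beta_h}$; then $\langle\bW_Q^{(h)}x_t^{(0)},\bW_K^{(h)}x_{t-s}^{(0)}\rangle+p^{(h)}\phi_{\lin}(s)=\beta_h g(x_t)-\beta_h s=-\beta_h\bigl(s-g(x_t)\bigr)$. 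Taking $\bW_V^{(h)}$ to extract $\alpha_h x_{t-s}$ and $\bW_O$ the scalar projection yields ${\bf Attn}^{\rm DP}_t(\bX)=\sum_{s\ge0}x_{t-s}\,\Phi(s;g(x_t))$ with $\Phi(s;B)=\sum_h\alpha_h e^{-\beta_h(s-B)}$. By the exponential-sum approximation Lemma~\ref{lemma: exp approx, adap} (with $T_1=2$), for any $\epsilon>0$ one may pick $H$ and $\{\alpha_h,\beta_h\}$ with $\sup_{1\le B\le2}\sum_{s\ge0}\bigl|\bbI\{s=B\}-\Phi(s;B)\bigr|\le\epsilon$; since $|x_{t-s}|\le1$ and $g(x_t)\in\{1,2\}$, this gives $|{\bf Attn}^{\rm DP}_t(\bX)-x_{t-g(x_t)}|\le\epsilon$ uniformly in $t$ and $\bX\in\cX$, hence $\trinorm{\mathbf{H}-{\bf Attn}^{\rm DP}}\le\epsilon$. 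For $\log$-RPE the identical scheme applies, using Lemma~\ref{lemma: poly approx, adap} and $\phi_{\log}$ with the key scaled by $1/\log2$.

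\textbf{Part (B): every DP-free Attn is off by $\tfrac23$.} The first observation is that a $1$-layer, $H$-head, normalization-free DP-free Attn with any embedding and any $\type$-RPE computes ${\bf Attn}^{\rm DPF}_t(\bX)=\bW_O\sum_h\bW_V^{(h)}\sum_{s\ge0}x_{t-s}^{(0)}e^{p^{(h)}\phi_{\type}(s)}=\sum_{s\ge0}A_s x_{t-s}$, a fixed linear time-invariant functional (the coefficients $A_s$ depend only on the weights; the series converges because $p^{(h)}\ge0$ and $\phi_{\type}(s)\to-\infty$). Then I would fix a prediction time $t$, freeze $x_\tau=1$ for all $\tau\le t$ with $\tau<t-2$, and consider the four sequences indexed by $(\sigma,\rho)\in\{\pm1\}^2$ with $x_t=\sigma$, $x_{t-1}=\rho$, $x_{t-2}=-\rho$. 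On this family $\mathbf{H}_t(\bX)=\sigma\rho$ (parity of $(\sigma,\rho)$), while ${\bf Attn}^{\rm DPF}_t$ restricts to an affine function $\ell(\sigma,\rho)=c+a\sigma+b\rho$ (the frozen tokens contribute to $c$). Using $\sum_{\sigma,\rho\in\{\pm1\}}\sigma\rho=0$, $\sum\sigma^2\rho=\sum\sigma\rho^2=0$, $\sum\sigma^2\rho^2=4$, one gets $\sum_{\sigma,\rho}\sigma\rho\bigl(\ell(\sigma,\rho)-\sigma\rho\bigr)=-4$, so the triangle inequality forces $\max_{\sigma,\rho}|\ell(\sigma,\rho)-\sigma\rho|\ge1$, hence $\trinorm{\mathbf{H}-{\bf Attn}^{\rm DPF}}\ge1\ge\tfrac23$. (The constant $\tfrac23$ in the statement is thus comfortably met; the precise value is just an artifact of the chosen target's normalization.)

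\textbf{Main obstacle.} The delicate point is not either half in isolation but engineering a target that meets all three requirements simultaneously: it must lie in the warm-up class $\cH_{(1,1)}^{\rm Adap}$ with $g$ Barron and integer-valued; it must be realizable by a \emph{single} DP-attention layer, which — lacking an FFN — forces $g$ to appear as a linear coordinate of the embedded token and forbids the readout from multiplying by the current token (so the Boolean-product example $x_tx_{t-g(x_t)}$ from the main text is inadmissible here and one must use the pure adaptive readout $x_{t-g(x_t)}$); and yet it must remain nonlinear enough that the purely linear class $\cA\cT\cT\cN^{{\rm DPF},\type}_{(1,H)}$ provably fails. Once the target is pinned down, (A) is a direct specialization of the theorem's construction and (B) reduces to the elementary fact that affine functions cannot fit a parity.
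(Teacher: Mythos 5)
Your proposal is correct, and Part~(A) is a direct specialization of the construction used in the proof of Theorem~\ref{thm: adap memory, warmup}, which is also what the paper does. Part~(B) is where you diverge, and in a genuinely useful way. The paper's target takes inputs in $\{-1,0,1\}^{\bbZ}$ with $g(x)\in\{0,1,2\}$, freezes $x_{-s}=0$ for $s\ge 3$, and extracts the lower bound $\tfrac23$ from the three branches $x_0\in\{-1,0,1\}$ via the chain $\max\{a,b,c\}\ge\tfrac13(a+b+c)$ together with several applications of $|a|+|b|\ge|a\pm b|$; the constant $\tfrac23$ is whatever falls out of that particular bookkeeping. You instead keep a binary alphabet, choose $g\in\{1,2\}$ (which, unlike the paper's $g(-1)=0$, actually satisfies the class constraint $1\le g\le T_1$), arrange the restricted target to be the parity $\sigma\rho$ on a two-dimensional Boolean slice, and use the orthogonality $\sum_{\sigma,\rho}\sigma\rho\cdot\ell(\sigma,\rho)=0$ for affine $\ell$ to get the sharper and conceptually cleaner bound $\ge1\ge\tfrac23$. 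Both arguments rest on the same observation that a one-layer DP-free attention is a fixed linear-time-invariant readout of the embedded sequence; your Fourier-style separation argument makes it transparent that the obstruction is the degree-two correlation $x_t x_{t-g(x_t)}$, whereas the paper's inequality chain obscures this. The one thing worth stating explicitly if you were to write this up is the reduction from the full supremum over $\cX$ to the four sequences on the slice: you want to record that freezing the far tail only shifts the affine part by a constant, and that the attention coefficients $A_s$ are absolutely summable so the restriction is well defined.
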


\begin{proof}[Proof of Proposition~\ref{prop: Dot-product v.s. Dot-product-free (complete)}] \ \\
Consider the following target function ${\rm H}\in\cH_{1,1}^{\rm Adap}$.
Let the input sequence $\bX\in\cX=\{-1,0,1\}^{\bbZ}$, and we consider the target

\begin{align*}
    y_t={\rm H}_t(\bX):=x_{t-g(x_t)},
\end{align*}

where the adaptive memory is
\begin{align*}
    g(x)=\begin{cases}
    0,\quad x=-1\\
    1,\quad x=0\\
    2,\quad x=1
    \end{cases}.
\end{align*}

\underline{{\bf Part (A).} The Efficiency of Attn with DP.}

First, we choose the embedding dimension $D=2$, and select simple embedding $\bW_E=\bb_E=(1,0)^\top\in\bbR^{2\times 1}$.
Then for any input sequence $\bX=(\bx_t)_{t\in\bbZ}$, the token after embedding satisfies:

\begin{align*}
    x_t^{(0)}=\bW_E x_t+\bb_E=(x_t,1)^\top.
\end{align*}

We consider one-layer normalization-free Self-attention with $\phi_{\exp}$, which has the form:

\begin{align*}
    {\rm Attn}_t^{\rm DP}(\bX)=\bW_O\sum_{h=1}^H\bW_V^{(1,h)}\sum_{s=0}^{+\infty}\bx_{t-s}^{(0)}\exp\bracket{\<\bW_Q^{(l,h)}\bx_t^{(0)},\bW_K^{(1,h)}\bx_{t-s}^{(0)}\>+p^{(1,h)}s}.
\end{align*}

By Lemma~\ref{lemma: exp approx, adap} (for $n=1$), there exists a constant $C>0$ and a function

    \begin{align*}
        \phi^{\exp}(t;B)
        =\sum_{h=1}^H \alpha_h \exp(-\beta_h (t-B))
        =\sum_{h=1}^H \alpha_h \exp\Big(\beta_h B-\beta_h t\Big)
    \end{align*} 
    
    such that $\beta_h>0$ and
    
    \begin{align*}
        \sup_{0\leq B\leq 2}\norm{\bbI\{\cdot=B\}-\phi^{\exp}(\cdot;B)}_{\ell_1(\bbN)}
        \leq\frac{C e^{0.01\cdot 2\cdot 2}}{H}=\cO\bracket{\frac{1}{H}}.
    \end{align*}

    Moreover, Noticing that $0\leq g(x_t)\leq 2$ holds for any $\bX=(x_t)_{t\in\bbZ}\in\cX$, the following holds:
    
    \begin{align*}
        &\sup_{\bX}\norm{\bbI\{\cdot=g(x_t)\}-\phi^{\exp}(\cdot;g(x_t))}_{\ell_1(\bbN)}
        \\\leq
        &\sup_{0\leq B\leq 2}\norm{\bbI\{\cdot=B\}-\phi^{\exp}(\cdot;B)}_{\ell_1(\bbN)}
        \leq\cO\bracket{\frac{1}{H}}.
    \end{align*}

    Therefore, for attention heads ($1\leq h\leq H$), we can choose:
    
    \begin{gather*}
        p^{(1,h)}=\beta_h,\quad 
        \bW_{O}^{(1)}=(1,0)^\top\in\bbR^{2\times 1},
        \quad
        \bW_V^{(1,h)}=\alpha_h\bdelta_{(1,1)}^{(1\times 1)}\in\bbR^{2\times 2},
        \\
        \bW_Q^{(1,h)}=\sqrt{\beta_h}(1,1)^\top\in\bbR^{2\times 1},
        \quad
        \bW_K^{(1,h)}=\sqrt{\beta_h}(0,1)^\top\in\bbR^{2\times 1},
    \end{gather*}

    where $\bdelta_{(p_1,p_2)}^{(r\times r)}$ means that: it equals to $\bI_{r\times r}$ for the $(p_1,p_2)$-th $r\times r$ blocks, and $\bzero_{r\times r}$
    for the other $r\times r$ blocks.
    
    Then it is easy to verify:
    
    \begin{align*}
        \<\bW_Q^{(1,h)}\bx_t^{(0)},\bW_K^{(1,h)}\bx_{t-s}^{(0)}\>+p^{(2,h)}s
        =-p^{(1,h)}\Big(s-(x_t+1)\Big)
        =-p^{(1,h)}\Big(s-g(x_t)\Big).
    \end{align*}

    Thus, the following estimate holds:
    
    \begin{align*}
        &\norm{{\rm Attn}_t^{\rm DP}(\bX)-x_{t-g(x_t)}}_2
        \\=&\norm{\sum_{h=1}^{H}\alpha_h \sum_{s=0}^{+\infty} e^{-\beta_h(s-g(x_t))} x_{t-s} -x_{t-g(\bx_t)}}_2
        \\
        =&
        \norm{\sum_{s=0}^{+\infty} \left(\sum_{h=1}^{H}\alpha_h e^{-\beta_h(s-g(x_t))} -\bbI\{s=g(x_t)\}\right) x_{t-s}}_2
        \\\leq&
        \sum_{s=0}^{+\infty} \left|\sum_{h=1}^{H} \alpha_h e^{-\beta_h(s-g_k(x_t))} -\bbI\{s=g(x_t)\}\right|
        \\
        =&
        \norm{\phi^{\rm exp}(\cdot;g(x_t))-\bbI\{\cdot=g(x_t)\}}_{\ell_1(\bbN)}
        \leq
        \cO\bracket{\frac{1}{H}}.
    \end{align*}

Due to the arbitrariness of $t$ and $\bX$, the proof is completed: for any $\epsilon>0$, we only need to use $H=\Omega(1/\epsilon)$ heads to approximate it.

\underline{{\bf Part (B).} The Hardness Result of Attn without DP.}

We consider one-layer Dot-product-free Self-attention with $\phi_{\exp}$ or $\phi_{\log}$.
For any input $\bX$, the corresponding output can be written as:
\begin{align*}
    {\rm Attn}_t^{\rm DPF}(\bX)=\sum_{s=0}^{+\infty}\rho_s(W_E x_{t-s}+ b_E).
\end{align*}
For simplicity, we denote:
\begin{align*}
    x_t^{(0)}=W_E x_{t}+ b_E,
\end{align*}

then we have the following estimate:
\begin{align*}
    |||{\rm H}-{\rm Attn}^{\rm DPF}|||=&\sup_{t}\sup_{\bX}|{\rm H}_t^{\rm DPF}(\bX)-{\rm Attn}_t(\bX)|
    \geq
    \sup_{\bX}|{\rm H}_0(\bX)-{\rm Attn}_{0}^{\rm DPF}(\bX)|
    \\=&
    \sup_{(\cdots,x_{-2},x_{-1},x_{0})}\left|x_{-g(x_0)}-\sum_{s=0}^{+\infty}\rho_s x_{-s}^{(0)}\right|
    \\
    \overset{\text{fixing $x_{-s}=0$ for $s\geq3$}}{\geq}&
    \sup_{(x_{-2},x_{-1},x_{0})}\left|x_{-g(x_0)}-\sum_{s=0}^{2}\rho_s x_{-s}^{(0)}\right|
    \\
    \overset{x_0\in\{-1,0,1\}}{=}&
    \max\Bigg\{\max_{(x_{-2},x_{-1})}\left|-1-\left(\rho_0 (-W_E+b_E)+\rho_1 x_{-1}^{(0)}+\rho_2 x_{-2}^{(0)}\right)\right|,
    \\&\quad\quad\quad \max_{(x_{-2},x_{-1})}\left|x_{-g(0)}-\left(\rho_0 b_E+\rho_1 x_{-1}^{(0)}+\rho_2 x_{-2}^{(0)}\right)\right|,
    \\&\quad\quad\quad \max_{(x_{-2},x_{-1})}\left|x_{-g(1)}-\left(\rho_0 (W_E+b_E)+\rho_1 x_{-1}^{(0)}+\rho_2 x_{-2}^{(0)}\right)\right|\Bigg\}
    \\
    =&
    \max_{(x_{-2},x_{-1})}\max\Big\{
    \left|x_{-1}-\left(\rho_0 b_E+\rho_1 x_{-1}^{(0)}+\rho_2 x_{-2}^{(0)}\right)\right|,
    \\&\quad\quad\quad\quad\quad\quad\quad\left|-1-\left(\rho_0 (-W_E+b_E)+\rho_1 x_{-1}^{(0)}+\rho_2 x_{-2}^{(0)}\right)\right|,
    \\&\quad\quad\quad\quad\quad\quad\quad\left|x_{-2}-\left(\rho_0 (W_E+b_E)+\rho_1 x_{-1}^{(0)}+\rho_2 x_{-2}^{(0)}\right)\right|\Big\}
    \\
    \overset{\max\{a,b,c\}\geq\frac{1}{3}(a+b+c)}{\geq}&
    \frac{1}{3}\max_{(x_{-2},x_{-1})}
    \Big(\left|x_{-1}-\left(\rho_0 b_E+\rho_1 x_{-1}^{(0)}+\rho_2 x_{-2}^{(0)}\right)\right|
    \\&\quad\quad\quad\quad\quad+\left|-1-\left(\rho_0 (-W_E+b_E)+\rho_1 x_{-1}^{(0)}+\rho_2 x_{-2}^{(0)}\right)\right|   \\&\quad\quad\quad\quad\quad+\left|x_{-2}-\left(\rho_0 (W_E+b_E)+\rho_1 x_{-1}^{(0)}+\rho_2 x_{-2}^{(0)}\right)\right|\Big)
    \\\overset{|a|+|b|\geq|a+b|}{\geq}&
    \frac{1}{3}\max_{(x_{-2},x_{-1})}\Big(\left|x_{-1}-\left(\rho_0 b_E+\rho_1 x_{-1}^{(0)}+\rho_2 x_{-2}^{(0)}\right)\right|
    \\&\quad\quad\quad\quad\quad+\left|-1+x_{-2}-2\left(\rho_0b_E+\rho_1 x_{-1}^{(0)}+\rho_2 x_{-2}^{(0)}\right)\right|\Big)
    \\\geq&
    \frac{1}{3}\max_{(x_{-2},x_{-1})}\Big(\left|x_{-1}-\left(\rho_0 b_E+\rho_1 x_{-1}^{(0)}+\rho_2 x_{-2}^{(0)}\right)\right|
    \\&\quad\quad\quad\quad\quad+\left|\frac{-1+x_{-2}}{2}-\left(\rho_0b_E+\rho_1 x_{-1}^{(0)}+\rho_2 x_{-2}^{(0)}\right)\right|\Big)
    \\\overset{|a|+|b|\geq|a-b|}{\geq}&
    \frac{1}{3}\max_{(x_{-2},x_{-1})}\left|x_{-1}+\frac{1}{2}-\frac{x_{-2}}{2}\right|=\frac{2}{3}.
\end{align*}
    
\end{proof}

\subsection{Proof of Proposition~\ref{prop: substitute for dot-product}}
\label{appendix: secsub: warmup: substitute}

In this subsection, we propose a  structurally simpler yet effective alternative to traditional Dot-product in Self-attention.
This alternative is proposed based on our insights into the role of Attn in facilitating interaction between the temporal-space and the token-space. 
Specifically, we propose a more direct structure to achieve the interaction $\phi_\type(s)-\phi_\type\big(\bw^\top\bx_t\big)$.

\begin{definition}[TMX Transformer]
We define the TMX (``$t$ minus $\bx$'') Transformer as follows.
In standard Transformer~\eqref{model: Transformer}, we modify the term

\begin{align*}
    \<\bW_Q^{(l,h)}\bx_t,\bW_K^{(l,h)}\bx_{t-s}\>+p^{(l,h)}\phi_{\type}(s)
\end{align*}

in Multi-head Dot-product Self-attention to the new formulation:

\begin{align*}
    p^{(l,h)}\Big(\phi_{\type}\left(s\right)-\phi_{\type}\left(\bw^{(l,h)}\ ^\top\bx_t\right)\Big),
\end{align*}

where the parameters $\bw^{(l,h)}\in\bbR^{D\times 1}$.
\end{definition}

Notice that in TMX Transformer, the revised term requires only $\cO(D)$ parameters, much less than $\cO(D^2)$ in standard Dot-product Transformer.

Consequently, we define the following TMX Transformer hypothesis class.

\begin{equation}\label{hypo class: 1-layer TMX TF}
    \begin{aligned}
        \cT\cF_{(1,H,m)}^{{\rm TMX},\texttt{type}}
        :=\big\{{\bf TF}:&\text{${\bf TF}$ is a $1$-layer, $H$-head, $m$-width}
        \\&\text{(normalization-free) TMX Transformer with \texttt{type}-RPE}\big\}.
    \end{aligned}
\end{equation}

\begin{proposition}[The formal version of Proposition~\ref{prop: substitute for dot-product}]\label{prop: substitute for dot-product (complete)}
Under the same conditions in Theorem~\ref{thm: adap memory, warmup},  there exists a two-layer TMX Transformer ${\bf TF}\in\cT\cF_{(2,H,m)}^{{\rm TMX,\text{\rm\texttt{type}}}}$~\eqref{hypo class: 1-layer TMX TF} such that it can achieve the same approximation rate as standard Transformer presented in Theorem~\ref{thm: adap memory, warmup}.
\end{proposition}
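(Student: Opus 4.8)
The plan is to adapt the proof of Theorem~\ref{thm: adap memory, warmup (complete)} almost verbatim, replacing only the mechanism used in \textbf{Step III} (the second attention layer) for extracting the adaptive memories $\bx_{t-g_k(\bx_t)}$. The outer structure stays the same: a two-layer Transformer where the first FFN layer writes $g_k(\bx_t)$ (or $\log g_k(\bx_t)$ for $\texttt{type}=\log$) into designated coordinates of the token representation $\bx_t^{(1)}$, the second attention layer performs the extraction, and the second FFN layer approximates the readout $f$. Steps I, II, and IV are identical to the original proof, so the only thing I need to verify is that the TMX attention head, with its $\cO(D)$-parameter term $p^{(l,h)}\big(\phi_\texttt{type}(s)-\phi_\texttt{type}(\bw^{(l,h)\top}\bx_t)\big)$, can reproduce the exponent $-\beta_h(s-g_k(\bx_t))$ (case $\lin$) or $-\beta_h\log(s/g_k(\bx_t))$ (case $\log$) that the dot-product head produced.

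First I would set up the embedding and the first two steps exactly as in Theorem~\ref{thm: adap memory, warmup (complete)}, so that after the first FFN layer we have, say for $\texttt{type}=\lin$, $\bx_t^{(1)}=(\bx_t^\top,\bzero^\top,g_1(\bx_t),\dots,g_M(\bx_t),1)^\top$. Then in the second (TMX) attention layer, for the $k$-th block of heads I would choose $\bw^{(2,h)}$ to be the coordinate selector picking out the $(D-M+k-1)$-th entry, i.e. $\bw^{(2,h)\top}\bx_t^{(1)} = g_k(\bx_t)$. With $\phi_{\lin}(z)=-z$ and $p^{(2,h)}=\beta_h$, the TMX term becomes $\beta_h\big(-s-(-g_k(\bx_t))\big) = -\beta_h(s-g_k(\bx_t))$, which is exactly the exponent appearing in the original Step III. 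For $\texttt{type}=\log$, $\bx_t^{(1)}$ carries $\log g_k(\bx_t)$; to feed $\phi_{\log}$ the right argument I would instead let $\bw^{(2,h)}$ select the coordinate holding $g_k(\bx_t)$ directly (one can have the first FFN emit both $g_k$ and $\log g_k$, or rescale), so that $\phi_{\log}(\bw^{(2,h)\top}\bx_t^{(1)}) = -\log g_k(\bx_t)$ and the TMX term is $\beta_h\big(-\log s + \log g_k(\bx_t)\big) = -\beta_h\log(s/g_k(\bx_t))$. In either case the resulting attention output coincides with the dot-product version, so Lemmas~\ref{lemma: exp approx, adap} and~\ref{lemma: poly approx, adap} apply unchanged and yield the same $\cE_\Attn(\texttt{type})$ bound with the same choice of per-block head counts $H_k$.

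After that, \textbf{Step IV} and the final error aggregation are copied verbatim: the readout FFN gives $\cE_\FFN=\tilde{\cO}(\norm{f}_\cB/\sqrt m)$, the Lipschitz bound on $f$ combines the two error sources, and the width requirement $m\geq\tilde\Omega(\sum_i\norm{g_i}_\cB^2)$ (resp.\ $\tilde\Omega(\sum_i\norm{\log g_i}_\cB^2 T_i^2)$) from Step II is inherited since Step II is untouched. Hence the two-layer TMX Transformer achieves exactly the approximation rate of Theorem~\ref{thm: adap memory, warmup}.

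The only genuine subtlety — and the step I would be most careful about — is making sure the TMX mechanism reproduces the dot-product exponent \emph{with the right sign and the correct argument to $\phi_\texttt{type}$}, because $\phi_\texttt{type}$ is not linear (it is $-\log$ in the log case), so the $\log$ case cannot directly use a coordinate already storing $\log g_k(\bx_t)$; I need $g_k(\bx_t)$ itself inside $\phi_{\log}$, which means a minor bookkeeping change in Step II (have the first FFN output $g_k$ rather than, or in addition to, $\log g_k$ in the coordinate that $\bw^{(2,h)}$ reads). I should also double-check that $\phi_{\log}$ is only ever evaluated at arguments $\geq 1$, which holds since $g_k(\bx_t)\geq 1$ by assumption and $s$ ranges over $\bbN_+$ in the relevant sum; the boundary term $s=0$ is handled as in the original proof by the $-\infty$ convention built into $\phi_\texttt{type}$. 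Everything else is a routine transcription of the existing argument.
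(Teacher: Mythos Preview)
Your proposal is correct and follows exactly the paper's route: recycle Steps I, II, and IV of Theorem~\ref{thm: adap memory, warmup (complete)} and check in Step III that the TMX term $p^{(2,h)}\big(\phi_\type(s)-\phi_\type(\bw^{(2,h)\top}\bx_t^{(1)})\big)$ reproduces the exponent $-\beta_h(s-g_k(\bx_t))$ or $-\beta_h\log(s/g_k(\bx_t))$ by taking $\bw^{(2,h)}$ to be a coordinate selector. Your caution about the $\log$ case is well-placed and in fact sharper than the paper: the paper reuses the Step~II output $\bx_t^{(1)}$ carrying $\log g_k(\bx_t)$ in that slot, picks it out with $\bw^{(2,h)}$, and then asserts $-\beta_h\log\big(s/\bw^{(2,h)\top}\bx_t^{(1)}\big)=-\beta_h\log(s/g_k(\bx_t))$, which is literally off; your fix of having Step~II write $g_k(\bx_t)$ itself into the slot $\bw^{(2,h)}$ reads is the clean patch.
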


\begin{proof}[Proof of Proposition~\ref{prop: substitute for dot-product (complete)}]\ \\
It is worth noting that TMX Transformer only replaces $\<\bW_Q^{(l,h)}\bx_t,\bW_K^{(l,h)}\bx_{t-s}\>+p^{(l,h)}\phi_{\type}(s)$ in standard Transformer with $ p^{(l,h)}\Big(\phi_{\type}\left(s\right)-\phi_{\type}\left(\bw^{(l,h)}\ ^\top\bx_t\right)\Big)$. Therefore, the proof is highly similar to that of Theorem~\ref{thm: adap memory, warmup (complete)}. 
We only need to prove that TMX Attn can also achieve {\bf Step I} and {\bf Step III} in the proof of Theorem~\ref{thm: adap memory, warmup (complete)}.

\underline{\bf Step I.} Step I is trivial due to the same use of the residual block.

\underline{\bf Step III. Extract the adaptive memories by the second Attn layer.}

We still consider to use $H_k$ attention heads (from $\sum_{i=1}^{k-1} H_{i}+1$-th head to $\sum_{i=1}^{k} H_{i}$-th head) to extract it, and it satisfies to $\sum_{k=1}^M H_k = H$.

Now we consider the extraction of $k$-th adaptive memory $\bx_{t-g_k(\bx_t)}$ ($1\leq k\leq M$).
    
\begin{itemize}
    \item {\bf Case $\type=\lin$.}
    
    For the proof of standard Transformer (the proof of Theorem~\ref{thm: adap memory, warmup (complete)}), for the attention heads $(\sum_{i=1}^{k-1} H_{i}+1\leq h\leq\sum_{i=1}^k H_{i})$, we can construct specific $p^{(2,h)},\bW_Q^{(2,h)},\bW_K^{(2,h)},\bW_V^{(2,h)}$ such that
    
    \begin{align*}
        \<\bW_Q^{(2,h)}\bx_t^{(1)},\bW_K^{(2,h)}\bx_{t-s}^{(1)}\>+p^{(2,h)} \phi_{\lin}(s)
        =-\beta_h\Big(s-g_k(\bx_t)\Big).
    \end{align*}
    
    In this proof, we only need to prove that we can also construct specific $\bw^{(l,h)},\bW_V^{(2,h)}$ such that
    
    \begin{align*}
        p^{(2,h)}\left(\phi_\lin(s)-\phi_\lin\left(\bw^{(2,h)}\ ^\top\bx_{t}^{(1)}\right)\right)
        =-\beta_h\Big(s-g_k(\bx_t)\Big).
    \end{align*}
    
    Recalling the proof of Theorem~\ref{thm: adap memory, warmup (complete)}, 
    
    $$\bx_t^{(1)}=(\bx_t^\top,\bzero^\top,g_1(\bx_t),\cdots,g_M(\bx_t),1)^\top\in\bbR^D.$$

    Therefore, we can choose
    \begin{gather*}
        p^{(2,h)}=\beta_h,
        \quad
        \bw^{(2,h)}=\bdelta_{(D-M+k-1,1)}^{(1\times1)}\in\bbR^D,
        \quad 
        \bW_V^{(2,h)}=\alpha_h\bdelta_{(k+1,1)}^{(d\times d)}\in\bbR^{D\times D}.
    \end{gather*}
    
    where $\bdelta_{(p_1,p_2)}^{(r\times r)}$ means that: it equals to $\bI_{r\times r}$ for the $(p_1,p_2)$-th $r\times r$ blocks, and $\bzero_{r\times r}$
    for the other $r\times r$ blocks.
    
    The the following holds:
    \begin{align*}
        &p^{(2,h)}\left(\phi_\lin(s)-\phi_\lin\left(\bw^{(2,h)}\ ^\top\bx_{t}^{(1)}\right)\right)
        \\=&-p^{(2,h)}\left(s-\bw^{(2,h)}\ ^\top\bx_{t}^{(1)}\right)
        =-\beta_h\Big(s-g_k(\bx_t)\Big).
    \end{align*}

    \item {\bf Case $\type=\log$.}
    
    For the proof of standard Transformer (the proof of Theorem~\ref{thm: adap memory, warmup (complete)}), for the attention heads $(\sum_{i=1}^{k-1} H_{i}+1\leq h\leq\sum_{i=1}^k H_{i})$, we can construct specific $p^{(2,h)},\bW_Q^{(2,h)},\bW_K^{(2,h)},\bW_V^{(2,h)}$ such that
    
    \begin{align*}
        \<\bW_Q^{(2,h)}\bx_t^{(1)},\bW_K^{(2,h)}\bx_{t-s}^{(1)}\>+p^{(2,h)} \phi_{\log}(s)
        =-\beta_h\log\Big(s / g_k(\bx_t)\Big).
    \end{align*}
    
    In this proof, we only need to prove that we can also construct specific $\bw^{(l,h)},\bW_V^{(2,h)}$ such that
    \begin{align*}
        p^{(2,h)}\left(\phi_{\log}(s)-\phi_{\log}\left(\bw^{(2,h)}\ ^\top\bx_{t}^{(1)}\right)\right)
        =-\beta_h\log\Big(s / g_k(\bx_t)\Big).
    \end{align*}
    
    Recalling the proof of Theorem~\ref{thm: adap memory, warmup (complete)}, 
    
    $$\bx_t^{(1)}=(\bx_t^\top,\bzero^\top,\log g_1(\bx_t),\cdots,\log g_M(\bx_t),\log 2)^\top.$$
    
    Therefore, we can choose
    \begin{gather*}
        p^{(2,h)}=\beta_h,
        \quad
        \bw^{(2,h)}=\bdelta_{(D-M+k-1,1)}^{(1\times1)}\in\bbR^D,
        \quad 
        \bW_V^{(2,h)}=\alpha_h\bdelta_{(k+1,1)}^{(d\times d)}\in\bbR^{D\times D}.
    \end{gather*}
    where $\bdelta_{(p_1,p_2)}^{(r\times r)}$ means that: it equals to $\bI_{r\times r}$ for the $(p_1,p_2)$-th $r\times r$ blocks, and $\bzero_{r\times r}$
    for the other $r\times r$ blocks.
    
    The the following holds:
    \begin{align*}
        &p^{(2,h)}\left(\phi_{\log}(s)-\phi_{\log}\left(\bw^{(2,h)}\ ^\top\bx_{t}^{(1)}\right)\right)
        \\=&-p^{(2,h)}\log\left(s/\left(\bw^{(2,h)}\ ^\top\bx_{t}^{(1)}\right)\right)
        =-\beta_h\log\Big(s/g_k(\bx_t)\Big).
    \end{align*}

\end{itemize}

The rest of the proof is exactly the same as the proof of Theorem~\ref{thm: adap memory, warmup (complete)}, and we do not repeat it.
    
\end{proof}

\newpage

\section{Proof of Section~\ref{section: adaptive sparse: general}}
\label{appendix: sec: adaptive: general}

\subsection{Proof of Theorem~\ref{thm: adap memory, general}}
\label{appendix: subsec: general: thm}

In this subsection, we give the detailed proofs for the general case of $K$-adaptive, long but $M$-sparse memory:
\begin{align*}
    \by_t=\boldsymbol{f}(\bx_{t},\bx_{t-t_1},\cdots,\bx_{t-t_M}),
\end{align*}

where the adaptive memories satisfy:
\begin{align*}
    t_1&=g_1(\bx_t);
    \\
    t_2&=g_2(\bx_t,\bx_{t-t_1});
    \\&\cdots
    \\
    t_{K+1}&=g_{K+1}(\bx_t,\bx_{t-t_1},\cdots,\bx_{t-t_K});
    \\&\cdots
    \\
    t_{K+2}&=g_{K+2}(\bx_t,\bx_{t-t_1},\cdots,\bx_{t-t_K});
    \\&\cdots
    \\
    t_M&=g_{K+1}(\bx_t,\bx_{t-t_1},\cdots,\bx_{t-t_K}),
\end{align*}
where $1\leq t_k\leq T_k$ holds for any $k\in[M]$.

\begin{theorem}[Restatement of Theorem~\ref{thm: adap memory, general}]\label{thm: adap memory, general (complete)}
For any target $\mathbf{H}\in\cH_{(K,M)}^{\rm Adap}$, rate $n\in\bbN_+$, and $H,m\in\bbN_+$, there exists an $L$-layer ($L=K+1+\bbI\{M\geq K+1\}$) Transformer ${\bf TF}\in\cT\cF_{(L,H,m)}^{{\rm NF,\text{\rm\texttt{type}}}}$~\eqref{hypo class: L-layer NF TF} and a constant $C(n)$ such that:
if the width satisfies
\begin{align*}
    m\geq\begin{cases}
    \tilde{\Omega}\Big(\max\limits_{i\in[K]}\lor\sum\limits_{i=K+1}^{M}\norm{g_i}_{\cB}^2\Big),\ &\type=\lin,
        \\
        \tilde{\Omega}\Big(\max\limits_{i\in[K]}\lor\sum\limits_{i=K+1}^{M}\norm{\log g_i}_{\cB}^2T_i^2\Big),\ &\type=\log
    \end{cases},
\end{align*}

then the following approximation rate holds:
\begin{align*}
    \trinorm{\mathbf{H}-{\bf TF}}
    \leq
    \cE_\FFN+\cE_\Attn(\type),
\end{align*}

where $\cE_\FFN=\tilde{\cO}\left(\frac{\norm{f}_{\cB}}{\sqrt{m}}\right)$ and
    \begin{align*}
    \cE_\Attn(\type)&=\begin{cases}
    \cO\left(\frac{C(n)}{H^n}\sqrt{\sum_{l=1}^{K}e^{0.02(n+1)T_l}+\left(\sum_{l=K+1}^M e^{0.01T_l}\right)^{2n+2}}\right),\ \type=\lin
    \\\cO\left(\frac{C(n)}{H^n}\sqrt{\sum_{l=1}^{K}T_l^{2.02(n+1)}+\left(\sum_{l=K+1}^M T_l^{1.01}\right)^{2n+2}}\right),\ \type=\log
    \end{cases}.
    \end{align*}
\end{theorem}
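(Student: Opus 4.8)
The plan is to generalize the four-step construction of Theorem~\ref{thm: adap memory, warmup (complete)} to a layered scheme that peels off one nested memory function per layer for the first $K$ layers, and then processes all the non-nested memories $g_{K+1},\dots,g_M$ in parallel in layer $K+1$ (when $M\geq K+1$), with a final readout FFN. Concretely, after the embedding (which now needs dimension $D=\Theta\big((M+1)(d+1)\big)$ with extra slots for the $\log 2$ or $1$ padding and for the running list of extracted tokens), the invariant I want to maintain is: at the end of layer $l$ (for $1\leq l\leq K$), the state $\bx_t^{(l)}$ contains the tokens $\bx_t,\bx_{t-t_1},\dots,\bx_{t-t_l}$ together with (the $\log$ or plain values of) $t_1,\dots,t_l$ in designated coordinates. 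Each layer $l$ does exactly two things: its Attn sub-layer extracts $\bx_{t-t_l}$ via the temporal/token interaction (Lemma~\ref{lemma: exp approx, adap} / Lemma~\ref{lemma: poly approx, adap}) exactly as in Step III of the warm-up proof, reading the already-computed value $t_l$ out of the query slot; then its FFN sub-layer (with precision/rounding) computes $t_{l+1}=g_{l+1}(\bx_t,\bx_{t-t_1},\dots,\bx_{t-t_l})$ from the tokens now present in the state, exactly as in Step II of the warm-up proof but with a multi-token input to $g_{l+1}$. Since each $g_{l+1}$ is Barron and integer-valued bounded by $T_{l+1}$, the rounding argument (Proposition on $\log[\exp(\cdot)]$) makes this FFN-approximation exact, requiring width $\tilde\Omega(\|g_{l+1}\|_\cB^2)$ (lin) or $\tilde\Omega(\|\log g_{l+1}\|_\cB^2 T_{l+1}^2)$ (log) \emph{per layer} — hence the $\max_{i\in[K]}$ in the width bound.

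After $K$ layers the state holds $\bx_t,\bx_{t-t_1},\dots,\bx_{t-t_K}$ and the values $t_1,\dots,t_K$, which is precisely the input to all of $g_{K+1},\dots,g_M$. If $M\geq K+1$, layer $K+1$ mimics Step II+III of the warm-up case with $M-K$ memory functions processed simultaneously: its FFN computes $t_{K+1},\dots,t_M$ in parallel (width $\tilde\Omega(\sum_{i=K+1}^M\|g_i\|_\cB^2)$ resp.\ $\tilde\Omega(\sum_{i=K+1}^M\|\log g_i\|_\cB^2 T_i^2)$ — hence the sum term and the $\lor$ in the width bound), then its Attn sub-layer extracts $\bx_{t-t_{K+1}},\dots,\bx_{t-t_M}$ at once using $H_{K+1},\dots,H_M$ heads with $\sum H_k$ over this block summing to the layer budget. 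Finally, the readout FFN of layer $L$ approximates $f$ on $[-2,2]^{(M+1)d}$ with the standard Barron rate $\cE_\FFN=\tilde\cO(\|f\|_\cB/\sqrt m)$. The total number of layers is $K$ (one per nested memory, each with an Attn extraction and an FFN for the next memory), plus one layer for the parallel block when $M\geq K+1$, plus noting the last layer's FFN doubles as the readout — giving $L=K+1+\bbI\{M\geq K+1\}$ after careful bookkeeping of which FFN does what.

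The error analysis combines the extraction errors additively along the chain. Each head-block has Attn error $\cO\big(C(n)e^{0.01(n+1)T_l}/H_l^n\big)$ (lin) or $\cO\big(C(n)T_l^{1.01(n+1)}/H_l^n\big)$ (log), and a Lipschitz-propagation argument shows the error in the $l$-th extracted token propagates into the inputs of $g_{l+1}$ — but crucially, because $g_{l+1}$ is integer-valued and we round, the FFN layers remain \emph{exact} as long as the incoming token error is below $1/2$ (resp.\ $1/(4T)$), so one just needs each $\cE_\Attn$ contribution to be $\leq$ a small constant, which is arranged by the head budget. The extracted-token errors from the $K$ nested layers and from the parallel block then all feed into the final $f$-approximation, where $\|f\|_\Lip$ converts the $\ell_2$ error of the concatenated token vector into the output error; the $\ell_2$ combination of the $K$ single-layer errors gives the $\sqrt{\sum_{l=1}^K e^{0.02(n+1)T_l}}$ (resp.\ $\sqrt{\sum T_l^{2.02(n+1)}}$) term, and the parallel block contributes $\big(\sum_{l=K+1}^M e^{0.01T_l}\big)^{n+1}$ (resp.\ with $T_l^{1.01}$) after optimizing the head split $H_k\propto e^{0.01T_k}$ as in Theorem~\ref{thm: DPF lin: fixed memory (complete)}; squaring and adding under the root yields the stated $\cE_\Attn$.

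\textbf{Main obstacle.} The hard part will be the \emph{bookkeeping of error propagation through the nested layers} together with maintaining the exactness of the intermediate FFN rounding: unlike the warm-up case where all memory functions depend only on $\bx_t$, here $g_{l+1}$ depends on previously-extracted tokens $\bx_{t-t_1},\dots,\bx_{t-t_l}$ which are themselves only approximate (Attn extraction is not exact). One must show that the small $\ell_\infty$ error in these tokens, after being fed through the Barron 2NN approximating $g_{l+1}$ and then through the rounding operator, still produces the \emph{correct integer} $t_{l+1}$ — i.e., that the rounding "cleans up" the accumulated error at every layer, so errors do not compound multiplicatively down the chain of depth $K$. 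This requires carefully choosing the per-layer head counts so that each $\cE_\Attn$ stays below the rounding margin (which shrinks like $1/T$ in the $\log$ case), and verifying that the Lipschitz constants of the $g_i$'s (controlled via $\|g_i\|_\cB$) keep the propagated error within this margin; only the final, un-rounded readout of $f$ then carries a genuine (non-exact) approximation error.
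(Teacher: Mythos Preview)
Your proposal is correct and follows essentially the same construction as the paper: a per-layer invariant where layer $l$'s Attn extracts $\bx_{t-t_{l-1}}$ (using the value computed by the previous FFN) and layer $l$'s FFN computes $t_l$ via a Barron 2NN plus rounding, with the first Attn acting as identity; the $(K{+}1)$-st FFN then computes $t_{K+1},\dots,t_M$ in parallel and the $(K{+}2)$-nd layer extracts them and reads out $f$. Your identification of the main obstacle---that rounding must exactly recover each integer $t_l$ despite the approximate inputs $\bx_{t-t_1},\dots,\bx_{t-t_{l-1}}$, which forces a head-count condition of the form $\|g_l\|_{\Lip}\cdot(\text{accumulated Attn error})<\tfrac14$ (or $\tfrac{1}{8T_l}$ for $\log$)---is exactly the induction the paper carries out in its estimates (E1)--(E3), and the paper likewise observes that this side condition is dominated by $\cE_\Attn(\type)$ itself and so does not affect the final rate.
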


\begin{proof}[Proof of Theorem~\ref{thm: adap memory, general (complete)}]\ \\
First, we choose the embedding dimension $D=(M+1)(d+1)$, and select the same embedding matrix $\bW_E=(\bI_d,\bzero)^\top\in\bbR^{D\times d},\bb_E=\bzero=\in\bbR^{D}$ as the proof of Theorem~\ref{thm: adap memory, warmup (complete)}.
Moreover, we still use the network with precision $\widetilde{\rm FFN}$ defined in Appendix~\ref{appendix: subsec: warmup: thm} to tackle the discrete values of memories.

Then for any input sequence $\bX=(\bx_t)_{t\in\bbZ}$, the token after embedding satisfies:
\begin{align*}
    \bx_t^{(0)}=\bW_E\bx_t+\bb_E=(\bx_t^\top,\bzero^{\top})^\top\in\bbR^D.
\end{align*}

Thus, for $L$-layer ($L=K+1+\bbI\{M\geq K+1\}$) normalization-free Transformer ${\bf TF}\in\cT\cF_{(L,H,m)}^{{\rm NF},\type}$ with $\phi_{\type}$, the output token $\bx_t^{(K+1)}$ of $t$-th input token satisfies:
\begin{align*}
     \bx_t^{(l-1/2)}&=\bx_t^{(l)}+\bW_O^{(l)}\sum_{h=1}^{H}{\bf Attn}_t^{(l,h)}(\bX^{(l-1)}),\ 1\leq l\leq L+1
    \\
    \bx_t^{(l)}&=\bx_t^{(l-1/2)}+\widetilde{\bf FFN}^{(l)}(\bx_t^{(l-1/2)}),\ 1\leq l\leq L
    \\
    \bx_t^{(L+1)}&={\bf FFN}^{(L+1)}(\bx_t^{(L+1/2)}),
\end{align*}

where
\begin{align*}
    {\bf Attn}_t^{(l,h)}(\bX)=\bW_V^{(l,h)}\sum_{s=0}^{+\infty}\bx_{t-s}\exp\bracket{\<\bW_Q^{(l,h)}\bx_t,\bW_K^{(l,h)}\bx_{t-s}\>+p^{(l,h)}\phi_{\type}(s)}.
\end{align*}

Since the proof of this theorem is similar to the proof of Theorem~\ref{thm: adap memory, warmup (complete)}, we mainly discuss the differences.

The proof can be summarized as the following process:
\begin{itemize}
    \item {\bf Case $\type=\lin$.}

    \begin{itemize}
    \item {\bf Regime $M\geq K+1$.}
    \begin{align*}
     &\bx_t^{(0)}
    \\
    \text{Step 1. 1-st Attn}\ &\downarrow
    \\
    \bx_t^{(1/2)}&=\bx_t^{(0)}
    \\
    \text{Step 2. 1-st FFN}\ &\downarrow
    \\ 
    \bx_t^{(1)}&=\bx_t^{(1/2)}+(\bzero^\top, t_1,\bzero_{M-1}^\top,1)^\top
    \\
    \text{Step 3. 2-st Attn}\ &\downarrow
    \\
    \bx_t^{(3/2)}&\approx\bx_t^{(1)}+(\bzero_d^\top,\bx_{t-t_1}^\top,\bzero^\top)^\top
    \\
    \text{Step 4. 2-st FFN}\ &\downarrow
    \\
    \bx_t^{(2)}&=\bx_t^{(3/2)}+(\bzero^\top, t_2,\bzero_{M-1}^\top)^\top
    \\
    \text{Step 5. 3-st Attn}\ &\downarrow
    \\
    \bx_t^{(5/2)}&\approx\bx_t^{(2)}+(\bzero_{2d}^\top,\bx_{t-t_2}^\top,\bzero^\top)^\top
    \\
    &\cdots
    \\
    \text{Step $2K+1$. $K+1$-st Attn}\ &\downarrow
    \\
    \bx_t^{(K+1/2)}&\approx\bx_t^{(K)}+(\bzero_{Kd}^\top,\bx_{t-t_K}^\top,\bzero^\top)^\top
    \\
    \text{Step $2K+2$. $K+1$-st FFN}\ &\downarrow
    \\
    \bx_t^{(K+1)}&=\bx_t^{(K+1/2)}+(\bzero^\top, t_{K+1},\cdots,t_{M},0)^\top
    \\
    \text{Step $2K+3$. $K+2$-st Attn}\ &\downarrow
    \\
    \bx_t^{(K+3/2)}&\approx\bx_t^{(K+1)}+(\bzero_{(K+1)d}^\top,\bx_{t-t_{K+1}}^\top,\cdots,\bx_{t-t_M}^\top,0)^\top
    \\
    \text{Step $2K+4$. $K+2$-st FFN}\ &\downarrow
    \\
    \bx_t^{(K+2)}&\approx\boldsymbol{f}(\bx_t,\bx_{t-t_1},\cdots,\bx_{t-t_M})
    \end{align*}

    \item {\bf Regime $M=K$.}
    \begin{align*}
     &\bx_t^{(0)}
    \\
    \text{Step 1. 1-st Attn}\ &\downarrow
    \\
    \bx_t^{(1/2)}&=\bx_t^{(0)}
    \\
    \text{Step 2. 1-st FFN}\ &\downarrow
    \\ 
    \bx_t^{(1)}&=\bx_t^{(1/2)}+(\bzero^\top, t_1,\bzero_{M-1}^\top,1)^\top
    \\
    \text{Step 3. 2-st Attn}\ &\downarrow
    \\
    \bx_t^{(3/2)}&\approx\bx_t^{(1)}+(\bzero_d^\top,\bx_{t-t_1}^\top,\bzero^\top)^\top
    \\
    \text{Step 4. 2-st FFN}\ &\downarrow
    \\
    \bx_t^{(2)}&=\bx_t^{(3/2)}+(\bzero^\top, t_2,\bzero_{M-1}^\top)^\top
    \\
    \text{Step 5. 3-st Attn}\ &\downarrow
    \\
    \bx_t^{(5/2)}&\approx\bx_t^{(2)}+(\bzero_{2d}^\top,\bx_{t-t_2}^\top,\bzero^\top)^\top
    \\
    &\cdots
    \\
    \text{Step $2K+1$. $K+1$-st Attn}\ &\downarrow
    \\
    \bx_t^{(K+1/2)}&\approx\bx_t^{(K)}+(\bzero_{Kd}^\top,\bx_{t-t_K}^\top,\bzero^\top)^\top
    \\
    \text{Step $2K+2$. $K+1$-st FFN}\ &\downarrow
    \\
    \bx_t^{(K+1)}&\approx\boldsymbol{f}(\bx_t,\bx_{t-t_1},\cdots,\bx_{t-t_M})
    \end{align*}
    \end{itemize}

\item {\bf Case $\type=\log$.}

    \begin{itemize}
    \item {\bf Regime $M\geq K+1$.}
    \begin{align*}
     &\bx_t^{(0)}
    \\
    \text{Step 1. 1-st Attn}\ &\downarrow
    \\
    \bx_t^{(1/2)}&=\bx_t^{(0)}
    \\
    \text{Step 2. 1-st FFN}\ &\downarrow
    \\ 
    \bx_t^{(1)}&=\bx_t^{(1/2)}+(\bzero^\top, \log t_1,\bzero_{M-1}^\top, \log 2)^\top
    \\
    \text{Step 3. 2-st Attn}\ &\downarrow
    \\
    \bx_t^{(3/2)}&\approx\bx_t^{(1)}+(\bzero_d^\top,\bx_{t-t_1}^\top,\bzero^\top)^\top
    \\
    \text{Step 4. 2-st FFN}\ &\downarrow
    \\
    \bx_t^{(2)}&=\bx_t^{(3/2)}+(\bzero^\top, \log t_2,\bzero_{M-1}^\top)^\top
    \\
    \text{Step 5. 3-st Attn}\ &\downarrow
    \\
    \bx_t^{(5/2)}&\approx\bx_t^{(2)}+(\bzero_{2d}^\top,\bx_{t-t_2}^\top,\bzero^\top)^\top
    \\
    &\cdots
    \\
    \text{Step $2K+1$. $K+1$-st Attn}\ &\downarrow
    \\
    \bx_t^{(K+1/2)}&\approx\bx_t^{(K)}+(\bzero_{Kd}^\top,\bx_{t-\log t_K}^\top,\bzero^\top)^\top
    \\
    \text{Step $2K+2$. $K+1$-st FFN}\ &\downarrow
    \\
    \bx_t^{(K+1)}&=\bx_t^{(K+1/2)}+(\bzero^\top, \log t_{K+1},\cdots, \log t_{M},0)^\top
    \\
    \text{Step $2K+3$. $K+2$-st Attn}\ &\downarrow
    \\
    \bx_t^{(K+3/2)}&\approx\bx_t^{(K+1)}+(\bzero_{(K+1)d}^\top,\bx_{t-t_{K+1}}^\top,\cdots,\bx_{t-t_M}^\top,0)^\top
    \\
    \text{Step $2K+4$. $K+2$-st FFN}\ &\downarrow
    \\
    \bx_t^{(K+2)}&\approx\boldsymbol{f}(\bx_t,\bx_{t-t_1},\cdots,\bx_{t-t_M})
    \end{align*}

    \item {\bf Regime $M=K$.}
    \begin{align*}
     &\bx_t^{(0)}
    \\
    \text{Step 1. 1-st Attn}\ &\downarrow
    \\
    \bx_t^{(1/2)}&=\bx_t^{(0)}
    \\
    \text{Step 2. 1-st FFN}\ &\downarrow
    \\ 
    \bx_t^{(1)}&=\bx_t^{(1/2)}+(\bzero^\top, \log t_1,\bzero_{M-1}^\top, \log 2)^\top
    \\
    \text{Step 3. 2-st Attn}\ &\downarrow
    \\
    \bx_t^{(3/2)}&\approx\bx_t^{(1)}+(\bzero_d^\top,\bx_{t - t_1}^\top,\bzero^\top)^\top
    \\
    \text{Step 4. 2-st FFN}\ &\downarrow
    \\
    \bx_t^{(2)}&=\bx_t^{(3/2)}+(\bzero^\top, \log t_2,\bzero_{M-1}^\top)^\top
    \\
    \text{Step 5. 3-st Attn}\ &\downarrow
    \\
    \bx_t^{(5/2)}&\approx\bx_t^{(2)}+(\bzero_{2d}^\top,\bx_{t-t_2}^\top,\bzero^\top)^\top
    \\
    &\cdots
    \\
    \text{Step $2K+1$. $K+1$-st Attn}\ &\downarrow
    \\
    \bx_t^{(K+1/2)}&\approx\bx_t^{(K)}+(\bzero_{Kd}^\top,\bx_{t-t_K}^\top,\bzero^\top)^\top
    \\
    \text{Step $2K+2$. $K+1$-st FFN}\ &\downarrow
    \\
    \bx_t^{(K+1)}&\approx\boldsymbol{f}(\bx_t,\bx_{t-t_1},\cdots,\bx_{t-t_M})
    \end{align*}
    \end{itemize}
    
\end{itemize}

For simplicity, we denote the following projection matrices:

\begin{gather*}
        \bP^{(k)}:=\begin{pmatrix}
        \bzero_{d\times kd} & \bI_{d\times d} & \bzero
    \end{pmatrix}\in\bbR^{d\times D}, \quad 1\leq k \leq M;
    \\
    \bP_{\perp}^{(k)}:=\begin{pmatrix}
        \bI_{kd\times kd} & \bzero_{d\times d} & \bzero
        \\
        \bzero & \bzero_{d\times d} & \bI_{(D-(k+1)d)\times(D-(k+1)d)}
    \end{pmatrix}\in\bbR^{(D-d)\times D},\quad 1\leq k \leq M;
    \\
    \bQ^{(k)}:=\begin{pmatrix}
        \bI_{(k+1)d\times (k+1)d} & \bzero
    \end{pmatrix}\in\bbR^{(k+1)d\times D}, \quad 1\leq k \leq M;
    \\
    \bQ_\perp^{(k)}:=\begin{pmatrix}
         \bzero & \bI_{(D-(k+1)d)\times(D-(k+1)d)}
    \end{pmatrix}\in\bbR^{(D-(k+1)d)\times D}, \quad 1\leq k \leq M;
    \\
    \bR:=\begin{pmatrix}
        \bzero_{(M-K)d\times(K+1)d}  &  \bI_{(M-K)d\times(M-K)d} & \bzero
        \end{pmatrix} \in\bbR^{(M-K)d \times D};
    \\
    \bR_\perp:=\begin{pmatrix}
        \bI_{(K+1)d\times(K+1)d} & \bzero  &  \bzero
        \\
        \bzero &  \bzero  &  \bI_{(M+1)\times(M+1)} 
        \end{pmatrix} \in\bbR^{(D-(M-K)d) \times D}.
\end{gather*}

\underline{\bf Step 1} is trivial due to the use of the residual block.

\underline{\bf Step 2.} 
In the same way as Step II in the proof of Theorem~\ref{thm: adap memory, warmup (complete)}, we obtain the conclusion in this step: 
If the width of FFN satisfies

\begin{align*}
    m \geq \begin{cases}
        \tilde{\Omega}\left(\norm{g_1}_{\cB}^2\right),\ &\type=\lin
        \\
        \tilde{\Omega}\left(\norm{\log g_1}_{\cB}^2T_1^2\right),\ &\type=\log
    \end{cases},
\end{align*}

then the following holds:
\begin{align*}
    \bx_t^{(1)}=\bx_{t}^{(1/2)}+(\bzero^\top,t_{1},\bzero_{M}^\top,1)^\top.
\end{align*}

Thus, {\bf (E1)} holds for $l=2.$

\underline{\bf Step 3 $\sim$ Step $2K+1$.}

\begin{itemize}
    \item {\bf Case $\type=\lin$.}
    
    \begin{itemize}
    \item {\bf FFN layers.}

    We use $l$-th ($2\leq l\leq K$) FFN layer to express $l$-th memory $t_l$ exactly.
    
    By Lemma~\ref{lemma: barron two-layer NN}, there exists a two-layer neural network with $m$ neurons defined on $\bbR^{ld}$
    $$f_{(l)}^{\rm 2NN}(\bx)=\sum\limits_{k=1}^{m} a_k^{(l)}\sigma({\bb_k^{(l)}}\ ^\top\bx+c_k^{(l)})$$
    
    such that
    \begin{align*}
        \norm{g_{l}-f_{(l)}^{\rm 2NN}}_{L^\infty([-2,2]^{ld})}\leq\tilde{\cO}\bracket{\frac{\norm{g_l}_{\cB}}{\sqrt{m}}}.
    \end{align*}

    For $l$-th FFN layer, we only need to arrange the parameters $a_{k}^{(l)}$, $\bb_{k}^{(l)}$, and $c_{k}^{(l)}$ $(k\in [m],2\leq l\leq M)$. 
    
    Denote $\bar{\bb}_k^{(l)}=({\bb_k^{(l)}}^\top,\bzero^\top)^\top\in\bbR^D$ for $k\in [m],2\leq l\leq K-1$. 
    Consider the following two-layer neural network with $m$ neurons defined on $\bbR^D$:
    \begin{align*}
        {\bf FFN}^{(l)}(\bx)
        =&\sum_{k=1}^m \be_{D-M+l-1} a_k^{(l)} \sigma\left({\bar{\bb}_k^{(l)}}\ ^\top\bx+c_k^{(l)}\right).
    \end{align*}
    
    It is easy to verify
    \begin{align*}
        {\bf FFN}^{(l)}(\bx)
        =\left(\bzero^\top,f_{(l)}^{\rm 2NN}\left(\bQ^{(l)}\bx\right),\bzero_{D-M+l-1}^\top\right)^\top\in\bbR^D,\quad \forall\bx\in\bbR^D.
    \end{align*}

    Notice that if the width $m$ satisfies
    $$
    \tilde{\cO}\bracket{\frac{\norm{g_l}_{\cB}}{\sqrt{m}}}<\frac{1}{4},
    $$
    and the input $\bx_{t}^{(l-1/2)}$ satisfies
    \begin{align*}
        \norm{g_l}_\Lip\cdot\norm{(\bx_t^\top,\cdots,\bx_{t-t_{l-1}}^\top)^\top-\bQ^{(l-1)}\bx_{t}^{(l-1/2)}}_2\leq\frac{1}{4}
    \end{align*}
    
    the following holds:
    \begin{align*}
        &\left|g_l(\bx_t,\cdots,\bx_{t-t_{l-1}})-f_{(l)}^{\rm 2NN}\left(\bQ^{(l)}\bx_{t}^{(l-1/2)}\right)\right|
        \\\leq&
        \left|g_l(\bx_t,\cdots,\bx_{t-t_{l-1}})-g_l(\bQ^{(l-1)}\bx_{t}^{(l-1/2)})\right|
        \\&\quad+\left|g_l(\bQ^{(l-1)}\bx_{t}^{(l-1/2)})-f_{(l)}^{\rm 2NN}(\bQ^{(l-1)}\bx_{t}^{(l-1/2)})\right|
        \\\leq&
        \norm{g_l}_\Lip\norm{(\bx_t^\top,\cdots,\bx_{t-t_{l-1}}^\top)^\top-\bQ^{(l-1)}\bx_{t}^{(l-1/2)}}_2+\norm{g_l-f_{(l)}^{\rm 2NN}}_{L^\infty}
        \\<&\frac{1}{4}+\frac{1}{4}=\frac{1}{2},
    \end{align*}

    Noticing $t_l=g_l(\bx_t,\cdots,\bx_{t-t_{l-1}})\in\bbN_+$, we have
    \begin{align*}
        \widetilde{f_{(l)}^{\rm 2NN}}(\bQ^{(l-1)}\bx_{t}^{(l-1/2)})=t_l.
    \end{align*}

    Thus, it holds that:
    \begin{align*}
        &\widetilde{\bf FFN}^{(l)}(\bx_t^{(l-1/2)})=(\bzero^\top,t_l,\bzero_{D-M+l-1}^\top)^\top.
    \end{align*}

    \item {\bf Attn layers.}

    By Lemma~\ref{lemma: exp approx, adap}, for any rate $n\in\bbN_+$, there exists a constant $C(n)$ and $K$ functions
    \begin{align*}
        \phi_{l}^{\exp}(t;B)
        =&\sum\limits_{h=1} \alpha_{l,h} \exp(-\beta_{l,h} (t-B))
        \\=&\sum\limits_{h=1}^H \alpha_{l,h} \exp\Big(\beta_{l,h} B-\beta_{l,h} t\Big),\quad 1\leq l\leq K
    \end{align*} 
    
    such that $\beta_{l,h}>0$ and
    \begin{align*}
        \sup_{1\leq B\leq T_l}\norm{\bbI\{\cdot=B\}-\phi_{l}^{\exp}(\cdot;B)}_{\ell_1(\bbN)}
        \leq\frac{C(n)e^{0.01(n+1)T_l}}{H^n},\  1\leq l\leq K.
    \end{align*}

    Moreover, Noticing that $1\leq g_l(\cdot)\leq T_l$ holds for any $\bX=(\bx_t)_{t\in\bbZ}\in\cX$ and $2\leq l\leq K$, the following holds:
    \begin{align*}
        &\sup_{\bX}\norm{\bbI\{\cdot=t_l\}-\phi_{l}^{\exp}(\cdot;t_l)}_{\ell_1(\bbN)}
        \\\leq
        &\sup_{1\leq B\leq T_l}\norm{\bbI\{\cdot=B\}-\phi_{l}^{\exp}(\cdot;B)}_{\ell_1(\bbN)}
        \leq \frac{C(n)e^{0.01(n+1)T_l}}{H^n}.
    \end{align*}
    
    Therefore, for the attention heads $h$ ($h\in[H]$) in each layer $l$ ($1\leq l\leq K)$, we can choose:
    
    \begin{gather*}
        p^{(l+1,h)}=\beta_{l,h},\quad \bW_O^{(l+1)}=\bI ,\quad \bW_V^{(l+1,h)}=\alpha_{l,h}\bdelta_{(l+1,1)}^{(d\times d)}\in\bbR^{D\times D},
        \\
        \bW_Q^{(l+1,h)}=\sqrt{\beta_{l,h}}\bdelta_{(D-M+l,1)}^{(1\times1)}\in\bbR^{D\times (D/H)},
        \\
        \bW_K^{(l+1,h)}=\sqrt{\beta_{l,h}}\bdelta_{(D,1)}^{(1\times1)}\in\bbR^{D\times (D/H)},
    \end{gather*}
    where $\bdelta_{(p_1,p_2)}^{(r\times r)}$ means that: it equals to $\bI_{r\times r}$ for the $(p_1,p_2)$-th $r\times r$ blocks, and $\bzero_{r\times r}$
    for the other $r\times r$ blocks.

    \end{itemize}

    \item {\bf Case $\type=\log$.}

    \begin{itemize}
    
    \item {\bf FFN layers.}

    We use $l$-th ($2\leq l\leq K$) FFN layer to express $l$-th memory $t_l$ exactly.
    
    By Lemma~\ref{lemma: barron two-layer NN}, there exists a two-layer neural network with $m$ neurons defined on $\bbR^{ld}$
    
    $$f_{(l)}^{\rm 2NN}(\bx)=\sum\limits_{k=1}^{m} a_k^{(l)}\sigma({\bb_k^{(l)}}\ ^\top\bx+c_k^{(l)})$$  
    
    such that
    
    \begin{align*}
        \norm{\log g_{l}-f_{(l)}^{\rm 2NN}}_{L^\infty}\leq\tilde{\cO}\bracket{\frac{\norm{\log g_l}_{\cB}}{\sqrt{m}}}.
    \end{align*}

    For $l$-th FFN layer, we only need to arrange the parameters $a_{k}^{(l)}$, $\bb_{k}^{(l)}$, and $c_{k}^{(l)}$ $(k\in [m],2\leq l\leq M)$. 
    
    Denote $\bar{\bb}_k^{(l)}=({\bb_k^{(l)}}^\top,\bzero^\top)^\top\in\bbR^D$ for $k\in [m],2\leq l\leq K-1$. 
    We consider the following $l$-th layer 2NN with $m$ neurons defined on $\bbR^D$:
    
    \begin{align*}
        {\bf FFN}^{(l)}(\bx)
        =&\sum_{k=1}^m \be_{D-M+l-1} a_k^{(r)} \sigma\left({\bar{\bb}_k^{(r)}}\ ^\top\bx+c_k^{(r)}\right).
    \end{align*}

    It is easy to verify
    \begin{align*}
        {\bf FFN}^{(l)}(\bx)
        =\left(\bzero^\top,f_{(l)}^{\rm 2NN}\left(\bQ^{(l)}\bx\right),\bzero_{D-M+l-1}^\top\right)^\top\in\bbR^D,\quad \forall\bx\in\bbR^D.
    \end{align*}

    Notice that if the width $m$ satisfies
    
    $$
    \tilde{\cO}\bracket{\frac{\norm{\log g_l}_{\cB}}{\sqrt{m}}}<\frac{1}{8T_l},
    $$
    
    and the input $\bx_{t}^{(l-1/2)}$ satisfies
    
    \begin{align*}
        \norm{\log g_l}_\Lip\cdot\norm{(\bx_t^\top,\cdots,\bx_{t-t_{l-1}}^\top)^\top-\bQ^{(l)}\bx_{t}^{(l-1/2)}}_2\leq\frac{1}{8T_l},
    \end{align*}
    
    the following holds:
    
    \begin{align*}
        &\left|\log g_l(\bx_t,\cdots,\bx_{t-t_{l-1}})-f_{(l)}^{\rm 2NN}(\bQ^{(l)}\bx_{t}^{(l-1/2)})\right|
        \\\leq&
        \left|\log g_l(\bx_t,\cdots,\bx_{t-t_{l-1}})-\log g_l(\bQ^{(l)}\bx_{t}^{(l-1/2)})\right|
        \\&\quad+\left|g_l(\bQ^{(l)}\bx_{t}^{(l-1/2)})-f_{(l)}^{\rm 2NN}(\bQ^{(l)}\bx_{t}^{(l-1/2)})\right|
        \\\leq&
        \norm{\log g_l}_\Lip\norm{(\bx_t^\top,\cdots,\bx_{t-t_{l-1}}^\top)^\top-\bQ^{(l)}\bx_{t}^{(l-1/2)}}_2+\norm{\log g_l-f_{(l)}^{\rm 2NN}}_{L^\infty}
        \\<&\frac{1}{8T_l}+\frac{1}{8T_l}=\frac{1}{4T_l},
    \end{align*}
     
    which ensures
    
    \begin{align*}
        &\left|\exp\left(f_{(l)}^{\rm 2NN}(\bQ^{(l)}\bx_{t}^{(l-1/2)})\right)- g_l(\bx_t,\cdots,\bx_{t-t_{l-1}})\right|
        \\=&\left|\exp\left(f_{(l)}^{\rm 2NN}(\bQ^{(l)}\bx_{t}^{(l-1/2)})\right)-\exp\left(\log\left(g_l(\bx_t,\cdots,\bx_{t-t_{l-1}})\right)\right)\right|
        \\\leq&
        \exp\left(\max\left\{f_{(l)}^{\rm 2NN}(\bQ^{(l)}\bx_{t}^{(l-1/2)}),\log\left(g_l(\bx_t,\cdots,\bx_{t-t_{l-1}})\right)\right\}\right)
        \\&\quad\cdot
        \left|f_{(l)}^{\rm 2NN}(\bQ^{(l)}\bx_{t}^{(l-1/2)})-\log\left(g_l(\bx_t,\cdots,\bx_{t-t_{l-1}})\right)\right|
        \\\leq&
        \exp\left(\log\left(g_l(\bx_t,\cdots,\bx_{t-t_{l-1}})\right)+\frac{1}{8}\right)\frac{1}{4T_r}
        \\\leq&
        e^{1/8}\cdot T_r \cdot \frac{1}{4T_r}<\frac{1}{2}.
    \end{align*}

    Noticing $t_l=g_l(\bx_t,\cdots,\bx_{t-t_{l-1}})\in\bbN_+$, we have
    
    \begin{align*}
        \widetilde{f_{(l)}^{\rm 2NN}}(\bQ_{l}\bx_{t}^{(l-1/2)})
        =\log\left(\exp\left[{\rm FFN}^{(l)}(\bx_t,\cdots,\bx_{t-t_{l-1}})\right]\right)= \log t_l.
    \end{align*}

    Thus, it holds that:
    \begin{align*}
        &\widetilde{\bf FFN}^{(l)}(\bx_t^{(l-1/2)})=(\bzero^\top,\log t_l,\bzero_{D-M+l-1}^\top)^\top.
    \end{align*}

    \item {\bf Attn layers.}

    By Lemma~\ref{lemma: poly approx, adap}, for any rate $n\in\bbN_+$, there exists a constant $C(n)$ and $K$ functions
    
    \begin{align*}
        \phi_{l}^{\poly}(t;B)
        =&\sum\limits_{h=1} \alpha_{l,h} (t/B)^{-\beta_{l,h}}
        \\=&\sum\limits_{h=1}^H \alpha_{l,h} \exp\Big(-\beta_{l,h} \log(t/B)\Big),\quad 1\leq l\leq K
    \end{align*} 
    
    such that $\beta_{l,h}>0$ and
    
    \begin{align*}
        \sup_{1\leq B\leq T_l}\norm{\bbI\{\cdot=B\}-\phi_{l}^{\poly}(\cdot;B)}_{\ell_1(\bbN_+)}
        \leq{\frac{C(n)T_l^{1.01(n+1)}}{H^n}},\  1\leq l\leq K.
    \end{align*}

    Moreover, Noticing that $1\leq g_l(\cdot)\leq T_l$ holds for any $\bX=(\bx_t)_{t\in\bbZ}\in\cX$ and $1\leq l\leq K$, the following holds:
    \begin{align*}
        &\sup_{\bX}\norm{\bbI\{\cdot=t_l\}-\phi_{l}^{\poly}(\cdot;t_l)}_{\ell_1(\bbN_+)}
        \\\leq
        &\sup_{1\leq B\leq T_l}\norm{\bbI\{\cdot=B\}-\phi_{l}^{\poly}(\cdot;B)}_{\ell_1(\bbN_+)}
        \leq{\frac{C(n)T_l^{1.01(n+1)}}{H^n}}.
    \end{align*}
    
    Therefore, for the attention heads $h$ ($h\in[H]$) in each layer $l$ ($1\leq l\leq K)$, we can choose:
    
    \begin{gather*}
        p^{(l+1,h)}=\beta_{l,h},\quad  \bW_O^{(l+1)}=\bI,\quad \bW_V^{(l+1,h)}=\alpha_{l,h}\bdelta_{(l+1,1)}^{(d\times d)}\in\bbR^{D\times D},
        \\
        \bW_Q^{(l+1,h)}=\sqrt{\beta_{l,h}}\bdelta_{(D-M+l,1)}^{(1\times1)}\in\bbR^{D\times (D/H)},
        \\
        \bW_K^{(l+1,h)}=\sqrt{\beta_{l,h}}\bdelta_{(D,1)}^{(1\times1)}\in\bbR^{D\times (D/H)},
    \end{gather*}
    where $\bdelta_{(p_1,p_2)}^{(r\times r)}$ means that: it equals to $\bI_{r\times r}$ for the $(p_1,p_2)$-th $r\times r$ blocks, and $\bzero_{r\times r}$
    for the other $r\times r$ blocks.

    \end{itemize}

\end{itemize}

Similar to the estimate in Step II and Step III in the proof of Theorem~\ref{thm: adap memory, warmup (complete)},
it is easy to prove the following estimates by induction.

If the width satisfies
\begin{align*}
    m \geq \begin{cases}
        \tilde{\Omega}\left(\max\limits_{l\in[K] }\norm{g_l}_{\cB}^2\right)=\tilde{\Omega}\left(\norm{g_K}_{\cB}^2\right),\ &\type=\lin
        \\
        \tilde{\Omega}\left(\max\limits_{l\in[K]}\norm{\log g_l}_{\cB}^2T_l^2\right)=\tilde{\Omega}\left(\norm{\log g_K}_{\cB}^2T_K^2\right),\ &\type=\log
    \end{cases},
\end{align*}

and the head number satisfies
\begin{align*}
    \begin{cases}
        \frac{C(n)}{H^n}\sqrt{\sum_{l=1}^{K-1} e^{0.02(n+1)T_l}}\leq\frac{1}{4\max\limits_{l\in[K-1]}\norm{g_{l}}_{\Lip}},\ &\type=\lin
        \\
        \frac{C(n)}{H^n}\sqrt{\sum_{l=1}^{K-1} T_l^{2.02(n+1)}}\leq\frac{1}{4\max\limits_{l\in[K-1]}\norm{\log g_{l}}_{\Lip}},\ &\type=\log
    \end{cases},
\end{align*}

then the following estimates hold:
\begin{itemize}

    \item {\bf (E1)} for any $2\leq l\leq K$,
    $$\bx_t^{(l)}=\bx_{t}^{(l-1/2)}+(\bzero^\top,t_{l},\bzero_{M-l+1}^\top)^\top;$$

    \item {\bf (E2)} for any $1\leq l\leq K$, $$\bP_\perp^{(l)}\bx_t^{(l+1/2)} = \bP_\perp^{(l)} \bx_t^{(l)};$$
    
    \item {\bf (E3)} for any $1\leq l\leq K$, 
    \begin{gather*}
        \left\|\bP^{(l)}\left(\bx_t^{(l+1/2)}-\left(\bx_t^{(l)}+(\bzero_{ld}^\top,\bx_{t-t_{l}}^\top,\bzero^\top)^\top\right)\right)\right\|_2\leq\begin{cases}
        {\frac{C(n)e^{0.01(n+1)T_l}}{H^n}}, &\type=\lin
        \\
        {\frac{C(n)T_l^{1.01(n+1)}}{H^n}}, &\type=\log
        \end{cases};
        \\
        \left\|\bQ^{(l)}\left(\bx_t^{(l+1/2)}-\bx_t^{(0)}\right)\right\|_2\leq\begin{cases}
        {\frac{C(n)}{H^n}\sqrt{\sum_{j=1}^{l}e^{0.02(n+1)T_l}}}, &\type=\lin
        \\
        {\frac{C(n)}{H^n}\sqrt{\sum_{j=1}^l T_j^{2.02(n+1)}}}, &\type=\log
        \end{cases}.
    \end{gather*}
\end{itemize}

\underline{\bf The Remained Steps.}

\begin{itemize}

\item {\bf Regime $M\geq K+1$.}

\underline{\bf Step $2K+2$ and $2K+3$.}

In the similar way as {\bf Step $3$ $\sim$ Step $2K-1$} in this proof and {\bf Step II, Step III} in the proof of Theorem~\ref{thm: adap memory, warmup (complete)}, it is easy to verify the following estimate.

If the width satisifes
\begin{align*}
    m \geq \begin{cases}
        \tilde{\Omega}\left(\sum_{l=K+1}^M\norm{g_l}_{\cB}^2\right),\ &\type=\lin
        \\
        \tilde{\Omega}\left(\sum_{l=K+1}^M\norm{\log g_l}_{\cB}^2T_l^2\right),\ &\type=\log
    \end{cases},
\end{align*}

and the head number satisfies
\begin{align*}
    \begin{cases}
        \frac{C(n)}{H^n}\sqrt{\sum_{l=1}^{K} e^{0.02(n+1)T_l}}\leq\frac{1}{4\max\limits_{l\in[K]}\norm{g_{l}}_{\Lip}},\ &\type=\lin
        \\
        \frac{C(n)}{H^n}\sqrt{\sum_{l=1}^{K} T_l^{2.02(n+1)}}\leq\frac{1}{4\max\limits_{l\in[K]}\norm{\log g_{l}}_{\Lip}},\ &\type=\log
    \end{cases},
\end{align*}

then the following estimates hold:

$$\bx_t^{(K+1)}=\bx_{t}^{(K+1/2)}+(\bzero^\top,t_{K+1},\cdots,t_M,0)^\top;$$

$$\bR_\perp\bx_t^{(K+3/2)} = \bR_\perp \bx_t^{(K+1)};$$

\begin{align*}
    &\left\|\bR\left(\bx_t^{(K+3/2)}-\left(\bx_t^{(K+1)}+(\bzero_{(K+1)D}^\top,\bx_{t-t_{K+1}}^\top,\cdots,\bx_{t-t_{M}}^\top,\bzero^\top)^\top\right)\right)\right\|_2
    \\\leq&
    \begin{cases}
    C(n)\left(\frac{\sum_{l=K+1}^M e^{0.01T_l}}{H^{\frac{n}{n+1}}}\right)^{n+1},\ \type=\lin
    \\
    {C(n)\Bigg(\frac{\sum_{l=K+1}^M T_l^{1.01}}{H^{\frac{n}{n+1}}}\Bigg)^{n+1}},\ \type=\log   
    \end{cases},
\end{align*}

\begin{align*}
    &\left\|\bQ^{(M)}\left(\bx_t^{(K+3/2)}-\bx_t^{(0)}\right)\right\|_2
    \\\leq&
    \begin{cases}
    \frac{C(n)}{H^n}\sqrt{\sum_{l=1}^{K}e^{0.02(n+1)T_l}+\left(\sum_{l=K+1}^M e^{0.01T_l}\right)^{2n+2}},\ \type=\lin
    \\
    {\frac{C(n)}{H^n}\sqrt{\sum_{l=1}^{K}T_l^{2.02(n+1)}+\left(\sum_{l=K+1}^M T_l^{1.01}\right)^{2n+2}}},\ \type=\log   
    \end{cases}.
\end{align*}

\underline{\bf Step $2K+4$ and the final bound.}

In the same way as Step IV in the proof of Theorem~\ref{thm: adap memory, warmup (complete)}, there exists ${\bf FFN}$, such that the following estimate holds for any $t$ and $\bX$:
\begin{align*}
    \norm{{\bH}_t(\bX)-\bx_t^{(K+2)}}
    \leq&\norm{f}_{\Lip}\cdot \left\|\bQ^{(M)}\left(\bx_t^{(K+3/2)}-\bx_t^{(0)}\right)\right\|_2 +\cE_{\FFN}
    \\=&\norm{f}_{\Lip}\cdot \cE_\Attn(\type)+\cE_{\FFN},
\end{align*}
where 
\begin{gather*}
    \cE_{\FFN}=\cO\left(\frac{\norm{f}_\cB}{\sqrt{m}}\right),
\end{gather*}
\begin{align*}
    &\cE_\Attn(\type)
    =\left\|\bQ^{(M)}\left(\bx_t^{(K+3/2)}-\bx_t^{(0)}\right)\right\|_2
    \\=&\begin{cases}
    \frac{C(n)}{H^n}\sqrt{\sum_{l=1}^{K}e^{0.02(n+1)T_l}+\left(\sum_{l=K+1}^M e^{0.01T_l}\right)^{2n+2}},\ \type=\lin
    \\
    {\frac{C(n)}{H^n}\sqrt{\sum_{l=1}^{K}T_l^{2.02(n+1)}+\left(\sum_{l=K+1}^M T_l^{1.01}\right)^{2n+2}}},\ \type=\log   
    \end{cases}.
\end{align*}

Recalling our analysis, we need the head number satisfies

\begin{align*}
    \begin{cases}
        \frac{C(n)}{H^n}\sqrt{\sum_{l=1}^{K} e^{0.02(n+1)T_l}}\leq\frac{1}{4\max\limits_{l\in[K]}\norm{g_{l}}_{\Lip}},\ &\type=\lin
        \\
        \frac{C(n)}{H^n}\sqrt{\sum_{l=1}^{K} T_l^{2.02(n+1)}}\leq\frac{1}{4\max\limits_{l\in[K]}\norm{\log g_{l}}_{\Lip}},\ &\type=\log
    \end{cases}.
\end{align*}

Due to
\begin{align*}
    \begin{cases}
        \frac{C(n)}{H^n}\sqrt{\sum_{l=1}^{K} e^{0.02(n+1)T_l}}\leq\cE_\Attn(\type),\ &\type=\lin
        \\
        \frac{C(n)}{H^n}\sqrt{\sum_{l=1}^{K} T_l^{2.02(n+1)}}\leq\cE_\Attn(\type),\ &\type=\log
    \end{cases},
\end{align*}

when we is large enough, this condition holds naturally and do not affect the approximation rate.

Moreover, we need the following condition on the width:

\begin{align*}
    m \geq \begin{cases}
        \tilde{\Omega}\left(\max\limits_{l\in[K] }\lor\sum_{l=K+1}^M\norm{g_l}_{\cB}^2\right),\ &\type=\lin
        \\
        \tilde{\Omega}\left(\max\limits_{l\in[K] }\lor\sum_{l=K+1}^M\norm{\log g_l}_{\cB}^2T_l^2\right),\ &\type=\log
    \end{cases},
\end{align*}

\item {\bf Regime $M=K$.}

\underline{\bf Step $2K+2$ and the final bound.}

In the same way as Step IV in the proof of Theorem~\ref{thm: adap memory, warmup (complete)}, there exists ${\bf FFN}$, such that the following estimate holds for any $t$ and $\bX$:

\begin{align*}
    \norm{\mathbf{H}_t(\bX)-\bx_t^{(K+1)}}
    \leq&\norm{f}_{\Lip}\cdot \left\|\bQ^{(M)}\left(\bx_t^{(K+1/2)}-\bx_t^{(0)}\right)\right\|_2 +\cE_{\FFN}
    \\=&\norm{f}_{\Lip}\cdot\cE_\Attn(\type)+\cE_{\FFN},
\end{align*}

where 
\begin{gather*}
    \cE_{\FFN}=\cO\left(\frac{\norm{f}_\cB}{\sqrt{m}}\right),
\end{gather*}
\begin{align*}
    &\cE_\Attn(\type)
    =\left\|\bQ_{M}\left(\bx_t^{(K+1/2)}-\bx_t^{(0)}\right)\right\|_2
    \\=&\begin{cases}
    \frac{C(n)}{H^n}\sqrt{\sum_{l=1}^{K}e^{0.02(n+1)T_l}},\ \type=\lin
    \\
    {\frac{C(n)}{H^n}\sqrt{\sum_{l=1}^{K}T_l^{2.02(n+1)}}},\ \type=\log   
    \end{cases}.
\end{align*}

Recalling our analysis, we need the head number satisfies

\begin{align*}
    \begin{cases}
        \frac{C(n)}{H^n}\sqrt{\sum_{l=1}^{K-1} e^{0.02(n+1)T_l}}\leq\frac{1}{4\max\limits_{l\in[K-1]}\norm{g_{l}}_{\Lip}},\ &\type=\lin
        \\
        \frac{C(n)}{H^n}\sqrt{\sum_{l=1}^{K-1} T_l^{2.02(n+1)}}\leq\frac{1}{4\max\limits_{l\in[K-1]}\norm{\log g_{l}}_{\Lip}},\ &\type=\log
    \end{cases}.
\end{align*}

Due to
\begin{align*}
    \begin{cases}
        \frac{C(n)}{H^n}\sqrt{\sum_{l=1}^{K-1} e^{0.02(n+1)T_l}}\leq\cE_\Attn(\type),\ &\type=\lin
        \\
        \frac{C(n)}{H^n}\sqrt{\sum_{l=1}^{K-1} T_l^{2.02(n+1)}}\leq\cE_\Attn(\type),\ &\type=\log
    \end{cases},
\end{align*}

when $H$ is large enough, this condition holds naturally and do not affect the approximation rate.

Moreover, we need the following condition on the width:

\begin{align*}
    m \geq \begin{cases}
        \tilde{\Omega}\left(\max\limits_{l\in[K] }\norm{g_l}_{\cB}^2\right),\ &\type=\lin
        \\
        \tilde{\Omega}\left(\max\limits_{l\in[K]}\norm{\log g_l}_{\cB}^2T_l^2\right),\ &\type=\log
    \end{cases},
\end{align*}

\end{itemize}

Combining these two regimes, we complete our proof.

\end{proof}

\subsection{Proof of Proposition~\ref{prop: depth instead of width}}
\label{appendix: subsec: general: props}

\begin{proof}[Proof of Proposition~\ref{prop: depth instead of width}]\ \\
This proposition is a direct corollary of Theorem~\ref{thm: adap memory, general}.
It can be seen as a special case of $M=K$ in Theorem~\ref{thm: adap memory, general}.

Therefore, under the same conditions, there exists a $K+1$-layer Transformer ${\bf TF}\in\cT\cF_{(K+1,H,m)}^{{\rm NF,\text{\rm\texttt{type}}}}$~\eqref{hypo class: L-layer NF TF} and a constant $C(n)$ such that:
if the width satisfies
\begin{align*}
    m\geq\begin{cases}
    \tilde{\Omega}\Big(\max\limits_{i\in[K]}\norm{g_i}_{\cB}^2\Big),\ &\type=\lin,
        \\
        \tilde{\Omega}\Big(\max\limits_{i\in[K]}\norm{\log g_i}_{\cB}^2T_i^2\Big),\ &\type=\log
    \end{cases},
\end{align*}
then the following approximation rate holds:
\begin{align*}
    \trinorm{\mathbf{H}-{\bf TF}}
    \leq
    \cE_\FFN+\norm{f}_{\Lip}\cE_\Attn(\type),
\end{align*}
where $\cE_\FFN=\tilde{\cO}\left(\frac{\norm{f}_{\cB}}{\sqrt{m}}\right)$ and
\begin{align*}
    \cE_\Attn(\type)&=\begin{cases}
    \cO\left(\frac{C(n)}{H^n}\sqrt{\sum_{l=1}^{K}e^{0.02(n+1)T_l}}\right),\ \type=\lin
    \\\cO\left(\frac{C(n)}{H^n}\sqrt{\sum_{l=1}^{K}T_l^{2.02(n+1)}}\right),\ \type=\log
    \end{cases}.
\end{align*}

\end{proof}

{\bf Comparison between Proposition~\ref{prop: depth instead of width} and Theorem~\ref{thm: adap memory, warmup}.}

We compare $2$-layer Transformer and $M+1$-layer Transformer regarding the requirement of the number of heads and width. 

\begin{itemize}
    \item {\em The required width of FFN layers.}

    \begin{itemize}
        \item 
        For $2$-layer Transformer, the required width of FFN layers $m_{\text{need}}^{(2)}$ is proportionally linked to the {\bf sum} of all the memory functions’ complexity:
    \begin{align*}
        m_{\text{need}}^{(2)}=\begin{cases}
        \tilde{\Omega}\Big(\sum\limits_{i\in[K]}\norm{g_i}_{\cB}^2\Big),\ &\type=\lin,
        \\
        \tilde{\Omega}\Big(\sum\limits_{i\in[K]}\norm{\log g_i}_{\cB}^2T_i^2\Big),\ &\type=\log
    \end{cases}.
    \end{align*}

        \item 
        For $M+1$-layer Transformer, the required width of FFN layers $m_{\text{need}}^{(M+1)}$ correlates with the {\bf maximum} complexity of the memory functions:
        \begin{align*}
            m_{\text{need}}^{(M+1)}=\begin{cases}
            \tilde{\Omega}\Big(\max\limits_{i\in[K]}\norm{g_i}_{\cB}^2\Big),\ &\type=\lin,
            \\
            \tilde{\Omega}\Big(\max\limits_{i\in[K]}\norm{\log g_i}_{\cB}^2T_i^2\Big),\ &\type=\log
        \end{cases}.
        \end{align*}
        
    \end{itemize}

    It is easy to see that:
    \begin{gather*}
        \frac{m_{\text{need}}^{(M+1)}}{m_{\text{need}}^{(2)}}=\frac{\max\{a_1,\cdots,a_M\}}{\sum_{k=1}^M a_k},
        \\
        \max\{a_1,\cdots,a_M\}\leq\sum_{k=1}^M a_k
    \end{gather*}

    \item {\em The required number of Attn heads.}
    To achieve the same $\cE_{\Attn}(\type)=\epsilon$,

    \begin{itemize}
        \item for $2$-layer Transformer, the required number of Attn heads $H_{\text{need}}^{(2)}$ satisfies:
        \begin{align*}
            \epsilon=\begin{cases}
            \cO\left(\frac{C(n)}{\bracket{H_{\text{need}}^{(2)}}^n}\sqrt{\sum_{l=1}^{K}e^{0.02(n+1)T_l}}\right),\ \type=\lin
            \\\cO\left(\frac{C(n)}{\bracket{H_{\text{need}}^{(2)}}^n}\sqrt{\sum_{l=1}^{K}T_l^{2.02(n+1)}}\right),\ \type=\log
            \end{cases}.
        \end{align*}
    
        \item for $M+1$-layer Transformer, the required number of Attn heads $H_{\text{need}}^{(M+1)}$ satisfies:
        \begin{align*}
            \epsilon=\begin{cases}
            \cO\left(\frac{C(n)}{\bracket{H_{\text{need}}^{(M+1)}}^n}\left(\sum_{i=1}^M e^{0.01T_i}\right)^{n+1}\right),\ &\type=\lin
            \\\cO\left(\frac{C(n)}{\bracket{H_{\text{need}}^{(M+1)}}^n}\left(\sum_{i=1}^M T_i^{1.01}\right)^{n+1}\right),\ &\type=\log
            \end{cases}.
        \end{align*}
    \end{itemize}

    It is easy to see that:
    \begin{gather*}
        \left(\frac{H_{\text{need}}^{(M+1)}}{H_{\text{need}}^{(2)}}\right)^{2n}=\frac{b_1^2+\cdots+b_M^2}{(b_1+\cdots b_M)^2},
        \\
        b_1^2+\cdots+b_M^2\leq(b_1+\cdots b_M)^2.
    \end{gather*}
    
\end{itemize}

This finding suggests that increased depth can significantly reduce the demands on the number of heads and the width. The underlying reason is that deep networks can distribute memories across different layers for processing, with each layer focusing on approximating only a single memory function.

\newpage

\section{Proof of Section~\ref{section: essentially sparse}}
\label{appendix: sec: ess}

\subsection{Proof of Theorem~\ref{thm: ess sparse memory}}

In this subsection, we give the detailed proofs of the warm-up case of (fixed) essentially sparse memories as follows:

\begin{equation*}
y_t=f\left(\bracket{\bX*\rho_1}(t),\cdots,\bracket{\bX*\rho_M}(t)\right),
\end{equation*}

where $\rho_1(\cdot),\cdots,\rho_{M}(\cdot)\in\ell^1(\bbN)$ serve as memory kernels, and $(\bX*\rho_k)(t)=\sum_{s=0}^{+\infty}\bx_{t-s}\rho_k(s)$ denotes the convolution of the inputs with kernel $\rho_k$.

\begin{theorem}[Restatement of Theorem~\ref{thm: ess sparse memory}]\label{thm: ess sparse memory (complete)}\ \\
(A) Consider $\cH^{\rm Ess}$~\eqref{target fun class: essential memory} with exponentially decayed memory kernels, i.e., there exists $\beta>0$ such that $\rho_1(t),\cdots,\rho_M(t)=\cO(e^{-\beta t})$. Then for any target $\mathbf{H}\in\cH^{\rm Ess}$, rate $n\in[\lfloor99\beta\rfloor]$, and $H,m\in\bbN_+$, there exists a $1$-layer DP-free Transformer ${\bf TF}\in\cT\cF_{(1,H,m)}^{{\rm DPF},\text{\rm\texttt{exp}}}$~\eqref{hypo class: 1-layer DPF TF} and a constant $C(n)$ such that 
$$ \trinorm{\mathbf{H}-{\bf TF}}\leq
    \cE_{\FFN}+\norm{f}_{\Lip}\cE_{\Attn}(\type);$$

(B) Consider $\cH^{\rm Ess}$~\eqref{target fun class: essential memory} with polynomially decayed memory kernels, i.e., there exists $\beta>1$ such that $\rho_1(t),\cdots,\rho_M(t)=\cO(t^{-\beta })$. Then for any target $\mathbf{H}\in\cH^{\rm Ess}$, rate $n\in[\lfloor0.99\beta\rfloor-1]$, and $H,m\in\bbN_+$, there exists a $1$-layer DP-free Transformer ${\bf TF}\in\cT\cF_{(1,H,m)}^{{\rm DPF},\text{\rm\texttt{poly}}}$~\eqref{hypo class: 1-layer DPF TF} and a constant $C(n)$ such that 

$$ \trinorm{\mathbf{H}-{\bf TF}}\leq
\cE_{\FFN}
+\norm{f}_{\Lip}\cE_{\Attn}(\type);$$

where $\cE_{\FFN}=\tilde{\cO}\left(\frac{\norm{f}_{\cB}}{\sqrt{m}}\right)$ and 
$$\cE_{\Attn}(\type)=\cO\left(\frac{C(n)M^{n+1}}{H^n}\right).
$$
\end{theorem}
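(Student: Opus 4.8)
\textbf{Proof proposal for Theorem~\ref{thm: ess sparse memory (complete)}.}

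The plan is to mimic the two-step architecture already used in the proof of Theorem~\ref{thm: DPF lin: fixed memory (complete)}: the (dot-product-free) attention layer is responsible for realizing the $M$ convolutions $\bX*\rho_k$, and the final FFN layer approximates the Barron readout $f$. First I would set the embedding dimension to $D=(M+1)d$ with the trivial embedding $\bW_E=(\bI_{d\times d},\bzero)^\top$, so that $\bx_t^{(0)}=(\bx_t^\top,\bzero^\top)^\top$. As before, partition the $H$ attention heads into $M$ groups of sizes $H_1,\dots,H_M$ with $\sum_k H_k = H$, and devote the $k$-th group to writing $(\bX*\rho_k)(t)$ into the $(k{+}1)$-st block of coordinates. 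For a DP-free head with parameters $p^{(1,h)}=\beta_h$, the attention weight on position $t-s$ is proportional to $\exp(-\beta_h\phi_\type(s))$, which equals $e^{-\beta_h s}$ for $\type=\lin$ (giving exponential ``basis kernels'') and $s^{-\beta_h}$ for $\type=\log$ (giving power-law ``basis kernels''). Choosing $\bW_V^{(1,h)}$ as the appropriate $\alpha_h$-scaled block-shift matrix, the $k$-th group outputs $\sum_h \alpha_h \sum_s (\text{basis}_h)(s)\,\bx_{t-s}$ into block $k{+}1$.

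The heart of the argument is thus a one-dimensional approximation lemma: with $H_k$ exponentials (resp.\ power functions) one can $\ell^1(\bbN)$-approximate the target kernel $\rho_k$. This is where the decay hypotheses enter. For exponentially decaying $\rho_k(s)=\cO(e^{-\beta s})$ one expands $\rho_k$ against a geometric grid of rates and truncates; the quantitative bound $\norm{\rho_k-\sum_{h\le H_k}\alpha_h e^{-\beta_h\cdot}}_{\ell^1}\le \cO(C(n)/H_k^{\,n})$ should follow from the same machinery behind Lemma~\ref{lemma: exp approx, fix}, with the admissible rate $n$ capped by $\lfloor 99\beta\rfloor$ because the geometric-decay slack only buys polynomial-in-$H_k$ accuracy up to that order. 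The polynomial case is symmetric: power-law basis functions $s^{-\beta_h}$ approximate $\rho_k(s)=\cO(s^{-\beta})$ in $\ell^1(\bbN_+)$ at rate $\cO(C(n)/H_k^{\,n})$ for $n\le \lfloor 0.99\beta\rfloor-1$, this time via Lemma~\ref{lemma: poly approx, fix}-type estimates (the $-1$ absorbing the requirement $\beta_h>1$ needed for summability). Summing the per-head errors, the $k$-th block is within $\cO(C(n)/H_k^{\,n})$ of $(\bX*\rho_k)(t)$ in the $\ell^1$-weighted sense, hence within the same bound in $\ell^2$ after using $\norm{\bx_{t-s}}_2\le 1$.

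Next I would optimize the head allocation. Since every $\rho_k$ contributes an identical-looking $\cO(C(n)/H_k^n)$ term (there is no memory-length factor $e^{0.01T_k}$ or $T_k^{1.01}$ here, because the kernels are genuine $\ell^1$ functions rather than indicators of far-away positions), the symmetric choice $H_k = H/M$ is optimal, giving total attention error $\sum_k \cO(C(n)M^n/H^n)=\cO(C(n)M^{n+1}/H^n)=:\cE_\Attn$. Picking $\bW_O=\bI_D$, the output $\bx_t^{(1/2)}$ is within $\cE_\Attn$ of $((\bX*\rho_1)(t),\dots,(\bX*\rho_M)(t),\bx_t)^\top$ — and since each $\norm{\bX*\rho_k}_\infty\le\norm{\rho_k}_{\ell^1}$, this vector lies in a fixed compact box, so the FFN of Step~II operates on a bounded domain.

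Finally, the readout step is verbatim the Step~II/``final bound'' computation from Theorem~\ref{thm: DPF lin: fixed memory (complete)}: by Lemma~\ref{lemma: barron two-layer NN} there is an $m$-neuron network $\mathrm{FFN}^{(1)}$ with $\norm{\mathrm{FFN}^{(1)}-f}_{L^\infty}\le\tilde\cO(\norm{f}_\cB/\sqrt m)=:\cE_\FFN$, and the triangle inequality split $|{\bf H}_t(\bX)-\bx_t^{(1)}|\le \norm{f}_\Lip\cE_\Attn + \cE_\FFN$ closes the estimate uniformly in $t$ and $\bX$. I expect the main obstacle to be the two one-dimensional kernel-approximation lemmas — in particular pinning down the exact admissible ranges $n\in[\lfloor 99\beta\rfloor]$ and $n\in[\lfloor 0.99\beta\rfloor-1]$ and verifying that the exponential/power bases, constrained by $\beta_h>0$ (resp.\ $\beta_h>1$), still achieve the claimed $H_k^{-n}$ rate against a general $\ell^1$ kernel with only a decay-tail assumption; everything downstream (head partition, value-matrix bookkeeping, the Barron/Lipschitz split) is routine adaptation of the earlier proof.
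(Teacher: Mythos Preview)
Your proposal is correct and follows essentially the same two-step structure as the paper's proof; the only substantive point is that the ``main obstacle'' you flag—the $\ell^1$-approximation of a general decaying kernel by exponential or power basis functions—is already isolated in the paper as Lemma~\ref{lemma: exp approx, decay} and Lemma~\ref{lemma: poly approx, decay} (not the ``fix'' Lemmas~\ref{lemma: exp approx, fix}/\ref{lemma: poly approx, fix}, which handle indicators), and these are exactly what deliver the $C(n)/H_k^n$ rate with the stated caps on $n$. A cosmetic difference: the paper takes $D=Md$ and drops the residual (so $\bx_t$ is not carried, since $f$ depends only on the $M$ convolutions), whereas you take $D=(M+1)d$; either works.
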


\begin{proof}[Proof of Theorem~\ref{thm: ess sparse memory (complete)}]\ \\
The proof of this theorem is highly similar to the proof of Theorem~\ref{thm: DPF lin: fixed memory (complete)}.
The only difference is that the Attn layer needs to be used to approximate general memory kernel $\rho_k(\cdot)$ instead of simple $\bbI\{\cdot=T_k\}$. 
But for the completeness of the proof in this section, we still provide the detailed proof.

First, we choose the embedding dimension $D=Md$, and select the simple embedding $\bW_E=(\bI_{d\times d},\bzero)^\top\in\bbR^{D\times d},\bb_E=\bzero\in\bbR^D$.

For any input sequence $\bX=(\bx_t)_{t\in\bbZ}$, the token after embedding satisfies:
\begin{align*}
    \bx_t^E=\bW_E\bx_t+\bb_E=(\bx_t^\top,\bzero^{\top})^\top\in\bbR^D.
\end{align*}

Then for one-layer Dot-product-free Transformer ${\bf TF}\in\cT\cF_{(1,H,m)}^{{\rm DPF,\text{\rm\texttt{type}}}}$ without residual blocks, the output token ${\bf TF}_t(\bX)$ of $t$-th input token $\bx_t$ satisfies:
\begin{align*}
     \bx_t^{(1/2)}&=\bW_O^{(1)}\sum_{h=1}^{H}{\bf Attn}_t^{(1,h)}(\bX^{(0)}),
    \\
    \bx_t^{(1)}&={\bf FFN}^{(1)}(\bx_t^{(1/2)})
\end{align*}
where
\begin{align*}
    {\bf Attn}_t^{(1,h)}(\bX)=\bW_V^{(1,h)}\sum_{s=0}^{+\infty}\frac{\bx_{t-s}\exp\bracket{p^{(1,h)}\phi_{\type}(s)}}{\sum_{j=0}^{+\infty}\exp\bracket{p^{(1,h)}\phi_{\type}(j)}}.
\end{align*}

This proof can be summarized as the following process:
\begin{align*}
    \cdots\quad &\bx_t^E \quad \cdots
    \\
    \text{Step I. Attn layer}\ &\downarrow
    \\
    \cdots\quad \bx_t^{(1/2)}&\approx((\bX*\rho_1)(t),\cdots,(\bX*\rho_M)(t))^\top \quad \cdots
    \\
    \text{Step II. FFN layer}\ &\downarrow
    \\
    \cdots\quad \bx_t^{(1)}&\approx\boldsymbol{f}\left((\bX*\rho_1)(t),\cdots,(\bX*\rho_M)(t)\right) \quad \cdots
\end{align*}
Now we give the formal proof.

\underline{{\bf Step I.} Extract the memory locations by (Dot-product-free) Attn layer.}

We consider to use $H_k$ attention heads (from $\sum_{i=1}^{k-1} H_{i}+1$-th head to $\sum_{i=1}^{k} H_{i}$-th head) to extract it, and it satisfies to $\sum_{k=1}^M H_k = H$.

\begin{gather*}
    \bP^{(k)}:=\begin{pmatrix}
        \bzero_{d\times (k-1)d} & \bI_{d\times d} & \bzero
    \end{pmatrix}\in\bbR^{d\times D}, \quad 1\leq k \leq M.
    \\
    \bP_{\perp}^{(k)}:=\begin{pmatrix}
        \bI_{(k-1)d\times (k-1)d} & \bzero_{d\times d} & \bzero
        \\
        \bzero & \bzero_{d\times d} & \bI_{(M-k-1)d\times (M-k-1)d}
    \end{pmatrix}\in\bbR^{(M-1)d\times D},\quad 1\leq k \leq M.
\end{gather*}

Now we consider the extraction of $k$-th memory $(\bX*\rho_k)(t)$ ($1\leq k\leq M$).

\begin{itemize}
    \item {\bf Case (A). Approximating exponentially decayed memories by $\type=\lin$.} 

    Because there exists $\beta>0$ such that $\rho_k(t)=\cO(e^{-\beta t})$, by Lemma~\ref{lemma: exp approx, decay}, for any $n\in\left[\lfloor99\beta\rfloor\right]$ and  $m\in\bbN_+$, there exists an absolute constant $C(n)$ only depending on $n$ and a function
    
    \begin{align*}
        \phi_{k}^{\exp}(t)=\sum\limits_{\sum_{i=1}^{k-1}H_i+1\leq h\leq \sum_{i=1}^{k} H_i} \alpha_h e^{-\beta_h t}
    \end{align*} 
    
    such that $\beta_h>0$ and
    
    \begin{align*}
        \norm{\rho_k(\cdot)-\phi_{k}^{\exp}(\cdot)}_{\ell_1(\bbN)}
        =\sum_{s=0}^{+\infty}\left|\rho_k(s)-\phi_{k}^{\rm exp}(s)\right|\leq\frac{C(n)}{m^{n}}.
    \end{align*}

    Therefore, for these attention heads ($\sum_{i=1}^{k-1} H_{i}+1\leq h\leq\sum_{i=1}^k H_{i}$), we can choose
    
    \begin{align*}
        p^{(1,h)}=\beta_h,\quad \bW_V^{(1,h)}=\alpha_h\bracket{\sum_{j=0}^{+\infty} \exp(-\beta_h j)}\bdelta_{(k,1)}^{d\times d},
    \end{align*}
    
    where $\bdelta^{(k,1)}\in\bbR^{D\times D}$ means that: it equals to $\bI_{d\times d}$ for the $(k,1)$-th $d\times d$ blocks, and $\bzero_{d\times d}$ for the other $d\times d$ blocks.

    Then it holds that:
    
    \begin{align*}
        \sum_{h=\sum_{i=1}^{k-1} H_{i}+1}^{\sum_{i=1}^{k} H_{i}}
        {\bf Attn}_t^{(1,h)}(\bX^{(0)})
        =\sum_{h=\sum_{i=1}^{k-1} H_{i}+1}^{\sum_{i=1}^{k} H_{i}}\alpha_h \sum_{s=0}^{+\infty} e^{-\beta_h s} \begin{pmatrix}
            \bzero_{(k-1)d}\\
            \bx_{t-s}\\
            \bzero
        \end{pmatrix}\in\bbR^D,
    \end{align*}

    This implies:

     \begin{align*}
        \bP^{(k)}\sum_{h=\sum_{i=1}^{k-1} H_{i}+1}^{\sum_{i=1}^{k} H_{i}}
        {\bf Attn}_t^{(1,h)}(\bX^{(0)})
        &=\sum_{h=\sum_{i=1}^{k-1} H_{i}+1}^{\sum_{i=1}^{k} H_{i}}\alpha_h \sum_{s=0}^{+\infty} e^{-\beta_h s} \bx_{t-s},
        \\
        \bP_{\perp}^{(k)}\sum_{h=\sum_{i=1}^{k-1} H_{i}+1}^{\sum_{i=1}^{k} H_{i}}
        {\bf Attn}_t^{(1,h)}(\bX^{(0)})
        &=\bzero,
    \end{align*}

    moreover, the following estimate holds:
    
    \begin{align*}
        &\norm{\sum_{h=\sum_{i=1}^{k-1} H_{i}+1}^{\sum_{i=1}^{k} H_{i}}
        \bP^{(k)}{\bf Attn}_t^{(1,h)}(\bX^{(0)})-
            (\bX*\rho_k)(t)}_2
        \\=&\norm{\sum_{h=\sum_{i=1}^{k-1} H_{i}+1}^{\sum_{i=1}^{k} H_{i}}\alpha_h \sum_{s=0}^{+\infty}e^{-\beta_h s}\bx_{t-s}-\sum_{s=0}^{+\infty}\bx_{t-s}\rho_k(s)}_2
        \\
        =&
        \norm{\sum_{s=0}^{+\infty} \left(\sum_{h=\sum_{i=1}^{k-1} H_{i}+1}^{\sum_{i=1}^{k} H_{i}}\alpha_h  e^{-\beta_h s}-\bbI\{s=T_k\}\right)\bx_{t-s}}_2
        \\\leq&
        \sum_{s=0}^{+\infty} \left|\sum_{h=\sum_{i=1}^{k-1} H_{i}+1}^{\sum_{i=1}^{k} H_{i}}\alpha_h  e^{-\beta_h s} -\rho_k(s)\right|
        \\
        =&
        \norm{\phi_{k}^{\rm exp}(\cdot)-\rho_k(\cdot)}_{\ell_1(\bbN)}
        \leq
        \frac{C(n)}{H_k^{n}}.
    \end{align*}

    \item {\bf Case (B). Approximating polynomially decayed memories by $\type=\log$.} 

    Because there exists $\beta>0$ such that $\rho_k(t)=\cO(t^{-\beta})$, by Lemma~\ref{lemma: poly approx, decay}, for any $n\in\left[\lfloor0.99\beta\rfloor-1\right]$ and  $m\in\bbN_+$, there exists an absolute constant $C(n)$ only depending on $n$ and a function
    \begin{align*}
        \phi_{k}^{\poly}(t)=\sum\limits_{\sum_{i=1}^{k-1}H_i+1\leq h\leq \sum_{i=1}^{k} H_i} \alpha_h t^{-\beta_h}
    \end{align*} 
    such that $\beta_h>1$ and
    \begin{align*}
        \norm{\rho_k(\cdot)-\phi_{k}^{\poly}(\cdot)}_{\ell_1(\bbN_+)}
        =\sum_{s=1}^{+\infty}\left|\rho_k(s)-\phi_{k}^{\rm poly}(s)\right|
        \leq\frac{C(n)}{m^{n}}.
    \end{align*}
    
    Therefore, for these attention heads ($\sum_{i=1}^{k-1} H_{i}+1\leq h\leq\sum_{i=1}^k H_{i}$), we can choose
    \begin{align*}
        p^{(1,h)}=\beta_h,\quad \bW_V^{(1,h)}=\alpha_h\bracket{\sum_{j=1}^{+\infty}j^{-\beta_h}}\bdelta^{(k,1)},
    \end{align*}
    where $\bdelta^{(k,1)}\in\bbR^{D\times D}$ means that: it equals to $\bI_{d\times d}$ for the $(k,1)$-th $d\times d$ blocks, and $\bzero_{d\times d}$
    for the other $d\times d$ blocks.
    
    Then it holds that:
    \begin{align*}
        \sum_{h=\sum_{i=1}^{k-1} H_{i}+1}^{\sum_{i=1}^{k} H_{i}}
        {\bf Attn}_t^{(1,h)}(\bX^{(0)})
        =\sum_{h=\sum_{i=1}^{k-1} H_{i}+1}^{\sum_{i=1}^{k} H_{i}}\alpha_h \sum_{s=1}^{+\infty} s^{-\beta_h} \begin{pmatrix}
            \bzero_{(k-1)d}\\
            \bx_{t-s}\\
            \bzero
        \end{pmatrix},
    \end{align*}

    This implies:

     \begin{align*}
        \bP^{(k)}\sum_{h=\sum_{i=1}^{k-1} H_{i}+1}^{\sum_{i=1}^{k} H_{i}}
        {\bf Attn}_t^{(1,h)}(\bX^{(0)})
        &=\sum_{h=\sum_{i=1}^{k-1} H_{i}+1}^{\sum_{i=1}^{k} H_{i}}\alpha_h \sum_{s=1}^{+\infty} s^{-\beta_h} \bx_{t-s},
        \\
        \bP_{\perp}^{(k)}\sum_{h=\sum_{i=1}^{k-1} H_{i}+1}^{\sum_{i=1}^{k} H_{i}}
        {\bf Attn}_t^{(1,h)}(\bX^{(0)})
        &=\bzero,
    \end{align*}
    
    \begin{align*}
        &\norm{\sum_{h=\sum_{i=1}^{k-1} H_{i}+1}^{\sum_{i=1}^{k} H_{i}}
         \bP^{(k)}{\bf Attn}_t^{(1,h)}(\bX^{(0)})-
            (\bX*\rho_k)(t)}_2
        \\=&\norm{\sum_{h=\sum_{i=1}^{k-1} H_{i}+1}^{\sum_{i=1}^{k} H_{i}}\alpha_h \sum_{s=1}^{+\infty} s^{-\beta_h} \bx_{t-s} -\sum_{s=0}^{+\infty}\bx_{t-s}\rho_k(s)}_2
        \\=&
        \norm{\sum_{s=1}^{+\infty} \left(\sum_{h=\sum_{i=1}^{k-1} H_{i}+1}^{\sum_{i=1}^{k} H_{i}}\alpha_h s^{-\beta_h}-\rho_k(s)\right)\bx_{t-s}}_2
        \\\leq&
        \sum_{s=1}^{+\infty} \left|\sum_{h=\sum_{i=1}^{k-1} H_{i}+1}^{\sum_{i=1}^{k} H_{i}}\alpha_h  s^{-\beta_h}-\rho_k(s)\right|
        \\=&
        \norm{\phi_{H_k}^{\poly}(\cdot)-\rho_k(\cdot)}_{\ell_1(\bbN_+)}
        \leq
        \frac{C(n)}{H_k^{n}}.
    \end{align*}
    
\end{itemize}

Then we combine the estimate for all $k\in[M]$ for these two cases. By choose $\bW_{O}=\bI_{D}$, we have:

\begin{align*}
    &\norm{\bx_t^{(1/2)}-\begin{pmatrix}
        (\bX*\rho_1)(t)\\
        \vdots\\
        (\bX*\rho_M)(t)
    \end{pmatrix}}_2
    \\=&\norm{\sum_{k=1}^M\left(\sum_{h=\sum_{i=1}^{k-1} H_{i}+1}^{\sum_{i=1}^{k} H_{i}}
    {\bf Attn}_t^{(1,h)}(\bX)-\begin{pmatrix}
        \bzero_{(k-1)d}\\
        (\bX*\rho_k)(t)\\
        \bzero_{d}
    \end{pmatrix}\right)}_2
    \\\leq&
    \sum_{k=1}^M\norm{\sum_{h=\sum_{i=1}^{k-1} H_{i}+1}^{\sum_{i=1}^{k} H_{i}}
    {\bf Attn}_t^{(1,h)}(\bX)-\begin{pmatrix}
        \bzero_{(k-1)d}\\
        (\bX*\rho_k)(t)\\
        \bzero_{d}
    \end{pmatrix}}_2
    \\=&
    \sum_{k=1}^M\norm{\sum_{h=\sum_{i=1}^{k-1} H_{i}+1}^{\sum_{i=1}^{k} H_{i}}
         \bP^{(k)}{\bf Attn}_t^{(1,h)}(\bX^{(0)})-
            (\bX*\rho_k)(t)}_2
    \\\leq&
    \cE_{\Attn}:=\sum_{k=1}^M\frac{C(n)}{H_k^n},\quad\text{ for both {\bf Case (A)} and {\bf Case (B)}}.
\end{align*}

Consequently, one detail is to assign the head number $\{H_k\}_{k=1}^M$ such that the error's sum $\cE_{\Attn}(\type)$ is as small as possible. 
Here, we simply choose the same $H_k$:
\begin{align*}
    H_k=\frac{H}{M},\quad k\in[M].
\end{align*}

Thus, we obtain the bound in Step I:
\begin{align*}
    \cE_{\rm Attn}=\sum_{k=1}^M\frac{C(n)}{H_k^n}=\frac{C(n)M^{n+1}}{H^n}.
\end{align*}

Furthermore, by choosing $\cE_{\rm Attn}\leq 1$, it holds that
\begin{align*}
   \norm{\bx_t^{(1/2)}}_{\infty}
   \leq
   \norm{\bx_t^{(1/2)}-\begin{pmatrix}
        (\bX*\rho_1)(t)\\
        \vdots\\
        (\bX*\rho_M)(t)\end{pmatrix}}_{\infty}
    +\norm{\begin{pmatrix}
        (\bX*\rho_1)(t)\\
        \vdots\\
        (\bX*\rho_M)(t)
    \end{pmatrix}}_{\infty}
    \leq
    \cE_{\Attn}+1\leq2.
\end{align*}

\underline{{\bf Step II.} Approximate the readout function by FFN layer.}

In this step, we aim to approximate the function $f$ using two-layer network.
By Lemma~\ref{lemma: barron two-layer NN}, there exists a two-layer neural network with $m$ neurons defined on $\bbR^D$

$${\rm FFN}^{(1)}(\by)=\sum\limits_{k=1}^m a_k\sigma(\bb_k^\top\by+c_k)$$  

such that

\begin{align*}
    \cE_{\rm FFN}:=\norm{{\rm FFN}^{(1)}-f}_{L^\infty([-2,2]^D)}\leq\tilde{\cO}\bracket{\frac{\norm{f}_{\cB}}{\sqrt{m}}}.
\end{align*}

\underline{\bf The final bound.}

For any $t$ and $\bX\in\cX$, it holds that
\begin{align*}
    &\norm{{\bf H}_t(\bX)-\bx_t^{(1)}}
    =\left|f((\bX*\rho_1)(t),\cdots,(\bX*\rho_M)(t))-{\rm FFN}^{(1)}\bracket{\bx_t^{(1/2)}}\right|
    \\=&
    \left|f((\bX*\rho_1)(t),\cdots,(\bX*\rho_M)(t))-f\bracket{\bx_t^{(1/2)}}+f\bracket{\bx_t^{(1/2)}}-{\rm FFN}^{(1)}\bracket{\bx_t^{(1/2)}}\right|
    \\\leq&
    \left|f((\bX*\rho_1)(t),\cdots,(\bX*\rho_M)(t))-f\bracket{\bx_t^{(1/2)}}\right|+ \left|f\bracket{\bx_t^{(1/2)}}-{\rm FFN}^{(1)}\bracket{\bx_t^{(1/2)}}\right|
    \\\leq&
    \norm{f}_{\rm Lip}\norm{((\bX*\rho_1)(t)^\top,\cdots,(\bX*\rho_M)(t)^\top)^\top-\bx_t^{(1/2)}}_2
    +\norm{f-{\rm FFN}^{(1)}}_{L^{\infty}([-2,2]^D)}
    \\\leq&
    \norm{f}_{\rm Lip}\cdot\cE_{\rm Attn}+\cE_{\rm FFN},
\end{align*}
where 
\begin{align*}
    \cE_{\rm FFN}=\frac{\norm{f}_{\cB}}{\sqrt{m}};\quad 
    \cE_{\rm Attn}=\frac{C(n)M^{n+1}}{H^n},\quad\text{ for both {\bf Case (A)} and {\bf Case (B)}}.
\end{align*}

Due to the arbitrariness of $t$ and $\bX$, the proof is completed.
    
\end{proof}

\newpage

\section{Key Lemmas about Approximation}
\label{appendix: sec: lemmas}

\subsection{Approximation by the sum of exponential decay}

\begin{lemma}[Exp decay, fixed Delta function]\label{lemma: exp approx, fix}

For any $T\in\bbN_+$, $n,m\in\bbN_+$, there exists and absolute constant $C(n)$ only depending on $n$ and a $\phi_{m}^{\exp}(t)=\sum\limits_{k=1}^m \alpha_k e^{-\beta_k t}$ such that
\begin{align*}
    \norm{\bbI(\cdot=T)-\phi_{m}^{\rm exp}(\cdot)}_{\ell_1(\bbN)}
    \leq\frac{C(n)e^{0.01(n+1)T}}{m^{n}}.
\end{align*}
where $\beta_k>0$ holds for any $k\in[m]$.
\end{lemma}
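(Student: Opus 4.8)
The plan is to reduce the problem to polynomial approximation by the substitution $z=e^{-\gamma t}$ and then write down an \emph{explicit} interpolating polynomial whose roots sit on the geometric grid $\{1,q,q^2,\dots\}$. First dispose of the easy regime $m\le T$: take $\phi_m^{\exp}\equiv 0$ (all $\alpha_k=0$), so the error is $\lVert\bbI(\cdot=T)\rVert_{\ell_1(\bbN)}=1$, and $1\le C(n)e^{0.01(n+1)T}m^{-n}$ holds because $m^n\le T^n\le C(n)e^{0.01(n+1)T}$ once $C(n)\ge\sup_{T\ge1}T^n e^{-0.01(n+1)T}<\infty$. From now on assume $m\ge T+1$ and set $R:=m-T\ge1$.

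For the main construction fix $q=\tfrac12$, $\gamma=\log 2$, and $\beta_k:=k\gamma$ for $k\in[m]$. Define
$$P(z):=c\,z\!\!\prod_{\substack{0\le j\le m-1\\ j\ne T}}\!\!(z-q^{\,j}),$$
with $c$ chosen so that $P(q^{\,T})=1$ (legitimate since $q^{\,T}$ and the $q^{\,j}$ are pairwise distinct), and set $\phi_m^{\exp}(t):=P(e^{-\gamma t})=P(q^{\,t})$. Since $P$ is a degree-$m$ polynomial divisible by $z$, expanding it shows $\phi_m^{\exp}(t)=\sum_{k=1}^m\alpha_k e^{-\beta_k t}$ with $\beta_k=k\log2>0$, exactly the required form. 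By construction $\phi_m^{\exp}(T)=1$ and $\phi_m^{\exp}(s)=P(q^{\,s})=0$ for every $s\in\{0,\dots,m-1\}\setminus\{T\}$, so the $\ell_1$-error collapses to the tail $\lVert\bbI(\cdot=T)-\phi_m^{\exp}\rVert_{\ell_1(\bbN)}=\sum_{s\ge m}\lvert P(q^{\,s})\rvert$.

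Next estimate that tail. For $s\ge m>j$ one has $0<q^{\,s}<q^{\,j}$, hence $\lvert q^{\,s}-q^{\,j}\rvert<q^{\,j}$; combining this with the explicit value of $\lvert c\rvert$ obtained from $q^{\,T}\prod_{j\ne T}\lvert q^{\,T}-q^{\,j}\rvert$ (a short computation that collects powers of $q$ and the factors $\prod_{i=1}^{T}(1-q^{\,i})$ and $\prod_{i=1}^{R-1}(1-q^{\,i})$), the powers of $q$ telescope to exponent $R(R+1)/2$ and one gets
$$\sum_{s\ge m}\lvert P(q^{\,s})\rvert\ \le\ \frac{q^{\,R(R+1)/2}}{(1-q)\prod_{i=1}^{T}(1-q^{\,i})\prod_{i=1}^{R-1}(1-q^{\,i})}.$$
With $q=\tfrac12$ both products are bounded below by $c_0^2$, where $c_0:=\prod_{i\ge1}(1-2^{-i})>0$, so the error is at most $(2/c_0^2)\,2^{-R(R+1)/2}$.

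It then remains to verify $(2/c_0^2)\,2^{-R(R+1)/2}\le C(n)e^{0.01(n+1)T}m^{-n}$, i.e. $n\log m\le \mathrm{const}(n)+0.01(n+1)T+\tfrac{\log2}{2}R(R+1)$. Using $m=T+R\le T e^{R/T}$ gives $n\log m\le n\log T+nR$; then $n\log T\le 0.01(n+1)T+C'(n)$ with $C'(n):=\sup_{T\ge1}\!\big(n\log T-0.01(n+1)T\big)<\infty$, and $nR\le\tfrac{\log2}{2}R(R+1)+n^2/\log2$ by an elementary quadratic inequality; enlarging $C(n)$ to absorb the finitely many $n$-dependent constants closes the argument. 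The main obstacle is purely computational: the explicit evaluation of $\lvert c\rvert$ and checking that the powers of $q$ combine to precisely $R(R+1)/2$. Everything else is bookkeeping, and the exponential-in-$T$ slack in the target bound is exactly what makes the final constant-chasing go through uniformly over all $m$ and $T$.
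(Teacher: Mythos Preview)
Your argument is correct and takes a genuinely different route from the paper. The paper builds a smooth bump function $\Psi_T$ supported near $e^{-\alpha T}$, invokes Jackson's theorem to approximate it by a degree-$(m-1)$ polynomial $Q_m$ with error $\sim M_T(n)/m^n$, and then pulls back via $x=e^{-\alpha t}$ and a weight $e^{-\gamma t}$; the $e^{0.01(n+1)T}$ factor emerges from bounding the $n$-th derivative of the scaled bump. Your construction is instead a Lagrange-type interpolant on the geometric grid $\{q^j\}_{0\le j\le m-1}$: you write $P(z)=cz\prod_{j\ne T}(z-q^j)$, which matches $\bbI(\cdot=T)$ \emph{exactly} on $\{0,\dots,m-1\}$, so the entire $\ell_1$-error sits in the tail $s\ge m$, and the $q$-exponent telescopes to $R(R+1)/2$ (your computation checks out). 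This is more elementary---no Jackson's theorem, no derivative bounds---and in fact yields a super-exponential decay $2^{-R(R+1)/2}$ in $R=m-T$, far stronger than the stated $m^{-n}$; the exponential-in-$T$ slack is used only to absorb the polynomial factor $m^n$ in the range $R\lesssim n$. The trade-off is that the paper's bump-plus-Jackson machinery is what generalizes cleanly to the adaptive and general-kernel lemmas that follow (where the target is no longer a single delta and exact interpolation is unavailable), whereas your interpolation argument is tailored to this specific fixed-$T$ case.
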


\begin{proof}[Proof of Lemma~\ref{lemma: exp approx, fix}]\ \\
Let $\alpha,\gamma>0$ be constants, and they will take specific values at the end of the proof.

First, recall the standard bump function on $[-1,1]$:
\begin{align*}
    \Psi(x):=\begin{cases}
        \exp\bracket{-\frac{1}{1-x^2}},\ x\in(-1,1)
        \\
        0,\ \text{otherwise}
    \end{cases},
\end{align*}

and we can define the following constants for $T\geq 1$:
\begin{align*}
    \mu_T=e^{-\alpha T},\quad \sigma_T=e^{-\alpha T}-e^{-\alpha(T+1)}.
\end{align*}

Then we consider the following bump function $\Psi_T\in\cC^{\infty}([0,1])$:
\begin{align*}
    \Psi_T(x)=\begin{cases}
        V_{T}\Psi\bracket{\frac{x-\mu_T}{\sigma_T}},\ x\in(\mu_T-\sigma_T,\mu_T+\sigma_T)
        \\
        0,\ \text{otherwise}
    \end{cases},
\end{align*}
where $V_{T}$ is a scaling constant such that $\Psi_T(e^{-\alpha T})=e^{\gamma T}$. 

First, we consider the approximation of $\Psi_T$ on $[0,1]$. 

Notice that $\Psi_T\in\cC^{\infty}([0,1])$, and $\Psi_T^{(k)}(0)=0$ for any $k\in\bbN$.
For the standard bump function $\Psi$, for any $n\in\bbN_+$, there exists an absolute constant $M(n)>0$ only depending on $n$, such that $\max\limits_{0\leq k\leq 10}\sup\limits_{x\in[-1,1]}\left|\Psi^{(k)}(x)\right|\leq M(n)$.

Notice that for any $k\in\bbN$ and $x\in[0,1]$,
\begin{align*}
    \Psi_T^{(k)}(x)=\frac{V_T}{\sigma_T^k}\Psi^{(k)}\bracket{\frac{x-\mu_T}{\sigma_T}}.
\end{align*}

Therefore, the following upper bound holds:
\begin{align*}
    &M_T(n)=\max_{0\leq k\leq n}\frac{V_T}{\sigma_T^k} M(n)
    =\frac{V_T}{\sigma_T^{n}} M(n)
    \\=&\frac{e^{\gamma T}\cdot e}{\bracket{e^{-\alpha T}-e^{-\alpha(T+1)}}^{n}}M(n)
    =\frac{M(n)e}{(1-1/e)^{n}}e^{(\gamma+n\alpha)T}
    :=C(n,\alpha)e^{(\gamma+n\alpha)T}.
\end{align*}

By Lemma~\ref{lemma: jackson 1030}, for any $m\in\bbN_+$, there exists a polynomial $Q_m(x)=\sum\limits_{k=0}^{m-1}\alpha_{k}x^k$ such that
\begin{align*}
    \sup_{x\in[0,1]}\left|\Psi_T(x)-Q_m(x)\right|\leq\frac{M_T(n)}{m^{n}}\leq 
    \frac{C(n,\alpha)e^{(\gamma+n\alpha)T}}{m^n}.
\end{align*}

Now we use the transform $x=e^{-\alpha t}$ on the function $\Psi$ and consider
\begin{align*}
    \Phi_T (t):=e^{-\gamma t}\Psi_T(e^{-\alpha t}),\quad t\in[0,+\infty).
\end{align*}

It is easy to verify that $\Phi_T$ satisfies that
\begin{align*}
    \Phi_T (t)\big|_{\bbN}=\bbI(t=T).
\end{align*}

Moreover, we consider the function
\begin{align*}
    P_m(t):=e^{-\gamma t} Q_m(e^{-\alpha t}),\quad t\in[0,+\infty).
\end{align*}

Then by choosing $\alpha=\gamma=0.01$, the following error estimate holds:
\begin{align*}
   &\norm{P_m(\cdot)-\bbI(\cdot=T)}_{\ell_1(\bbN)}=\sum_{t=0}^{+\infty}|P_m(t)-\Phi_T(t)|
   \\=&\sum_{t=0}^{+\infty}e^{-\gamma t}|Q_m(e^{-\alpha t})-\Psi_T(e^{-\alpha t})|
   \leq\sum_{t=0}^{+\infty}e^{-\gamma t}\frac{M_T(n)}{m^{n}}
   \\\leq&\frac{C(n,\alpha)e^{(\gamma+n\alpha)T}}{m^n}\sum_{t=0}^{+\infty}e^{-\gamma t}
   \leq\frac{C(n)e^{0.01(n+1)T}}{m^n}\frac{1}{1-e^{-\gamma}}
   \\=&\frac{\tilde{C}(n)e^{0.01(n+1)T}}{m^n}.
\end{align*}

Finally, notice that 
$P_m(t)=e^{-\gamma t} Q_m\bracket{e^{-\alpha t}}=\sum\limits_{k=0}^{m-1}\alpha_k e^{-(0.01+0.01k)}$, so we can select $\phi_{m}^{\exp}(t):=P_m(t)$.
    
\end{proof}

\begin{lemma}[Exp decay, adaptive Delta function]\label{lemma: exp approx, adap}
For any $T\in\bbN$, $n,m\in\bbN_+$, there exists an absolute constant $C(n)$ only depending on $n$ and a $\phi_{m}^{\rm exp}(t;B)=\sum\limits_{k=1}^m \alpha_k e^{-\beta_k (t-B)}$ such that
\begin{align*}
    \max_{1\leq B\leq T}\norm{\bbI(\cdot=B)-\phi_{m}^{\rm exp}(\cdot;B)}_{\ell_1(\bbN)}
    \leq\frac{C(n)e^{0.01(n+1)T}}{m^n}.
\end{align*}
where $\beta_k>0$ holds for any $k\in[m]$.
\end{lemma}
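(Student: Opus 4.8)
The plan is to \emph{reduce} the adaptive statement to the fixed-delta result, Lemma~\ref{lemma: exp approx, fix}, by exploiting the translation structure already present in the ansatz $\phi_{m}^{\exp}(t;B)=\sum_{k=1}^m\alpha_k e^{-\beta_k(t-B)}$. The point is that $B$ enters only through the shift $t\mapsto t-B$, so a \emph{single} choice of $(\alpha_k,\beta_k)_{k\in[m]}$ produces, simultaneously for every $B$, the function $\phi_m^{\exp}(\cdot\,;B)=\psi(\cdot-B)$, where $\psi(t):=\sum_{k=1}^m\alpha_k e^{-\beta_k t}$ is one fixed sum of exponentials. Thus it suffices to build one such $\psi$ whose integer left-shifts all approximate the corresponding indicator on $\bbN$.

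First I would dispose of the trivial case $T=0$ (the index set $\{B:1\le B\le T\}$ is empty, so any $\phi$ works) and assume $T\ge 1$. I then apply Lemma~\ref{lemma: exp approx, fix} with the same $T,n,m$ to obtain coefficients $\alpha_k^{0}\in\bbR$, rates $\beta_k^{0}>0$, and $\phi^{\mathrm{fix}}(t)=\sum_{k=1}^m\alpha_k^{0}e^{-\beta_k^{0}t}$ with
$$\norm{\bbI(\cdot=T)-\phi^{\mathrm{fix}}(\cdot)}_{\ell_1(\bbN)}\le\frac{C(n)e^{0.01(n+1)T}}{m^{n}}.$$
Now I set $\alpha_k:=\alpha_k^{0}e^{-\beta_k^{0}T}$ and $\beta_k:=\beta_k^{0}>0$, so that $\psi(t)=\phi^{\mathrm{fix}}(t+T)$ and hence $\phi_m^{\exp}(t;B)=\phi^{\mathrm{fix}}(t+T-B)$; this respects the prescribed form and keeps every $\beta_k$ positive.

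For the error bound, fix an integer $B$ with $1\le B\le T$ and substitute $t=s+T-B$, which is a bijection of $\bbN$ onto $\{T-B,T-B+1,\dots\}\subseteq\bbN$ since $0\le T-B\le T-1$. Using $\bbI(s=B)=\bbI(s+T-B=T)$,
$$\norm{\bbI(\cdot=B)-\phi_m^{\exp}(\cdot;B)}_{\ell_1(\bbN)}=\sum_{t=T-B}^{\infty}\bigl|\bbI(t=T)-\phi^{\mathrm{fix}}(t)\bigr|\le\norm{\bbI(\cdot=T)-\phi^{\mathrm{fix}}(\cdot)}_{\ell_1(\bbN)}\le\frac{C(n)e^{0.01(n+1)T}}{m^{n}},$$
and taking the maximum over $B\in\{1,\dots,T\}$ concludes. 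Here $B$ ranges over positive integers, which is the only regime used, e.g. in Theorem~\ref{thm: adap memory, warmup} where $B=g_k(\bx_t)\in\bbN_+$.

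There is essentially no hard step: everything hinges on the shift identity $\phi_m^{\exp}(t;B)=\phi^{\mathrm{fix}}(t+T-B)$ and on the observation that, because $B\le T$, the $\ell_1(\bbN)$-error of the shifted function is a sub-sum of the already-controlled $\ell_1(\bbN)$-error of $\phi^{\mathrm{fix}}$, so the estimate is automatically uniform in $B$ (the worst case $B=T$ simply reproducing the full fixed-case error). The only point deserving a line of care is the integrality convention on $B$: for non-integer $B$ the shifted lattice $\bbN+(T-B)$ can fall on the "shoulder" of the bump underlying $\phi^{\mathrm{fix}}$, so the claim is, and needs only to be, about integer $B$.
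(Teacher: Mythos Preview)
Your proof is correct and follows essentially the same idea as the paper's: both reduce to the fixed case at $T$ via the shift $t\mapsto t-B+T$, exploiting $B\le T$ so that the shifted lattice stays inside $\bbN$. The only cosmetic difference is that the paper reopens the internals of Lemma~\ref{lemma: exp approx, fix} (reusing the bump $\Psi_T$ and polynomial $Q_m$, then bounding $e^{-0.01(t-B+T)}\le e^{-0.01t}$ before summing), whereas you invoke Lemma~\ref{lemma: exp approx, fix} as a black box and use a sub-sum argument; your packaging is slightly more modular but the content is the same.
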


\begin{proof}[Proof of Lemma~\ref{lemma: exp approx, adap}]\ \\
The {\bf key point} of the proof is to note that the adaptability of $B$ can be eliminated by the {\bf translation operator} $t-B$.

First, recall our proof of Lemma~\ref{lemma: exp approx, fix}.
For the same $\Psi_T(\cdot)$, for any $n,m\in\bbN_+$, there exists an absolute constant $C(n)$ only depending on $n$ and a polynomial $Q_m(x)=\sum\limits_{k=0}^{m-1}\alpha_{k}x^k$ such that
\begin{align*}
    \sup_{x\in[0,1]}\left|\Psi_T(x)-Q_m(x)\right|\leq\frac{C(n)e^{0.01(n+1)T}}{m^n}.
\end{align*}

Moreover, using the transform $x=e^{-0.01 (t-B+T)}$ $(t\geq 0)$ on the function $\Psi$ and consider
\begin{align*}
    \Phi_T (t;B):=e^{-0.01(t-B+T)}\Psi_T\left(e^{-0.01(t-B+T)}\right),\quad t\in[0,+\infty).
\end{align*}

It is easy to verify that $\Phi_T(\cdot;\cdot)$ satisfies that
\begin{align*}
    \Phi_T (t;B)\big|_{\bbN}=\bbI(t=B).
\end{align*}

And we consider the function
\begin{align*}
    P_m(t;B):=e^{-0.01(t-B+T)} Q_m\bracket{e^{-0.01(t-B+T)}},\quad t\in[0,+\infty).
\end{align*}

Then, for any $1\leq B\leq T$, the following error estimate holds:
\begin{align*}
   &\norm{P_m(\cdot;B)-\bbI(\cdot=B)}_{\ell_1(\bbN)}=\sum_{t=0}^{+\infty}|P_m(t;B)-\Phi_T(t;B)|
   \\=&
   \sum_{t=0}^{+\infty}e^{-0.01(t-B+T)}\left|Q_m\bracket{e^{-0.01(t-B+T)}}-\Psi_T\bracket{e^{-0.01(t-B+T)}}\right|
   \\\leq&\sum_{t=0}^{+\infty}e^{-0.01t}\sup_{x\in[0,1]}\left|Q_m(x)-\Psi_T(x)\right|
   \\\leq&
   \frac{C(n)e^{0.01(n+1)T}}{m^n}\sum_{t=0}^{+\infty}e^{-0.01t}
   =\frac{\tilde{C}(n)e^{0.01(n+1)T}}{m^n}.
\end{align*}

Due to the arbitrariness of $B$, the proof is completed.

\end{proof}

\begin{lemma}[Exp decay, fixed Delta function]\label{lemma: exp approx, decay}
Consider a exponentially decayed memory $\rho(\cdot)$: there exists $\beta>0$ such that $\rho(t)=\cO(e^{-\beta t})$.
Then for any $n\in\left[\lfloor99\beta\rfloor\right]$ and  $m\in\bbN_+$, there exists an absolute constant $C(n)$ only depending on $n$ and a $\phi_{m}^{\exp}(t)=\sum\limits_{k=1}^m \alpha_k e^{-\beta_k t}$ such that
\begin{align*}
    \norm{\rho(\cdot)-\phi_{m}^{\exp}(\cdot)}_{\ell_1(\bbN)}
    \leq\frac{C(n)}{m^{n}},
\end{align*}
where $\beta_k>0$ holds for any $k\in[m]$.
\end{lemma}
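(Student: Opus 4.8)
\textbf{Proof proposal for Lemma~\ref{lemma: exp approx, decay}.}
The plan is to reduce the approximation of a general exponentially decaying memory $\rho(\cdot)$ to a smooth function-approximation problem on $[0,1]$ via the same change of variables $x = e^{-\alpha t}$ used in Lemma~\ref{lemma: exp approx, fix}, and then invoke a Jackson-type polynomial approximation result (Lemma~\ref{lemma: jackson 1030}) to produce the polynomial $Q_m$ whose pullback is a sum of $m$ exponentials. The key difference from Lemma~\ref{lemma: exp approx, fix} is that the target is no longer a bump concentrated near a single point but the full decaying tail $\rho(\cdot)$, so I must control the smoothness (bounded derivatives) of the transplanted function $g(x) := $ (the function on $[0,1]$ whose value at $x = e^{-\alpha t}$ recovers $\rho(t)$ after removing a prefactor $e^{-\gamma t}$), and this is where the decay rate $\beta$ and the admissible range $n \in [\lfloor 99\beta\rfloor]$ enter.

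First I would fix the constants: write $\rho(t) = \mathcal{O}(e^{-\beta t})$, choose $\alpha = \gamma = 0.01$, and set $g(x) := x^{-\gamma/\alpha}\,\rho\!\left(-\tfrac{1}{\alpha}\log x\right) = x^{-1}\rho(-100\log x)$ for $x \in (0,1]$, with $g(0) := 0$. Since $\rho(t) = \mathcal{O}(e^{-\beta t})$ we have $|g(x)| = \mathcal{O}(x^{-1}\cdot x^{100\beta}) = \mathcal{O}(x^{100\beta - 1})$, which tends to $0$ at the origin provided $100\beta > 1$; more generally, differentiating $k$ times brings down factors that are still dominated by $x^{100\beta - 1 - k}$, so $g \in C^n([0,1])$ with all derivatives up to order $n$ bounded as long as $100\beta - 1 - n \ge 0$, i.e. $n \le 100\beta - 1$, which is implied by $n \le \lfloor 99\beta\rfloor$ (with room to spare to absorb the implicit constants and the behaviour of $\rho$ itself being merely $\mathcal O(e^{-\beta t})$ rather than exactly that). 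I would state the bound $\max_{0\le k\le n}\sup_{x\in[0,1]}|g^{(k)}(x)| \le C(n)$ as the technical core, proven by the Faà di Bruno / Leibniz expansion of $x^{-1}\rho(-100\log x)$.

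Next, apply Lemma~\ref{lemma: jackson 1030}: for any $m \in \bbN_+$ there is a degree-$(m-1)$ polynomial $Q_m(x) = \sum_{k=0}^{m-1}\alpha_k x^k$ with $\sup_{x\in[0,1]}|g(x) - Q_m(x)| \le C(n)/m^n$. Pull back via $x = e^{-\alpha t}$ and multiply by $e^{-\gamma t}$: define $\phi_m^{\exp}(t) := e^{-\gamma t} Q_m(e^{-\alpha t}) = \sum_{k=0}^{m-1}\alpha_k e^{-(\gamma + k\alpha)t} = \sum_{k=0}^{m-1}\alpha_k e^{-(0.01 + 0.01k)t}$, which is a sum of $m$ exponentials with all exponents $\beta_k = 0.01(k+1) > 0$. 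Then
\begin{align*}
    \norm{\rho(\cdot) - \phi_m^{\exp}(\cdot)}_{\ell_1(\bbN)}
    &= \sum_{t=0}^{+\infty}\left| e^{-\gamma t} g(e^{-\alpha t}) - e^{-\gamma t} Q_m(e^{-\alpha t})\right|
    \le \sum_{t=0}^{+\infty} e^{-\gamma t}\,\sup_{x\in[0,1]}|g(x) - Q_m(x)|\\
    &\le \frac{C(n)}{m^n}\sum_{t=0}^{+\infty} e^{-0.01 t}
    = \frac{C(n)}{m^n}\cdot\frac{1}{1 - e^{-0.01}} = \frac{\tilde{C}(n)}{m^n},
\end{align*}
using that $e^{-\gamma t} g(e^{-\alpha t}) = \rho(t)$ by construction.

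The main obstacle I anticipate is the uniform derivative bound on $g$, i.e. showing $g \in C^n$ with $\|g^{(k)}\|_\infty \le C(n)$ for $k \le n$. Because $g(x) = x^{-1}\rho(-100\log x)$ involves both an inverse power and a logarithmic substitution, its $k$-th derivative is a sum of terms of the shape $x^{-1-k}\,(\log x)^j\,\rho^{(i)}(-100\log x)$ (schematically, even if $\rho$ is only assumed $\mathcal{O}(e^{-\beta t})$ one argues at the level of the bound, using that the $\mathcal O(e^{-\beta t})$ hypothesis should be interpreted so that finitely many derivatives of $\rho$ also decay, or one first smooths $\rho$), and the worst term behaves like $x^{100\beta - 1 - k}$ up to logarithmic factors; the constraint $n \le \lfloor 99\beta\rfloor$ (rather than the naive $100\beta - 1$) is exactly the slack needed to kill the $(\log x)^j$ factors and the constant in the $\mathcal{O}$. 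Once that bound is in place, the rest is the routine change-of-variables and geometric-sum bookkeeping above, entirely parallel to Lemmas~\ref{lemma: exp approx, fix} and~\ref{lemma: exp approx, adap}.
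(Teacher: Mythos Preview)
Your overall strategy (change of variables $x=e^{-\alpha t}$, Jackson on $[0,1]$, pull back to a sum of $m$ exponentials, geometric sum in $t$) matches the paper, but there is a real gap at the step you yourself flag as ``the main obstacle'': the uniform $C^n$ bound for your function $g(x)=x^{-\gamma/\alpha}\rho(-\tfrac{1}{\alpha}\log x)$.

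The hypothesis of the lemma gives nothing beyond $\rho(t)=\mathcal O(e^{-\beta t})$ for $t\in\bbN$; $\rho$ is a sequence (a memory kernel in $\ell^1(\bbN)$), not a function on $[0,\infty)$, and certainly has no derivatives. Your $g$ is therefore undefined off a countable set of $x$, and your Faà di Bruno expansion, which invokes $\rho^{(i)}$, cannot get started. The parenthetical fix you propose (``the $\mathcal O(e^{-\beta t})$ hypothesis should be interpreted so that finitely many derivatives of $\rho$ also decay, or one first smooths $\rho$'') is exactly the missing idea, not a routine detail: you must \emph{construct} a smooth interpolant of the sequence $\rho|_{\bbN}$ whose $C^n$ norm is controlled solely by the decay bound $|\rho(T)|\le Ce^{-\beta T}$.

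The paper does this explicitly. It builds $\varphi(x)=\sum_{T\ge 1}\Psi_T(x)$ as a sum of disjointly supported bump functions, where $\Psi_T$ is a rescaled standard bump centered at $\mu_T=e^{-\alpha T}$ with width $\sigma_T=\tfrac12(e^{-\alpha T}-e^{-\alpha(T+1)})$ and height $e^{\gamma T}\rho(T)$. Smoothness of $\varphi$ is then automatic (each $\Psi_T\in C^\infty$), and the $k$-th derivative bound comes from $|V_T|/\sigma_T^k\lesssim e^{(\gamma+k\alpha-\beta)T}$, which is uniformly bounded in $T$ precisely when $k\le(\beta-\gamma)/\alpha$; choosing $\alpha=5\cdot10^{-3}$, $\gamma=10^{-2}\beta$ gives the range $n\le\lfloor 99\beta\rfloor$. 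After that, your Steps (Jackson on $\varphi$, pullback $P_m(t)=e^{-\gamma t}Q_m(e^{-\alpha t})$, $\ell^1$ sum) are exactly what the paper does. So the fix is to replace your direct $g$ by this bump-sum interpolant; the rest of your write-up then goes through.
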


\begin{proof}[Proof of Lemma~\ref{lemma: exp approx, decay}]\ \\
There exists $C>0$ such that $|\rho(t)|\leq Ce^{-\beta t}$.

Let $\alpha,\gamma>0$ be constants, and they will take specific values at the end of the proof.

First, recall the standard bump function on $[-1,1]$:
\begin{align*}
    \Psi(x):=\begin{cases}
        \exp\bracket{-\frac{1}{1-x^2}},\ x\in(-1,1)
        \\
        0,\ \text{otherwise}
    \end{cases},
\end{align*}

and we can define the following constants for $T\geq 1$:
\begin{align*}
    \mu_T=e^{-\alpha T},\quad \sigma_T=\frac{1}{2}\left(e^{-\alpha T}-e^{-\alpha (T+1)}\right),
\end{align*}

and we consider the following bump function $\Psi_T\in\cC^{\infty}([0,1])$:
\begin{align*}
    \Psi_T(x)=\begin{cases}
        V_{T}\Psi\bracket{\frac{x-\mu_T}{\sigma_T}},\ x\in(\mu_T-\sigma_T,\mu_T+\sigma_T)
        \\
        0,\ \text{otherwise}
    \end{cases},
\end{align*}
where $V_{T}$ is a scaling constant such that $\Psi_T(e^{-\alpha T})=e^{\gamma T}\rho(T)$. 

Consequently, we consider the sum of bump functions on $[0,1]$:
\begin{align*}
\varphi(x):=\sum_{T=1}^{+\infty}\Psi_T(x).
\end{align*}

It is easy to verify that $(\mu_{T_1}-\sigma_{T_1},\mu_{T_1}+\sigma_{T_1})\cap(\mu_{T_2}-\sigma_{T_2},\mu_{T_2}+\sigma_{T_2})=\varnothing$ for any $T_1\ne T_2$ and
\begin{align*}
    \varphi(x)=\begin{cases}
        \Psi_T(x),\ \mu_T-\sigma_T\leq x\leq\mu_T+\sigma_T
        \\
        0,\text{ otherwise}
    \end{cases}.
\end{align*}

First, we study the property of $\varphi(\cdot)$. 

We denote the absolute constants $M_k=\sup_{x}|\varphi^{(k)}(x)|$. 
Notice that for any $k\in\bbN$,
\begin{align*}
    \Psi_T^{(k)}(x)=\frac{V_T}{\sigma_T^k}\Psi^{(k)}\bracket{\frac{x-\mu_T}{\sigma_T}}.
\end{align*}

Therefore, it holds that
\begin{align*}
    &\sup_{x\in(\mu_T-\sigma_T,\mu_T+\sigma_T)}|\varphi^{(k)}(x)|=\sup_{x\in(\mu_T-\sigma_T,\mu_T+\sigma_T)}|\Psi_T^{(k)}(x)|
    \\\leq&\frac{V_T}{\sigma_T^k} M_k 
    =\frac{e^{\gamma T}\rho(T)}{\bracket{e^{-\alpha T}-e^{-\alpha(T+1)}}^k}M_k e
    \leq\frac{ C M_k e}{(1-e^{-\alpha})^k}e^{(\gamma+k\alpha-\beta)T}.
\end{align*}

Therefore, if $\beta\geq \gamma+k\alpha$, then the following uniform bounds holds:
\begin{align*}
    &\sup_{x\in(0,1]}|\varphi^{(k)}(x)|=\sup_{T\geq 1}\sup_{x\in(\mu_T-\sigma_T,\mu_T+\sigma_T)}|\varphi^{(k)}(x)|
    \\\leq&\sup_{T\geq1}\frac{ C M_k e}{(1-e^{-\alpha})^k}e^{(\gamma+k\alpha-\beta)T} 
    \leq \frac{ C M_k e}{(1-e^{-\alpha})^k}:=C(k,\alpha).
\end{align*}

Consequently, we consider the smoothness of $\Phi$ at $x=0$.

Recalling the previous results, for any $x\in(0,1]$, we have
\begin{align*}
    \frac{|\varphi^{(k)}(x)|}{x}&\leq C(k,\alpha)\frac{e^{(\gamma+k\alpha-\beta)T}}{\mu_T-\sigma_T}=\frac{2C(k,\alpha)}{1-e^{-\alpha}}e^{(\gamma+(k+1)\alpha-\beta)T}
    ,\ x\in(\mu_T-\sigma_T,\mu_T+\sigma_T);
    \\
    \frac{|\varphi^{(k)}(x)|}{x}&=0,\ \text{otherwise}
\end{align*}

Thus, by induction, it is easy to verify that for any $i<\frac{\beta-\gamma}{\alpha}$ ($i\in\bbN$),
\begin{align*}
    \varphi^{(i)}(0)=0.
\end{align*}

Therefore, for any $n<\frac{\beta-\gamma}{\alpha}$ ($n\in\bbN$), $\varphi^{(k)}(0)=0$ holds for any $0\leq k\leq n$. Moreover, there exists absolute constant $C(n,\alpha)$ such that:
\begin{align*}
    \max_{0\leq k\leq n}\sup_{x\in[0,1]}|{\varphi^{(k)}}(x)|\leq C(n,\alpha).
\end{align*}

By Lemma~\ref{lemma: jackson 1030}, for any $m\in\bbN_+$, there exists a polynomial $Q_m(x)=\sum\limits_{k=0}^{m-1}\alpha_{k}x^k$ such that
\begin{align*}
    \sup_{x\in[0,1]}\left|\varphi(x)-Q_m(x)\right|\leq\frac{C(n,\alpha)}{m^{n}}.
\end{align*}

Now we use the transform $x=e^{-\alpha t}$ $(t\geq 0)$ on the function $\varphi$ and consider
\begin{align*}
    \Phi (t):=\frac{1}{e^{\gamma t}}\varphi\left(\frac{1}{e^{\alpha t}}\right),\quad t\in[0,+\infty).
\end{align*}

It is easy to verify that $\Phi$ satisfies that
\begin{align*}
    \Phi (t)\big|_{\bbN}=\rho(t)\big|_{\bbN}.
\end{align*}

Moreover, we consider the function
\begin{align*}
    P_m(t):=\frac{1}{e^{\gamma t}}Q_m\bracket{\frac{1}{e^{\alpha t}}},\quad t\in[0,+\infty).
\end{align*}

Then for any $n<\frac{\beta-\gamma}{\alpha}$ ($n\in\bbN$), the following error estimate holds:
\begin{align*}
   &\norm{P_m(\cdot)-\rho(\cdot)}_{\ell_1(\bbN)}=\sum_{t=0}^{+\infty}|P_m(t)-\Phi(t)|
   \\=&\sum_{t=0}^{+\infty}e^{-\gamma t}\left|Q_m\bracket{e^{-\alpha t}}-\Psi_T\bracket{e^{-\alpha t}}\right|
   \leq\frac{C(n,\alpha)}{m^{n}}\sum_{t=0}^{+\infty}e^{-\gamma t}.
\end{align*}

By choosing $\alpha=5\cdot 10^{-3}$ and $\gamma=10^{-2}\beta$, it holds that $99\beta<\frac{\beta-\gamma}{2\alpha}=\frac{\beta-\gamma}{\alpha}$.

Thus, we obtain our result: for any $n\in\left[\lfloor99\beta\rfloor\right]$ ($\beta\geq1/99$), the following error estimate holds:
\begin{align*}
    \norm{P_m(\cdot)-\rho(\cdot)}_{\ell_1(\bbN)}
    \leq\frac{C(n)}{m^{n}}\sum_{t=0}^{+\infty}e^{-\gamma t}
    =\frac{C(n)}{m^{n}}\frac{1}{1-e^{-10^{-2}\beta}}
    =\frac{\tilde{C}(n)}{m^{n}}.
\end{align*}

\end{proof}

\subsection{Approximation by the sum of polynomial decay}

\begin{lemma}[Poly decay, fixed Delta function]\label{lemma: poly approx, fix}
For any $T,n,m\in\bbN_+$, there exists an absolute constant $C(n)$ only depending on $n$ and a $\phi_{m}^{\rm poly}(t)=\sum\limits_{k=1}^m \alpha_k t^{-\beta_k}$ such that
\begin{align*}
    \norm{\bbI(\cdot=T)-\phi_{m}^{\rm poly}(\cdot)}_{\ell_1(\bbN_+)}
    \leq\frac{C(n)T^{1.01(n+1)}}{m^{n}},
\end{align*}
where $\beta_k>1$ holds for any $k\in[m]$.
\end{lemma}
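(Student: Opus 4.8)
The plan is to follow the proof of Lemma~\ref{lemma: exp approx, fix} almost verbatim, replacing the exponential change of variables $x=e^{-\alpha t}$ used there with the \emph{power} change of variables $x=t^{-\alpha}$. Under this substitution a polynomial $Q_m(x)=\sum_{k=0}^{m-1}\alpha_k x^{k}$ on $[0,1]$ becomes $\sum_{k=0}^{m-1}\alpha_k t^{-\alpha k}$, and multiplying by an auxiliary factor $t^{-\gamma}$ produces $\sum_{k=0}^{m-1}\alpha_k t^{-(\gamma+\alpha k)}$, which is exactly of the desired form $\phi_m^{\rm poly}(t)=\sum_k\alpha_k t^{-\beta_k}$ with $\beta_k=\gamma+\alpha k$. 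Choosing $\gamma>1$ simultaneously guarantees the constraint $\beta_k>1$ and the $\ell_1(\bbN_+)$-summability needed at the end, since the tail $\sum_{t\ge1}t^{-\gamma}$ then converges.

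Concretely, I would first build a smooth bump $\Psi_T\in\cC^{\infty}([0,1])$ centered at $\mu_T=T^{-\alpha}$ with half-width $\sigma_T=\tfrac12\big(T^{-\alpha}-(T+1)^{-\alpha}\big)$, normalized so that $\Psi_T(T^{-\alpha})=T^{\gamma}$ (so $V_T=eT^\gamma$, using $\Psi(0)=e^{-1}$). Because $x\mapsto x^{-\alpha}$ is convex and decreasing, $(T-1)^{-\alpha}-T^{-\alpha}>T^{-\alpha}-(T+1)^{-\alpha}$, so the support of $\Psi_T$ lies strictly between the neighbouring lattice points $(T-1)^{-\alpha}$ and $(T+1)^{-\alpha}$; hence $\Psi_T(t^{-\alpha})=0$ for every integer $t\neq T$ while $\Psi_T(T^{-\alpha})=T^{\gamma}$. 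By the mean value theorem $\sigma_T\ge\tfrac{\alpha}{2}(T+1)^{-\alpha-1}$, so the scaling identity $\Psi_T^{(k)}(x)=\tfrac{V_T}{\sigma_T^{k}}\Psi^{(k)}\!\big(\tfrac{x-\mu_T}{\sigma_T}\big)$ gives a derivative bound $\max_{0\le k\le n}\sup_{[0,1]}|\Psi_T^{(k)}|\le M_T(n):=C(n,\alpha)\,T^{\gamma+n(\alpha+1)}$. Applying the Jackson-type estimate of Lemma~\ref{lemma: jackson 1030} then yields a polynomial $Q_m$ of degree $m-1$ with $\sup_{x\in[0,1]}|\Psi_T(x)-Q_m(x)|\le M_T(n)/m^{n}$.

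Next I would set $\phi_m^{\rm poly}(t):=t^{-\gamma}Q_m(t^{-\alpha})=\sum_{k=0}^{m-1}\alpha_k t^{-(\gamma+\alpha k)}$ and compare it against $\Phi_T(t):=t^{-\gamma}\Psi_T(t^{-\alpha})$, which by the previous paragraph satisfies $\Phi_T(t)\big|_{\bbN_+}=\bbI\{t=T\}$. Summing the pointwise error gives
\[
\norm{\bbI(\cdot=T)-\phi_m^{\rm poly}(\cdot)}_{\ell_1(\bbN_+)}=\sum_{t=1}^{\infty}t^{-\gamma}\big|Q_m(t^{-\alpha})-\Psi_T(t^{-\alpha})\big|\le\Big(\sum_{t=1}^{\infty}t^{-\gamma}\Big)\frac{M_T(n)}{m^{n}},
\]
and for $\gamma>1$ the series is a finite constant. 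Finally, choosing $\alpha=0.01$ and $\gamma$ just above $1$ (e.g.\ $\gamma=1.005$), one has $\gamma+n(\alpha+1)=\gamma+n+0.01n\le 1.01n+1.01=1.01(n+1)$ for every $n\ge1$, and $\beta_k=\gamma+0.01k>1$, which yields the claimed bound $C(n)\,T^{1.01(n+1)}/m^{n}$.

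The \emph{main obstacle} is the bookkeeping around the bump width: the lattice $\{t^{-\alpha}:t\in\bbN_+\}$ clusters near $0$, so $\sigma_T$ must shrink like $T^{-\alpha-1}$ to isolate $T^{-\alpha}$ from $(T\pm1)^{-\alpha}$, and this is precisely what forces the derivatives — hence $M_T(n)$ — to grow polynomially in $T$. Tuning the pair $(\alpha,\gamma)$ so that the resulting exponent $\gamma+n(\alpha+1)$ stays below $1.01(n+1)$ while keeping $\gamma>1$ strictly (needed for $\ell_1$ convergence, and the real source of the $1.01$ rather than $1$) is the one genuinely delicate point; the rest is a routine transcription of the argument in Lemma~\ref{lemma: exp approx, fix}.
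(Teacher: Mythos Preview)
Your proposal is correct and follows essentially the same route as the paper's own proof: build a bump $\Psi_T$ centered at $T^{-\alpha}$, apply the Jackson estimate (Lemma~\ref{lemma: jackson 1030}) on $[0,1]$, pull back via $x=t^{-\alpha}$, and multiply by a power $t^{-\gamma}$ with $\gamma>1$ to secure both $\beta_k>1$ and $\ell_1$-summability. The only cosmetic differences are that the paper parametrizes the auxiliary exponent as $1+\gamma$ (taking $\alpha=\gamma=0.01$, so the prefactor is $t^{-1.01}$) whereas you write $\gamma$ directly and pick $\gamma=1.005$, and the paper uses the full gap $T^{-\alpha}-(T+1)^{-\alpha}$ for $\sigma_T$ while you take half of it; both choices work and lead to the same exponent $T^{1.01(n+1)}$.
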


\begin{proof}[Proof of Lemma~\ref{lemma: poly approx, fix}]\ \\
Let $\alpha,\gamma>0$ be constants, and they will take specific values at the end of the proof

First, recall the standard bump function on $[-1,1]$:
\begin{align*}
    \Psi(x):=\begin{cases}
        \exp\bracket{-\frac{1}{1-x^2}},\ x\in(-1,1)
        \\
        0,\ \text{otherwise}
    \end{cases},
\end{align*}

and we can define the following constants for $T\geq 1$:
\begin{align*}
    \mu_T=\frac{1}{T^\alpha},\quad \sigma_T=\frac{1}{T^\alpha}-\frac{1}{(T+1)^\alpha}.
\end{align*}

Then we consider the following bump function $\Psi_T\in\cC^{\infty}([0,1])$:
\begin{align*}
    \Psi_T(x)=\begin{cases}
        V_{T}\Psi\bracket{\frac{x-\mu_T}{\sigma_T}},\ x\in(\mu_T-\sigma_T,\mu_T+\sigma_T)
        \\
        0,\ \text{otherwise}
    \end{cases},
\end{align*}
where $V_{T}$ is a scaling constant such that $\Psi_T(\frac{1}{T^\alpha})=T^{1+\gamma}$. 

First, we consider the approximation of $\Psi_T$ on $[0,1]$. 

Notice that $\Psi_T\in\cC^{\infty}([0,1])$, and $\Psi_T^{(k)}(0)=0$ for any $k\in\bbN$.
For the standard bump function $\Psi$, for any $n\in\bbN_+$, there exists an absolute constant $M(n)>0$ only depending on $n$, such that $\max\limits_{0\leq k\leq n}\sup\limits_{x\in[-1,1]}\left|\Psi^{(k)}(x)\right|\leq M(n)$. 

Notice that for any $k\in\bbN$ and $x\in[0,1]$,
\begin{align*}
    \Psi_T^{(k)}(x)=\frac{V_T}{\sigma_T^k}\Psi^{(k)}\bracket{\frac{x-\mu_T}{\sigma_T}}.
\end{align*}

Therefore, the following upper bound holds:
\begin{align*}
    &M_T(n)=\max_{0\leq k\leq n}\frac{V_T}{\sigma_T^k} M(n) 
    =\frac{V_T}{\sigma_T^{n}} M(n) = \frac{T^{1+\gamma} e}{\bracket{1/T^\alpha-1/(T+1)^\alpha}^{n}}M(n) 
    \\\leq&\frac{T^{1+\gamma}(T+1)^{n(1+\alpha)} M(n)e}{\alpha^n}
    \leq\frac{2^nM(n)e}{\alpha^n} T^{1+\gamma+n(1+\alpha)}
    :=C(n,\alpha)T^{1+\gamma+n(1+\alpha)}.
\end{align*}

By Lemma~\ref{lemma: jackson 1030}, for any $m\in\bbN_+$, there exists a polynomial $Q_m(x)=\sum\limits_{k=0}^{m-1}\alpha_{k}x^k$ such that
\begin{align*}
    \sup_{x\in[0,1]}\left|\Psi_T(x)-Q_m(x)\right|\leq\frac{M_T(n)}{m^{n}}\leq\frac{C(n,\alpha)T^{1+\gamma+n(1+\alpha)}}{m^n}.
\end{align*}

Now we use the transform $x=\frac{1}{t^\alpha}$ $(t\geq 1)$ on the function $\Psi$ and consider
\begin{align*}
    \Phi_T (t):=\frac{1}{t^{1+\gamma}}\Psi_T\left(\frac{1}{t^\alpha}\right),\quad t\in[1,+\infty).
\end{align*}

It is easy to verify that $\Phi_T$ satisfies that
\begin{align*}
    \Phi_T (t)\big|_{\bbN_+}=\bbI(t=T).
\end{align*}

Moreover, we consider the function
\begin{align*}
    P_m(t):=\frac{1}{t^{1+\gamma}} Q_m\bracket{\frac{1}{t^\alpha}},\quad t\in[1,+\infty).
\end{align*}

Then by choosing $\alpha=\gamma=0.01$, the following error estimate holds:
\begin{align*}
   &\norm{P_m(\cdot)-\bbI(\cdot=T)}_{\ell_1(\bbN_+)}=\sum_{t=1}^{+\infty}|P_m(t)-\Phi_T(t)|
   \\=&\sum_{t=1}^{+\infty}\frac{1}{t^{1+\gamma}}\left|Q_m\bracket{\frac{1}{t^\alpha}}-\Psi_T\bracket{\frac{1}{t^\alpha}}\right|
   \leq\sum_{t=1}^{+\infty}\frac{1}{t^{1+\gamma}}\frac{M_T(n)}{m^{n}}
   \\\leq&\frac{C(n,\alpha)T^{1+\gamma+n(1+\alpha)}}{m^{n}}\sum_{t=1}^{+\infty}\frac{1}{t^{1+\gamma}}=\frac{C(n)T^{1.01(n+1)}}{m^{n}}\sum_{t=1}^{+\infty}\frac{1}{t^{1+0.01}}
   \\=&\frac{\tilde{C}(n)T^{1.01(n+1)}}{m^n}.
\end{align*}

Finally, notice that $P_m(\cdot)$ satisfies to $P_m(t)=\frac{1}{t^{1+\gamma}} Q_m\bracket{\frac{1}{t^\alpha}}=\sum\limits_{k=0}^{m-1}\alpha_k t^{-(1.01+0.01k)}$, so we can select $\phi_{m}^{\rm poly}(t):=P_m(t)$.

\end{proof}

\begin{lemma}[Poly decay, adaptive Delta function]\label{lemma: poly approx, adap}
For any $T,n,m\in\bbN_+$, there exists an absolute constant $C(n)$ only depending on $n$ and a $\phi_{m}^{\rm poly}(t;B)=\sum\limits_{k=1}^m \alpha_k (t/B)^{-\beta_k}$ such that
\begin{align*}
    \max\limits_{1\leq B\leq T}\norm{\bbI(\cdot=B)-\phi_{m}^{\rm poly}(\cdot;B)}_{\ell_1(\bbN_+)}
    \leq\frac{C(n)T^{1.01(n+1)}}{m^n},
\end{align*}
where $\beta_k>1$ holds for any $k\in[m]$.
\end{lemma}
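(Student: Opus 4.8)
The plan is to mirror the proof of Lemma~\ref{lemma: exp approx, adap}, replacing the \emph{translation} operator $t-B$ (which there eliminated the adaptivity of $B$ in the exponential-decay setting) by the \emph{dilation} operator $t/B$, since the polynomial ansatz is scale-covariant rather than shift-covariant. Concretely, I would recycle the bump function $\Psi_T$ and the polynomial $Q_m(x)=\sum_{k=0}^{m-1}\alpha_k x^k$ constructed in the proof of Lemma~\ref{lemma: poly approx, fix}, for which $\sup_{x\in[0,1]}|\Psi_T(x)-Q_m(x)|\le C(n,\alpha)T^{1+\gamma+n(1+\alpha)}/m^n$ with $\alpha=\gamma=0.01$. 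Crucially, both $\Psi_T$ and $Q_m$ depend only on $T$ (and $n$), not on $B$, so the \emph{same} approximating polynomial will serve for every admissible $B$.

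Next I would plug the substitution $t\mapsto tT/B$ into the fixed-delta template, i.e. set
\[
\Phi_T(t;B):=\frac{1}{(tT/B)^{1+\gamma}}\,\Psi_T\!\left(\frac{1}{(tT/B)^{\alpha}}\right),
\qquad
P_m(t;B):=\frac{1}{(tT/B)^{1+\gamma}}\,Q_m\!\left(\frac{1}{(tT/B)^{\alpha}}\right).
\]
Expanding $P_m$ and absorbing the $B$-independent powers of $T$ into the coefficients shows that $P_m(\cdot;B)=\sum_{k=1}^m\tilde\alpha_k (t/B)^{-\beta_k}$ with $\beta_k=1+\gamma+\alpha(k-1)>1$, so it has exactly the prescribed form. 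The first thing to verify is $\Phi_T(t;B)\big|_{\bbN_+}=\bbI(t=B)$: at $t=B$ the argument of $\Psi_T$ is precisely $\mu_T=1/T^\alpha$ and the prefactor is $1/T^{1+\gamma}$, giving value $1$; for $t\ne B$ one notes that the images $\{\mu_T(B/t)^\alpha:t\in\bbN_+\}$ are monotone decreasing in $t$, so it suffices to push the two nearest grid points $t=B\pm1$ out of the support $(\mu_T-\sigma_T,\mu_T+\sigma_T)$, which follows from $B\le T$ together with the elementary inequality $r^\alpha+r^{-\alpha}\ge2$ for $r>0$.

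For the error bound I would estimate, for each fixed $1\le B\le T$,
\[
\big\|P_m(\cdot;B)-\bbI(\cdot=B)\big\|_{\ell_1(\bbN_+)}
=\sum_{t=1}^{\infty}\frac{1}{(tT/B)^{1+\gamma}}\Big|Q_m\big((tT/B)^{-\alpha}\big)-\Psi_T\big((tT/B)^{-\alpha}\big)\Big|,
\]
using the key observation that $tT/B\ge T/B\ge1$ for all $t\ge1$, so the argument $(tT/B)^{-\alpha}$ always lies in $(0,1]$, where the uniform approximation estimate for $\Psi_T-Q_m$ applies. Pulling that estimate out of the sum leaves $(B/T)^{1+\gamma}\sum_{t\ge1}t^{-(1+\gamma)}\le\zeta(1+\gamma)$ since $B\le T$ and $1+\gamma>1$, giving $C(n,\alpha)\zeta(1+\gamma)\,T^{1+\gamma+n(1+\alpha)}/m^n=\tilde C(n)\,T^{1.01(n+1)}/m^n$; taking the maximum over $1\le B\le T$ closes the proof, with $\phi_m^{\rm poly}(t;B):=P_m(t;B)$.

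I do not expect a genuine obstacle: the argument is a routine dilation-analogue of Lemma~\ref{lemma: exp approx, adap}. The only slightly delicate point is checking that the $B$-independent bump width $\sigma_T=1/T^\alpha-1/(T+1)^\alpha$ still isolates the peak of the \emph{dilated} grid $\{\mu_T(B/t)^\alpha\}$ for \emph{every} $1\le B\le T$ simultaneously; this is exactly where the inequality $r^\alpha+r^{-\alpha}\ge2$ (equivalently AM–GM) earns its keep, while everything else is a direct transcription of the fixed-$T$ computation in Lemma~\ref{lemma: poly approx, fix}.
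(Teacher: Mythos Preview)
Your proposal is correct and follows essentially the same route as the paper: both recycle the bump $\Psi_T$ and polynomial $Q_m$ from Lemma~\ref{lemma: poly approx, fix}, apply the dilation substitution $t\mapsto tT/B$ to obtain $\Phi_T(t;B)=(B/(tT))^{1.01}\Psi_T((B/(tT))^{0.01})$ and $P_m(t;B)$, and bound the $\ell_1$ error by pulling out the uniform approximation estimate and summing $\sum_{t\ge1}t^{-1.01}$. Your write-up is in fact more detailed than the paper's (which simply asserts ``it is easy to verify'' for the grid-isolation step); the only cosmetic remark is that the isolation of the peak for $t=B-1$ is more naturally justified by convexity of $x\mapsto x^{-\alpha}$ than by the AM--GM inequality you cite, but the conclusion is the same.
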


\begin{proof}[Proof of Lemma~\ref{lemma: poly approx, adap}]\ \\
The {\bf key point} of the proof is to note that the adaptability of $B$ can be eliminated by the {\bf rescaling operator} $t/B$.

First, recall our proof of Lemma~\ref{lemma: poly approx, fix}.
For the same $\Psi_T(\cdot)$, for any $n,m\in\bbN_+$, there exists an absolute constant $C(n)$ only depending on $n$ and a polynomial $Q_m(x)=\sum\limits_{k=0}^{m-1}\alpha_{k}x^k$ such that
\begin{align*}
   \sup_{x\in[0,1]}\left|\Psi_T(x)-Q_m(x)\right|\leq\frac{C(n)T^{1.01(n+1)}}{m^n}.
\end{align*}

We use the transform $x=\frac{1}{t^{0.01}}$ $(t\geq 1)$ on the function $\Psi$ and consider
\begin{align*}
    \Phi_T (t;B):=\left(\frac{B}{t T}\right)^{1.01}\Psi_T\left(\left(\frac{B}{tT}\right)^{0.01}\right),\quad t\in[1,+\infty).
\end{align*}

It is easy to verify that $\Phi_T(\cdot;\cdot)$ satisfies that
\begin{align*}
    \Phi_T (t;B)\big|_{\bbN_+}=\bbI(t=B).
\end{align*}

And we consider the function
\begin{align*}
    P_m(t;B):=\bracket{\frac{B}{tT}}^{1.01} Q_m\bracket{\bracket{\frac{B}{t T}}^{0.01}},\quad t\in[1,+\infty).
\end{align*}

Then, for any $1\leq B\leq T$, the following error estimate holds:
\begin{align*}
   &\norm{P_m(\cdot;B)-\bbI(\cdot=B)}_{\ell_1(\bbN_+)}=\sum_{t=1}^{+\infty}|P_m(t;B)-\Phi_T(t;B)|
   \\=&
   \sum_{t=1}^{+\infty}\bracket{\frac{B}{t T}}^{1.01}\left|Q_m\bracket{\bracket{\frac{B}{t T}}^{0.01}}-\Psi_T\bracket{\bracket{\frac{B}{t T}}^{0.01}}\right|
   \\\leq&\sum_{t=1}^{+\infty}\frac{1}{t^{1.01}}\sup_{x\in[0,1]}\left|Q_m(x)-\Psi_T(x)\right|
   \\\leq&
   \frac{{C}(n)T^{1.01(n+1)}}{m^n}\sum_{t=1}^{+\infty}\frac{1}{t^{1.01}}=\frac{\tilde{C}(n)T^{1.01(n+1)}}{m^n}.
\end{align*}

Due to the arbitrariness of $B$, the proof is completed.

\end{proof}

\begin{lemma}[Poly decay, fixed Delta function]\label{lemma: poly approx, decay}
Consider a polynomially decayed memory $\rho(\cdot)$: there exists $\beta>1$ such that $\rho(t)=\cO(t^{-\beta})$.
Then for any $n\in\left[\lfloor0.99\beta\rfloor-1\right]$ and  $m\in\bbN_+$, there exists an absolute constant $C(n)$ only depending on $n$ and a $\phi_{m}^{\rm poly}(t)=\sum\limits_{k=1}^m \alpha_k t^{-\beta_k}$ such that
\begin{align*}
    \norm{\rho(\cdot)-\phi_{m}^{\rm poly}(\cdot)}_{\ell_1(\bbN_+)}
    \leq\frac{C(n)}{m^{n}},
\end{align*}
where $\beta_k>1$ holds for any $k\in[m]$.
\end{lemma}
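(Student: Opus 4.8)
The plan is to follow the proof of Lemma~\ref{lemma: exp approx, decay} almost verbatim, replacing the exponential change of variables $x=e^{-\alpha t}$ there with the reciprocal-power substitution $x=t^{-\alpha}$, so that the decay $t^{-\beta}$ is turned into a power behavior in $x$ near the origin (this is exactly the substitution already used for the fixed case in Lemma~\ref{lemma: poly approx, fix}). Fix small constants $\alpha,\gamma>0$, to be pinned down at the end as $\alpha = 5\times 10^{-3}$ and $\gamma = 5\times 10^{-3}\beta$, and write $|\rho(t)|\le C t^{-\beta}$. For each $T\ge 1$ put $\mu_T = T^{-\alpha}$, $\sigma_T = \tfrac12\bigl(T^{-\alpha}-(T+1)^{-\alpha}\bigr)$, and let $\Psi_T\in\cC^\infty([0,1])$ be the rescaled standard bump supported on $(\mu_T-\sigma_T,\mu_T+\sigma_T)$, normalized so that $\Psi_T(T^{-\alpha})=T^{1+\gamma}\rho(T)$. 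Since $\sigma_T<\tfrac12\mu_T$, the bump supports are pairwise disjoint, so $\varphi:=\sum_{T\ge 1}\Psi_T$ is a well-defined $\cC^\infty$ function on $(0,1]$ that coincides with a single $\Psi_T$ on each bump.

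Next I would estimate the derivatives of $\varphi$ near $0$. From $\Psi_T^{(k)}(x) = (V_T/\sigma_T^k)\Psi^{(k)}\!\bigl((x-\mu_T)/\sigma_T\bigr)$, together with $V_T\asymp T^{1+\gamma}\rho(T)\lesssim T^{1+\gamma-\beta}$ and $\sigma_T\asymp \alpha\,T^{-(\alpha+1)}$, one gets on the $T$-th bump a bound $\sup_x|\Psi_T^{(k)}(x)|\le C(n,\alpha)\,T^{1+\gamma-\beta+k(\alpha+1)}$. Hence, as soon as $k(\alpha+1)\le\beta-1-\gamma$, this is uniform in $T$, giving $\max_{0\le k\le n}\sup_{[0,1]}|\varphi^{(k)}|\le C(n,\alpha)$. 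Dividing by $x\ge\mu_T-\sigma_T\ge\tfrac12 T^{-\alpha}$ gives $|\varphi^{(k)}(x)|/x\lesssim T^{1+\gamma-\beta+k(\alpha+1)+\alpha}$, which tends to $0$ as $x\to 0^+$ whenever $n(\alpha+1)<\beta-\gamma$; an induction then yields $\varphi^{(k)}(0)=0$ for all $0\le k\le n$. Both requirements follow from $n\le(\beta-1-\gamma)/(1+\alpha)$, and with the above choice of $\alpha,\gamma$ this in turn follows from $n\le\lfloor 0.99\beta\rfloor-1$ (a short computation: $0.01\beta-\gamma\ge\alpha(0.99\beta-1)$ for all $\beta>1$).

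Now $\varphi$ is $\cC^n$ on $[0,1]$, with all derivatives up to order $n$ bounded and vanishing at $0$, so Lemma~\ref{lemma: jackson 1030} furnishes a polynomial $Q_m(x)=\sum_{k=0}^{m-1}\alpha_k x^k$ with $\sup_{[0,1]}|\varphi-Q_m|\le C(n,\alpha)/m^n$. Undoing the substitution, set $\Phi(t)=t^{-(1+\gamma)}\varphi(t^{-\alpha})$ and $P_m(t)=t^{-(1+\gamma)}Q_m(t^{-\alpha})=\sum_{k=0}^{m-1}\alpha_k\,t^{-(1+\gamma+k\alpha)}$. By construction $\Phi(T)=\rho(T)$ for every $T\in\bbN_+$, every exponent $\beta_k:=1+\gamma+k\alpha$ exceeds $1$, so $P_m$ lies in the required class and we take $\phi_m^{\rm poly}:=P_m$. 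Finally,
\begin{align*}
\norm{\rho(\cdot)-\phi_m^{\rm poly}(\cdot)}_{\ell_1(\bbN_+)}
&=\sum_{t\ge 1}t^{-(1+\gamma)}\left|Q_m(t^{-\alpha})-\varphi(t^{-\alpha})\right| \\
&\le\frac{C(n,\alpha)}{m^n}\sum_{t\ge 1}t^{-(1+\gamma)}
=\frac{\tilde C(n)}{m^n},
\end{align*}
where $\gamma>0$ (and in fact $\gamma\ge 5\times10^{-3}$) keeps the series bounded by a constant depending only on $n$. The main obstacle is the second step: controlling the derivatives of the infinite superposition $\varphi$ uniformly near the origin, since each $\Psi_T$ has derivatives blowing up like $\sigma_T^{-k}$, a blow-up that must be dominated by the decay $\rho(T)\lesssim T^{-\beta}$; pinning down exactly how many derivatives survive and choosing $\alpha,\gamma$ so that the admissible $n$-range contains $\{1,\dots,\lfloor 0.99\beta\rfloor-1\}$ is the delicate bookkeeping, while everything else parallels Lemma~\ref{lemma: poly approx, fix} and Lemma~\ref{lemma: exp approx, decay}.
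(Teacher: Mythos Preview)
Your proposal is correct and follows essentially the same approach as the paper's proof: the same bump construction $\varphi=\sum_{T\ge 1}\Psi_T$ under the substitution $x=t^{-\alpha}$, the same derivative bookkeeping to verify $\varphi^{(k)}(0)=0$ for $k\le n$, the same invocation of Lemma~\ref{lemma: jackson 1030}, and the same final $\ell_1$ summation against $t^{-(1+\gamma)}$. The only difference is cosmetic: the paper picks $\alpha=10^{-2}$ and $\gamma=10^{-4}\beta$ (so that $0.99\beta=\frac{\beta-\gamma}{1+\alpha}$ exactly), whereas you pick $\alpha=5\times10^{-3}$ and $\gamma=5\times10^{-3}\beta$; both choices land the admissible range at $n\le\lfloor0.99\beta\rfloor-1$.
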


\begin{proof}[Proof of Lemma~\ref{lemma: poly approx, decay}]\ \\
There exists $C>0$ such that $|\rho(t)|\leq C/t^\beta$.

Let $\alpha,\gamma>0$ be constants, and they will take specific values at the end of the proof

First, recall the standard bump function on $[-1,1]$:
\begin{align*}
    \Psi(x):=\begin{cases}
        \exp\bracket{-\frac{1}{1-x^2}},\ x\in(-1,1)
        \\
        0,\ \text{otherwise}
    \end{cases},
\end{align*}

and we can define the following constants for $T\geq 1$:
\begin{align*}
    \mu_T=\frac{1}{T^\alpha},\quad \sigma_T=\frac{1}{2}\left(\frac{1}{T^\alpha}-\frac{1}{(T+1)^\alpha}\right),
\end{align*}

and we consider the following bump function $\Psi_T\in\cC^{\infty}([0,1])$:
\begin{align*}
    \Psi_T(x)=\begin{cases}
        V_{T}\Psi\bracket{\frac{x-\mu_T}{\sigma_T}},\ x\in(\mu_T-\sigma_T,\mu_T+\sigma_T)
        \\
        0,\ \text{otherwise}
    \end{cases},
\end{align*}
where $V_{T}$ is a scaling constant such that $\Psi_T(\frac{1}{T^\alpha})=T^{1+\gamma}\rho(T)$. 

Consequently, we consider the sum of bump functions on $[0,1]$:
\begin{align*}
\varphi(x):=\sum_{T=1}^{+\infty}\Psi_T(x).
\end{align*}

It is easy to verify that $(\mu_{T_1}-\sigma_{T_1},\mu_{T_1}+\sigma_{T_1})\cap(\mu_{T_2}-\sigma_{T_2},\mu_{T_2}+\sigma_{T_2})=\varnothing$ for any $T_1\ne T_2$ and
\begin{align*}
    \varphi(x)=\begin{cases}
        \Psi_T(x),\ \mu_T-\sigma_T\leq x\leq\mu_T+\sigma_T
        \\
        0,\text{ otherwise}
    \end{cases}.
\end{align*}

First, we study the property of $\varphi(\cdot)$. 

We denote the absolute constants $M_k=\sup_{x}|\varphi^{(k)}(x)|$. 
Notice that for any $k\in\bbN$,
\begin{align*}
    \Psi_T^{(k)}(x)=\frac{V_T}{\sigma_T^k}\Psi^{(k)}\bracket{\frac{x-\mu_T}{\sigma_T}}.
\end{align*}

Therefore, it holds that
\begin{align*}
    &\sup_{x\in(\mu_T-\sigma_T,\mu_T+\sigma_T)}|\varphi^{(k)}(x)|=\sup_{x\in(\mu_T-\sigma_T,\mu_T+\sigma_T)}|\Psi_T^{(k)}(x)|
    \\\leq&\frac{V_T}{\sigma_T^k} M_k 
    =\frac{T^{1+\gamma}\rho(T)}{\bracket{\frac{1}{T^\alpha}-\frac{1}{(T+1)^\alpha}}^k}2^kM_k e
    \\\leq&\frac{(T+1)^{k(1+\alpha)} T^{1+\gamma-\beta} C 2^k M_k e}{\alpha^k}
    \leq \frac{2^{k(2+\alpha)}CM_k e}{\alpha^k} T^{1+\gamma+k(1+\alpha)-\beta}.
\end{align*}

Therefore, if $k\leq\frac{\beta-(1+\gamma)}{1+\alpha}$, the following uniform bounds hold:
\begin{align*}
    &\sup_{x\in(0,1]}|\varphi^{(k)}(x)|=\sup_{T\geq 1}\sup_{x\in(\mu_T-\sigma_T,\mu_T+\sigma_T)}|\varphi^{(k)}(x)|
    \\\leq&
    \sup_{T\geq1}\frac{2^{k(2+\alpha)}CM_k e}{\alpha^k} T^{1+\gamma+k(1+\alpha)-\beta}
    \leq \frac{2^{k(2+\alpha)}CM_k e}{\alpha^k}
    :=C(k,\alpha).
\end{align*}

Consequently, we consider the smoothness of $\Phi$ at $x=0$.

Recalling the previous results, for any $x\in(0,1]$, we have
\begin{align*}
    \frac{|\varphi^{(k)}(x)|}{x}&\leq C(k,\alpha)\frac{T^{1+\gamma+k(1+\alpha)-\beta}}{\mu_T-\sigma_T}
    \leq\frac{C(k,\alpha)2^{2+\alpha}}{\alpha}T^{1+\gamma+(k+1)(1+\alpha)-\beta}
    ,\ x\in(\mu_T-\sigma_T,\mu_T+\sigma_T);
    \\
    \frac{|\varphi^{(k)}(x)|}{x}&=0,\ \text{otherwise}
\end{align*}

Thus, by induction, it is easy to verify that for any $i<\frac{\beta-(1+\gamma)}{1+\alpha}$ ($i\in\bbN$),
\begin{align*}
    \varphi^{(i)}(0)=0.
\end{align*}

Therefore, for any $n<\frac{\beta-(1+\gamma)}{1+\alpha}$ ($n\in\bbN_+$), $\varphi^{(k)}(0)=0$ holds for any $0\leq k\leq n$. Moreover, the following uniform bound holds:
\begin{align*}
    \max_{0\leq k\leq n}\sup_{x\in[0,1]}|{\varphi^{(k)}}(x)|\leq C(n,\alpha).
\end{align*}

By Lemma~\ref{lemma: jackson 1030}, for any $m\in\bbN_+$, there exists a polynomial $Q_m(x)=\sum\limits_{k=0}^{m-1}\alpha_{k}x^k$ such that
\begin{align*}
    \sup_{x\in[0,1]}\left|\varphi(x)-Q_m(x)\right|\leq\frac{C(n,\alpha)}{m^{n}}.
\end{align*}

Now we use the transform $x=\frac{1}{t^\alpha}$ $(t\geq 1)$ on the function $\varphi$ and consider
\begin{align*}
    \Phi (t):=\frac{1}{t^{1+\gamma}}\varphi\left(\frac{1}{t^\alpha}\right),\quad t\in[1,+\infty).
\end{align*}

It is easy to verify that $\Phi$ satisfies that
\begin{align*}
    \Phi (t)\big|_{\bbN_+}=\rho(t)\big|_{\bbN_+}.
\end{align*}

Moreover, we consider the function
\begin{align*}
    P_m(t):=\frac{1}{t^{1+\gamma}} Q_m\bracket{\frac{1}{t^\alpha}},\quad t\in[1,+\infty).
\end{align*}

Then for any $n<\frac{\beta-(1+\gamma)}{1+\alpha}$ ($n\in\bbN$), the following error estimate holds:
\begin{align*}
   &\norm{P_m(\cdot)-\rho(\cdot)}_{\ell_1(\bbN_+)}=\sum_{t=1}^{+\infty}|P_m(t)-\Phi(t)|
   \\=&\sum_{t=1}^{+\infty}\frac{1}{t^{1+\gamma}}\left|Q_m\bracket{\frac{1}{t^\alpha}}-\Psi_T\bracket{\frac{1}{t^\alpha}}\right|
   \leq\frac{C(n,\alpha)}{m^{n}}\sum_{t=1}^{+\infty}\frac{1}{t^{1+\gamma}}.
\end{align*}

By choosing $\alpha=10^{-2}$ and $\gamma=10^{-4}\beta$, we have $0.99\beta-1=\frac{\beta-\gamma}{1+\alpha}-1=\frac{\beta-(1+\gamma+\alpha)}{1+\alpha}<\frac{\beta-(1+\gamma)}{1+\alpha}$.
Thus, we obtain our result: for any $n\in\left[\lfloor0.99\beta\rfloor-1\right]$ ($\beta\geq2/0.99$), the following error estimate holds:
\begin{align*}
    \norm{P_m(\cdot)-\rho(\cdot)}_{\ell_1(\bbN_+)}\leq
    \frac{C(n)}{m^{n}}\sum_{t=1}^{+\infty}\frac{1}{t^{1+\gamma}}\leq\frac{C(n)}{m^{n}}\sum_{t=1}^{+\infty}\frac{1}{t^{1+10^{-4}}}=\frac{\tilde{C}(n)}{m^{n}}.
\end{align*}

\end{proof}

\newpage

\section{Some Background and Proof Preparation}

\subsection{T5's relative positional encoding}
\label{appendix: subsec: T5}

The T5's Relative Positional Encoding is primary focus of this study. Its standard form in practical applications~\citep{raffel2020exploring} adheres to $R_{t,s}=r(t-s)$, where

\begin{equation*}
    -r(n)=\begin{cases}
        n, & \text{ if } n<\cB
        \\
        \cB+\lfloor\cB\cdot\frac{\log(n/\cB)}{\log(\cD/\cB)}\rfloor, & \text{ if } \cB\leq n<\cD
        \\
        2B-1, & \text{ if } n\geq\cD
    \end{cases}.
\end{equation*}

Here, $\cD$ is a large integer, signifying the longest distance of concern, while $\cB$ is a small integer. 
One can see that for $n<\cB$,  $r(\cdot)$ exhibits polynomial decay, whereas for $\cB<n<\cD$
, $r(\cdot)$ demonstrates logarithmic decay.
Consequently, the {\em overall decay rate} of $r(\cdot)$ is {\em logarithmic}.

The following Table further provides an example of standard T5's Relative Positional Encoding.

\begin{table}[!h]
    \centering
    \caption{An example of standard T5's Relative Positional Encoding}
    \begin{tabular}{c|c|c|c|c|c|c|c|c|c|c|c|c|c|c|c|c|c}
        \hline\hline
        $t-s$ & $0$ & $1$ & $2$ & $3$ & $4$ & $5$ & $6$ & $7$ & $8$ & $9$ & $10$ & $11$ & $12$ & $13$ & $14$ & $15$ \\\hline
        $-r(t-s)$ & $0$ & $1$ & $2$ & $3$ & $4$ & $5$ & $6$ & $7$ & $8$ & $8$ & $8$ & $8$ & $9$ & $9$ & $9$ & $9$  \\ \hline\hline
        $t-s$ & $16$ & $17$ & $18$ & $19$ & $20$ & $21$ & $22$ & $23$ & $24$ & $25$ & $26$ & $27$ & $28$ & $29$ & $30$ & $\cdots$ \\\hline
        $-r(t-s)$ & $10$ & $10$ & $10$ & $10$ & $10$ & $10$ & $10$ & $11$ & $11$ & $11$ & $11$ & $11$ & $11$ & $11$ & $11$ & $\cdots$ \\ 
        \hline\hline
    \end{tabular}
\end{table}

\subsection{Barron space theory}
\label{appendix: subsec: Barron space}

The well-known universal approximation result for 2NNs asserts that 2NNs can approximate any continuous function~\citep{barron1992neural,barron1993universal,barron1994approximation}. 
Nonetheless, this result lacks a characterization of the approximation efficiency, i.e., how many neurons are needed to achieve a certain approximation accuracy? 
This gap was addressed by the Barron space theory~\citep{ma2018priori,weinan2021barron}.
It is established that for any function within Barron space $f\in\cB$, 2NNs with $m$ neurons (denoted by $\cH_m$) can approximate them efficiently, at a rate of $\inf_{f_m\in\cH_m}\norm{f-f_m}\leq\cO(\norm{f}_\cB/\sqrt{m})$, remarkably independent of the input dimension $d$, thus avoiding the {\em Curse of Dimensionality}~\citep{bellman1966dynamic,bach2017breaking}.
Specifically, the Barron space is defined by:

\begin{definition}[Barron space~\citep{ma2018priori,weinan2021barron,ma2020towards}]
Consider functions $f:X\to\bbR$ that admit the following representation:
\begin{align*}
    f(\bx)=\int_{\Omega}a\sigma(\bb^\top\bx+c)\rho(\rd a,\rd \bb,\rd c),\ \bx\in X.
\end{align*}

For any $p\in[1,+\infty]$, we define the Barron norm:
\begin{align*}
    \norm{f}_{\cB_p}:=\inf_{\rho}\Big    (\bbE_{\rho}\left[|a|^p(\norm{\bb}_1+|c|)^p\right]\Big)^{1/p}.
\end{align*}

Then the Barron space are defined as:
\begin{align*}
    \cB_p:=\{f\in\cC:\norm{f}_{\cB_p}<+\infty\}.
\end{align*}
\end{definition}

\begin{proposition}
For any $p\in[1,+\infty]$, $\cB_p=\cB_{\infty}$ and $\norm{f}_{\cB_p}=\norm{f}_{\cB_\infty}$.
\end{proposition}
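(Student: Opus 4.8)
The plan is to prove the two chains
\[
\norm{f}_{\cB_1}\le\norm{f}_{\cB_p}\le\norm{f}_{\cB_q}\le\norm{f}_{\cB_\infty}
\qquad(1\le p\le q\le\infty)
\]
and then close the loop with the single reverse estimate $\norm{f}_{\cB_\infty}\le\norm{f}_{\cB_1}$; together these squeeze every $\cB_p$-norm to equal $\norm{f}_{\cB_\infty}$, and equality of norms immediately gives equality of spaces (since $\norm{f}_{\cB_p}$ is finite for one index iff for all).

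\emph{Easy direction.} Fix a probability measure $\rho$ on $\Omega$ representing $f$ and regard $X:=|a|(\norm{\bb}_1+|c|)\ge0$ as a random variable under $\rho$. Jensen's inequality for the convex map $t\mapsto t^{q/p}$ gives $(\bbE_\rho X^p)^{1/p}\le(\bbE_\rho X^q)^{1/q}$ for $1\le p\le q<\infty$, and $(\bbE_\rho X^p)^{1/p}\le\norm{X}_{L^\infty(\rho)}$ is immediate. Taking the infimum over all representing $\rho$ yields the displayed chain.

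\emph{Hard direction.} Fix $f\in\cB_1$ and $\eps>0$, and pick a probability measure $\rho$ with $f(\bx)=\int a\sigma(\bb^\top\bx+c)\,\rho(\rd a,\rd\bb,\rd c)$ and $\bbE_\rho[|a|(\norm{\bb}_1+|c|)]\le\norm{f}_{\cB_1}+\eps=:A$ (if $A=0$ then $f\equiv0$ and the claim is trivial). I transform $\rho$ in three steps, each preserving $f$ and not increasing the relevant quantity. (i) \emph{Normalize the inner parameters}: on $\{\norm{\bb}_1+|c|>0\}$ positive homogeneity of the ReLU gives $a\sigma(\bb^\top\bx+c)=a(\norm{\bb}_1+|c|)\,\sigma\!\big((\bb/(\norm{\bb}_1+|c|))^\top\bx+c/(\norm{\bb}_1+|c|)\big)$, while on the complementary set $\sigma(0)=0$ contributes nothing, so the pushforward of $\rho$ under $(a,\bb,c)\mapsto(a(\norm{\bb}_1+|c|),\bb/(\norm{\bb}_1+|c|),c/(\norm{\bb}_1+|c|))$ is a representation supported on $\{\norm{\bb}_1+|c|=1\}$ with $\bbE[|a|]=\bbE_\rho[|a|(\norm{\bb}_1+|c|)]\le A$. (ii) \emph{Make the coefficient deterministic}: disintegrate this measure over its $(\bb,c)$-marginal $\mu$ and replace $a$ by its conditional mean $\bar a(\bb,c)$; linearity of the integrand in $a$ leaves $f$ unchanged and Jensen gives $\bbE_\mu[|\bar a|]\le A$. (iii) \emph{Flatten the coefficient}: form the finite signed measure $\nu:=\bar a\,\mu$, so $f(\bx)=\int\sigma(\bb^\top\bx+c)\,\nu(\rd\bb,\rd c)$ with total variation $|\nu|(\Omega)=\bbE_\mu[|\bar a|]\le A$; writing $\nu=s\,|\nu|$ with $|s|\equiv1$, normalizing $\hat\mu:=|\nu|/|\nu|(\Omega)$ to a probability measure, and setting $\hat a:=|\nu|(\Omega)\,s$ gives $f(\bx)=\int\hat a(\bb,c)\,\sigma(\bb^\top\bx+c)\,\hat\mu(\rd\bb,\rd c)$ with $|\hat a|(\norm{\bb}_1+|c|)=|\nu|(\Omega)\le A$ identically. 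Hence $\norm{f}_{\cB_\infty}\le A=\norm{f}_{\cB_1}+\eps$, and $\eps\to0$ gives $\norm{f}_{\cB_\infty}\le\norm{f}_{\cB_1}$.

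\emph{Conclusion and main obstacle.} Combining the two directions, $\norm{f}_{\cB_\infty}\le\norm{f}_{\cB_1}\le\norm{f}_{\cB_p}\le\norm{f}_{\cB_\infty}$ for every $p\in[1,\infty]$, so all $\cB_p$-norms coincide and therefore $\cB_p=\cB_\infty$ as sets. The genuinely delicate part is not any single inequality but the measure-theoretic bookkeeping in the hard direction: the disintegration in step (ii) needs $\Omega$ to be a standard Borel (e.g.\ Polish) space—the usual setting for Barron space—and at each of the three steps one must verify that the reassembled measure still \emph{represents} $f$ pointwise on the domain, not merely that its cost is controlled. Once that bookkeeping is pinned down, the reparametrizations themselves are routine.
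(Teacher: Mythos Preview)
The paper does not actually prove this proposition; it is stated as a known fact imported from the Barron space literature (E, Ma, Wu and related references cited alongside the definition), and is followed immediately by a remark rather than a proof. Your argument is correct and is essentially the standard proof from that literature: the monotonicity $\norm{f}_{\cB_p}\le\norm{f}_{\cB_q}$ for $p\le q$ is Jensen applied to a fixed representing measure before taking the infimum, and the reverse bound $\norm{f}_{\cB_\infty}\le\norm{f}_{\cB_1}$ is the reparametrization trick exploiting ReLU's positive homogeneity to normalize $(\bb,c)$ onto the $\ell_1$-unit sphere and then absorb the scalar weight $a$ into the measure so that the integrand becomes essentially bounded. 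Your caveat about disintegration needing a standard Borel structure is appropriate but harmless here, since the parameter space $\bbR\times\bbR^d\times\bbR$ is Polish; the only other technical point worth noting is that step~(ii) can be bypassed by directly pushing $a$ into a signed measure via $\nu(B)=\int_{\bbR\times B} a\,\rd\rho'$, but your route through the conditional mean is equally valid and perhaps cleaner.
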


\begin{remark}
    From the Proposition above, the Barron spaces $\cB_p$ are equivalent for any $p\in[1,+\infty]$. Consequently, in this paper, we use $\cB$ and $\norm{\cdot}_{\cB}$ to denote the Barron space and Barron norm.
\end{remark}

\begin{remark}
    For Barron space $\cB$, both Direct and Inverse Approximation Theorems hold~\citep{weinan2021barron}.
    In this paper, we mainly utilize the Direct Approximation Theorem, stated in Lemma~\ref{lemma: barron two-layer NN}.
\end{remark}

\subsection{Useful approximation lemmas}

\begin{lemma}[\cite{jackson1930theory}]\label{lemma: jackson 1030}
Let $f\in\cC^n([0,1])$ with $f(0)=f'(0)=\cdots=f^{(n)}(0)=0$. Then for any $m\in\bbN_+$, there exists a polynomial $Q_m(x)=\sum\limits_{k=0}^{m-1}\alpha_{k}x^k$ such that
\begin{align*}
    \norm{f-Q_m}_{L^{\infty}([0,1])}\leq\frac{M(n)}{m^n},
\end{align*}
where $M(n)=\max\limits_{k\leq n}\norm{f^{(k)}}_{L^{\infty}([0,1])}$.
\end{lemma}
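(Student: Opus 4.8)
The plan is to recognize Lemma~\ref{lemma: jackson 1030} as (a form of) the classical Jackson theorem for $\cC^n$ functions and to reduce the stated version to it, using the vanishing conditions at $0$ only to fix the constant. The one piece of real bookkeeping is to simplify $M(n)$: for $0\le k\le n-1$ and $x\in[0,1]$ one has $f^{(k)}(x)=\int_0^x f^{(k+1)}(t)\,\d t$ because $f^{(k)}(0)=0$, so $\norm{f^{(k)}}_{L^\infty([0,1])}\le\norm{f^{(k+1)}}_{L^\infty([0,1])}$ (as $|x|\le 1$), and chaining these inequalities gives $\norm{f^{(k)}}_{L^\infty([0,1])}\le\norm{f^{(n)}}_{L^\infty([0,1])}$ for every $k\le n$. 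Hence $M(n)=\norm{f^{(n)}}_{L^\infty([0,1])}$, and it suffices to produce a polynomial $Q_m$ of degree $\le m-1$ with
\[
\norm{f-Q_m}_{L^\infty([0,1])}\le \frac{\norm{f^{(n)}}_{L^\infty([0,1])}}{m^n}.
\]

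To get this I would invoke the classical Jackson theorem on the symmetric interval — the content of \cite{jackson1930theory}: there is a constant $c_n$ so that every $g\in\cC^n([-1,1])$ is approximated by a polynomial of degree $\le N$ within $c_n N^{-n}\norm{g^{(n)}}_{L^\infty([-1,1])}$. Applying this to $g(y):=f\bracket{(y+1)/2}$, whose $n$-th derivative is $2^{-n}f^{(n)}\bracket{(y+1)/2}$, and carrying the approximant back to $[0,1]$ by the inverse affine substitution, produces a polynomial of degree $\le N$ with $L^\infty([0,1])$ error $\le c_n\,2^{-n}N^{-n}\norm{f^{(n)}}_{L^\infty([0,1])}$; the factor $2^{-n}$ from the rescaling is precisely what one trades against $c_n$. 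Setting $N=m-1$ (and handling the boundary regime of small $m$ by the trivial choice $Q_m\equiv 0$ together with $\norm{f}_{L^\infty}\le\norm{f^{(n)}}_{L^\infty}$ from the first step) then yields the displayed estimate.

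I expect the only genuine difficulty to lie in the constant. The naive elementary route — writing $f(x)=\frac{1}{(n-1)!}\int_0^x(x-t)^{n-1}f^{(n)}(t)\,\d t$ and replacing $f^{(n)}$ by a polynomial — only gives $E_{m-1}(f)\le\frac{1}{n!}E_{m-1-n}(f^{(n)})$, which carries no rate since $f^{(n)}$ is merely continuous; so one really must use the Jackson machinery (convolution with iterated Jackson kernels, or de la Vall\'ee Poussin means), and a crude version of it produces a constant that worsens with $n$, so one has to either appeal to the sharp Favard/Akhiezer--Krein constants (which stay bounded) and exploit the $2^{-n}$ interval-rescaling gain, or simply observe that in every downstream use $M(n)$ is treated as an $n$-dependent constant anyway. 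An alternative self-contained route is induction on $n$ via the one-order estimate $E_N(h)\le\tfrac{c}{N}E_{N-1}(h')$ applied successively to $h=f,f',\dots,f^{(n-1)}$, each application contributing a clean $1/N$ factor (with the extra $1/2$ from rescaling $[0,1]$ to $[-1,1]$), plus the small-$m$ cases treated separately; in either route everything apart from pinning the constant is routine.
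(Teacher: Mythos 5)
The paper does not prove this lemma: it is stated as a citation to Jackson's 1930 monograph, so there is no internal proof to compare against. Your plan — reduce $M(n)$ to $\norm{f^{(n)}}_{L^\infty}$ by iterated integration from the vanishing Taylor data at $0$, invoke the classical Jackson estimate on $[-1,1]$, and carry the approximant back to $[0,1]$ by the affine substitution — is the standard and correct route, and you have rightly identified the constant as the only real issue. Your observation that the $2^{-n}$ gain from rescaling is what one trades against the Jackson constant $c_n$ is exactly the right accounting (e.g.\ with $c_n\le(\pi/2)^n$ one gets $(\pi/4)^n<1$, leaving only the replacement of $(m-1)^{-n}$ by $m^{-n}$, which is harmless once $m$ is moderately large).

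The weak point is the small-$m$ fallback. Taking $Q_m\equiv 0$ together with $\norm{f}_{L^\infty}\le\norm{f^{(n)}}_{L^\infty}=M(n)$ only gives $\norm{f-Q_m}_{L^\infty}\le M(n)$, which matches $M(n)/m^n$ for $m=1$ but not for $m=2,3,4$: already for $n=1$, $m=2$ the target is $\tfrac12\norm{f'}_\infty$ while the trivial bound is $\norm{f'}_\infty$, and $f(x)=x$ shows the factor $1/m^n$ is not free. Those few low-degree cases would need a separate (if elementary) argument using the best low-degree approximant rather than $Q_m\equiv 0$. In practice this does not matter: as you observe yourself, every downstream use in the paper (Lemmas \ref{lemma: exp approx, fix}, \ref{lemma: poly approx, fix}, etc.) multiplies $M(n)$ by a further $n$-dependent constant $C(n,\alpha)$ and then writes $C(n)/m^n$, so the lemma is being applied with an implicit $n$-dependent constant anyway, and the ``constant $1$'' in the statement is a cosmetic simplification rather than a load-bearing one.
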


\begin{lemma}[\cite{ma2020towards}]
\label{lemma: barron two-layer NN}
    For any $f\in\cB$ and $m\in\bbN$, there exists a two-layer ReLU neural network $f_m(\bx)=\sum\limits_{k=1}^m a_k\sigma(\bb_k^\top\bx+c_k)$ with $m$ neurons such that
    \begin{align*}
        \norm{f-f_m}_{L^\infty([0,1]^d)}\leq\tilde{\cO}\bracket{\frac{\norm{f}_{\cB}}{\sqrt{m}}}.
    \end{align*}
\end{lemma}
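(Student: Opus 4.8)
This is exactly the Direct (Jackson-type) Approximation Theorem for Barron space from \cite{ma2020towards,weinan2021barron}, so the plan is to reproduce its standard Monte--Carlo (Maurey-type) argument. First I would invoke the integral representation that defines $\cB$: after passing to the $\cB_\infty$-characterization recalled in the excerpt ($\cB_p=\cB_\infty$ with equal norms), there is a probability measure $\pi$ on $\bbR\times\bbR^d\times\bbR$ with $f(\bx)=\bbE_{(a,\bb,c)\sim\pi}[\,a\,\sigma(\bb^\top\bx+c)\,]$ and $|a|(\norm{\bb}_1+|c|)\le 2\norm{f}_{\cB}$ holding $\pi$-almost surely. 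Using the positive homogeneity of $\sigma$, the change of variables $(a,\bb,c)\mapsto\big(a(\norm{\bb}_1+|c|),\ \bb/(\norm{\bb}_1+|c|),\ c/(\norm{\bb}_1+|c|)\big)$ lets me assume in addition that $\norm{\bb}_1+|c|\le 1$ and hence $|a|\le 2\norm{f}_{\cB}$ almost surely; then on the cube $[0,1]^d$ each summand $a\,\sigma(\bb^\top\bx+c)$ has absolute value at most $|a|$ and Lipschitz constant at most $|a|\norm{\bb}_1\le 2\norm{f}_{\cB}$, and $f$ itself is $2\norm{f}_{\cB}$-Lipschitz on the cube.

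Next I would draw $(a_k,\bb_k,c_k)$, $k\in[m]$, i.i.d.\ from $\pi$ and set $f_m(\bx)=\tfrac1m\sum_{k=1}^m a_k\,\sigma(\bb_k^\top\bx+c_k)$, which is a two-layer ReLU network with $m$ neurons (the factor $1/m$ is absorbed into the outer weights). For each fixed $\bx$, $f_m(\bx)$ is an average of i.i.d.\ unbiased terms bounded by $2\norm{f}_{\cB}$, so $\bbE\,|f(\bx)-f_m(\bx)|^2\le \tfrac1m\,\bbE_\pi\big[a^2\sigma(\bb^\top\bx+c)^2\big]\le \tfrac1m\,(2\norm{f}_{\cB})^2$. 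Integrating over $\bx$ already gives an $L^2([0,1]^d)$ approximant at rate $\cO(\norm{f}_{\cB}/\sqrt m)$; this is the routine half of the argument. Note also that $\tfrac1m\sum_k|a_k|\le 2\norm{f}_{\cB}$ deterministically, so $f_m$ is $2\norm{f}_{\cB}$-Lipschitz on the cube, uniformly over the sample.

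\textbf{The main obstacle} is upgrading this to the claimed $L^\infty$ bound. I would handle it by a net argument: fix a $\delta$-net $N$ of $[0,1]^d$ in $\ell^\infty$ with $|N|\le(1/\delta)^d$; at each $\bx\in N$, $f(\bx)-f_m(\bx)$ is an average of $m$ bounded mean-zero terms, so Hoeffding plus a union bound over $N$ shows that with positive probability $\sup_{\bx\in N}|f(\bx)-f_m(\bx)|\le C\norm{f}_{\cB}\sqrt{d\log(1/\delta)/m}$. Since $f$ and $f_m$ are both $2\norm{f}_{\cB}$-Lipschitz, passing from $N$ to all of $[0,1]^d$ costs an extra $4\norm{f}_{\cB}\delta$; taking $\delta$ of order $1/m$ makes this term negligible and yields $\norm{f-f_m}_{L^\infty([0,1]^d)}\le \tilde{\cO}(\norm{f}_{\cB}/\sqrt m)$, where $\tilde{\cO}$ absorbs the $\sqrt{\log m}$ (and $\sqrt d$) factor from the union bound. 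Existence of a single good realization of the sample completes the proof. (An alternative to the net step is to bound the Rademacher complexity of the unit Barron ball restricted to the cube and invoke a uniform law of large numbers, which gives the same rate; I would present whichever is shorter.)
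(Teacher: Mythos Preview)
The paper does not supply its own proof of this lemma; it is stated as a citation of \cite{ma2020towards} and used as a black box throughout. Your proposal reconstructs the standard Maurey/Monte--Carlo argument (normalize via ReLU homogeneity, sample i.i.d.\ neurons, Hoeffding plus an $\epsilon$-net and Lipschitz extension to pass from $L^2$ to $L^\infty$), which is exactly how the cited result is established, so there is nothing to compare and your sketch is fine as written.
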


\newpage

\section{Experiments}
\label{appendix: sec: experiments}

\subsection{Restatement of our theoretical insights}

As detailed in Section~\ref{section: introduction}, our theoretical analysis reveals the following novel insights into the expressive power and mechanisms of Transformer:

{\bf Insight (1). The distinct roles of the number of layers, the number of Attn heads, and the width of FFN layers.} {\bf (1a)} Deeper Transformers can handle tasks with memories with more intricate interrelationships, such as nested relationships (Type II). {\bf (1b)} In contrast, for tasks with memories lacking such interrelationships (Type I), a single-layer Transformer with sufficient Attn heads and FFN width should suffice.

{\bf Insight (2). The different roles of Attn layers and FFN layers.} {\bf (2a)} FFN layers are tasked with approximating nonlinear memory functions and the readout function, {\bf (2b)} while Attn layers are responsible for extracting the tokens from the memory locations.

{\bf Insight (3). The functionality and necessity of Dot-product (DP).} {\bf (3a)} For the relatively simple Task I, DP is not necessary and can be omitted. {\bf (3b)} However, for the more complex Task II, DP provides necessary nonlinearity: the cooperation between DP and RPE provides the needed interaction between the temporal space and the token space.

{\bf Insight (4). The efficiency of Relative Positional Encoding (RPE) in modeling long-range correlations.} The primary role of RPE is to approximate the memory kernels. {\bf (4a)} Transformer with log-type RPE can handle heavy-tailed memories. {\bf (4b)} Transformer with lin-type RPE can handle light-tailed memories.

\subsection{Experimental Validation}

To validation of our theoretical {\bf insights (1a)$\sim$(4b)}, we conduct 8 experiments, from simple toy models to more complex LLM pre-training. 
The experiments are conducted on 1 A100.

\subsubsection{Validation of Insight (1a)}

{\em Objective.} 
As indicated in Section~\ref{section: adaptive sparse}, numerous NLP tasks exhibit complex interrelationships among tokens and belong to our Task II. 
This experiment aims to verify our Insight (1a): for such tasks, increasing the number of layers $L$ is more efficient than increasing the number of Attn heads $H$.

{\em Setup.} 
Specifically, we pretrain decoder-only Transformers~\citep{vaswani2017attention} with different $L$ and $H$ on the OpenWebText dataset~\citep{Gokaslan2019OpenWeb} for 10,000 iterations (approximately 1B tokens) on 1 A100, using cross-entropy loss and AdamW with the same hyperparameters. To ensure comparability, we meticulously balance the total number of parameters across both experimental setups.

{\em Results and conclusion.} The final validation losses are shown in Table~\ref{table: insight 1a}. By comparing these two subtables, the benefits brought by increasing $L$ are much greater than the benefits brought by increasing $H$ ($0.802>0.136$), thereby corroborating our Insight (1a).

\begin{table}[!ht]
    \centering
    \caption{Results of the experiment supporting Insight (1a).}
    \begin{tabular}{c|c|c}
    \hline\hline
        $L=1, H=8$ (26M) & $L=1, H=12$ (29M) & $L=1, H=12$ (32M) \\ \hline
       5.796 (baseline)  &  5.689 ($\downarrow$ 0.107)  &  5.660 ($\downarrow$ 0.136)
    \\\hline\hline
    \end{tabular}
    \begin{tabular}{c|c|c}
    \hline\hline
    $L=1, H=8$ (26M) & $L=4, H=8$ (29M) & $L=8, H=8$ (32M) \\ \hline
    5.796 (baseline)  &  5.374 ($\downarrow$ 0.422)  &  4.994 ($\downarrow$ 0.802)
    \\\hline\hline
    \end{tabular}
    \label{table: insight 1a}
\end{table}

\subsubsection{Validation of Insight (1b)}

{\em Objective.} 
As mentioned in Section~\ref{section: fixed sparse}, sparse Boolean functions have no interactions among the memories and belong to our Task I. 
This experiment aims to verify our Insight (1b): for such tasks, a single-layer Transformer equipped with a sufficient number of Attn heads $H$ and FFN width $m$ suffices, and there is no need to increase the number of layers $L$.

{\em Setup.} Specifically, we train single-layer DP-free Transformers with different $H$ and $m$ to learn a sparse Boolean target function $f^*$: $f^*(\bx):=g^*(x_{48},x_{56},x_{99}):=\sum_{k=1}^{64}{\rm ReLU}(\left<w_{k}^*,(x_{48},x_{56},x_{99})\right>)$ for input sequence $\bx=(x_1,\cdots,x_{1000})\in\{\pm1\}^{1000}$, where $\bw_k^*$ are generated by $\bw_k^*\sim N(0,I_3)$. Training proceeds for 10,000 iterations (1M samples) using squared loss and AdamW with the same hyperparameters.

{\em Results and conclusion.} The final validation losses are shown in Table~\ref{table: insight 1b}. As shown in this table, a single-layer Transformer equipped with a sufficient $H$ (32) and $m$ (256) is adequate for representing this sparse Boolean function. This empirical evidence supports our Insight (1b).

\begin{table}[!ht]
    \centering
    \caption{Results of the experiment supporting Insight (1b).}
    \begin{tabular}{c|c|c}
    \hline\hline
       $H=2,m=16$ & $H=8,m=64$ & $H=32,m=256$ \\ \hline
       0.21  &  0.04  &  0.01
    \\\hline\hline
    \end{tabular}
    \label{table: insight 1b}
\end{table}

\subsubsection{Validation of Insight (2a)}

{\em Objective.} This experiment aims to verify our Insight (2a): to learn a sparse Boolean function with a ``complex'' readout function and ``simple'' memories, increasing the FFN width $m$ can significantly improve the performance, whereas increasing the number of Attn heads $H$ brings almost no benefit.

{\em Setup.} Specifically, we train single-layer DP-free Transformers with different $H$ and $m$ to learn a sparse Boolean function with a "complex" readout function ($g^*$) and a "simple" single memory ($x_{99}$): $f^*(\bx):=g^*(x_{99}):=\sum_{k=1}^{64}{\rm ReLU}(\bw_{k}^*\cdot x_{99})$ for any input sequence $\bx=(x_1,\cdots,x_{1000})\in\{\pm1\}^{1000}$, where $\bw_k^*$ are generated by $\bw_k^*\sim N(0,1)$. Training proceeds for 10,000 iterations (1M samples) using squared loss and AdamW with the same hyperparameters.

{\em Results and conclusion.} The final validation losses are shown in Table~\ref{table: insight 2a}. The tables indicate that, for learning a sparse Boolean function with a ``complex'' readout function and ``simple'' memories, increasing $m$ can significantly improve the performance ($0.49\to0.002$), almost completing this task perfectly. Conversely, increasing $H$ fails to yield substantial improvement. This empirical evidence supports our Insight (2a).

\begin{table}[!ht]
    \centering
    \caption{Results of the experiment supporting Insight (2a).}
    \begin{tabular}{c|c|c|c}
    \hline\hline
      & $m=8$ & $m=64$ & $m=512$ \\ \hline
     $H=8$ & 0.49  &  0.006 &	0.002
    \\\hline\hline
    \end{tabular}
    \begin{tabular}{c|c|c|c}
    \hline\hline
      & $H=8$ & $H=64$ & $H=512$ \\ \hline
     $m=8$ & 0.49  &  0.49 &	0.52
    \\\hline\hline
    \end{tabular}
    \label{table: insight 2a}
\end{table}

\subsubsection{Validation of Insight (2b)}

{\em Objective.} Contrasting with Experiment (2a), this experiment aims to verify our Insight (2b): for learning a sparse Boolean function with a ``simple'' readout function and ``complex'' memories, increasing the number of Attn headers $H$ can substantially improve the performance while increasing FFN width $m$ will offer almost no benefit.

{\em Setup.} Specifically, we train single-layer DP-free Transformers with different $H$ and $m$ to learn a sparse Boolean function with a ``simple'' linear readout function ($g^*$) and relatively ``complex'' memories ($x_{48},x_{56},x_{99}$): $f^*(\bx):=g^*(x_{48},x_{56},x_{99}):=x_{48}+x_{56}+x_{99}$ for any input sequence $\bx=(x_1,\cdots,x_{1000})\in\{\pm1\}^{1000}$. Training processes for 10,000 iterations (1M samples), using squared loss and AdamW with the same hyperparameters.

{\em Results and conclusion.} The final validation losses are presented in Table~\ref{table: insight 2b}. The tables indicate that, for learning a sparse Boolean function with a ``simple'' readout function and ``complex'' memories, increasing $m$ can significantly improve the performance ($1.16\to10^{-6}$), closely achieving task perfection. In contrast, increasing $m$ brings almost no benefits. This empirical evidence supports our Insight 2(b).

\begin{table}[!ht]
    \centering
    \caption{Results of the experiment supporting Insight (2b).}
    \begin{tabular}{c|c|c|c}
    \hline\hline
      & $m=2$ & $m=64$ & $m=256$ \\ \hline
     $H=2$ & 1.16  &  0.81 & 1.23
    \\\hline\hline
    \end{tabular}
    \begin{tabular}{c|c|c|c}
    \hline\hline
      & $H=2$ & $H=64$ & $H=256$ \\ \hline
     $m=2$ & 1.16  &  <1e-6 &	<1e-6
    \\\hline\hline
    \end{tabular}
    \label{table: insight 2b}
\end{table}

\subsubsection{Validation of Insight (3a)}

{\em Objective.} As mentioned in Section~\ref{section: fixed sparse}, learning sparse Boolean functions has no interactions among the memories and belongs to our Task I. This experiment aims to verify our insight (3a): for such tasks, a DP-free Transformer equipped with a sufficient number of Attn heads $H$ and FFN width $m$ is sufficiently capable. Moreover, there is no need to use DP structure in Attn.

{\em Setup.} Specifically, we train single-layer DP-free Transformers with different $H$ and $m$ and with DP or without DP to learn a sparse Boolean target function $f^*$: $f^*(\bx):=g^*(x_{48},x_{56},x_{99}):=\sum_{k=1}^{64}{\rm ReLU}(\left<\bw_{k}^*,(x_{48},x_{56},x_{99})\right>)$ for input sequence $\bx=(x_1,\cdots,x_{1000})\in\{\pm1\}^{1000}$, where $\bw_k^*$ are generated by $\bw_k^*\sim N(0,I_3)$. Training proceeds for 10,000 iterations (1M samples) using squared loss and AdamW with the same hyperparameters.

{\em Results and conclusion.} The final validation losses are shown in Table~\ref{table: insight 3a}. The findings illustrate that a DP-free Transformer equipped with a sufficient $H$ (32) and $m$ (256) is adept at accurately representing the given sparse Boolean function. Additionally, the Incorporation of the DP structure into the layers contributes marginally to performance enhancement. This substantiates our Insight (3a).

\begin{table}[!ht]
    \centering
    \caption{Results of the experiment supporting Insight (3a).}
    \begin{tabular}{c|c|c|c}
    \hline\hline
      & $H=2,m=16$ & $H=8,m=64$ & $H=32,m=256$ \\ \hline
     with DP & 0.21  & 0.04 & 0.01 \\ \hline
     without DP &	0.17 &	0.11& 0.02
    \\\hline\hline
    \end{tabular}
    \label{table: insight 3a}
\end{table}

\subsubsection{Validation of Insight (3b)}

{\em Objective.} As indicated in Section~\ref{section: adaptive sparse}, numerous NLP tasks exhibit complex interrelationships among tokens and belong to our Task II. This experiment aims to verify our Insight (3b): for such tasks, the utilization of DP structure in Attn layers is necessary.

{\em Setup.} Specifically, we pre-train Transformers with DP or without DP on the OpenWebText dataset for 10,000 iterations (approximately 1B tokens) on 1 A100, using cross-entropy loss and AdamW with the same hyperparameters.

{\em Results and conclusion.} The final validation losses are presented in Table~\ref{table: insight 3b}. As shown in the table, for NLP pre-training tasks, Transformer incorporating DP structure is more efficient than Transformer without DP ($5.796<5.830, 5.374<5.486, 4.994<5.274$), thereby supporting our Insight 3(b).

\begin{table}[!ht]
    \centering
    \caption{Results of the experiment supporting Insight (3b).}
    \begin{tabular}{c|c|c|c}
    \hline\hline
      & $L=1,H=8$ & 	$L=4,H=8$ & $L=8,H=8$ \\ \hline
     with DP & 5.796  & 5.374 &	4.994 \\ \hline
     without DP &	5.830	& 5.486 &	5.274
    \\\hline\hline
    \end{tabular}
    \label{table: insight 3b}
\end{table}

\subsubsection{Validation of Insight (4a)}

{\em Objective.} This experiment aims to verify our Insight (4a): for learning Task III with heavy-tailed memories, Transformers with log-type RPE are efficient, whereas those with lin-type RPE fail.

{\em Setup.} Specifically, we train single-layer, FFN-free, DP-free Transformers with log-type RPE or lin-type RPE and varying numbers of Attn heads $H$. The target function involves a heavy-tailed memory kernel $\rho(t)=t^{-0.5}$: $f^*(\bx):=\sum_{s=1}^{1000}x_s\rho(1000-s)$ for any input sequence $\bx=(x_1,\cdots,x_{1000})\in\{\pm1\}^{1000}$. Training processes for 10,000 iterations (1M samples) using squared loss and AdamW with the same hyperparameters.

{\em Results and conclusion.} The final validation losses are shown in Table~\ref{table: insight 4a}. As shown in the table, to learn heavy-tailed memories, even single-head Transformer with log-type RRE can complete it perfectly ($<10^{-5}$). Conversely, Transformers employing lin-type RRE exhibit limited improvement even with up to 64 heads ($0.19$). This empirical evidence supports our Insight (4a).

\begin{table}[!ht]
    \centering
    \caption{Results of the experiment supporting Insight (4a).}
    \begin{tabular}{c|c|c|c|c}
    \hline\hline
      & $H=1$ & 	$H=4$ & $H=16$ & $H=64$ \\ \hline
     type=log & <1e-5  & <1e-5 &	<1e-5 & <1e-5 \\ \hline
     type=lin &	0.73	& 0.68 &	1.16 &  0.19
    \\\hline\hline
    \end{tabular}
    \label{table: insight 4a}
\end{table}

\subsubsection{Validation of Insight (4b)}

{\em Objective.} In contrast to Experiment (4a), this experiment aims to verify that for learning our Task III with light-tailed memories, Transformers with lin-type RPE are efficient, whereas those with log-type RPE fail.

{\em Setup.} Specifically, we train single-layer, FFN-free, DP-free Transformers with log-type RPE or lin-type RPE and varying numbers of Attn heads $H$. The target function involves a heavy-tailed memory kernel $\rho(t)=e^{-5 t}$: $f^*(\bx):=\sum_{s=1}^{1000}x_s\rho(1000-s)$ for any input sequence $\bx=(x_1,\cdots,x_{1000})\in\{\pm1\}^{1000}$. Training processes for 10,000 iterations (1M samples) using squared loss and AdamW with the same hyperparameters.

{\em Results and conclusion.} The final validation losses are shown in Table~\ref{table: insight 4b}. As shown in the table, to learn light-tailed memories, even single-head Transformer with lin-type RRE can complete it perfectly ($<10^{-7}$). Conversely, Transformers employing log-type RRE exhibit limited improvement even with up to 64 heads ($5.3\times 10^{-4}$). This empirical evidence supports our Insight (4b).

\begin{table}[!ht]
    \centering
    \caption{Results of the experiment supporting Insight (4b).}
    \begin{tabular}{c|c|c|c|c}
    \hline\hline
      & $H=1$ & 	$H=4$ & $H=16$ & $H=64$ \\ \hline
     type=log & 9.1e-4  & 3.7e-3 &	2.6e-3 & 5.3e-4 \\ \hline
     type=lin &	<1e-7	&<1e-7 &<1e-7 &  <1e-7
    \\\hline\hline
    \end{tabular}
    \label{table: insight 4b}
\end{table}

\subsection{Practical Implications}

Our theoretical insights and empirical evidence can directly lead to the following 8 practical implications, such as the strategic selection of Transformer hyperparameters for specific tasks.

\begin{itemize}[leftmargin=2em]
\item 
{\bf Implication (1a).} (supported by Insight (1a) and Experiment (1a))

For sequence modeling tasks with complex interrelations between memories, such as many NLP applications, enhancing the number of layers $L$ is more beneficial than increasing the number of Attn heads $H$ and FFN width $m$.

\item 
{\bf Implication (1b).} (supported by Insight (1b) and Experiment (1b))

For simple sequence modeling tasks with almost no memory intercorrelation, such as the learning of sparse Boolean function, improving performance necessitates only sufficient $H$ and $m$ in a single-layer Transformer, without a need to increase $L$.

\item 
{\bf Implication (2a).} (supported by Insight (2a) and Experiment (2a))

For sequence modeling tasks with complex readout or memory functions, increasing $m$ can significantly improve performance.

\item 
{\bf Implication (2b).} (supported by Insight (2b) and Experiment (2b))

For sequence modeling tasks with multiple memories, increasing $H$ can markedly improve performance.

\item 
{\bf Implication (3a).} (supported by Insight (3a) and Experiment (3a))

For simple sequence modeling tasks with almost no memory correlations, such as learning sparse Boolean functions, omitting the DP structure in Attn layers can still perform well.

\item 
{\bf Implication (3b).} (supported by Insight (3b) and Experiment (3b))

For sequence modeling tasks with complex correlations between memories, such as many NLP tasks, preserving the DP structure in attention layers is crucial for achieving high performance due to its indispensable nonlinearity.

\item 
{\bf Implication (4a).} (supported by Insight (4a) and Experiment (4a))

For sequence modeling tasks with heavy-tailed memories, the employment of log-type RPE (such as T5's RPE and KERPLE (log)) is recommended over lin-type RPE (such as Alibi).

\item 
{\bf Implication (4b).} (supported by Insight (4b) and Experiment (4b))

For sequence modeling tasks with light-tailed memories, the employment of lin-type RPE (such as Alibi) is recommended over log-type RPE (such as T5's RPE and KERPLE (log)).

\end{itemize}


\newpage
\section*{NeurIPS Paper Checklist}

\begin{enumerate}

\item {\bf Claims}
    \item[] Question: Do the main claims made in the abstract and introduction accurately reflect the paper's contributions and scope?
    \item[] Answer: \answerYes{} 
    \item[] Justification: We believe that the abstract and introduction reflect the contributions and scope of the paper.
    \item[] Guidelines:
    \begin{itemize}
        \item The answer NA means that the abstract and introduction do not include the claims made in the paper.
        \item The abstract and/or introduction should clearly state the claims made, including the contributions made in the paper and important assumptions and limitations. A No or NA answer to this question will not be perceived well by the reviewers. 
        \item The claims made should match theoretical and experimental results, and reflect how much the results can be expected to generalize to other settings. 
        \item It is fine to include aspirational goals as motivation as long as it is clear that these goals are not attained by the paper. 
    \end{itemize}

\item {\bf Limitations}
    \item[] Question: Does the paper discuss the limitations of the work performed by the authors?
    \item[] Answer: \answerYes{} 
    \item[] Justification: In Section~\ref{section: conclusion}.
    \item[] Guidelines:
    \begin{itemize}
        \item The answer NA means that the paper has no limitation while the answer No means that the paper has limitations, but those are not discussed in the paper. 
        \item The authors are encouraged to create a separate "Limitations" section in their paper.
        \item The paper should point out any strong assumptions and how robust the results are to violations of these assumptions (e.g., independence assumptions, noiseless settings, model well-specification, asymptotic approximations only holding locally). The authors should reflect on how these assumptions might be violated in practice and what the implications would be.
        \item The authors should reflect on the scope of the claims made, e.g., if the approach was only tested on a few datasets or with a few runs. In general, empirical results often depend on implicit assumptions, which should be articulated.
        \item The authors should reflect on the factors that influence the performance of the approach. For example, a facial recognition algorithm may perform poorly when image resolution is low or images are taken in low lighting. Or a speech-to-text system might not be used reliably to provide closed captions for online lectures because it fails to handle technical jargon.
        \item The authors should discuss the computational efficiency of the proposed algorithms and how they scale with dataset size.
        \item If applicable, the authors should discuss possible limitations of their approach to address problems of privacy and fairness.
        \item While the authors might fear that complete honesty about limitations might be used by reviewers as grounds for rejection, a worse outcome might be that reviewers discover limitations that aren't acknowledged in the paper. The authors should use their best judgment and recognize that individual actions in favor of transparency play an important role in developing norms that preserve the integrity of the community. Reviewers will be specifically instructed to not penalize honesty concerning limitations.
    \end{itemize}

\item {\bf Theory Assumptions and Proofs}
    \item[] Question: For each theoretical result, does the paper provide the full set of assumptions and a complete (and correct) proof?
    \item[] Answer: \answerYes{} 
    \item[] Justification: In Section~\ref{section: preliminaries},~\ref{section: fixed sparse},~\ref{section: adaptive sparse}, and~\ref{section: essentially sparse}; Appendix~\ref{appendix: sec: fixed},~\ref{appendix: sec: adaptive: warmup},~\ref{appendix: sec: adaptive: general},~\ref{appendix: sec: ess}, and~\ref{appendix: sec: lemmas}.
    \item[] Guidelines: 
    \begin{itemize}
        \item The answer NA means that the paper does not include theoretical results. 
        \item All the theorems, formulas, and proofs in the paper should be numbered and cross-referenced.
        \item All assumptions should be clearly stated or referenced in the statement of any theorems.
        \item The proofs can either appear in the main paper or the supplemental material, but if they appear in the supplemental material, the authors are encouraged to provide a short proof sketch to provide intuition. 
        \item Inversely, any informal proof provided in the core of the paper should be complemented by formal proofs provided in appendix or supplemental material.
        \item Theorems and Lemmas that the proof relies upon should be properly referenced. 
    \end{itemize}

    \item {\bf Experimental Result Reproducibility}
    \item[] Question: Does the paper fully disclose all the information needed to reproduce the main experimental results of the paper to the extent that it affects the main claims and/or conclusions of the paper (regardless of whether the code and data are provided or not)?
    \item[] Answer: \answerYes{} 
    \item[] Justification: We believe that all of the experimental results are reproducible in our work.
    \item[] Guidelines:
    \begin{itemize}
        \item The answer NA means that the paper does not include experiments.
        \item If the paper includes experiments, a No answer to this question will not be perceived well by the reviewers: Making the paper reproducible is important, regardless of whether the code and data are provided or not.
        \item If the contribution is a dataset and/or model, the authors should describe the steps taken to make their results reproducible or verifiable. 
        \item Depending on the contribution, reproducibility can be accomplished in various ways. For example, if the contribution is a novel architecture, describing the architecture fully might suffice, or if the contribution is a specific model and empirical evaluation, it may be necessary to either make it possible for others to replicate the model with the same dataset, or provide access to the model. In general. releasing code and data is often one good way to accomplish this, but reproducibility can also be provided via detailed instructions for how to replicate the results, access to a hosted model (e.g., in the case of a large language model), releasing of a model checkpoint, or other means that are appropriate to the research performed.
        \item While NeurIPS does not require releasing code, the conference does require all submissions to provide some reasonable avenue for reproducibility, which may depend on the nature of the contribution. For example
        \begin{enumerate}
            \item If the contribution is primarily a new algorithm, the paper should make it clear how to reproduce that algorithm.
            \item If the contribution is primarily a new model architecture, the paper should describe the architecture clearly and fully.
            \item If the contribution is a new model (e.g., a large language model), then there should either be a way to access this model for reproducing the results or a way to reproduce the model (e.g., with an open-source dataset or instructions for how to construct the dataset).
            \item We recognize that reproducibility may be tricky in some cases, in which case authors are welcome to describe the particular way they provide for reproducibility. In the case of closed-source models, it may be that access to the model is limited in some way (e.g., to registered users), but it should be possible for other researchers to have some path to reproducing or verifying the results.
        \end{enumerate}
    \end{itemize}

\item {\bf Open access to data and code}
    \item[] Question: Does the paper provide open access to the data and code, with sufficient instructions to faithfully reproduce the main experimental results, as described in supplemental material?
    \item[] Answer: \answerNo{} 
    \item[] Justification: The code or data of the experiments are simple and easy to reproduce following the description in the paper.
    \item[] Guidelines:
    \begin{itemize}
        \item The answer NA means that paper does not include experiments requiring code.
        \item Please see the NeurIPS code and data submission guidelines (\url{https://nips.cc/public/guides/CodeSubmissionPolicy}) for more details.
        \item While we encourage the release of code and data, we understand that this might not be possible, so “No” is an acceptable answer. Papers cannot be rejected simply for not including code, unless this is central to the contribution (e.g., for a new open-source benchmark).
        \item The instructions should contain the exact command and environment needed to run to reproduce the results. See the NeurIPS code and data submission guidelines (\url{https://nips.cc/public/guides/CodeSubmissionPolicy}) for more details.
        \item The authors should provide instructions on data access and preparation, including how to access the raw data, preprocessed data, intermediate data, and generated data, etc.
        \item The authors should provide scripts to reproduce all experimental results for the new proposed method and baselines. If only a subset of experiments are reproducible, they should state which ones are omitted from the script and why.
        \item At submission time, to preserve anonymity, the authors should release anonymized versions (if applicable).
        \item Providing as much information as possible in supplemental material (appended to the paper) is recommended, but including URLs to data and code is permitted.
    \end{itemize}

\item {\bf Experimental Setting/Details}
    \item[] Question: Does the paper specify all the training and test details (e.g., data splits, hyperparameters, how they were chosen, type of optimizer, etc.) necessary to understand the results?
    \item[] Answer: \answerYes{} 
    \item[] Justification: In Appendix~\ref{appendix: sec: experiments}.
    \item[] Guidelines:
    \begin{itemize}
        \item The answer NA means that the paper does not include experiments.
        \item The experimental setting should be presented in the core of the paper to a level of detail that is necessary to appreciate the results and make sense of them.
        \item The full details can be provided either with the code, in appendix, or as supplemental material.
    \end{itemize}

\item {\bf Experiment Statistical Significance}
    \item[] Question: Does the paper report error bars suitably and correctly defined or other appropriate information about the statistical significance of the experiments?
    \item[] Answer: \answerNo{} 
    \item[] Justification: the approximation error is deterministic and there is no need to consider the error bars here.
    \item[] Guidelines:
    \begin{itemize}
        \item The answer NA means that the paper does not include experiments.
        \item The authors should answer "Yes" if the results are accompanied by error bars, confidence intervals, or statistical significance tests, at least for the experiments that support the main claims of the paper.
        \item The factors of variability that the error bars are capturing should be clearly stated (for example, train/test split, initialization, random drawing of some parameter, or overall run with given experimental conditions).
        \item The method for calculating the error bars should be explained (closed form formula, call to a library function, bootstrap, etc.)
        \item The assumptions made should be given (e.g., Normally distributed errors).
        \item It should be clear whether the error bar is the standard deviation or the standard error of the mean.
        \item It is OK to report 1-sigma error bars, but one should state it. The authors should preferably report a 2-sigma error bar than state that they have a 96\% CI, if the hypothesis of Normality of errors is not verified.
        \item For asymmetric distributions, the authors should be careful not to show in tables or figures symmetric error bars that would yield results that are out of range (e.g. negative error rates).
        \item If error bars are reported in tables or plots, The authors should explain in the text how they were calculated and reference the corresponding figures or tables in the text.
    \end{itemize}

\item {\bf Experiments Compute Resources}
    \item[] Question: For each experiment, does the paper provide sufficient information on the computer resources (type of compute workers, memory, time of execution) needed to reproduce the experiments?
    \item[] Answer: \answerYes{} 
    \item[] Justification: In Section~\ref{appendix: sec: experiments}.
    \item[] Guidelines:
    \begin{itemize}
        \item The answer NA means that the paper does not include experiments.
        \item The paper should indicate the type of compute workers CPU or GPU, internal cluster, or cloud provider, including relevant memory and storage.
        \item The paper should provide the amount of compute required for each of the individual experimental runs as well as estimate the total compute. 
        \item The paper should disclose whether the full research project required more compute than the experiments reported in the paper (e.g., preliminary or failed experiments that didn't make it into the paper). 
    \end{itemize}
    
\item {\bf Code Of Ethics}
    \item[] Question: Does the research conducted in the paper conform, in every respect, with the NeurIPS Code of Ethics \url{https://neurips.cc/public/EthicsGuidelines}?
    \item[] Answer: \answerYes{} 
    \item[] Justification: We have confirmed that the research is conducted with the NeurIPS Code of Ethics.
    \item[] Guidelines:
    \begin{itemize}
        \item The answer NA means that the authors have not reviewed the NeurIPS Code of Ethics.
        \item If the authors answer No, they should explain the special circumstances that require a deviation from the Code of Ethics.
        \item The authors should make sure to preserve anonymity (e.g., if there is a special consideration due to laws or regulations in their jurisdiction).
    \end{itemize}

\item {\bf Broader Impacts}
    \item[] Question: Does the paper discuss both potential positive societal impacts and negative societal impacts of the work performed?
    \item[] Answer: \answerNA{} 
    \item[] Justification: \answerNA{}
    \item[] Guidelines:
    \begin{itemize}
        \item The answer NA means that there is no societal impact of the work performed.
        \item If the authors answer NA or No, they should explain why their work has no societal impact or why the paper does not address societal impact.
        \item Examples of negative societal impacts include potential malicious or unintended uses (e.g., disinformation, generating fake profiles, surveillance), fairness considerations (e.g., deployment of technologies that could make decisions that unfairly impact specific groups), privacy considerations, and security considerations.
        \item The conference expects that many papers will be foundational research and not tied to particular applications, let alone deployments. However, if there is a direct path to any negative applications, the authors should point it out. For example, it is legitimate to point out that an improvement in the quality of generative models could be used to generate deepfakes for disinformation. On the other hand, it is not needed to point out that a generic algorithm for optimizing neural networks could enable people to train models that generate Deepfakes faster.
        \item The authors should consider possible harms that could arise when the technology is being used as intended and functioning correctly, harms that could arise when the technology is being used as intended but gives incorrect results, and harms following from (intentional or unintentional) misuse of the technology.
        \item If there are negative societal impacts, the authors could also discuss possible mitigation strategies (e.g., gated release of models, providing defenses in addition to attacks, mechanisms for monitoring misuse, mechanisms to monitor how a system learns from feedback over time, improving the efficiency and accessibility of ML).
    \end{itemize}
    
\item {\bf Safeguards}
    \item[] Question: Does the paper describe safeguards that have been put in place for responsible release of data or models that have a high risk for misuse (e.g., pretrained language models, image generators, or scraped datasets)?
    \item[] Answer: \answerNA{} 
    \item[] Justification: \answerNA{}
    \item[] Guidelines:
    \begin{itemize}
        \item The answer NA means that the paper poses no such risks.
        \item Released models that have a high risk for misuse or dual-use should be released with necessary safeguards to allow for controlled use of the model, for example by requiring that users adhere to usage guidelines or restrictions to access the model or implementing safety filters. 
        \item Datasets that have been scraped from the Internet could pose safety risks. The authors should describe how they avoided releasing unsafe images.
        \item We recognize that providing effective safeguards is challenging, and many papers do not require this, but we encourage authors to take this into account and make a best faith effort.
    \end{itemize}

\item {\bf Licenses for existing assets}
    \item[] Question: Are the creators or original owners of assets (e.g., code, data, models), used in the paper, properly credited and are the license and terms of use explicitly mentioned and properly respected?
    \item[] Answer: \answerNA{} 
    \item[] Justification: \answerNA{}
    \item[] Guidelines:
    \begin{itemize}
        \item The answer NA means that the paper does not use existing assets.
        \item The authors should cite the original paper that produced the code package or dataset.
        \item The authors should state which version of the asset is used and, if possible, include a URL.
        \item The name of the license (e.g., CC-BY 4.0) should be included for each asset.
        \item For scraped data from a particular source (e.g., website), the copyright and terms of service of that source should be provided.
        \item If assets are released, the license, copyright information, and terms of use in the package should be provided. For popular datasets, \url{paperswithcode.com/datasets} has curated licenses for some datasets. Their licensing guide can help determine the license of a dataset.
        \item For existing datasets that are re-packaged, both the original license and the license of the derived asset (if it has changed) should be provided.
        \item If this information is not available online, the authors are encouraged to reach out to the asset's creators.
    \end{itemize}

\item {\bf New Assets}
    \item[] Question: Are new assets introduced in the paper well documented and is the documentation provided alongside the assets?
    \item[] Answer: \answerNA{} 
    \item[] Justification: \answerNA{}
    \item[] Guidelines:
    \begin{itemize}
        \item The answer NA means that the paper does not release new assets.
        \item Researchers should communicate the details of the dataset/code/model as part of their submissions via structured templates. This includes details about training, license, limitations, etc. 
        \item The paper should discuss whether and how consent was obtained from people whose asset is used.
        \item At submission time, remember to anonymize your assets (if applicable). You can either create an anonymized URL or include an anonymized zip file.
    \end{itemize}

\item {\bf Crowdsourcing and Research with Human Subjects}
    \item[] Question: For crowdsourcing experiments and research with human subjects, does the paper include the full text of instructions given to participants and screenshots, if applicable, as well as details about compensation (if any)? 
    \item[] Answer: \answerNA{} 
    \item[] Justification: \answerNA{}
    \item[] Guidelines:
    \begin{itemize}
        \item The answer NA means that the paper does not involve crowdsourcing nor research with human subjects.
        \item Including this information in the supplemental material is fine, but if the main contribution of the paper involves human subjects, then as much detail as possible should be included in the main paper. 
        \item According to the NeurIPS Code of Ethics, workers involved in data collection, curation, or other labor should be paid at least the minimum wage in the country of the data collector. 
    \end{itemize}

\item {\bf Institutional Review Board (IRB) Approvals or Equivalent for Research with Human Subjects}
    \item[] Question: Does the paper describe potential risks incurred by study participants, whether such risks were disclosed to the subjects, and whether Institutional Review Board (IRB) approvals (or an equivalent approval/review based on the requirements of your country or institution) were obtained?
    \item[] Answer: \answerNA{} 
    \item[] Justification: \answerNA{}
    \item[] Guidelines:
    \begin{itemize}
        \item The answer NA means that the paper does not involve crowdsourcing nor research with human subjects.
        \item Depending on the country in which research is conducted, IRB approval (or equivalent) may be required for any human subjects research. If you obtained IRB approval, you should clearly state this in the paper. 
        \item We recognize that the procedures for this may vary significantly between institutions and locations, and we expect authors to adhere to the NeurIPS Code of Ethics and the guidelines for their institution. 
        \item For initial submissions, do not include any information that would break anonymity (if applicable), such as the institution conducting the review.
    \end{itemize}

\end{enumerate}

\end{document}